\newtheorem{problem}{Problem}
\DeclareMathOperator{\Tr}{Tr}
\DeclareMathOperator{\Sp}{\mathrm{Sp}}
\DeclareMathOperator{\Diag}{\mathrm{Diag}}
\DeclareMathOperator{\rank}{\mathrm{rk}}
\DeclareMathOperator{\Det}{Det}
\DeclareMathOperator{\Vol}{Vol}
\DeclareMathOperator{\Span}{\mathrm{Span}}
\DeclareMathOperator{\Fr}{\mathrm{Fr}}
\DeclareMathOperator{\DPP}{\mathrm{DPP}}
\DeclareMathOperator{\VS}{\mathrm{VS}}
\DeclareMathOperator{\eff}{\mathrm{eff}}
\DeclareMathOperator{\Kerspace}{\mathrm{Ker}}
\DeclareMathOperator{\Sinmatrix}{\mathcal{S}}
\DeclareMathOperator{\Cosmatrix}{\mathcal{C}}
\DeclareMathOperator{\Tanmatrix}{\mathcal{T}}
\DeclareMathOperator{\Tran}{\intercal}
\def\rk{\text{rk}}
\DeclareMathOperator{\EX}{\mathbb{E}}
\DeclareMathOperator{\Prb}{\mathbb{P}}
\newtheorem{example}{Example} 
\newtheorem{theorem}{Theorem}
\newtheorem{lemma}[theorem]{Lemma} 
\newtheorem{proposition}[theorem]{Proposition} 
\newtheorem{corollary}[theorem]{Corollary}
\newtheorem{definition}[theorem]{Definition}
\newcommand{\BlackBox}{\rule{1.5ex}{1.5ex}}  
\newenvironment{proof}{\par\noindent{\bf Proof\ }}{\hfill\BlackBox\\[2mm]}
\newcommand{\rb}[1]{}
\newcommand{\ar}[1]{\textcolor{magenta}{~\algoremark{#1}}}
\DeclareMathOperator*{\argmin}{arg\,min}
\newcommand*\newprec{\vcenter{\hbox{\includegraphics[width=0.5cm]{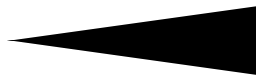}}}}
\newcommand*\newtiltedprec{\vcenter{\hbox{\rotatebox{-30}{\includegraphics[width=0.5cm]{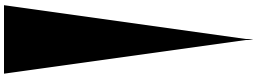}}}}}
\title{A determinantal point process for column subset selection}
\author{
Ayoub Belhadji$^{1}$\footnote{Corresponding author: \href{mailto:ayoub.belhadji@centralelille.fr}{ayoub.belhadji@centralelille.fr}}, R\'emi Bardenet$^1$, Pierre Chainais$^1$\\
\small $^1$ Univ. Lille, CNRS, Centrale Lille, UMR 9189 - CRIStAL, 59651 Villeneuve d’Ascq, France \\
}
\date{}
\begin{document}

\maketitle

\begin{abstract}
Dimensionality reduction is a first step of many machine learning pipelines. Two popular approaches are principal component analysis, which projects onto a small number of well chosen but non-interpretable directions, and feature selection, which selects a small number of the original features. Feature selection can be abstracted as a numerical linear algebra problem called the column subset selection problem (CSSP). CSSP corresponds to selecting the best subset of columns of a matrix $\bm{X} \in \mathbb{R}^{N \times d}$, where \emph{best} is often meant in the sense of minimizing the approximation error, i.e., the norm of the residual after projection of $\bm{X}$ onto the space spanned by the selected columns. Such an optimization over subsets of $\{1,\dots,d\}$ is usually impractical. One workaround that has been vastly explored is to resort to polynomial-cost, random subset selection algorithms that favor small values of this approximation error. We propose such a randomized algorithm, based on sampling from a projection determinantal point process (DPP), a repulsive distribution over a fixed number $k$ of indices $\{1,\dots,d\}$ that favors diversity among the selected columns. We give bounds on the ratio of the expected approximation error for this DPP over the optimal error of PCA. These bounds improve over the state-of-the-art bounds of \emph{volume sampling} when some realistic structural assumptions are satisfied for $\bm{X}$. Numerical experiments suggest that our bounds are tight, and that our algorithms have comparable performance with the \emph{double phase} algorithm, often considered to be the practical state-of-the-art. Column subset selection with DPPs thus inherits the best of both worlds: good empirical performance and tight error bounds.
\end{abstract}


\tableofcontents

\section{Introduction}
\label{s:introduction}


Datasets come in always larger dimensions, and dimension reduction is thus often one the first steps in any machine learning pipeline. Two of the most widespread strategies are principal component analysis (PCA) and feature selection. PCA projects the data in directions of large variance, called principal components. While the initial features (the canonical coordinates) generally have a direct interpretation, principal components are linear combinations of these original variables, which makes them hard to interpret. On the contrary, using a selection of original features will preserve interpretability when it is desirable. Once the data are gathered in an $N\times d$ matrix, of which each row is an observation encoded by $d$ features, feature selection boils down to selecting columns of $\bm{X}$. Independently of what comes after feature selection in the machine learning pipeline, a common performance criterion for feature selection is the approximation error in some norm, that is, the norm of the residual after projecting $\bm{X}$ onto the subspace spanned by the selected columns. Optimizing such a criterion over subsets of $\{1,\dots,d\}$ requires exhaustive enumeration of all possible subsets, which is prohibitive in high dimension. One alternative is to use a polynomial-cost, random subset selection strategy that favors small values of the criterion.

This rationale corresponds to a rich literature on randomized algorithms for column subset selection \citep{DeVe06,DrMaMu07,BoDrMI11}. A prototypal example corresponds to sampling $s$ columns of $\bm{X}$ i.i.d. from a multinomial distribution of parameter $\bm{p} \in \mathbb{R}^{d}$. This parameter $\bm{p}$ can be the squared norms of each column \citep{DFKVV04}, for instance, or the more subtle $k$-leverage scores \citep{DrMaMu07}. While the former only takes $\mathcal{O}(d N^{2})$ time to evaluate, it comes with loose guarantees; see Section~\ref{subsec:length_square_sampling}. The $k$-leverage scores are more expensive to evaluate, since they call for a truncated SVD of order $k$, but they come with tight bounds on the ratio of their expected approximation error over that of PCA.

To minimize approximation error, the subspace spanned by the selected columns should be as large as possible. Simultaneously, the number of selected columns should be as small as possible, so that intuitively, diversity among the selected columns is desirable. The column subset selection problem (CSSP) then becomes a question of designing a discrete point process over the column indices $\{1,\dots,d\}$ that favors diversity in terms of directions covered by the corresponding columns of $\bm{X}$.
Beyond the problem of designing such a point process, guarantees on the resulting approximation error are desirable. Since, given a target dimension $k\leq d$ after projection, PCA provides the best approximation in Frobenius or spectral norm, it is often used a reference: a good CSS algorithm preserves interpretability of the $c$ selected features while guaranteeing an approximation error not much worse than that of rank-$k$ PCA, all of this with $c$ not much larger than $k$.

In this paper, we introduce and analyze a new randomized algorithm for selecting $k$ diverse columns. Diversity is ensured using a determinantal point process (DPP). DPPs can be viewed as the kernel machine of point processes; they were introduced by \cite{Mac75} in quantum optics, and their use widely spread after the 2000s in random matrix theory \citep{Joh05}, machine learning \citep{KuTa12}, spatial statistics \citep{LaMoRu15}, and Monte Carlo methods \citep{BaHa16}, among others. In a sense, the DPP we propose is a nonindependent generalization of the multinomial sampling with $k$-leverage scores of \citep{BoMaDr09}. It further naturally connects to volume sampling, the CSS algorithm that has the best error bounds \citep{DRVW06}. We give error bounds for DPP sampling that exploit sparsity and decay properties of the $k$-leverage scores, and outperform volume sampling when these properties hold. Our claim is backed up by experiments on toy and real datasets.

The paper is organized as follows. Section~\ref{s:notation} introduces our notation. Section~\ref{s:relatedwork} is a survey of column subset selection, up to the state of the art to which we later compare. In Section~\ref{s:dppsection}, we discuss determinantal point processes and their connection to volume sampling. Section~\ref{sec:main_results} contains our main results, in the form of both classical bounds on the approximation error and risk bounds when CSS is a prelude to linear regression. In Section~\ref{s:numexpesection}, we numerically compare CSS algorithms, using in particular a routine that samples random matrices with prescibed $k$-leverage scores.

\section{Notation}
\label{s:notation}
We use $[n]$ to denote the set $\{1,\dots,n\}$, and $[n:m]$ for $\{n,\dots,m\}$. We use bold capitals $\bm{A},\bm{X},\dots$ to denote matrices
. For a matrix $\bm{A} \in \mathbb{R}^{m \times n}$ and subsets of indices $I\subset[m]$ and $J\subset[n]$, we denote by $\bm{A}_{I,J}$ the submatrix of $\bm{A}$ obtained by keeping only the rows indexed by $I$ and the columns indexed by $J$. When we mean to take all rows or $\bm{A}$, we write $\bm A_{:,J}$, and similarly for all columns. We write $\rk (\bm{A})$ for the rank of $\bm{A}$, and $\sigma_i(\bm A)$, $i=1,\dots,\rk(\bm{A})$ for its singular values, ordered decreasingly. Sometimes, we will need the vectors $\Sigma(\bm{A})$ and $\Sigma(\bm{A})^2$ the vectors of $\mathbb{R}^d$ with respective entries $\sigma_i(\bm{A})$ and $\sigma_i^2(\bm{A})$, $i=1,\dots,\rk(\bm{A})$. Similarly, when $\bm{A}$ can be diagonalized, $\Lambda(\bm{A})$ (and $\Lambda(\bm{A})^2$) are vectors with the decreasing eigenvalues (squared eigenvalues) of $\bm{A}$ as entries.

The spectral norm of $\bm{A}$ is $\|\bm{A}\|_{2} = \sigma_{1}(\bm A)$, while the Frobenius norm of $\bm{A}$ is defined by
$$\Vert\bm{A}\Vert_{\Fr} = \sqrt{\sum_{i=1}^{\rk(\bm{A})} \sigma_{i}(\bm{A})^{2}}.$$
For $\ell \in \mathbb{N}$, we need to introduce the $\ell$-th elementary symmetric polynomial on $L \in \mathbb{N}$ variables, that is
\begin{equation}
e_{\ell}(X_{1}, \dots, X_{L}) = \sum\limits_{\substack{T \subset [L]\\|T| = \ell}}~ \prod\limits_{j \in T} X_{j}.
\end{equation}
Finally, we follow \cite{Ben92} and denote spanned volumes by
$$\Vol_{q}(\bm{A}) = \sqrt{e_{q}\left(\sigma_{1}(\bm{A})^{2},\dots,\sigma_{\rk(A)}(\bm{A})^2\right)}, \quad q=1,\dots,\rk(\bm{A}).$$

Throughout the paper, $\bm{X}$ will always denote an $N\times d$ matrix that we think of as the original data matrix, of which we want to select $k\leq d$ columns. Unless otherwise specified, $r$ is the rank of $\bm{X}$, and matrices $\bm{U},\bm{\Sigma}$,$\bm{V}$ are reserved for the SVD of $\bm{X}$, that is,
\begin{align}
   \bm{X} &= \bm{U}\bm{\Sigma}\bm{V}^{T}\\
   &=
\left[
\begin{array}{c|c}
\bm{U}_{k} & \bm{U}_{r-k}
\end{array}
\right]
\left[
\begin{array}{c|c}
\bm{\Sigma}_{k} & \bm{0} \\
\hline
\bm{0} & \bm{\Sigma}_{r-k}
\end{array}
\right]
\left[
\begin{array}{c}
\bm{V}_{k}^{T} \\
\hline
\bm{V}_{r-k}^{T}
\end{array}
\right],
\label{e:blockSVD}
\end{align}
where $\bm{U} \in \mathbb{R}^{N\times r}$ and $\bm{V} \in \mathbb{R}^{d \times r}$ are orthogonal, and $\bm{\Sigma} \in \mathbb{R}^{r \times r}$ is diagonal. The diagonal entries of $\Sigma$ are denoted by $\sigma_i=\sigma_i(\bm{X})$, $i=1,\dots,r$, and we assume they are in decreasing order. We will also need the blocks given in \eqref{e:blockSVD}, where we separate blocks of size $k$ corresponding to the largest $k$ singular values. To simplify notation, we abusively write $\bm{U}_{k}$ for $\bm{U}_{:,[k]}$ and $\bm{V}_{k}$ for $\bm{V}_{:,[k]}$ in \eqref{e:blockSVD}, among others. Though they will be introduced and discussed at length in Section~\ref{subsec:k-lvs_sampling}, we also recall here that we note $\ell_{i}^{k} = \|\bm{V}_{[k],i}\|_{2}^{2}$ the so-called \emph{ $k$-leverage score} of the $i$-th column of $\bm{X}$.

We need some notation for the selection of columns. Let $S \subset [d]$ be such that $\vert S\vert =k$, and let $\bm{S} \in \{0,1\}^{d \times k}$ be the corresponding sampling matrix: $\bm{S}$ is defined by $\forall \bm{M} \in \mathbb{R}^{N\times d}, \bm{M}\bm{S} = \bm{M}_{:,S}$. In the context of column selection, it is often referred to $\bm{X}\bm{S} = \bm{X}_{:,S}$ as $\bm{C}$. We set for convenience $\bm{Y}_{:,S}^{\Tran} = (\bm{Y}_{:,S})^{\Tran}$.

The result of column subset selection will usually be compared to the result of PCA. We denote by $\Pi_k\bm{X}$ the best rank-$k$ approximation to $\bm{X}$. The sense of \emph{best} can be understood either in Frobenius or spectral norm, as both give the same result. On the other side, for a given subset $S \subset [d]$ of size $\vert S\vert=s$ and $\nu\in\{2,\Fr\}$, let
$$\Pi_{S,k}^{\nu}\bm{X} = \arg\min_{A} \Vert \bm{X}- A\Vert_{\nu}$$
where the minimum is taken over all matrices $\bm{A} = \bm{X}_{:,S}\bm{B}$ such that $\bm{B}\in\mathbb{R}^{s\times d}$ and $\rank \bm{B}\leq k$; in words, the minimum is taken over matrices of rank at most $k$ that lie in the column space of $\bm{C}=\bm{X}_{:,S}$. When $|S| = k$, we simply write $\Pi_{S}^{\nu}\bm{X} = \Pi_{S,k}^{\nu}\bm{X}$. In practice, the Frobenius projection can be computed as $\Pi_{S}^{\Fr}\bm{X} = \bm{C}\bm{C}^{+}\bm{X}$, yet there is no simple expression for $\Pi_{S}^{2}\bm{X}$. However, $\Pi_{S}^{\Fr}\bm{X}$ can be used as an approximation of $\Pi_{S}^{2}\bm{X}$ since
\begin{equation}\label{eq:frob_as_estimator_of_spe}
\|\bm{X} - \Pi_{S}^{2}\bm{X}\|_{2} \leq \|\bm{X} - \Pi_{S}^{\Fr}\bm{X}\|_{2}  \leq \sqrt{2} \|\bm{X} - \Pi_{S}^{2}\bm{X}\|_{2},
\end{equation}
see \citep[Lemma 2.3]{BoDrMI11}.

\section{Related Work}
\label{s:relatedwork}
In this section, we review the main results about column subset selection.
\subsection{Rank revealing QR decompositions}\label{sec:rrqr}
The first $k$-CSSP algorithm can be traced back to the article of \cite{Golu65} on pivoted QR factorization. This work introduced the concept of Rank Revealing QR factorization (RRQR). The original motivation was to calculate a well-conditioned QR factorization of a matrix $\bm{X}$ that reveals its numerical rank.
\begin{definition}
  \label{d:rrqr}
Let $\bm{X} \in \mathbb{R}^{N \times d}$  and $k \in \mathbb{N}$ $ (k \leq d)$. A RRQR factorization of $\bm{X}$ is a 3-tuple $(\bm{\Pi},\bm{Q},\bm{R})$ with $\bm{\Pi} \in \mathbb{R}^{d \times d}$ a permutation matrix, $\bm{Q} \in \mathbb{R}^{N \times d}$ an orthogonal matrix, and $\bm{R} \in \mathbb{R}^{d \times d}$ a triangular matrix, such that $\bm{X}\bm{\Pi} = \bm{Q} \bm{R}$,
\begin{equation}
    \frac{\sigma_{k}(\bm{X})}{p_{1}(k,d)} \leq \sigma_{min}(\bm{R}_{[k],[k]}) \leq \sigma_{k}(\bm{X}) \:,
\end{equation}
and
\begin{equation}
\label{eq:rrqr_singular_values_inequality}
    \sigma_{k+1}(\bm{X}) \leq \sigma_{max}(\bm{R}_{[k+1:d],[k+1:d]}) \leq p_{2}(k,d)\sigma_{k+1}(\bm{X}),
\end{equation}
where
$p_{1}(k,d)$ and $p_{2}(k,d)$ are controlled.
\end{definition}

In practice, a RRQR factorization algorithm interchanges pairs of columns and updates or builds a QR decomposition on the fly.
The link between RRQR factorization and k-CSSP was first discussed by \cite*{BoMaDr09}. The structure of a RRQR factorization indeed gives a deterministic selection of a subset of $k$ columns of $\bm{X}$. More precisely, if we take $\bm{C}$ to be the first $k$ columns of $\bm{X}\bm{\Pi}$, $\bm{C}$ is a subset of columns of $\bm{X}$ and $\| \bm{X} - \Pi_{S}^{\Fr}\bm{X}\|_{2} = \| \bm{R}_{[k+1:r],[k+1:r]}\|_{2}$. By \eqref{eq:rrqr_singular_values_inequality}, any RRQR algorithm thus provides provable guarantees in spectral norm for $k$-CSSP.

Following \citep{Golu65}, many papers gave algorithms that improved on $p_{1}(k,d)$ and $p_{2}(k,d)$ in Definition~\ref{d:rrqr}. Table~\ref{table:rrqr_examples} sums up the guarantees of the original algorithm of \citep{Golu65} and the state-of-the-art algorithms of \cite{GuEi96}. Note the dependency of $p_2(k,d)$ on the dimension $d$ through the term $\sqrt{d-k}$; this term is common for guarantees in spectral norm for $k$-CSSP. We refer to \citep{BoMaDr09} for an exhaustive survey on RRQR factorization.
\FloatBarrier
\begin{table}
\centering
 \begin{tabular}{| c| c| c| c|}
 \hline
  Algorithm & $p_2(k,d)$ & Complexity &  References\\
 \hline
 Pivoted QR & $2^{k}\sqrt{d-k}$ & $\mathcal{O}(dNk)$ & \citep{GoVa96}\\
 \hline
 Strong RRQR (Alg. 3) & $\sqrt{(d-k) k +1}$ & not polynomial & \citep{GuEi96} \\
 \hline
 Strong RRQR (Alg. 4) & $\sqrt{f^{2}(d-k) k +1}$ & $\mathcal{O}(dNk \log_{f}(d))$ & \citep{GuEi96}\\
 \hline
\end{tabular}
\caption{Examples of some RRQR algorithms and their theoretical performances.\label{table:rrqr_examples}}
\end{table}

\subsection{Length square importance sampling and additive bounds}\label{subsec:length_square_sampling}

\cite*{DFKVV04} proposed a randomized CSS algorithm based on i.i.d. sampling $s$ indices $S=\{i_1,\dots,i_s\}$ from a multinomial distribution of parameter $\bm{p}$, where
\begin{equation}
 p_{j} = \frac{\|\bm{X}_{:,j}\|_{2}^{2}}{\|\bm{X}\|_{\Fr}^{2}} \, , j \in [d].
\end{equation}
Let $\bm{C} = \bm{X}_{:,S}$ be the corresponding submatrix. First, we note that some columns of $\bm{X}$ may appear more than once in $\bm{C}$. Second, \cite[Theorem 3]{DFKVV04} states that
\begin{equation}\label{eq:length_square_sampling_result}
    \Prb \left( \| \bm{X} - \Pi_{S,k}^{\Fr}\bm{X}\|_{\Fr}^{2} \leq \|\bm{X} - \Pi_{k}\bm{X}\|_{\Fr}^{2} + 2(1+\sqrt{8\log(\frac{2}{\delta}})) \sqrt{\frac{k}{s}}\| \bm{X}\|_{\Fr}^{2} \right) \geq 1-\delta.
\end{equation}
Equation \eqref{eq:length_square_sampling_result} is a high-probability additive upper bound for $\| \bm{X} - \Pi_{S}^{\Fr}\bm{X}\|_{\Fr}^{2}$. The drawback of such bounds is that they can be very loose if the first $k$ singular values of $\bm{X}$ are large compared to $\sigma_{k+1}$. For this reason, multiplicative approximation bounds have been considered.

\subsection{$k$-leverage scores sampling and multiplicative bounds}
\label{subsec:k-lvs_sampling}
\citet*{DrMaMu07} proposed an algorithm with provable multiplicative upper bound using multinomial sampling, but this time according to $k$-leverage scores.
\begin{definition}[$k$-leverage scores]
Let $\bm{X} = \bm{U}\bm{\Sigma}\bm{V}^{\Tran} \in\mathbb{R}^{N\times d}$ be the SVD of $\bm{X}$. We note $\bm{V}_{k} = \bm{V_{:,[k]}}$ the first $k$ columns of $\bm{V}$. For $j \in [d]$, the $k$-leverage score of the $j$-th column of  $\bm{X}$ is defined by
\begin{equation}
 \ell^{k}_{j} = \sum\limits_{i = 1}^{k} V_{i,j}^{2} .
\end{equation}
\end{definition}
In particular, it holds
\begin{equation}
\sum\limits_{j \in [d]} \ell^{k}_{j} = \sum\limits_{j \in [d]} \| (\bm{V}_{k}^{\Tran})_{:,j}\|_{2}^{2}= \Tr(\bm{V}^{}_{k}\bm{V}^{\Tran}_{k}) = k,
\end{equation}
since $\bm{V}^{}_{k}$ is an orthogonal matrix. Therefore, one can consider the multinomial distribution on $[d]$ with parameters
\begin{equation}
p_{j} = \frac{\ell^{k}_{j}}{k} \:\: , j \in [d].
\end{equation}
This multinomial is called the \emph{$k$-leverage scores distribution}.

\begin{theorem}[\citealp{DrMaMu07}, Theorem 3]
If the number $s$ of sampled columns satisfies
\begin{equation}
s \geq \frac{4000 k^{2}}{\epsilon^{2}}\log\left(\frac{1}{\delta}\right),
\end{equation}
then, under the $k$-leverage scores distribution,
\begin{equation}\label{eq:relative_error_bound}
    \Prb \bigg( \| \bm{X} - \Pi_{S,k}^{\Fr}\bm{X}\|_{\Fr}^{2} \leq (1 + \epsilon) \| \bm{X} - \Pi_{k}\bm{X}\|_{\Fr}^{2} \bigg) \geq 1-\delta .
\end{equation}
\end{theorem}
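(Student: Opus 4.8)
The plan is to bound $\|\bm{X}-\Pi_{S,k}^{\Fr}\bm{X}\|_{\Fr}$ by exhibiting one explicit rank-$k$ matrix lying in the column space of $\bm{C}=\bm{X}_{:,S}$: since $\Pi_{S,k}^{\Fr}\bm{X}$ is \emph{by definition} the best such approximant, any concrete candidate $\bm{\Psi}$ of rank at most $k$ with columns in $\mathrm{span}(\bm{C})$ gives $\|\bm{X}-\Pi_{S,k}^{\Fr}\bm{X}\|_{\Fr}\le\|\bm{X}-\bm{\Psi}\|_{\Fr}$, and the whole difficulty becomes constructing a good $\bm{\Psi}$ and controlling its error. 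The key reduction is that sampling columns of $\bm{X}$ from the $k$-leverage distribution is exactly sampling columns of $\bm{V}_k^{\Tran}$ with probabilities $p_j=\ell_j^k/k$ proportional to their squared norms. I would encode the draw of $s$ indices by a selection matrix $\bm{S}\in\{0,1\}^{d\times s}$ and a diagonal rescaling $\bm{D}$ with $D_{tt}=1/\sqrt{s\,p_{i_t}}$, and introduce the central object $\bm{Z}=\bm{V}_k^{\Tran}\bm{S}\bm{D}\in\mathbb{R}^{k\times s}$ together with its complement $\bm{Z}_{\perp}=\bm{V}_{r-k}^{\Tran}\bm{S}\bm{D}$.

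The candidate I would use is $\bm{\Psi}=\bm{C}\bm{D}\bm{Z}^{+}\bm{V}_k^{\Tran}$, which has rank at most $k$ and columns in $\mathrm{span}(\bm{C})$. Writing $\bm{C}\bm{D}=\bm{U}_k\bm{\Sigma}_k\bm{Z}+\bm{U}_{r-k}\bm{\Sigma}_{r-k}\bm{Z}_{\perp}$ from the block SVD \eqref{e:blockSVD}, and using $\bm{Z}\bm{Z}^{+}=\bm{I}_k$ once $\bm{Z}$ has full row rank, a short computation gives $\bm{\Psi}-\Pi_k\bm{X}=\bm{U}_{r-k}\bm{\Sigma}_{r-k}\bm{Z}_{\perp}\bm{Z}^{+}\bm{V}_k^{\Tran}$. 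Hence $\bm{X}-\bm{\Psi}=\bm{U}_{r-k}\bm{\Sigma}_{r-k}\big(\bm{V}_{r-k}^{\Tran}-\bm{Z}_{\perp}\bm{Z}^{+}\bm{V}_k^{\Tran}\big)$, and since the two terms in the parenthesis have row spaces inside the orthogonal subspaces $\mathrm{Col}(\bm{V}_{r-k})$ and $\mathrm{Col}(\bm{V}_k)$, a Frobenius Pythagorean identity yields
\[
\|\bm{X}-\bm{\Psi}\|_{\Fr}^2=\|\bm{X}-\Pi_k\bm{X}\|_{\Fr}^2+\|\bm{X}_{\perp}\bm{S}\bm{D}\bm{Z}^{+}\|_{\Fr}^2,
\]
where $\bm{X}_{\perp}=\bm{X}-\Pi_k\bm{X}$. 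The first term is the optimum, so the whole game is to show the \emph{extra} term is at most $\epsilon\|\bm{X}-\Pi_k\bm{X}\|_{\Fr}^2$.

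To control the extra term I would lean on a randomized approximate matrix multiplication (AMM) bound, which I would first record as a lemma (a direct second-moment computation): for the sampling above, $\EX\|\bm{A}\bm{B}-\bm{A}\bm{S}\bm{D}^2\bm{S}^{\Tran}\bm{B}\|_{\Fr}^2\le \tfrac1s\sum_j \|\bm{A}_{:,j}\|_2^2\|\bm{B}_{j,:}\|_2^2/p_j$. Two instances do the work. First, with $\bm{A}=\bm{V}_k^{\Tran}$, $\bm{B}=\bm{V}_k$ and $p_j=\ell_j^k/k$, the true product is $\bm{I}_k$ and the bound collapses to $\EX\|\bm{I}_k-\bm{Z}\bm{Z}^{\Tran}\|_{\Fr}^2\le k^2/s$; this both guarantees that $\bm{Z}$ has full row rank and gives $\|(\bm{Z}\bm{Z}^{\Tran})^{-1}\|_2\le(1-\|\bm{I}_k-\bm{Z}\bm{Z}^{\Tran}\|_2)^{-1}\approx 1$. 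Second, and crucially, I rewrite the extra term through $\bm{Z}^{+}=\bm{Z}^{\Tran}(\bm{Z}\bm{Z}^{\Tran})^{-1}$ as $\|\bm{X}_{\perp}\bm{S}\bm{D}^2\bm{S}^{\Tran}\bm{V}_k\,(\bm{Z}\bm{Z}^{\Tran})^{-1}\|_{\Fr}$. Now $\bm{X}_{\perp}\bm{S}\bm{D}^2\bm{S}^{\Tran}\bm{V}_k$ is exactly the AMM estimator of $\bm{X}_{\perp}\bm{V}_k=\bm{U}_{r-k}\bm{\Sigma}_{r-k}\bm{V}_{r-k}^{\Tran}\bm{V}_k=\bm{0}$; the lemma with $\bm{A}=\bm{X}_{\perp}$, $\bm{B}=\bm{V}_k$ gives $\EX\|\bm{X}_{\perp}\bm{S}\bm{D}^2\bm{S}^{\Tran}\bm{V}_k\|_{\Fr}^2\le \tfrac{k}{s}\|\bm{X}_{\perp}\|_{\Fr}^2$. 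Combining the two and taking $s$ of order $k^2/\epsilon^2$ drives the extra term below $\epsilon\|\bm{X}-\Pi_k\bm{X}\|_{\Fr}^2$ in expectation.

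The main obstacle is the passage from these expectation bounds to the stated \emph{high-probability} statement with the $\log(1/\delta)$ dependence: a plain Markov inequality on the second moments would only give a $1/\delta$ factor in $s$. To recover $\log(1/\delta)$ one must prove concentration of $\|\bm{I}_k-\bm{Z}\bm{Z}^{\Tran}\|_{\Fr}$ and of $\|\bm{X}_{\perp}\bm{S}\bm{D}^2\bm{S}^{\Tran}\bm{V}_k\|_{\Fr}$ around their means, for instance via a bounded-differences/martingale argument over the $s$ i.i.d.\ draws (each single column changes the relevant norm by a controlled amount). Intersecting the two resulting events, tracking the constant that keeps $\|(\bm{Z}\bm{Z}^{\Tran})^{-1}\|_2$ near $1$, and splitting the failure probability between them is precisely what produces the explicit threshold $s\ge 4000\,k^2\epsilon^{-2}\log(1/\delta)$; the $k^2$ is inherited from the first AMM bound, which is the binding one.
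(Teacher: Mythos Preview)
The paper does \emph{not} contain a proof of this theorem: it is quoted verbatim as \citep[Theorem~3]{DrMaMu07} in the related-work Section~\ref{subsec:k-lvs_sampling}, and nowhere in the body or the appendix is any argument given for it. So there is no ``paper's own proof'' to compare against.

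That said, your proposal is essentially the argument of \citet{DrMaMu07} themselves. The construction $\bm{\Psi}=\bm{C}\bm{D}\bm{Z}^{+}\bm{V}_k^{\Tran}$, the Pythagorean splitting $\|\bm{X}-\bm{\Psi}\|_{\Fr}^2=\|\bm{X}-\Pi_k\bm{X}\|_{\Fr}^2+\|\bm{X}_{\perp}\bm{S}\bm{D}\bm{Z}^{+}\|_{\Fr}^2$, and the two uses of the approximate-matrix-multiplication second-moment lemma (one for $\bm{V}_k^{\Tran}\bm{V}_k=\bm{I}_k$ to control $\|\bm{Z}^{+}\|_2$, one for $\bm{X}_{\perp}\bm{V}_k=\bm{0}$ to control the cross term) are exactly the backbone of that paper's proof. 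Your identification of the concentration step as the only nontrivial gap, and of a bounded-differences argument over the $s$ i.i.d.\ draws as the fix yielding the $\log(1/\delta)$ factor, is also correct; in \citet{DrMaMu07} this is handled via a martingale/McDiarmid-type inequality applied to the AMM error, and the $4000$ is precisely the constant that emerges from combining the two events and keeping $\sigma_k(\bm{Z})$ bounded away from zero. There is no genuine missing idea in your sketch.
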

\citet{DrMaMu07} also considered replacing multinomial with Bernoulli sampling, still using the $k$-leverage scores. The expected number of columns needed for \eqref{eq:relative_error_bound} to hold is then lowered to $\mathcal{O}(\frac{k \log k}{\epsilon^{2}})$. A natural question is then to understand how low the number of columns can be, while still guaranteeing a multiplicative bound like \eqref{eq:relative_error_bound}. A partial answer has been given by \cite{DeVe06}.
\begin{proposition}[\citealp{DeVe06}, Proposition 4]\label{prop:lower_bound_cssp}
Given $\epsilon > 0$, $k,d \in \mathbb{N}$ such that $d \epsilon \geq 2k$, there exists a matrix $\bm{X}^{\epsilon} \in \mathbb{R}^{kd \times k(d+1)}$ such that for any $S \subset [d]$,
\begin{equation}
\| \bm{X}^{\epsilon} - \Pi_{S,k}^{\Fr} \bm{X}^{\epsilon} \|_{\Fr}^{2} \geq (1+\epsilon) \| \bm{X}^{\epsilon} - \bm{X}^{\epsilon}_{k} \|_{\Fr}^{2}.
\end{equation}
\end{proposition}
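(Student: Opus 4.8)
The plan is to prove this lower bound by exhibiting a single explicit, highly symmetric hard instance $\bm{X}^\epsilon$ and showing that even the \emph{best} admissible column subset produces a residual larger than the rank-$k$ PCA residual by the factor $1+\epsilon$. The symmetry is what makes the universal quantifier ``for any $S$'' manageable: if all columns play interchangeable roles, then controlling $\min_S \|\bm{X}^\epsilon - \Pi_{S,k}^{\Fr}\bm{X}^\epsilon\|_{\Fr}^2$ reduces to analysing one representative selection rather than all of them. To obtain the factor $k$ essentially for free, and to inflate the shape to $kd \times k(d+1)$, I would assemble $\bm{X}^\epsilon$ from $k$ identical copies of a single base block $\bm{B}$ with $d+1$ equal-norm, equicorrelated columns, tuning the common correlation to $\epsilon$; writing $\bm{X}^\epsilon = \bm{I}_k \otimes \bm{B}$ (a block-diagonal arrangement) both lifts the one-column base problem to a $k$-column problem and decouples the analysis across the $k$ orthogonal blocks.

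For the denominator I would compute the spectrum explicitly. Choosing the Gram matrix of $\bm{B}$ to be $(1-\epsilon)\bm{I} + \epsilon\,\bm{1}\bm{1}^{\Tran}$ gives eigenvalues $1+d\epsilon$, attained by the balanced direction $\bm{1}$, together with $1-\epsilon$ of multiplicity $d$. The singular values of $\bm{X}^\epsilon$ are then those of $\bm{B}$ each repeated $k$ times, so the optimal rank-$k$ approximation captures exactly the $k$ copies of the leading value and $\|\bm{X}^\epsilon - \bm{X}^{\epsilon}_{k}\|_{\Fr}^2$ is proportional to $kd(1-\epsilon)$.

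For the numerator, the block-diagonal structure forces every admissible column to live in a single block, so $\Pi_{S,k}^{\Fr}$ splits across blocks and the problem collapses to the base case. There, projecting $\bm{B}$ onto one of its columns captures $1+d\epsilon^2$ out of a total energy $d+1$, leaving residual proportional to $d(1-\epsilon^2)$. Since spending the $k$ selected columns one per block is optimal -- starving a block of any column forces its entire energy into the residual -- the smallest attainable numerator is proportional to $kd(1-\epsilon^2)$, and the ratio is exactly $(1-\epsilon^2)/(1-\epsilon) = 1+\epsilon$ for the best $S$ and strictly larger otherwise. This yields the claimed inequality for every $S$.

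The main obstacle, and where I expect the real work to sit, is twofold and concerns the quantification over $S$ together with dimension bookkeeping rather than any individual computation. First, one must justify rigorously that the one-column-per-block allocation minimises the residual over \emph{all} subsets, including those overloading some blocks or selecting more than $k$ columns at rank $\le k$; the orthogonal decoupling plus monotonicity of the per-block residual in the number of chosen columns is the lever. Second, a matrix of shape $kd \times k(d+1)$ has more columns than rows and is therefore necessarily column-rank-deficient, so strictly positive correlations cannot be realised exactly in the prescribed number of rows; reconciling the ``positive correlation'' mechanism that creates the gap with this rank constraint is precisely where the hypothesis $d\epsilon \ge 2k$ is consumed, ensuring the instance remains feasible and the factor $1+\epsilon$ is genuinely attained rather than merely approached.
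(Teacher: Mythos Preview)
The paper does not prove this proposition; it is quoted from \cite{DeVe06} as background, and no argument for it appears anywhere in the text or the appendix. So there is nothing to compare your attempt against in this paper.

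That said, your overall strategy --- a block-diagonal matrix $\bm I_k\otimes\bm B$ with $\bm B$ having $d+1$ exchangeable equicorrelated columns, so that symmetry collapses the quantifier over $S$ and the ratio $(1-\epsilon^2)/(1-\epsilon)=1+\epsilon$ drops out --- is exactly the mechanism behind the classical construction, and your arithmetic for the numerator and denominator is correct once the Gram matrix is granted.

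There is one genuine gap. Your proposed Gram matrix $(1-\epsilon)\bm I_{d+1}+\epsilon\,\bm 1\bm 1^{\Tran}$ has full rank $d+1$, so it \emph{cannot} arise from a $d\times(d+1)$ block $\bm B$; you notice this, but then assert that the hypothesis $d\epsilon\ge 2k$ is what ``reconciles'' the rank obstruction. It does not: no inequality between $d$, $k$, and $\epsilon$ can make $d+1$ vectors in $\mathbb R^d$ linearly independent. The role of $d\epsilon\ge 2k$ in \cite{DeVe06} is not to repair the Gram matrix but to control the \emph{size} of the admissible subsets $S$ relative to the number of columns --- the lower bound is really a statement that no $S$ of cardinality at most roughly $k/\epsilon$ can achieve the $(1+\epsilon)$-approximation, and $d$ is chosen large enough ($d\ge 2k/\epsilon$) so that the construction has room for this many columns while remaining symmetric. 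The restatement in the present paper is somewhat loose on this point (``$S\subset[d]$'' when $\bm X^\epsilon$ has $k(d+1)$ columns), which may have misled you. To make your argument go through with the stated row count, you need either to take $\bm B$ square in $\mathbb R^{(d+1)\times(d+1)}$ and accept a slightly different ambient dimension, or to use a rank-$d$ equiangular system (e.g.\ simplex vertices suitably perturbed) and redo the eigenvalue computation; either way, the condition $d\epsilon\ge 2k$ enters through the cardinality of $S$, not through feasibility of the Gram matrix.
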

This suggests that the lower bound for the number of columns is $k/\epsilon$, at least in the worst case sense of Proposition~\ref{prop:lower_bound_cssp}. Interestingly, the $k$-leverage scores distribution of the matrix $\bm{X}^{\epsilon}$ in the proof of Proposition~\ref{prop:lower_bound_cssp} is uniform, so that $k$-leverage score sampling boils down to simple uniform sampling.

To match the lower bound of \cite{DeVe06}, \citet*{BoDrMI11} proposed a greedy algorithm to select columns. This algorithm is inspired by the sparsification of orthogonal matrices proposed in \citep{BaSpSr09}. The full description of this family of algorithms is beyond the scope of this article. We only recall one of the results of the article.
\begin{theorem}[\citealp{BoDrMI11}, Theorem 1.5]
There exists a randomized algorithm $\mathcal{A}$ that select at most $c = \frac{2 k}{\epsilon}(1 + o(1))$ columns of $\bm{X}$ such that
\begin{equation}
\EX_{\mathcal{A}} \| \bm{X}-\Pi_{S,k}^{\Fr}\bm{X}\|_{\Fr}^{2} \leq (1+ \epsilon) \| \bm{X} - \Pi_{k}\bm{X} \|_{\Fr}^{2}.
\end{equation}
\end{theorem}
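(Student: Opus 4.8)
The plan is to reduce the problem to a statement about sparsifying two ``dual'' families of vectors, and then to build the selection greedily with a barrier potential in the spirit of \citep{BaSpSr09}. Write $\bm{E} = \bm{X} - \Pi_k\bm{X} = \bm{U}_{r-k}\bm{\Sigma}_{r-k}\bm{V}_{r-k}^{\Tran}$ for the residual of rank-$k$ PCA, and let $\bm{\Omega}\in\mathbb{R}^{d\times c}$ encode a selection of $c$ columns together with positive rescaling weights; since rescaling does not change the selected column space, it leaves $\Pi_{S,k}^{\Fr}\bm{X}$ untouched but is convenient in the analysis. The first step is the structural inequality: whenever $\bm{V}_k^{\Tran}\bm{\Omega}$ has full row rank $k$,
\begin{equation}
\| \bm{X} - \Pi_{S,k}^{\Fr}\bm{X}\|_{\Fr}^2 \;\le\; \|\bm{E}\|_{\Fr}^2 \;+\; \big\| \bm{E}\bm{\Omega}(\bm{V}_k^{\Tran}\bm{\Omega})^{+}\big\|_{\Fr}^2 .
\end{equation}
I would prove it by exhibiting $\bm{X}\bm{\Omega}(\bm{V}_k^{\Tran}\bm{\Omega})^{+}\bm{V}_k^{\Tran}$ as a feasible rank-$k$ matrix in the column space of $\bm{C}=\bm{X}_{:,S}$: using $\bm{X}=\Pi_k\bm{X}+\bm{E}$ and $\bm{V}_k^{\Tran}\bm{\Omega}(\bm{V}_k^{\Tran}\bm{\Omega})^{+}=\bm{I}_k$, the residual of this candidate equals $\bm{E}-\bm{E}\bm{\Omega}(\bm{V}_k^{\Tran}\bm{\Omega})^{+}\bm{V}_k^{\Tran}$, whose two summands have orthogonal row spaces because $\bm{E}\bm{V}_k=\bm{0}$; a Pythagorean identity then gives the bound. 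It remains to choose $\bm{\Omega}$ keeping $\sigma_k(\bm{V}_k^{\Tran}\bm{\Omega})$ bounded away from $0$ while making $\bm{E}\bm{\Omega}(\bm{V}_k^{\Tran}\bm{\Omega})^{+}$ small in Frobenius norm.

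For the spectral requirement I would prove a deterministic dual set sparsification lemma. The relevant decomposition of the identity is $\sum_{j\in[d]}\bm{V}_{[k],j}\bm{V}_{[k],j}^{\Tran}=\bm{V}_k^{\Tran}\bm{V}_k=\bm{I}_k$, so the columns $\bm{V}_{[k],j}\in\mathbb{R}^k$ form an isotropic family. A greedy procedure adds columns one at a time, maintaining a lower barrier on the smallest eigenvalue of the running Gram matrix $(\bm{V}_k^{\Tran}\bm{\Omega})(\bm{V}_k^{\Tran}\bm{\Omega})^{\Tran}$: using rank-one (Sherman--Morrison) updates of a potential, one shows by an averaging argument that at each step some column, with a suitable weight, advances the barrier, and after $c$ steps this yields $\sigma_k(\bm{V}_k^{\Tran}\bm{\Omega})\ge 1-\sqrt{k/c}$ (a two-sided version keeps all singular values within $1\pm\sqrt{k/c}$). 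The point of the deterministic barrier, as opposed to i.i.d.\ sampling, is that it removes the $\log k$ overhead of matrix concentration and delivers a sharp column count.

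The $(1+\epsilon)$ factor, rather than a mere constant, comes from a randomized leverage-score step. Sampling and rescaling columns with probabilities built from the $k$-leverage scores $\ell_j^k$ produces an unbiased estimate: because $\bm{E}\bm{V}_k=\bm{0}$, the matrix $\bm{E}\bm{\Omega}\bm{\Omega}^{\Tran}\bm{V}_k$ has expectation $\bm{E}\bm{V}_k=\bm{0}$, so its expected squared Frobenius norm is a pure variance, controlled by an approximate matrix multiplication bound of order $\tfrac{k}{c}\|\bm{E}\|_{\Fr}^2$ (using $\sum_j\ell_j^k=k$). Since the spectral bound keeps the singular values of $\bm{V}_k^{\Tran}\bm{\Omega}$ near $1$, one has $(\bm{V}_k^{\Tran}\bm{\Omega})^{+}\approx(\bm{V}_k^{\Tran}\bm{\Omega})^{\Tran}=\bm{\Omega}^{\Tran}\bm{V}_k$, hence $\bm{E}\bm{\Omega}(\bm{V}_k^{\Tran}\bm{\Omega})^{+}\approx\bm{E}\bm{\Omega}\bm{\Omega}^{\Tran}\bm{V}_k$. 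Substituting into the structural inequality and taking expectations turns the second term into an $O(\tfrac{k}{c})\|\bm{E}\|_{\Fr}^2$ contribution; tuning $c=\tfrac{2k}{\epsilon}(1+o(1))$ to balance this against the $(1-\sqrt{k/c})$ spectral factor gives $\EX_{\mathcal{A}}\|\bm{X}-\Pi_{S,k}^{\Fr}\bm{X}\|_{\Fr}^2\le(1+\epsilon)\|\bm{X}-\Pi_k\bm{X}\|_{\Fr}^2$.

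The main obstacle is the deterministic sparsification lemma, together with making the randomized and deterministic guarantees hold on one and the same selection $\bm{\Omega}$. For the barrier argument one must verify that the greedy step never stalls, i.e.\ that at every step the set of columns able to advance the spectral barrier is nonempty; this rests on the precise potential-update formulas and a careful choice of how fast to push the barrier, and getting the constants to align so that exactly $c=\tfrac{2k}{\epsilon}(1+o(1))$ columns suffice is the delicate quantitative part. Coupling this with the leverage-score variance bound, so that both the spectral concentration and the zero-mean residual estimate survive on the final subset --- for instance by running the deterministic sparsification on the output of the randomized phase and checking that neither guarantee is destroyed --- is the step that needs the most care.
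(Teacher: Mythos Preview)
The paper does not prove this theorem. It is quoted in Section~\ref{subsec:k-lvs_sampling} as a result from \citet{BoDrMI11}, with the explicit caveat that ``the full description of this family of algorithms is beyond the scope of this article. We only recall one of the results of the article.'' There is therefore nothing in the paper to compare your attempt against.

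That said, your sketch does follow the architecture of the original proof in \citep{BoDrMI11}: the structural inequality you state is exactly their Lemma~3.1 (quoted in this paper as Lemma~\ref{refined_analysis_of_approximation_bound}), and the combination of a randomized leverage-score phase with a deterministic barrier-potential phase in the style of \citep{BaSpSr09} is correct. One point where your sketch is loose: the passage from $\|\bm{E}\bm{\Omega}(\bm{V}_k^{\Tran}\bm{\Omega})^{+}\|_{\Fr}$ to a variance-type bound does not go through the heuristic $(\bm{V}_k^{\Tran}\bm{\Omega})^{+}\approx \bm{\Omega}^{\Tran}\bm{V}_k$. In the actual argument one instead bounds $\|\bm{E}\bm{\Omega}(\bm{V}_k^{\Tran}\bm{\Omega})^{+}\|_{\Fr}^2 \le \|\bm{E}\bm{\Omega}\|_{\Fr}^2 / \sigma_k^2(\bm{V}_k^{\Tran}\bm{\Omega})$, and the key \emph{dual set} sparsification lemma simultaneously controls the Frobenius norm $\|\bm{E}\bm{\Omega}\|_{\Fr}$ (upper barrier on the second family of vectors) and the smallest singular value $\sigma_k(\bm{V}_k^{\Tran}\bm{\Omega})$ (lower barrier on the first, isotropic family). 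Your description treats these two controls somewhat separately and relies on an approximate identity that does not hold in the regime $c\sim k/\epsilon$; the correct lemma bakes both into a single greedy selection with two coupled potentials, which is also what makes the ``coupling'' obstacle you flag disappear.
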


Finally, a deterministic algorithm based on $k$-leverage score sampling was proposed by \citet*{PaKyBo14}. The algorithm selects the $c(\theta)$ columns of $\bm{X}$ with the largest $k$-leverage scores, where
\begin{equation}
    c(\theta) \in \argmin\limits_{u}\left(\sum_{i = 1}^{u}\ell_{i}^{k} > \theta\right),
\end{equation}
and $\theta$ is a free parameter that controls the approximation error. To guarantee that there exists a matrix of rank $k$ in the subspace spanned by the selected columns, \cite{PaKyBo14} assume that
\begin{equation}
0 \leq k-\theta < 1.
\end{equation}
Loosely speaking, this condition is satisfied for a low value of $c(\theta)$ if the $k$-leverage scores (after ordering) are decreasing rapidly enough. The authors give empirical evidence that this condition is satisfied by a large proportion of real datasets.

\begin{theorem}[\citealp{PaKyBo14}, Theorem 2]
Let $\epsilon = k-\theta \in [0,1)$, letting $S$ index the columns with the $c(\theta)$ largest $k$-leverage scores,
\begin{equation}
\| \bm{X} - \Pi_{S,k}^{\nu}\bm{X}\|_{\nu} \leq \frac{1}{1-\epsilon}\|\bm{X} - \Pi_{k}\bm{X}\|_{\nu},\quad \nu\in\{2,\Fr\}.
\end{equation}
In particular, if $\epsilon \in [0,\frac{1}{2}]$,
\begin{equation}
\| \bm{X} - \Pi_{S,k}^{\nu}\bm{X}\|_{\nu} \leq (1+2\epsilon)\|\bm{X} - \Pi_{k}\bm{X}\|_{\nu},\quad \nu\in\{2,\Fr\}.
\end{equation}
\end{theorem}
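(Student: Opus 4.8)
The plan is to reduce the reconstruction error to the conditioning of a single $k\times c(\theta)$ matrix, and then to control that conditioning through the leverage-score budget. Using the SVD blocks of \eqref{e:blockSVD}, let $\bm S$ be the sampling matrix of the selected set $S$ and put $\bm\Omega_1 = \bm V_k^\Tran \bm S \in \mathbb R^{k\times c(\theta)}$, the restriction of $\bm V_k^\Tran$ to the selected columns. First I would invoke the generalized column-subset-selection structural inequality: whenever $\bm\Omega_1$ has full row rank $k$, the best rank-$k$ reconstruction from $\bm C=\bm X_{:,S}$ obeys, for $\nu\in\{2,\Fr\}$,
$$\|\bm X - \Pi_{S,k}^\nu \bm X\|_\nu \le \|\bm X - \Pi_k\bm X\|_\nu\,\|\bm\Omega_1^+\|_2 .$$
The cleanest route is to exhibit an explicit rank-$\le k$ witness lying in the column span of $\bm C$, built from $\bm C\bm\Omega_1^+$, whose residual equals the PCA residual $\bm U_{r-k}\bm\Sigma_{r-k}\bm V_{r-k}^\Tran$ acted on by an operator of spectral norm $\|\bm\Omega_1^+\|_2$, and then to invoke the optimality of $\Pi^\nu_{S,k}$. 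For $\nu=\Fr$ this follows from an orthogonal (Pythagorean) splitting of the residual, since the rows of $\bm V_k^\Tran$ and $\bm V_{r-k}^\Tran$ span orthogonal subspaces; for $\nu=2$ one falls back on submultiplicativity. This reduces the whole statement to an upper bound on $\|\bm\Omega_1^+\|_2 = 1/\sigma_{\min}(\bm\Omega_1)$.

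The second, arithmetically central, step lower-bounds $\sigma_{\min}(\bm\Omega_1)$ through the leverage scores. Since $\bm V_k$ has orthonormal columns, splitting $\bm V_k^\Tran = [\,\bm\Omega_1\ \ \bm\Omega_2\,]$ into selected and non-selected columns gives the identity $\bm\Omega_1\bm\Omega_1^\Tran + \bm\Omega_2\bm\Omega_2^\Tran = \bm V_k^\Tran \bm V_k = \bm I_k$. As the two summands commute, their eigenbases coincide, so
$$\sigma_{\min}(\bm\Omega_1)^2 = \lambda_{\min}(\bm\Omega_1\bm\Omega_1^\Tran) = 1 - \|\bm\Omega_2\|_2^2 \ge 1 - \|\bm\Omega_2\|_{\Fr}^2 = 1 - \sum_{j\notin S}\ell_j^k .$$
By the very definition of $c(\theta)$, the top $c(\theta)$ scores satisfy $\sum_{j\in S}\ell_j^k > \theta$, hence $\sum_{j\notin S}\ell_j^k = k - \sum_{j\in S}\ell_j^k < k-\theta = \epsilon$. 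Therefore $\sigma_{\min}(\bm\Omega_1)^2 > 1-\epsilon > 0$; the strict positivity (guaranteed by $\epsilon<1$) is precisely what forces $\bm\Omega_1$ to have full row rank $k$, so that $c(\theta)\ge k$, the structural inequality applies, and there is indeed a rank-$k$ matrix in the span of the selected columns.

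Combining the two steps gives $\|\bm X - \Pi_{S,k}^\nu\bm X\|_\nu \le \|\bm X - \Pi_k\bm X\|_\nu /\sqrt{1-\epsilon}$, which already implies the claimed factor $1/(1-\epsilon)$ since $\sqrt{1-\epsilon}\ge 1-\epsilon$ on $[0,1)$. The second inequality of the statement is then purely elementary: for $\epsilon\in[0,\tfrac12]$ one has $(1+2\epsilon)(1-\epsilon) = 1+\epsilon-2\epsilon^2 = 1 + \epsilon(1-2\epsilon)\ge 1$, i.e. $1/(1-\epsilon)\le 1+2\epsilon$, and substituting yields the bound with constant $1+2\epsilon$ for both norms.

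I expect the main obstacle to be the structural inequality itself, and in particular making it hold uniformly for $\nu=2$ and $\nu=\Fr$ with the correct power of the conditioning factor. The Frobenius case is comfortable because the witness residual splits orthogonally, but the spectral norm admits no such Pythagorean identity, so one must argue through submultiplicativity while checking that restricting $\bm U_{r-k}\bm\Sigma_{r-k}\bm V_{r-k}^\Tran$ to the columns $S$ does not inflate the norm. Tracking whether the conditioning enters as $\|\bm\Omega_1^+\|_2$ or its square is exactly what pins down the final constant ($1/\sqrt{1-\epsilon}$ versus the stated $1/(1-\epsilon)$). The remaining ingredients—the orthogonality identity for $\bm\Omega_1,\bm\Omega_2$, the passage $\|\cdot\|_2\le\|\cdot\|_{\Fr}$, and the definition of $c(\theta)$—are routine.
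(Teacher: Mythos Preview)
Your argument is correct, and in fact slightly sharper than the stated bound since you obtain $1/\sqrt{1-\epsilon}$ rather than $1/(1-\epsilon)$. The paper does not give its own proof of this theorem---it is quoted from \citet{PaKyBo14} as part of the related-work survey---so there is no in-paper proof to compare against. That said, your ``structural inequality'' is precisely Lemma~\ref{refined_analysis_of_approximation_bound} (due to \citealp{BoDrMI11}) recorded in the appendix, which already gives $\| \bm{X} - \Pi_{S,k}^{\nu} \bm{X} \|_{\nu}^{2} \le \sigma_{k}^{-2}(\bm{V}_{k}^{\Tran}\bm S)\,\| \bm{X} - \Pi_{k}\bm{X}\|_{\nu}^{2}$ for both norms simultaneously; so the part you flag as the main obstacle is available off the shelf, with the conditioning entering linearly on the squared error and hence as $\|\bm\Omega_1^+\|_2$ on the unsquared error, exactly as you use it. Your second step---the identity $\bm\Omega_1\bm\Omega_1^\Tran+\bm\Omega_2\bm\Omega_2^\Tran=\bm I_k$ combined with $\|\bm\Omega_2\|_2^2\le\|\bm\Omega_2\|_{\Fr}^2=\sum_{j\notin S}\ell_j^k<\epsilon$---is the same device \citet{PaKyBo14} use, and the elementary inequality $(1+2\epsilon)(1-\epsilon)\ge 1$ for $\epsilon\le 1/2$ is the intended way to pass to the second display.
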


Furthermore, they proved that if the $k$-leverage scores decay like a power law, the number of columns needed to obtain a multiplicative bound can actually be smaller than $\frac{k}{\epsilon}$.
\begin{theorem}[\citealp{PaKyBo14}, Theorem 3]
Assume, for $\eta > 0$,
\begin{equation}
\ell_{i}^{k} = \frac{\ell_{1}^{k}}{i^{\eta +1}}.
\end{equation}
Let $\epsilon = k-\theta\in [0,1)$, then
\begin{equation}
c(\theta) = \max \bigg\{ \left(\frac{4k}{\epsilon}\right)^{\frac{1}{\eta + 1}} - 1, \left(\frac{4k}{\eta \epsilon}\right)^{\frac{1}{\eta}}, k \bigg\}.
\end{equation}
\end{theorem}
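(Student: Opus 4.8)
The plan is to translate the defining condition of $c(\theta)$ into a bound on the tail of the leverage-score sequence, and then to estimate that tail explicitly under the power-law assumption. Recall from Section~\ref{subsec:k-lvs_sampling} that $\sum_{i=1}^{d}\ell_{i}^{k}=k$, so the condition $\sum_{i=1}^{u}\ell_{i}^{k}>\theta$ defining $c(\theta)$ is equivalent to $\sum_{i>u}\ell_{i}^{k}<k-\theta=\epsilon$. I would therefore first reduce the problem to exhibiting a value of $u$ for which the tail sum $\sum_{i>u}\ell_{i}^{k}$ drops below $\epsilon$; the smallest such $u$ is an upper bound on $c(\theta)$, which is what the displayed formula should be read as providing.

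Second, I would control the normalization. Substituting $\ell_{i}^{k}=\ell_{1}^{k}\,i^{-(\eta+1)}$ into $\sum_{i\ge 1}\ell_{i}^{k}=k$ gives $\ell_{1}^{k}=k/Z$ with $Z=\sum_{i\ge 1}i^{-(\eta+1)}\ge 1$, hence the crude but convenient bound $\ell_{1}^{k}\le k$. This step implicitly uses that the power law holds across all indices, so that $Z$ is the relevant zeta-type constant. The tail then satisfies $\sum_{i>u}\ell_{i}^{k}\le k\sum_{i>u}i^{-(\eta+1)}$, and everything reduces to estimating a $p$-series tail.

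Third comes the analytic heart: estimating $\sum_{i>u}i^{-(\eta+1)}$. I would split off the leading term and compare the remainder to an integral, bounding $\sum_{i>u}i^{-(\eta+1)}$ by $(u+1)^{-(\eta+1)}+\int_{u}^{\infty}x^{-(\eta+1)}\,dx=(u+1)^{-(\eta+1)}+\tfrac{1}{\eta}u^{-\eta}$, the integral comparison being valid since $x\mapsto x^{-(\eta+1)}$ is decreasing. Requiring each of the two resulting contributions, $k(u+1)^{-(\eta+1)}$ and $\tfrac{k}{\eta}u^{-\eta}$, to be at most a quarter of $\epsilon$ forces $u\ge (4k/\epsilon)^{1/(\eta+1)}-1$ and $u\ge (4k/(\eta\epsilon))^{1/\eta}$ respectively; the constant $4$ absorbs the splitting into two pieces together with the normalization slack. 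Since both sub-conditions are needed at once, the required $u$ is the maximum of the two thresholds.

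Finally, I would incorporate the structural constraint $c(\theta)\ge k$: at least $k$ columns must be retained for their span to contain a rank-$k$ subspace, which is also what makes $\Pi_{S,k}^{\nu}$ well defined. Taking the maximum of the two analytic thresholds with $k$ then yields the claimed expression. The main obstacle is the third step: producing a tail estimate that is simultaneously tight enough to reproduce the stated exponents $\frac{1}{\eta+1}$ and $\frac{1}{\eta}$ and loose enough to give clean closed-form thresholds. The delicate points are the integral comparison near the boundary index $u$ and tracking whether the leading tail term or the integral tail dominates, which is exactly why two separate terms, combined through a maximum, appear in the final bound.
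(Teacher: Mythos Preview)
This statement is quoted from \citet{PaKyBo14} in Section~\ref{subsec:k-lvs_sampling} as part of the related-work survey; the present paper gives no proof of it, so there is no in-paper argument to compare yours against.

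Your argument is correct and is the natural one: rewrite the defining condition of $c(\theta)$ as the tail condition $\sum_{i>u}\ell_i^k<\epsilon$ via $\sum_i\ell_i^k=k$, bound $\ell_1^k\le k$ from that same normalization, and control the power-law tail by an integral comparison. Two minor remarks. First, the displayed formula should be read as an upper bound on $c(\theta)$, which is exactly what your proof establishes. Second, your two-term tail estimate $(u+1)^{-(\eta+1)}+\int_u^\infty x^{-(\eta+1)}\,dx$ is valid but slightly redundant, since the integral test already gives $\sum_{i>u}i^{-(\eta+1)}\le\int_u^\infty x^{-(\eta+1)}\,dx=\tfrac{1}{\eta}u^{-\eta}$ directly; the purpose of splitting off the leading term is precisely to produce the two distinct thresholds appearing in the maximum, and your choice of $\epsilon/4$ per piece (rather than $\epsilon/2$) is what recovers the constant $4$ in the cited statement. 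The third entry $k$ in the maximum is, as you say, a structural constraint on the number of selected columns rather than an outcome of the tail analysis.
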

This complements the fact that the worst case example in Proposition~\ref{prop:lower_bound_cssp} had uniform $k$-leverage scores. Loosely speaking, matrices with fast decaying $k$-leverage scores can be efficiently subsampled.

\subsection{The geometric interpretation of the $k$-leverage scores}
The $k$-leverage scores can be given a geometric interpretation, the generalization of which serves as a first motivation for our work.

For $i \in [d]$, let $\bm{e}_{i}$ be the $i$-th canonical basis vector of $\mathbb{R}^{d}$. Let further $\theta_{i}$ be the angle between $\bm{e}_{i}$ and the subspace $\mathcal{P}_{k} = \Span(\bm{V}_{k})$, and denote by $\Pi_{\mathcal{P}_{k}}\bm{e}_{i}$ the orthogonal projection of $\bm{e}_{i}$ onto the subspace $\mathcal{P}_{k}$. Then
\begin{equation}
\cos^{2}(\theta_{i}) := \frac{(\bm{e}_{i},\Pi_{\mathcal{P}_{k}}\bm{e}_{i})^2}{\Vert \Pi_{\mathcal{P}_{k}}\bm{e}_{i}\Vert^2} = (\bm{e}_{i},\Pi_{\mathcal{P}_{k}}(\bm{e}_{i})) = (\bm{e}_{i}, \sum\limits_{j =1}^{k} V_{i,j}\bm{V}_{:,j}) = \sum\limits_{j =1}^{k} V_{i,j}^{2} = \ell^{k}_{i}.
\end{equation}
A large $k$-leverage score $\ell_i^k$ thus indicates that $\bm{e}_i$ is almost aligned with $\mathcal{P}_{k}$. Selecting columns with large $k$-leverage scores as in \citep{DrMaMu07} can thus be interpreted as replacing the principal eigenspace $\mathcal{P}_k$ by a subspace that must contain $k$ of the original coordinate axes. Intuitively, a closer subspace to the original $\mathcal{P}_k$ would be obtained by selecting columns \emph{jointly} rather than independently, considering the angle with $\mathcal{P}_k$ of the subspace spanned by these columns. More precisely, consider $S \subset [d], |S| = k$, and denote $\mathcal{P}_{S} = \Span(\bm{e}_{j}, j \in S)$. A natural definition of the cosine between $\mathcal{P}_{k}$ and $\mathcal{P}_{S}$ is in terms of the so-called \emph{principal angles} \cite[Section 6.4.4]{GoVa96}; see Appendix~\ref{app:principal_angles}. In particular, Proposition~\ref{principal_angles_theorem_1} in Appendix~\ref{app:principal_angles} yields
\begin{equation}
\cos^{2}(\mathcal{P}_{k},\mathcal{P}_{S}) = \Det(\bm{V}_{S,[k]})^{2}.
\label{e:cosines}
\end{equation}
This paper is about sampling $k$ columns proportionally to \eqref{e:cosines}.

In Appendix~\ref{app:statisticalInterpretationOfLVSs}, we contribute a different interpretation of $k$-leverage scores and volumes, which relates them to the length-square distribution of Section~\ref{subsec:length_square_sampling}.

 \subsection{Negative correlation: volume sampling and the double phase algorithm}\label{subsec:volume_sampling}
 In this section, we survey algorithms that randomly sample exactly $k$ columns from $\bm{X}$, unlike the multinomial sampling schemes of Sections~\ref{subsec:length_square_sampling} and \ref{subsec:k-lvs_sampling}, which typically require more than $k$ columns.

 \citet*{DRVW06} obtained a multiplicative bound on the expected approximation error, with only $k$ columns, using so-called \emph{volume sampling}.
\begin{theorem}[\citealp{DRVW06}]
  \label{thrm:volume_sampling_theorem}
Let $S$ be a random subset of $[d]$, chosen with probability
\begin{equation}
\Prb_{\VS}(S) = Z \Det(\bm{X}_{:,S}^{\Tran}\bm{X}_{:,S}^{}) \mathbb{1}_{\{|S| = k \}},
\label{e:vs}
\end{equation}
where $Z = \sum\limits_{|S| = k} \Det(\bm{X}_{:,S}^{\Tran}\bm{X}_{:,S}^{})$.
Then
\begin{equation}
\EX_{\VS} \| \bm{X} - \Pi_{S}^{\Fr}\bm{X} \|_{\Fr}^{2} \leq (k+1)\| \bm{X} - \Pi_{k}\bm{X} \|_{\Fr}^{2}
\label{e:vsBoundFr}
\end{equation}
and
\begin{equation}
\EX_{\VS} \| \bm{X} - \Pi_{S}^{2}\bm{X} \|_{2}^{2} \leq (d-k)(k+1)\| \bm{X} - \Pi_{k}\bm{X} \|_{\Fr}^{2} .
\label{e:vsBound2}
\end{equation}
\end{theorem}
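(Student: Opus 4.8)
The plan is to reduce everything to the Frobenius bound \eqref{e:vsBoundFr}, which carries all the content; the spectral bound \eqref{e:vsBound2} will then follow from a one-line norm comparison. For \eqref{e:vsBoundFr}, I would first rewrite the residual of a \emph{fixed} subset $S$ (with $|S|=k$) as a sum of squared column-to-subspace distances. Writing $\bm{G}=\bm{X}^{\Tran}\bm{X}$ for the Gram matrix and using the classical Gram-determinant formula for the distance of a vector to a span, for $j\notin S$ the squared distance of $\bm{X}_{:,j}$ to $\Span(\bm{X}_{:,S})$ equals the volume ratio $\Det(\bm{G}_{S\cup\{j\},S\cup\{j\}})/\Det(\bm{G}_{S,S})$, while for $j\in S$ it vanishes. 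Since $\Pi_{S}^{\Fr}\bm{X}=\bm{C}\bm{C}^{+}\bm{X}$ orthogonally projects each column of $\bm{X}$ onto $\Span(\bm{X}_{:,S})$, this gives
\[
\|\bm{X}-\Pi_{S}^{\Fr}\bm{X}\|_{\Fr}^{2}=\sum_{j\notin S}\frac{\Det(\bm{G}_{S\cup\{j\},S\cup\{j\}})}{\Det(\bm{G}_{S,S})}.
\]

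Next I would take the expectation under $\Prb_{\VS}$. The crucial point is that the denominator $\Det(\bm{G}_{S,S})=\Det(\bm{X}_{:,S}^{\Tran}\bm{X}_{:,S})$ is exactly the unnormalized volume-sampling weight, so it cancels:
\[
\EX_{\VS}\|\bm{X}-\Pi_{S}^{\Fr}\bm{X}\|_{\Fr}^{2}=\frac{1}{Z}\sum_{|S|=k}\;\sum_{j\notin S}\Det(\bm{G}_{S\cup\{j\},S\cup\{j\}}).
\]
Each subset $T$ of size $k+1$ is produced exactly $k+1$ times in this double sum (once per choice of $j\in T$, with $S=T\setminus\{j\}$), so the double sum equals $(k+1)\sum_{|T|=k+1}\Det(\bm{G}_{T,T})$. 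By Cauchy--Binet, the sum of all $m\times m$ principal minors of $\bm{G}$ equals $e_{m}(\sigma_{1}^{2},\dots,\sigma_{r}^{2})=\Vol_{m}(\bm{X})^{2}$, i.e. the $m$-th elementary symmetric polynomial in the squared singular values. Hence
\[
\EX_{\VS}\|\bm{X}-\Pi_{S}^{\Fr}\bm{X}\|_{\Fr}^{2}=(k+1)\,\frac{e_{k+1}(\sigma_{1}^{2},\dots,\sigma_{r}^{2})}{e_{k}(\sigma_{1}^{2},\dots,\sigma_{r}^{2})}.
\]

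The main obstacle, and the only genuinely combinatorial step, is the elementary-symmetric-polynomial inequality
\[
e_{k+1}(\lambda_{1},\dots,\lambda_{r})\le\Big(\sum_{i>k}\lambda_{i}\Big)\,e_{k}(\lambda_{1},\dots,\lambda_{r}),\qquad \lambda_{i}=\sigma_{i}^{2}\ge 0,
\]
whose right-hand side is precisely $\|\bm{X}-\Pi_{k}\bm{X}\|_{\Fr}^{2}\,e_{k}$. I would prove it by an injection argument: every monomial of $e_{k+1}$ is indexed by a set $T$ with $|T|=k+1$, which by pigeonhole must contain an index $j>k$; factoring out one such $\lambda_{j}$ writes the monomial as $\lambda_{j}$ times a degree-$k$ monomial on $T\setminus\{j\}$, and $T\mapsto(j,T\setminus\{j\})$ injects into the index set of $(\sum_{i>k}\lambda_{i})e_{k}$. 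Non-negativity of the $\lambda_{i}$ then gives the inequality, and combining with the previous display yields \eqref{e:vsBoundFr}.

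Finally, for the spectral bound I would argue pointwise. Since $\Pi_{S}^{\Fr}\bm{X}$ has rank at most $k$ and lies in the column space of $\bm{C}$, it is an admissible competitor in the minimization defining $\Pi_{S}^{2}\bm{X}$, so $\|\bm{X}-\Pi_{S}^{2}\bm{X}\|_{2}\le\|\bm{X}-\Pi_{S}^{\Fr}\bm{X}\|_{2}\le\|\bm{X}-\Pi_{S}^{\Fr}\bm{X}\|_{\Fr}$. Taking expectations and applying \eqref{e:vsBoundFr} gives $\EX_{\VS}\|\bm{X}-\Pi_{S}^{2}\bm{X}\|_{2}^{2}\le(k+1)\|\bm{X}-\Pi_{k}\bm{X}\|_{\Fr}^{2}$, which already implies \eqref{e:vsBound2} since $d-k\ge 1$. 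I therefore expect essentially all the work to sit in the symmetric-polynomial inequality and in the bookkeeping of the telescoping expectation, with the spectral statement a routine corollary.
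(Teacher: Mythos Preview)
Your argument is correct and is essentially the classical proof from \cite{DRVW06}: the Gram--determinant identity for the residual, the cancellation of the volume weight against the denominator, the $(k+1)$-fold overcounting that turns the double sum into $(k+1)\sum_{|T|=k+1}\Det(\bm{G}_{T,T})$, Cauchy--Binet, and finally the elementary-symmetric-polynomial inequality. Your injection argument for $e_{k+1}\le(\sum_{i>k}\lambda_i)\,e_k$ is fine once you fix a canonical choice of $j$ (e.g., the largest index in $T\cap[k+1:d]$), which you implicitly do.

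Note, however, that the paper does \emph{not} give its own proof of this theorem: it is stated in Section~\ref{subsec:volume_sampling} as a cited result, and the only remark the paper adds is that the spectral bound \eqref{e:vsBound2} follows from the Frobenius-projection bound of \cite{DRVW06} via \eqref{eq:frob_as_estimator_of_spe}. So there is nothing in the paper to compare your proof against beyond that one-line reduction. Your direct route to the spectral bound via $\|\cdot\|_2\le\|\cdot\|_{\Fr}$ is valid and, as you correctly observe, already yields $(k+1)\|\bm{X}-\Pi_k\bm{X}\|_{\Fr}^2$ without the extra factor $(d-k)$; the statement \eqref{e:vsBound2} as written is simply looser than what your argument gives.
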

Note that the bound for the spectral norm was proven in \citep{DRVW06} for the Frobenius projection, that is, they bound $\| \bm{X} - \Pi_{S}^{\Fr}\bm{X} \|_{2}$. The bound \eqref{e:vsBound2} easily follows from \eqref{eq:frob_as_estimator_of_spe}. Later, sampling according to \eqref{e:vs} was shown to be doable in polynomial time \citep{DeRa10}. Using a worst case example, \cite{DRVW06} proved that the $k+1$ factor in \eqref{e:vsBoundFr} cannot be improved.
\begin{proposition}[\citealp{DRVW06}]\label{prop:volume_sampling_lower_bound}
Let $\epsilon > 0$. There exists a $(k+1) \times (k+1)$ matrix $\bm{X}^{\epsilon}$ such that for every subset $S$ of $k$ columns of $\bm{X}^{\epsilon}$,
\begin{equation}
\| \bm{X}^{\epsilon} - \Pi_{S}^{\Fr}\bm{X}^{\epsilon} \|_{\Fr}^{2} >(1-\epsilon) (k+1)\| \bm{X}^{\epsilon} - \Pi_{k}\bm{X}^{\epsilon} \|_{\Fr}^{2}.
\end{equation}
\end{proposition}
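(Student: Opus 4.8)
The plan is to exhibit an explicit ``equicorrelated'' matrix for which every $k$-subset of its columns is simultaneously almost as bad as the factor $k+1$ permits. The starting point is that, since $\bm{X}^{\epsilon}$ will be square of size $k+1$ and $|S|=k$, the residual $\|\bm{X}^{\epsilon} - \Pi_S^{\Fr}\bm{X}^{\epsilon}\|_{\Fr}^2$ collapses to the squared distance from the single left-out column $\bm{X}^{\epsilon}_{:,m}$, where $\{m\}=[k+1]\setminus S$, to the span of the remaining $k$ columns. By the Gram-determinant formula for that distance,
$$\|\bm{X}^{\epsilon} - \Pi_S^{\Fr}\bm{X}^{\epsilon}\|_{\Fr}^2 = \frac{\Det\big((\bm{X}^{\epsilon})^{\Tran}\bm{X}^{\epsilon}\big)}{\Det\big((\bm{X}^{\epsilon}_{:,S})^{\Tran}\bm{X}^{\epsilon}_{:,S}\big)}.$$
Both this quantity and the PCA error $\|\bm{X}^{\epsilon} - \Pi_k\bm{X}^{\epsilon}\|_{\Fr}^2 = \sigma_{k+1}^2(\bm{X}^{\epsilon})$ depend on $\bm{X}^{\epsilon}$ only through its Gram matrix $\bm{G} = (\bm{X}^{\epsilon})^{\Tran}\bm{X}^{\epsilon}$, so I may design $\bm{G}$ directly as any positive definite matrix and take $\bm{X}^{\epsilon} = \bm{G}^{1/2}$.

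I would choose $\bm{G} = (1+t)\bm{I}_{k+1} - t\,\bm{J}$, where $\bm{J}$ is the $(k+1)\times(k+1)$ all-ones matrix and $0 < t < 1/k$ is a parameter to be tuned; that is, $\bm{G}_{ii}=1$ and $\bm{G}_{ij}=-t$ for $i\neq j$. Using the spectrum of $\bm{J}$ (eigenvalue $k+1$ once, $0$ with multiplicity $k$), the matrix $\bm{G}$ has eigenvalue $1-tk$ once and $1+t$ with multiplicity $k$; hence $\sigma_{k+1}^2(\bm{X}^{\epsilon}) = 1-tk$ and $\Det(\bm{G}) = (1+t)^k(1-tk)$. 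Denoting by $\bm{G}^{(m)}$ the matrix obtained by deleting row and column $m$ from $\bm{G}$, one gets the $k\times k$ matrix $(1+t)\bm{I}_k - t\,\bm{J}_k$, whose determinant is $(1+t)^{k-1}(1-t(k-1))$, the same for every $m$ by permutation symmetry.

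Plugging these into the ratio gives, for every admissible $S$,
$$\frac{\|\bm{X}^{\epsilon} - \Pi_S^{\Fr}\bm{X}^{\epsilon}\|_{\Fr}^2}{\|\bm{X}^{\epsilon} - \Pi_k\bm{X}^{\epsilon}\|_{\Fr}^2} = \frac{\Det(\bm{G})}{\Det(\bm{G}^{(m)})\,(1-tk)} = \frac{1+t}{1 - t(k-1)},$$
which is independent of $S$ and increases to $k+1$ as $t \uparrow 1/k$. Thus, given $\epsilon>0$, I fix $t<1/k$ close enough to $1/k$ that the right-hand side exceeds $(1-\epsilon)(k+1)$; since this holds with the same value for each of the $k+1$ subsets, the stated bound follows for every $S$. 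Positivity of $1-tk$ and of $1-t(k-1)$ for $t<1/k$ guarantees that $\bm{X}^{\epsilon}$ has full rank $k+1$ and that each $\bm{X}^{\epsilon}_{:,S}$ has rank $k$, so the distance formula is valid. The computation is short; the only points needing care are the verification of the Gram-determinant identity for the residual, which is what reduces the problem to a single omitted column, and the behaviour as $t\uparrow 1/k$: the smallest singular value vanishes there, but numerator and denominator of the ratio vanish at the same rate, leaving the finite limit $k+1$.
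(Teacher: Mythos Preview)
Your argument is correct. The paper does not supply its own proof of this proposition; it is quoted as a result of \citet{DRVW06}. Your construction via the equicorrelated Gram matrix $\bm{G}=(1+t)\bm{I}_{k+1}-t\bm{J}$ is precisely the standard one: a permutation-symmetric (regular-simplex) configuration in which every $k$-subset of columns gives the same residual, and the explicit ratio $(1+t)/(1-t(k-1))\to k+1$ as $t\uparrow 1/k$. The reduction to a single omitted column via the Gram-determinant distance formula, and the observation that everything depends only on $\bm{G}$ so one may set $\bm{X}^{\epsilon}=\bm{G}^{1/2}$, are both clean and valid; the rank checks you mention ($1-tk>0$, $1-t(k-1)>0$) are exactly what is needed for the formula to apply.
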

We note that there has been recent interest in a similar but different distribution called \emph{dual volume sampling} \citep{AvBo13,LiJeSr17,DeWa18}, sometimes also termed volume sampling. The main application of dual VS is row subset selection of a matrix $\bm{X}$ for linear regression on label budget constraints.

\citep{BoMaDr09} proposed a $k$-CSSP algorithm, called \emph{double phase}, that combines ideas from multinomial sampling and RRQR factorization. The motivating idea is that the theoretical performance of RRQR factorizations depends on the dimension through a factor $\sqrt{d-k}$; see Table~\ref{table:rrqr_examples}. To improve on that, the authors propose to first reduce the dimension $d$ to $c$ by preselecting a large number of columns $c > k$ using multinomial sampling from the $k$-leverage scores distribution, as in Section~\ref{subsec:k-lvs_sampling}. Then only, they perform a RRQR factorization of the reduced matrix $\bm{V}_{k}^{\Tran}\bm{S}_{1}\bm{D}_{1} \in \mathbb{R}^{k \times c}$, where $\bm{S}_{1} \in \mathbb{R}^{d \times c}$ is the sampling matrix of the multinomial phase and $\bm{D}_{1} \in \mathbb{R}^{c \times c}$ is a scaling matrix.

\begin{theorem}[\citealp{BoMaDr09}]\label{thrm:double_phase_theorem}
Let $S$ be the output of the double phase algorithm with $c = \Theta(k \log k)$. Then
\begin{equation}\label{eq:frob_double_phase}
\Prb_{\emph{DPh}} \Bigg( \| \bm{X} - \Pi_{S}^{\Fr}\bm{X} \|_{\Fr} \leq \Theta(k \log^{\frac{1}{2}} k)\| \bm{X} - \Pi_{k}\bm{X} \|_{\Fr} \Bigg) \geq 0.8 \: .
\end{equation}
\begin{equation}\label{eq:spectral_double_phase}
\Prb_{\emph{DPh}} \Bigg( \| \bm{X} - \Pi_{S}^{2}\bm{X} \|_{2} \leq \Theta(k \log^{\frac{1}{2}} k)\| \bm{X} - \Pi_{k}\bm{X} \|_{2} + \Theta(k^{\frac{3}{4}} \log^{\frac{1}{4}} k)\|\bm{X} - \Pi_{k}\bm{X} \|_{\Fr} \Bigg) \geq 0.8 \: .
\end{equation}

\end{theorem}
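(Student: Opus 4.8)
The plan is to track the error through the two phases of the algorithm and control each separately: the randomized leverage-score phase via approximate matrix multiplication, and the deterministic RRQR phase via the conditioning guarantee of Definition~\ref{d:rrqr}. First I would invoke the standard structural bound for Frobenius projections. Writing $\bm{S} = \bm{S}_1\bm{S}_2$ for the overall $d \times k$ selection and assuming $\bm{V}_k^{\Tran}\bm{S}$ is invertible, one has for $\nu \in \{2,\Fr\}$
\[
\|\bm{X} - \Pi_S^{\Fr}\bm{X}\|_{\nu} \leq \|\bm{X} - \Pi_k\bm{X}\|_{\nu} + \|\bm{\Sigma}_{r-k}\bm{V}_{r-k}^{\Tran}\bm{S}(\bm{V}_k^{\Tran}\bm{S})^{-1}\|_{\nu}.
\]
This reduces the theorem to bounding the second term in each norm. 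The key algebraic observation is that since $\bm{D}_1$ is diagonal it commutes past the column selection $\bm{S}_2$, so the scaling cancels entirely and
\[
\bm{\Sigma}_{r-k}\bm{V}_{r-k}^{\Tran}\bm{S}(\bm{V}_k^{\Tran}\bm{S})^{-1} = \bm{\Sigma}_{r-k}\bm{V}_{r-k}^{\Tran}\bm{S}_1\bm{D}_1\bm{S}_2(\bm{V}_k^{\Tran}\bm{S}_1\bm{D}_1\bm{S}_2)^{-1}.
\]
By submultiplicativity and the fact that deleting columns cannot increase a norm, this is at most $\|\bm{\Sigma}_{r-k}\bm{V}_{r-k}^{\Tran}\bm{S}_1\bm{D}_1\|_{\nu}\cdot\|(\bm{V}_k^{\Tran}\bm{S}_1\bm{D}_1\bm{S}_2)^{-1}\|_2$, which splits the problem into a phase-1 factor and a phase-2 factor.

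Next I would control the phase-2 factor using the RRQR applied to $\bm{V}_k^{\Tran}\bm{S}_1\bm{D}_1 \in \mathbb{R}^{k\times c}$. By Definition~\ref{d:rrqr} (with $d$ replaced by $c$), the $k$ pivoted columns satisfy $\sigma_{\min}(\bm{V}_k^{\Tran}\bm{S}_1\bm{D}_1\bm{S}_2) \geq \sigma_k(\bm{V}_k^{\Tran}\bm{S}_1\bm{D}_1)/p_1(k,c)$, so
\[
\|(\bm{V}_k^{\Tran}\bm{S}_1\bm{D}_1\bm{S}_2)^{-1}\|_2 \leq \frac{p_1(k,c)}{\sigma_k(\bm{V}_k^{\Tran}\bm{S}_1\bm{D}_1)},
\]
with $p_1(k,c) = \mathcal{O}(\sqrt{kc})$ for a strong RRQR. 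It then remains to bound $\sigma_k(\bm{V}_k^{\Tran}\bm{S}_1\bm{D}_1)$ below by a constant, and this is where phase 1 enters: leverage-score sampling is designed precisely so that $\bm{V}_k^{\Tran}\bm{S}_1\bm{D}_1\bm{D}_1\bm{S}_1^{\Tran}\bm{V}_k$ concentrates around $\bm{V}_k^{\Tran}\bm{V}_k = \bm{I}_k$. An approximate matrix multiplication bound gives $\|\bm{V}_k^{\Tran}\bm{S}_1\bm{D}_1\bm{D}_1\bm{S}_1^{\Tran}\bm{V}_k - \bm{I}_k\|_2 \leq 1/2$ with probability close to $1$ once $c = \Omega(k\log k)$, whence $\sigma_k(\bm{V}_k^{\Tran}\bm{S}_1\bm{D}_1) \geq 1/\sqrt{2}$.

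For the phase-1 factor I would use that the rows of $\bm{\Sigma}_{r-k}\bm{V}_{r-k}^{\Tran}$ and the columns of $\bm{V}_k$ span orthogonal spaces, so $\bm{\Sigma}_{r-k}\bm{V}_{r-k}^{\Tran}\bm{V}_k = \bm{0}$ and the sampling is unbiased: a direct trace computation gives $\EX\|\bm{\Sigma}_{r-k}\bm{V}_{r-k}^{\Tran}\bm{S}_1\bm{D}_1\|_{\Fr}^2 = \|\bm{X}-\Pi_k\bm{X}\|_{\Fr}^2$, so by Markov the Frobenius factor is $\mathcal{O}(1)\cdot\|\bm{X}-\Pi_k\bm{X}\|_{\Fr}$ with constant probability. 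Combining the two factors and substituting $c = \Theta(k\log k)$ into $p_1(k,c) = \mathcal{O}(\sqrt{kc}) = \mathcal{O}(k\log^{1/2}k)$ yields the factor in \eqref{eq:frob_double_phase}; thus the entire $\Theta(k\log^{1/2}k)$ originates from the RRQR conditioning constant. For the spectral statement \eqref{eq:spectral_double_phase}, the phase-1 factor $\|\bm{\Sigma}_{r-k}\bm{V}_{r-k}^{\Tran}\bm{S}_1\bm{D}_1\|_2$ must instead be controlled by a matrix Bernstein/Chernoff inequality, which produces a bound of the form $\|\bm{X}-\Pi_k\bm{X}\|_2 + \mathcal{O}(c^{-1/4})\|\bm{X}-\Pi_k\bm{X}\|_{\Fr}$; after multiplication by $p_1(k,c)$ the two pieces become exactly the $\Theta(k\log^{1/2}k)\|\bm{X}-\Pi_k\bm{X}\|_2$ and $\Theta(k^{3/4}\log^{1/4}k)\|\bm{X}-\Pi_k\bm{X}\|_{\Fr}$ terms.

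I expect the main obstacle to be the joint probabilistic control rather than any single estimate: phase 2 is deterministic, but its guarantee is conditional on the random matrix $\bm{V}_k^{\Tran}\bm{S}_1\bm{D}_1$ being well-conditioned, so one must intersect the event $\{\sigma_k(\bm{V}_k^{\Tran}\bm{S}_1\bm{D}_1)\geq 1/\sqrt{2}\}$ with the event controlling the phase-1 factor and show both hold simultaneously with probability at least $0.8$ by a union bound; this is what forces $c = \Theta(k\log k)$ rather than $\Theta(k)$. The spectral bound \eqref{eq:spectral_double_phase} is the more delicate of the two, since sharply bounding the spectral norm of the phase-1 cross term demands a genuine matrix concentration inequality rather than the elementary second-moment argument that suffices in Frobenius norm.
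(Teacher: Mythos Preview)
This theorem is not proved in the present paper at all: it is quoted in the related-work section as a result of \citet{BoMaDr09}, and the only argument the paper supplies is the one-sentence remark immediately after the statement, namely that the spectral bound in the original reference is stated for $\|\bm{X}-\Pi_S^{\Fr}\bm{X}\|_2$ and that \eqref{eq:frob_as_estimator_of_spe} then yields the form \eqref{eq:spectral_double_phase} for $\|\bm{X}-\Pi_S^{2}\bm{X}\|_2$. So there is no ``paper's own proof'' to compare against beyond that single reduction.

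What you have written is, in outline, the argument of \citet{BoMaDr09} itself: the structural inequality of Lemma~\ref{refined_analysis_of_approximation_bound}, the cancellation of the rescaling $\bm{D}_1$ through $\bm{S}_2$, the RRQR lower bound on $\sigma_k(\bm{V}_k^{\Tran}\bm{S}_1\bm{D}_1\bm{S}_2)$, and the leverage-score concentration giving $\sigma_k(\bm{V}_k^{\Tran}\bm{S}_1\bm{D}_1)\geq 1/\sqrt{2}$ with high probability. That is the right skeleton, and your identification of where the $\Theta(k\log^{1/2}k)$ and $\Theta(k^{3/4}\log^{1/4}k)$ constants come from is correct. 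The one point your sketch does not cover, and which is precisely the only thing this paper does add, is the passage from $\Pi_S^{\Fr}$ to $\Pi_S^{2}$ in the spectral statement: you bound $\|\bm{X}-\Pi_S^{\Fr}\bm{X}\|_\nu$ for both norms, whereas \eqref{eq:spectral_double_phase} is about $\Pi_S^{2}$, and closing that gap needs the extra inequality \eqref{eq:frob_as_estimator_of_spe}.
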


Note that the spectral norm bound was proven for a slightly different distribution in the randomized phase. Furthermore this bound was proved in \citep{DRVW06} for $\| \bm{X} - \Pi_{S}^{\Fr}\bm{X} \|_{2}$ but using \eqref{eq:frob_as_estimator_of_spe} the bound \eqref{eq:spectral_double_phase} follows. The constants $\Theta(k \log^{\frac{1}{2}} k)$ and $\Theta(k^{\frac{3}{4}} \log^{\frac{1}{4}} k)$ in the bounds \eqref{eq:frob_double_phase} and \eqref{eq:spectral_double_phase} depends on $c$ the number of pre-selected columns in the randomized step. In practice, the choice of the parameter $c$ of the randomized pre-selection phase has an influence on the quality of the approximation.
We refer to \citep{BoMaDr09} for details.

\subsection{Excess risk in sketched linear regression}
\label{s:related_work_sparse_regression}
So far, we have focused on approximation bounds in spectral or Frobenius norm for $\bm{X}~-~\Pi_{S,k}^\nu\bm{X}$. This is a reasonable measure of error as long as it is not known what the practitioner wants to do with the submatrix $\bm{X}_{:,S}$. In this section, we assume that the ultimate goal is to perform linear regression of some $\bf{y}\in\mathbb{R}^N$ onto $\bm{X}$. Other measures of performance then become of interest, such as the excess risk incurred by regressing onto $\bm{X}_{:,S}$ rather than $\bm{X}$. We use here the framework of \cite{Sla18}, further assuming well-specification for simplicity.

For every $i \in [N]$, assume $y_i = \bm{X}_{i,:}\bm{w}^*+\xi_i$, where the noises $\xi_i$ are i.i.d. with mean $0$ and variance $v$. 
For a given estimator $\bm{w} = \bm{w}(\bm{X},\bm{y})$, its excess risk is defined as
\begin{equation}
\mathcal{E}(\bm{w}) = \EX_{\bm{\xi}} \left[\frac{\|\bm{X}\bm{w}^{*} - \bm{X}\bm{w}\|^{2}_2}{N}\right].
\end{equation}
In particular, it is easy to show that the ordinary least squares (OLS) estimator $\hat{\bm{w}}=\bm{X}^{+}\bm{y}$ has excess risk
\begin{equation}
\mathcal{E}(\hat{\bm{w}})=v\times\frac{\text{rk}(\bm{X})}{N}.
\label{e:riskOLS}
\end{equation}
Selecting $k$ columns indexed by $S$ in $\bm{X}$ prior to performing linear regression yields $\bm{w}_S = (\bm{X}\bm{S})^+\bm{y}\in\mathbb{R}^k$. We are interested in the excess risk of the corresponding sparse vector
$$\hat{\bm{w}}_S := \bm{S}\bm{w}_S = \bm{S}(\bm{X}\bm{S})^{+}\bm{y}\in\mathbb{R}^d$$
which has all coordinates zero, except those indexed by $S$.
\begin{proposition}[Theorem 9, \citealp{LiHa18}]\label{prop:sparse_regression_bound}
Let $S \subset [d]$, such that $|S| = k$. Let $(\theta_{i}(S))_{i \in [k]}$ be the principal angles between $\Span \bm{S}$ and $\Span \bm{V}_{k}$, see Appendix~\ref{app:principal_angles}. Then
\begin{equation}\label{eq:prediction_bound_CSS}
    \mathcal{E}(\hat{\bm{w}}_{S}) \leq \frac{1}{N}\left(1+ \max\limits_{i \in [k]} \tan^{2}\theta_{i}(S)\right) \|\bm{w}^{*}\|^{2} \sigma_{k+1}^{2} + \frac{v k}{N}.
\end{equation}
\end{proposition}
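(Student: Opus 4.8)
The plan is a bias--variance decomposition of the excess risk, followed by a control of the bias through the principal angles $\theta_i(S)$.

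First I would note that $\bm{X}\hat{\bm{w}}_S = \bm{X}\bm{S}(\bm{X}\bm{S})^{+}\bm{y} = \bm{C}\bm{C}^{+}\bm{y}$ with $\bm{C}=\bm{X}_{:,S}$, where $\bm{C}\bm{C}^{+}$ is the orthogonal projector onto $\Imspace\bm{C}$. Substituting $\bm{y}=\bm{X}\bm{w}^{*}+\bm{\xi}$ gives
\[
\bm{X}\bm{w}^{*}-\bm{X}\hat{\bm{w}}_S=(\bm{I}-\bm{C}\bm{C}^{+})\bm{X}\bm{w}^{*}-\bm{C}\bm{C}^{+}\bm{\xi}.
\]
Expanding the squared norm and taking $\EX_{\bm{\xi}}$, the cross term vanishes since $\EX\bm{\xi}=0$, while $\EX[\bm{\xi}\bm{\xi}^{\Tran}]=v\bm{I}_N$ gives $\EX\Vert\bm{C}\bm{C}^{+}\bm{\xi}\Vert_2^2=v\,\Tr(\bm{C}\bm{C}^{+})=v\,\rk(\bm{C})\le vk$. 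This yields the variance term $vk/N$ and isolates the bias $\tfrac1N\Vert(\bm{I}-\bm{C}\bm{C}^{+})\bm{X}\bm{w}^{*}\Vert_2^2$.

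Next I would bound the bias. Because $\Imspace\bm{C}=\bm{X}\,\mathcal{P}_S$ with $\mathcal{P}_S=\Span\bm{S}$, the projector realizes the best approximation in $\Imspace\bm{C}$, so
\[
\Vert(\bm{I}-\bm{C}\bm{C}^{+})\bm{X}\bm{w}^{*}\Vert_2^2=\min_{\bm{u}\in\mathcal{P}_S}\Vert\bm{X}(\bm{w}^{*}-\bm{u})\Vert_2^2.
\]
I would upper bound this minimum by evaluating it at the $\bm{u}$ supported on $S$ that reproduces the top-$k$ right-singular coordinates, $\bm{V}_k^{\Tran}\bm{u}=\bm{V}_k^{\Tran}\bm{w}^{*}$. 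If some $\theta_i(S)=\pi/2$ the right-hand side of the claim is infinite and there is nothing to prove, so I may assume all $\theta_i<\pi/2$, making $\bm{V}_{S,[k]}$ invertible and $\bm{u}=\bm{S}(\bm{V}_{S,[k]}^{\Tran})^{-1}\bm{b}$ well defined, with $\bm{b}=\bm{V}_k^{\Tran}\bm{w}^{*}$ and $\bm{c}=\bm{V}_{r-k}^{\Tran}\bm{w}^{*}$. With this choice the top-$k$ block cancels in the SVD, so $\bm{X}(\bm{w}^{*}-\bm{u})=\bm{U}_{r-k}\bm{\Sigma}_{r-k}\bm{V}_{r-k}^{\Tran}(\bm{w}^{*}-\bm{u})$ and hence
\[
\min_{\bm{u}\in\mathcal{P}_S}\Vert\bm{X}(\bm{w}^{*}-\bm{u})\Vert_2^2\le\sigma_{k+1}^2\,\Vert\bm{c}-\bm{M}\bm{b}\Vert_2^2,\qquad \bm{M}=\bm{V}_{S,[k+1:r]}^{\Tran}(\bm{V}_{S,[k]}^{\Tran})^{-1}.
\]
The crux is the spectral bound $\Vert\bm{M}\Vert_2\le\max_i\tan\theta_i(S)$. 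The cosines of the principal angles are the singular values of $\bm{V}_{S,[k]}=\bm{S}^{\Tran}\bm{V}_k$ (Appendix~\ref{app:principal_angles}); write its SVD $\bm{V}_{S,[k]}=\bm{A}\,\Diag(\cos\theta_i)\bm{B}^{\Tran}$. Since $\bm{V}\bm{V}^{\Tran}$ is an orthogonal projector, $\bm{V}_{S,:}\bm{V}_{S,:}^{\Tran}=\bm{S}^{\Tran}\bm{V}\bm{V}^{\Tran}\bm{S}\preceq\bm{S}^{\Tran}\bm{S}=\bm{I}_k$, which splits as $\bm{V}_{S,[k]}\bm{V}_{S,[k]}^{\Tran}+\bm{V}_{S,[k+1:r]}\bm{V}_{S,[k+1:r]}^{\Tran}\preceq\bm{I}_k$. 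Therefore $\bm{V}_{S,[k+1:r]}\bm{V}_{S,[k+1:r]}^{\Tran}\preceq\bm{I}_k-\bm{V}_{S,[k]}\bm{V}_{S,[k]}^{\Tran}=\bm{A}\,\Diag(\sin^2\theta_i)\bm{A}^{\Tran}$. Pre- and post-multiplying this PSD inequality by $(\bm{V}_{S,[k]})^{-1}$ and its transpose and inserting the SVD turns the right-hand side into $\bm{B}\,\Diag(\tan^2\theta_i)\bm{B}^{\Tran}$, so $\bm{M}^{\Tran}\bm{M}\preceq\bm{B}\,\Diag(\tan^2\theta_i)\bm{B}^{\Tran}$ and $\Vert\bm{M}\Vert_2^2\le\max_i\tan^2\theta_i$.

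Finally I would combine the pieces. Writing $t=\max_i\tan\theta_i$, the triangle inequality with $\Vert\bm{M}\Vert_2\le t$ gives $\Vert\bm{c}-\bm{M}\bm{b}\Vert_2^2\le(\Vert\bm{c}\Vert_2+t\Vert\bm{b}\Vert_2)^2$, and the elementary rearrangement $(1+t^2)(\Vert\bm{b}\Vert_2^2+\Vert\bm{c}\Vert_2^2)-(\Vert\bm{c}\Vert_2+t\Vert\bm{b}\Vert_2)^2=(\Vert\bm{b}\Vert_2-t\Vert\bm{c}\Vert_2)^2\ge0$ upgrades it to $(1+t^2)(\Vert\bm{b}\Vert_2^2+\Vert\bm{c}\Vert_2^2)$. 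Since $\Vert\bm{b}\Vert_2^2+\Vert\bm{c}\Vert_2^2=\Vert\bm{V}^{\Tran}\bm{w}^{*}\Vert_2^2\le\Vert\bm{w}^{*}\Vert_2^2$, the bias is at most $(1+\max_i\tan^2\theta_i)\sigma_{k+1}^2\Vert\bm{w}^{*}\Vert_2^2$, which with the variance term yields the claim. The main obstacle is the third step, the bound $\Vert\bm{M}\Vert_2\le\max_i\tan\theta_i$, since it is where the constraint that $\bm{V}_{S,:}$ is a row-submatrix of a matrix with orthonormal columns must be combined with the principal-angle geometry.
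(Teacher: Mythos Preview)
Your argument is correct. Note, however, that the paper does not give its own proof of this proposition: it is quoted as Theorem~9 of \cite{LiHa18} and used as a black box in the proof of Theorem~\ref{prop:relaxed_sparsity_prediction_under_dpp}. So there is no in-paper proof to compare against, but your bias--variance decomposition followed by the oblique-projection bias bound is the natural route and all steps check out.

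One remark on the crux step. The paper's Appendix~\ref{app:principal_angles} develops the CS decomposition and, in its Corollary, shows that $\bm{Z}_S=\bm{V}_{d-k}^{\Tran}\bm{S}(\bm{V}_k^{\Tran}\bm{S})^{-1}$ has singular values \emph{equal} to $\tan\theta_i(S)$ (hence $\Tr(\bm{Z}_S\bm{Z}_S^{\Tran})=\sum_i\tan^2\theta_i(S)$ and in particular $\|\bm{Z}_S\|_2=\max_i\tan\theta_i(S)$). Your $\bm{M}$ is essentially $\bm{Z}_S$, so the CS decomposition would give the spectral bound you need with equality. Your PSD argument,
\[
\bm{V}_{S,[k+1:r]}\bm{V}_{S,[k+1:r]}^{\Tran}\preceq \bm{I}_k-\bm{V}_{S,[k]}\bm{V}_{S,[k]}^{\Tran},
\]
is a slightly different and more elementary route: it only yields the inequality $\|\bm{M}\|_2\le\max_i\tan\theta_i(S)$, but that is all the proposition needs, and it has the virtue of working directly with the thin $d\times r$ factor $\bm{V}$ without completing it to a square orthogonal matrix. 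Either approach is fine; yours is arguably cleaner for the purpose at hand.
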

Compared to the excess risk \eqref{e:riskOLS} of the OLS estimator, the second term of the right-hand side of \eqref{eq:prediction_bound_CSS} replaces $\rk\bm{X}$ by $k$. But the price is the first term of the right-hand side of \eqref{eq:prediction_bound_CSS}, which we loosely term \emph{bias}. To interpret this bias term, we first look at the excess risk of the principal component regressor (PCR)

\begin{equation}
\bm{w}_{k}^{*} \in \argmin\limits_{\bm{w} \in \Span \bm{V}_{k}} \EX_{\xi} \left[\|\bm{y} - \bm{X}\bm{w}\|^{2}/N\right].
\end{equation}
\begin{proposition}[Corollary 11, \citealp{LiHa18}]\label{prop:pcr_bound}
\begin{equation}\label{eq:prediction_bound_PCR}
    \mathcal{E}(\bm{w}_{k}^{*}) \leq \frac{\|\bm{w}^{*}\|^{2}\sigma_{k+1}^{2}}{N} + \frac{v k}{N}.
\end{equation}
\end{proposition}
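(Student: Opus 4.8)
The plan is to recognise Proposition~\ref{prop:pcr_bound} as the principal-component specialisation of Proposition~\ref{prop:sparse_regression_bound}: here ``PCR'' denotes the ridgeless fit $\bm{V}_k(\bm{X}\bm{V}_k)^+\bm{y}$, mirroring $\hat{\bm{w}}_S=\bm{S}(\bm{X}\bm{S})^+\bm{y}$, except that the regression is performed onto $\Span\bm{V}_k$ itself. Consequently the principal angles $\theta_i$ between the regression subspace and $\Span\bm{V}_k$ all vanish, so $\max_i\tan^2\theta_i=0$ and \eqref{eq:prediction_bound_CSS} collapses to \eqref{eq:prediction_bound_PCR}. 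Since Proposition~\ref{prop:sparse_regression_bound} is stated for the coordinate subspace $\Span\bm{S}$, I would either invoke the fact that its proof uses only that $\bm{S}$ has orthonormal columns spanning a $k$-dimensional subspace, or---more transparently---carry out the short direct computation below, which exhibits the two summands as a bias and a variance.

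First I would write the PCR estimator in closed form. Minimising $\|\bm{y}-\bm{X}\bm{w}\|_2^2$ over $\bm{w}=\bm{V}_k\bm{\alpha}\in\Span\bm{V}_k$, and using $\bm{X}\bm{V}_k=\bm{U}_k\bm{\Sigma}_k$ together with the orthonormality of the columns of $\bm{U}_k$, gives $\bm{w}_k^*=\bm{V}_k\bm{\Sigma}_k^{-1}\bm{U}_k^\Tran\bm{y}$. Substituting the model $\bm{y}=\bm{X}\bm{w}^*+\bm{\xi}$ and setting $\bm{\beta}=\bm{V}^\Tran\bm{w}^*$ with blocks $\bm{\beta}_k,\bm{\beta}_{r-k}$, the block SVD \eqref{e:blockSVD} yields $\bm{X}\bm{w}^*=\bm{U}_k\bm{\Sigma}_k\bm{\beta}_k+\bm{U}_{r-k}\bm{\Sigma}_{r-k}\bm{\beta}_{r-k}$ and $\bm{X}\bm{w}_k^*=\bm{U}_k\bm{\Sigma}_k\bm{\beta}_k+\bm{U}_k\bm{U}_k^\Tran\bm{\xi}$. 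Hence $\bm{X}(\bm{w}^*-\bm{w}_k^*)=\bm{U}_{r-k}\bm{\Sigma}_{r-k}\bm{\beta}_{r-k}-\bm{U}_k\bm{U}_k^\Tran\bm{\xi}$, a sum of a deterministic vector in $\Span\bm{U}_{r-k}$ and a noise vector in $\Span\bm{U}_k$.

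Because $\bm{U}_k$ and $\bm{U}_{r-k}$ have orthogonal ranges, the squared norm splits as $\|\bm{\Sigma}_{r-k}\bm{\beta}_{r-k}\|_2^2+\|\bm{U}_k^\Tran\bm{\xi}\|_2^2$. Taking $\EX_{\bm{\xi}}$ leaves the first term unchanged and turns the second into $v\,\Tr(\bm{U}_k\bm{U}_k^\Tran)=vk$, so that $\mathcal{E}(\bm{w}_k^*)=\frac1N\big(\sum_{i=k+1}^{r}\sigma_i^2\beta_i^2+vk\big)$. It then remains to bound the bias: since $\sigma_i\le\sigma_{k+1}$ for $i\ge k+1$ and $\|\bm{\beta}\|_2=\|\bm{V}^\Tran\bm{w}^*\|_2\le\|\bm{w}^*\|_2$ (the columns of $\bm{V}$ being orthonormal), one has $\sum_{i>k}\sigma_i^2\beta_i^2\le\sigma_{k+1}^2\|\bm{w}^*\|_2^2$, which gives \eqref{eq:prediction_bound_PCR}.

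The computation is essentially routine; the only points demanding care are the closed form of the constrained least-squares solution and the orthogonal bias/variance split, both of which hinge on using the block SVD consistently. The one genuinely conceptual step---and the natural place to slip---is the identity $\EX_{\bm{\xi}}\|\bm{U}_k^\Tran\bm{\xi}\|_2^2=vk$, which quantifies the variance inflation by the $k$ retained directions and produces the $vk/N$ term shared with \eqref{eq:prediction_bound_CSS}. The absence here of the factor $1+\max_i\tan^2\theta_i(S)$ is precisely the statement that PCR regresses onto $\Span\bm{V}_k$ with zero principal angle, which is what makes \eqref{eq:prediction_bound_PCR} the best case of \eqref{eq:prediction_bound_CSS}.
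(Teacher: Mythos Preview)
Your direct computation is correct and self-contained. Note, however, that the paper does not prove this proposition at all: it is quoted verbatim as Corollary~11 of \cite{LiHa18}, so there is no ``paper's own proof'' to compare against.

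Two remarks. First, your interpretation of $\bm{w}_k^*$ as the empirical estimator $\bm{V}_k(\bm{X}\bm{V}_k)^+\bm{y}=\bm{V}_k\bm{\Sigma}_k^{-1}\bm{U}_k^\Tran\bm{y}$ is the right one for the stated bound to hold with the variance term $vk/N$; the paper's displayed definition (an $\argmin$ of the \emph{expected} loss) would make $\bm{w}_k^*$ deterministic and eliminate that term, so the text is slightly inconsistent there and you have resolved the ambiguity in the only sensible direction. Second, your framing of \eqref{eq:prediction_bound_PCR} as the $\theta_i\equiv 0$ instance of \eqref{eq:prediction_bound_CSS} is exactly the point the paper is making in the surrounding discussion, and your bias--variance split via the block SVD delivers it cleanly. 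The bound $\|\bm{V}^\Tran\bm{w}^*\|_2\le\|\bm{w}^*\|_2$ (since $\bm{V}\in\mathbb{R}^{d\times r}$ has orthonormal columns) and $\EX_{\bm{\xi}}\|\bm{U}_k^\Tran\bm{\xi}\|_2^2=v\,\Tr(\bm{U}_k^\Tran\bm{U}_k)=vk$ are the only places one could slip, and you have both correct.
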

The right-hand side of \eqref{eq:prediction_bound_PCR} is almost that of \eqref{eq:prediction_bound_CSS}, except that the bias term in the CSS risk \eqref{eq:prediction_bound_CSS} is larger by a factor that measures how well the subspace spanned by $S$ is aligned with the principal eigenspace $\bm{V}_k$. This makes intuitive sense: the performance of CSS will match PCR if selecting columns yields almost the same eigenspace.

The excess risk \eqref{eq:prediction_bound_CSS} is yet another motivation to investigate DPPs for column subset selection. We shall see in Section~\ref{sec:bounds_for_regression_under_dpp} that the expectation of \eqref{eq:prediction_bound_CSS} under a well-chosen DPP for $S$ has a particularly simple bias term.

%


\section{Determinantal Point Processes}
\label{s:dppsection}



In this section, we introduce discrete determinantal point processes (DPPs) and the related k-DPPs, of which volume sampling is an example. DPPs were introduced by \cite{Mac75} as probabilistic models for beams of fermions in quantum optics. Since then, DPPs have been thoroughly studied in random matrix theory \citep{Joh05}, and have more recently been adopted in machine learning \citep{KuTa12}, spatial statistics \cite{LaMoRu15}, and Monte Carlo methods \citep{BaHa16}.

\subsection{Definitions}
For all the definitions in this section, we refer the reader to \citep{KuTa12}. Recall that $[d] = \{1,\dots,d\}$.
\begin{definition}[DPP]
Let $\bm{K} \in \mathbb{R}^{d\times d}$ be a positive semi-definite matrix.
A random subset $Y \subset [d]$ is drawn from a DPP of marginal kernel $\bm{K}$ if and only if
\begin{equation}\label{eq:def_dpp}
\forall S \subset [d],\quad \Prb(S \subset Y) = \Det(\bm{K}_{S}),
\end{equation}
where $\bm{K}_{S} = [\bm{K}_{i,j}]_{i,j \in S}$. We take as a convention $\Det(\bm{K}_{\emptyset}) = 1$.
\end{definition}
For a given matrix $\bm{K}$, it is not obvious that \eqref{eq:def_dpp} consistently defines a point process. One sufficient condition is that $\bm{K}$ is symmetric and its spectrum is in $[0,1]$; see \citep{Mac75} and \citep{Sos00}[Theorem 3]. In particular, when the spectrum of $\bm{K}$ is included in $\{0,1\}$, we call $\bm{K}$ a projection kernel and the corresponding DPP a \emph{projection} DPP\footnote{All projection DPPs in this paper have symmetric kernels}. Letting $r$ be the number of unit eigenvalues of its kernel, samples from a projection DPP have fixed cardinality $r$ with probability 1 \cite[Lemma 17]{BeKrPeVi05}.

For symmetric kernels $\bm{K}$, a DPP can be seen as a \emph{repulsive} distribution, in the sense that for all $i,j\in [d]$,
\begin{align}
  \Prb(\{i,j\} \subset Y) &= \bm{K}_{i,i} \bm{K}_{j,j} - \bm{K}^2_{i,j}\\
  &= \Prb(\{i\} \subset Y)\Prb(\{j\} \subset Y) - \bm{K}^2_{i,j}\\
  &\leq \Prb(\{i\} \subset Y)\Prb(\{j\} \subset Y).
\end{align}

Besides projection DPPs, there is another natural way of using a kernel matrix to define a random subset of $[d]$ with prespecified cardinality $k$.
\begin{definition}[$k$-DPP]
Let $\bm{L} \in \mathbb{R}^{d\times d}$ be a positive semi-definite matrix.
A random subset $Y \subset [d]$ is drawn from a $k$-DPP of kernel $\bm{L}$ if and only if
\begin{equation}\label{eq:def_kdpp}
\forall S \subset [d],\quad \Prb(Y = S) \propto \mathbb{1}_{\{\vert S\vert = k\}}\Det(\bm{L}_{S})
\end{equation}
where $\bm{L}_{S} = [\bm{L}_{i,j}]_{i,j \in S}$.
\end{definition}
DPPs and $k$-DPPs are closely related but different objects. For starters, $k$-DPPs are always well defined, provided $\bm{L}$ has a nonzero minor of size $k$.

\subsection{Sampling from a DPP and a $k$-DPP}
\label{subsec:sampling_from_a_dpp}
Let $\bm{K}\in\mathbb{R}^{d\times d}$ be a symmetric, positive semi-definite matrix, with eigenvalues in $[0,1]$, so that $\bm{K}$ is the marginal kernel of a DPP on $[d]$. Let us diagonalize it as $\bm{K} = \bm{V}\text{Diag}(\lambda_i)\bm{V}^{\Tran}$. \cite{BeKrPeVi05} established that sampling from the DPP with kernel $\bm K$ can be done by \emph{(i)} sampling independent Bernoullis $B_i, i=1,\dots,d$, with respective parameters $\lambda_i$, \emph{(ii)} forming the submatrix $\bm V_{:,B}$ of $\bm V$ corresponding to columns $i$ such that that $B_i=1$, and \emph{(iii)} sampling from the projection DPP with kernel
$$\bm{K}_\text{proj} = \bm{V}_{:,B}\bm{V}_{:,B}^{\Tran}.$$
The only nontrivial step is sampling from a projection DPP, for which we give pseudocode in Figure~\ref{alg:DPP_SAMPLER}; see \cite[Theorem 7]{BeKrPeVi05} or \cite[Theorem 2.3]{KuTa12} for a proof. For a survey of variants of the algorithm, we also refer to \citep{TrBaAm18} and the documentation of the DPPy toolbox\footnote{\url{http://github.com/guilgautier/DPPy}} \citep{GaBaVa18}. For our purposes, it is enough to remark that general DPPs are mixtures of projection DPPs of different ranks, and that the cardinality of a general DPP is a sum of independent Bernoulli random variables.

\begin{figure}[!ht]
\centerline{
\scalebox{0.9}{
\begin{algorithm}{$\Algo{ProjectionDPP}\big(\bm{K}_\text{proj}=\bm{V}\bm{V}^{\Tran})$}
\Aitem $Y \longleftarrow  \emptyset$
\Aitem $\bm{W} \longleftarrow  \bm{V}$
\Aitem \While $\text{rk}(\bm{W}) > 0 $
\Aitem \mt  Sample $i$ from $\Omega$ with probability $ \propto \|\bm{W}_{i,:}\|_{2}^{2}$ \ar{Chain rule}
\Aitem \mt  $Y \longleftarrow  Y \cup \{i\}$
\Aitem \mt $\bm{V} \longleftarrow \bm{V}_{\perp}$ an orthonormal basis of $\Span(\bm{V} \cap \bm{e}_{i}^{\perp})$
\Aitem \Return $Y$
\end{algorithm}
}
}
\caption{Pseudocode for sampling from a DPP of marginal kernel $\bm{K}$.}
\label{alg:DPP_SAMPLER}
\end{figure}

The next proposition establishes that $k$-DPPs also are mixtures of projection DPPs.
\begin{proposition}(\citet[Section 5.2.2]{KuTa12})
\label{prop:kdpp_mixture_proposition}
Let $Y$ be a random subset of $[d]$ sampled from a $k$-DPP with kernel $\bm{L}$. We further assume that $\bm L$ is symmetric, we denote its rank by $r$ and its diagonalization by $\bm L = \bm{V}\bm{\Lambda}\bm{V}^{\Tran}$. Finally, let $k\leq r$. It holds
\begin{equation}
\label{eq:volume_sampling_as_mixture_equation}
\Prb(Y = S) = \sum\limits_{\substack{T \subset [r]\\|T| = k}} \mu_{T} \left[\frac{1}{k!} \Det\left(\bm{V}_{T,S}\bm{V}^{\Tran}_{T,S}\right)\right]
\end{equation}
where
\begin{equation}
\mu_{T} = \frac{\prod_{i \in T}\lambda_{i}}{\sum\limits_{\substack{U \subset[r]\\ |U| = k}}\prod_{i \in U}\lambda_{i}}.
\label{e:kDPPWeights}
\end{equation}
\end{proposition}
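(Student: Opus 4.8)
The plan is to compute the atom probabilities $\Prb(Y=S)$ directly from the defining proportionality \eqref{eq:def_kdpp} and to recognize the resulting expansion as a mixture of projection DPPs. The key tool throughout is the Cauchy--Binet formula. First I would write the symmetric factorization of the principal minor, $\bm{L}_S = \bm{V}_{S,:}\bm{\Lambda}\bm{V}_{S,:}^{\Tran} = (\bm{V}_{S,:}\bm{\Lambda}^{1/2})(\bm{V}_{S,:}\bm{\Lambda}^{1/2})^{\Tran}$, where $\bm{V}_{S,:}\in\mathbb{R}^{k\times r}$ collects the rows of $\bm{V}$ indexed by $S$ (recall $|S|=k$). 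Applying Cauchy--Binet to this product of a $k\times r$ matrix with its transpose gives
\[
\Det(\bm{L}_S) = \sum_{\substack{T\subset[r]\\|T|=k}} \Det\!\big((\bm{V}_{S,:}\bm{\Lambda}^{1/2})_{:,T}\big)^2 = \sum_{\substack{T\subset[r]\\|T|=k}} \Big(\prod_{i\in T}\lambda_i\Big)\,\Det(\bm{V}_{S,T})^2 ,
\]
since selecting columns $T$ of $\bm{V}_{S,:}\bm{\Lambda}^{1/2}$ factors $\prod_{i\in T}\sqrt{\lambda_i}$ out of the determinant. This already exhibits the two ingredients of the claimed mixture: an eigenvalue weight $\prod_{i\in T}\lambda_i$ and a squared minor of the eigenvector matrix.

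Next I would pin down the normalizing constant $Z = \sum_{|S|=k}\Det(\bm{L}_S)$. Exchanging the two finite summations, $Z = \sum_{|T|=k}\big(\prod_{i\in T}\lambda_i\big)\sum_{|S|=k}\Det(\bm{V}_{S,T})^2$. The inner sum is handled by a second application of Cauchy--Binet, now to $\bm{V}_{:,T}^{\Tran}\bm{V}_{:,T}$: because the columns of $\bm{V}$ are orthonormal, $\bm{V}_{:,T}^{\Tran}\bm{V}_{:,T}=\bm{I}_k$, so $\sum_{|S|=k}\Det(\bm{V}_{S,T})^2 = \Det(\bm{V}_{:,T}^{\Tran}\bm{V}_{:,T}) = 1$. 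Hence $Z = \sum_{|U|=k}\prod_{i\in U}\lambda_i = e_k(\lambda_1,\dots,\lambda_r)$, and dividing through yields exactly the weights $\mu_T$ of \eqref{e:kDPPWeights}.

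It then remains to identify each summand as a projection DPP. For fixed $T$ with $|T|=k$, the matrix $\bm{V}_{:,T}\bm{V}_{:,T}^{\Tran}$ is an orthogonal projection of rank $k$, hence the marginal kernel of a projection DPP whose samples have cardinality $k$ almost surely; for such a process the atom probability of a set $S$ with $|S|=k$ equals its marginal, $\Det\big((\bm{V}_{:,T}\bm{V}_{:,T}^{\Tran})_S\big) = \Det(\bm{V}_{S,T}\bm{V}_{S,T}^{\Tran}) = \Det(\bm{V}_{S,T})^2$. Substituting back gives $\Prb(Y=S) = \sum_{|T|=k}\mu_T\,\Det(\bm{V}_{S,T})^2$, each bracketed term being the law of the projection DPP with kernel $\bm{V}_{:,T}\bm{V}_{:,T}^{\Tran}$, which is the asserted decomposition.

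The computation is routine once Cauchy--Binet is invoked twice; the two places demanding care are the symmetric split $\bm{L}_S=(\bm{V}_{S,:}\bm{\Lambda}^{1/2})(\bm{V}_{S,:}\bm{\Lambda}^{1/2})^{\Tran}$, which is what makes the eigenvalue products factor cleanly out of each minor, and the bookkeeping of normalizing constants --- both the global constant $e_k(\lambda)$ and the per-atom normalization of the projection DPP. In particular, matching the squared minor $\Det(\bm{V}_{S,T})^2$ to the displayed form $\tfrac{1}{k!}\Det(\bm{V}_{T,S}\bm{V}_{T,S}^{\Tran})$ requires fixing the convention under which the projection-DPP atom is written (equivalently, the normalization inherited from the chain-rule sampler of Figure~\ref{alg:DPP_SAMPLER}). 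I expect this constant-tracking, rather than any structural difficulty, to be the main obstacle; the orthonormality of the columns of $\bm{V}$ is precisely what guarantees that these atoms form a probability distribution, summing to one over $S$.
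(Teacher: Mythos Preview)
The paper does not give its own proof of this proposition; it simply attributes the result to \citet[Section 5.2.2]{KuTa12}. Your argument is the standard one found there: expand $\Det(\bm{L}_S)$ via Cauchy--Binet after the symmetric factorization, swap sums to identify the normalizer as $e_k(\lambda_1,\dots,\lambda_r)$, and recognize each summand as the atom probability of the projection DPP with kernel $\bm{V}_{:,T}\bm{V}_{:,T}^{\Tran}$. All of this is correct.

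One point you should resolve rather than leave as a ``convention'': for $|S|=|T|=k$ the matrix $\bm{V}_{S,T}$ is square, so $\Det(\bm{V}_{S,T}\bm{V}_{S,T}^{\Tran})=\Det(\bm{V}_{S,T})^2$ exactly, with no factor of $1/k!$. Your own computation already shows these squared minors sum to $1$ over $S$, so the bracketed term $\Det(\bm{V}_{S,T})^2$ \emph{is} the atom probability of the projection DPP, and the $\tfrac{1}{k!}$ appearing in the displayed statement is a typo (as is the index order in $\bm{V}_{T,S}$, given the paper's convention that the first subscript indexes rows of $\bm{V}\in\mathbb{R}^{d\times r}$). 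State this cleanly rather than deferring to an unspecified ``chain-rule normalization''; there is no ordered-tuple convention in play here that would produce a $k!$.
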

Each mixture component in square brackets in \eqref{eq:volume_sampling_as_mixture_equation} is a projection DPP with cardinality $k$. Sampling a $k$-DPP can thus be done by \emph{(i)} sampling a multinomial distribution with parameters \eqref{e:kDPPWeights}, and \emph{(ii)} sampling from the corresponding projection DPP using the algorithm in Figure~\ref{alg:DPP_SAMPLER}. The main difference between $k$-DPPs and DPPs is that all mixture components in \eqref{eq:volume_sampling_as_mixture_equation} have the same cardinality $k$. In particular, projection DPPs are the only DPPs that are also $k$-DPPs.

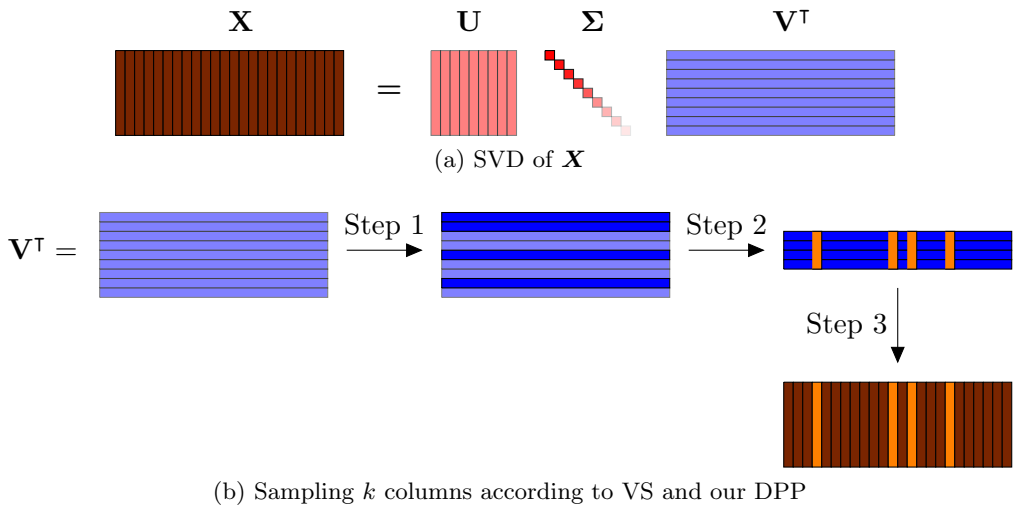
\begin{figure}[!ht]
  \centering
     \subfloat[SVD of $\bm{X}$]{\begin{tikzpicture}[scale = 0.5]


\begin{scope}[xshift=7mm]

\draw [fill=Brown, opacity=1] (-16,4) -- (-15.75,4) -- (-15.75,1.75) -- (-16,1.75) -- (-16,4);
\draw [fill=Brown, opacity=1] (-15.75,4) -- (-15.5,4) -- (-15.5,1.75) -- (-15.75,1.75) -- (-15.75,4);

\draw [fill=Brown, opacity=1] (-15.5,4) -- (-15.25,4) -- (-15.25,1.75) -- (-15.5,1.75) -- (-15.5,4);
\draw [fill=Brown, opacity=1] (-15.25,4) -- (-15,4) -- (-15,1.75) -- (-15.25,1.75) -- (-15.25,4);
\draw [fill=Brown, opacity=1] (-15,4) -- (-14.75,4) -- (-14.75,1.75) -- (-15,1.75) -- (-15,4);
\draw [fill=Brown, opacity=1] (-14.75,4) -- (-14.5,4) -- (-14.5,1.75) -- (-14.75,1.75) -- (-14.75,4);
\draw [fill=Brown, opacity=1] (-14.5,4) -- (-14.25,4) -- (-14.25,1.75) -- (-14.5,1.75) -- (-14.5,4);
\draw [fill=Brown, opacity=1] (-14.25,4) -- (-14,4) -- (-14,1.75) -- (-14.25,1.75) -- (-14.25,4);
\draw [fill=Brown, opacity=1] (-14,4) -- (-13.75,4) -- (-13.75,1.75) -- (-14,1.75) -- (-14,4);
\draw [fill=Brown, opacity=1] (-13.75,4) -- (-13.5,4) -- (-13.5,1.75) -- (-13.75,1.75) -- (-13.75,4);
\draw [fill=Brown, opacity=1] (-13.5,4) -- (-13.25,4) -- (-13.25,1.75) -- (-13.5,1.75) -- (-13.5,4);
\draw [fill=Brown, opacity=1] (-13.25,4) -- (-13,4) -- (-13,1.75) -- (-13.25,1.75) -- (-13.25,4);
\draw [fill=Brown, opacity=1] (-13,4) -- (-12.75,4) -- (-12.75,1.75) -- (-13,1.75) -- (-13,4);

\draw [fill=Brown, opacity=1] (-12.75,4) -- (-12.5,4) -- (-12.5,1.75) -- (-12.75,1.75) -- (-12.75,4);
\draw [fill=Brown, opacity=1] (-12.5,4) -- (-12.25,4) -- (-12.25,1.75) -- (-12.5,1.75) -- (-12.5,4);
\draw [fill=Brown, opacity=1] (-12.25,4) -- (-12,4) -- (-12,1.75) -- (-12.25,1.75) -- (-12.25,4);
\draw [fill=Brown, opacity=1] (-12,4) -- (-11.75,4) -- (-11.75,1.75) -- (-12,1.75) -- (-12,4);

\draw [fill=Brown, opacity=1] (-11.75,4) -- (-11.5,4) -- (-11.5,1.75) -- (-11.75,1.75) -- (-11.75,4);
\draw [fill=Brown, opacity=1] (-11.5,4) -- (-11.25,4) -- (-11.25,1.75) -- (-11.5,1.75) -- (-11.5,4);
\draw [fill=Brown, opacity=1] (-11.25,4) -- (-11,4) -- (-11,1.75) -- (-11.25,1.75) -- (-11.25,4);
\draw [fill=Brown, opacity=1] (-11,4) -- (-10.75,4) -- (-10.75,1.75) -- (-11,1.75) -- (-11,4);
\draw [fill=Brown, opacity=1] (-10.75,4) -- (-10.5,4) -- (-10.5,1.75) -- (-10.75,1.75) -- (-10.75,4);
\draw [fill=Brown, opacity=1] (-10.5,4) -- (-10.25,4) -- (-10.25,1.75) -- (-10.5,1.75) -- (-10.5,4);
\draw [fill=Brown, opacity=1] (-10.25,4) -- (-10,4) -- (-10,1.75) -- (-10.25,1.75) -- (-10.25,4);

\end{scope}

\begin{scope}[xshift=0mm]
 
\draw [fill=red, opacity=0.5] (-7,4) -- (-6.75,4) -- (-6.75,1.75) -- (-7,1.75) -- (-7,4);
\draw [fill=red, opacity=0.5] (-6.75,4) -- (-6.5,4) -- (-6.5,1.75) -- (-6.75,1.75) -- (-6.75,4);
\draw [fill=red, opacity=0.5] (-6.5,4) -- (-6.25,4) -- (-6.25,1.75) -- (-6.5,1.75) -- (-6.5,4);

\draw [fill=red, opacity=0.5] (-6.25,4) -- (-6,4) -- (-6,1.75) -- (-6.25,1.75) -- (-6.25,4);
\draw [fill=red, opacity=0.5] (-6,4) -- (-5.75,4) -- (-5.75,1.75) -- (-6,1.75) -- (-6,4);
\draw [fill=red, opacity=0.5] (-5.75,4) -- (-5.5,4) -- (-5.5,1.75) -- (-5.75,1.75) -- (-5.75,4);
\draw [fill=red, opacity=0.5] (-5.5,4) -- (-5.25,4) -- (-5.25,1.75) -- (-5.5,1.75) -- (-5.5,4);

\draw [fill=red, opacity=0.5] (-5.25,4) -- (-5,4) -- (-5,1.75) -- (-5.25,1.75) -- (-5.25,4);
\draw [fill=red, opacity=0.5] (-5,4) -- (-4.75,4) -- (-4.75,1.75) -- (-5,1.75) -- (-5,4);

\end{scope}

\begin{scope}[xshift=-5mm]

\draw [fill=red, opacity=1] (-3.5,4) -- (-3.25,4) -- (-3.25,3.75) -- (-3.5,3.75) -- (-3.5,4);
\draw [fill=red, opacity=0.95] (-3.25,3.75) -- (-3,3.75) -- (-3,3.5) -- (-3.25,3.5) -- (-3.25,3.75);
\draw [fill=red, opacity=0.9] (-3,3.5) -- (-2.75,3.5) -- (-2.75,3.25) -- (-3,3.25) -- (-3,3.5);
\draw [fill=red, opacity=0.8] (-2.75,3.25) -- (-2.5,3.25) -- (-2.5,3) -- (-2.75,3) -- (-2.75,3.25);

\draw [fill=red, opacity=0.65] (-2.5,3) -- (-2.25,3) -- (-2.25,2.75) -- (-2.5,2.75) -- (-2.5,3);
\draw [fill=red, opacity=0.45] (-2.25,2.75) -- (-2,2.75) -- (-2,2.5) -- (-2.25,2.5) -- (-2.25,2.75);
\draw [fill=red, opacity=0.3] (-2,2.5) -- (-1.75,2.5) -- (-1.75,2.25) -- (-2,2.25) -- (-2,2.5);
\draw [fill=red, opacity=0.2] (-1.75,2.25) -- (-1.5,2.25) -- (-1.5,2) -- (-1.75,2) -- (-1.75,2.25);

\draw [fill=red, opacity=0.1] (-1.5,2) -- (-1.25,2) -- (-1.25,1.75) -- (-1.5,1.75) -- (-1.5,2);
\end{scope}

\begin{scope}[xshift=-18mm]

\draw [fill=blue, opacity=0.5] (7,3.75) -- (1,3.75) -- (1,4) -- (7,4) -- (7,3.75);
\draw [fill=blue, opacity=0.5] (7,3.5) -- (1,3.5) -- (1,3.75) -- (7,3.75) -- (7,3.5);
\draw [fill=blue, opacity=0.5] (7,3.25) -- (1,3.25) -- (1,3.5) -- (7,3.5) -- (7,3.25);
\draw [fill=blue, opacity=0.5] (7,3) -- (1,3) -- (1,3.25) -- (7,3.25) -- (7,3);
\draw [fill=blue, opacity=0.5] (7,2.75) -- (1,2.75) -- (1,3) -- (7,3) -- (7,2.75); 
\draw [fill=blue, opacity=0.5] (7,2.5) -- (1,2.5) -- (1,2.75) -- (7,2.75) -- (7,2.5);
\draw [fill=blue, opacity=0.5] (7,2.25) -- (1,2.25) -- (1,2.5) -- (7,2.5) -- (7,2.25);
\draw [fill=blue, opacity=0.5] (7,2) -- (1,2) -- (1,2.25) -- (7,2.25) -- (7,2);
\draw [fill=blue, opacity=0.5] (7,1.75) -- (1,1.75) -- (1,2) -- (7,2) -- (7,1.75);

\end{scope}


\draw  (-12,4.25)  node [above] {$\bm{\mathrm{X}}$} ;
\draw  (-8.1,2.5)  node [above] {$\bm{=}$} ;

\draw  (-6,4.25)  node [above] {$\bm{\mathrm{U}}$} ;

\draw  (-2.75,4.25)  node [above] {$\bm{\mathrm{\Sigma}}$} ;

\draw  (2.5,4.25)  node [above] {$\bm{\mathrm{V}}^{\Tran}$} ;



\end{tikzpicture}}\\
     \subfloat[Sampling $k$ columns according to VS and our DPP]{\begin{tikzpicture}[scale = 0.5]


\draw [fill=blue, opacity=0.5] (-16,3.75) -- (-10,3.75) -- (-10,4) -- (-16,4) -- (-16,3.75);
\draw [fill=blue, opacity=0.5] (-16,3.5) -- (-10,3.5) -- (-10,3.75) -- (-16,3.75) -- (-16,3.5);
\draw [fill=blue, opacity=0.5] (-16,3.25) -- (-10,3.25) -- (-10,3.5) -- (-16,3.5) -- (-16,3.25);
\draw [fill=blue, opacity=0.5] (-16,3) -- (-10,3) -- (-10,3.25) -- (-16,3.25) -- (-16,3);
\draw [fill=blue, opacity=0.5] (-16,2.75) -- (-10,2.75) -- (-10,3) -- (-16,3) -- (-16,2.75);
\draw [fill=blue, opacity=0.5] (-16,2.5) -- (-10,2.5) -- (-10,2.75) -- (-16,2.75) -- (-16,2.5);
\draw [fill=blue, opacity=0.5] (-16,2.25) -- (-10,2.25) -- (-10,2.5) -- (-16,2.5) -- (-16,2.25);
\draw [fill=blue, opacity=0.5] (-16,2) -- (-10,2) -- (-10,2.25) -- (-16,2.25) -- (-16,2);
\draw [fill=blue, opacity=0.5] (-16,1.75) -- (-10,1.75) -- (-10,2) -- (-16,2) -- (-16,1.75);

\begin{scope}[xshift=-30mm]

\draw [fill=blue, opacity=1] (-4,3.75) -- (2,3.75) -- (2,4) -- (-4,4) -- (-4,3.75);
\draw [fill=blue, opacity=1] (-4,3.5) -- (2,3.5) -- (2,3.75) -- (-4,3.75) -- (-4,3.5);
\draw [fill=blue, opacity=0.5] (-4,3.25) -- (2,3.25) -- (2,3.5) -- (-4,3.5) -- (-4,3.25);
\draw [fill=blue, opacity=0.5] (-4,3) -- (2,3) -- (2,3.25) -- (-4,3.25) -- (-4,3);
\draw [fill=blue, opacity=1] (-4,2.75) -- (2,2.75) -- (2,3) -- (-4,3) -- (-4,2.75);
\draw [fill=blue, opacity=0.5] (-4,2.5) -- (2,2.5) -- (2,2.75) -- (-4,2.75) -- (-4,2.5);
\draw [fill=blue, opacity=0.5] (-4,2.25) -- (2,2.25) -- (2,2.5) -- (-4,2.5) -- (-4,2.25);
\draw [fill=blue, opacity=1] (-4,2) -- (2,2) -- (2,2.25) -- (-4,2.25) -- (-4,2);
\draw [fill=blue, opacity=0.5] (-4,1.75) -- (2,1.75) -- (2,2) -- (-4,2) -- (-4,1.75);

\end{scope}

\begin{scope}[xshift=-65mm]

\draw [fill=blue, opacity=1] (8.5,3.25) -- (14.5,3.25) -- (14.5,3.5) -- (8.5,3.5) -- (8.5,3.25);
\draw [fill=blue, opacity=1] (8.5,3) -- (14.5,3) -- (14.5,3.25) -- (8.5,3.25) -- (8.5,3);
\draw [fill=blue, opacity=1] (8.5,2.75) -- (14.5,2.75) -- (14.5,3) -- (8.5,3) -- (8.5,2.75);
\draw [fill=blue, opacity=1] (8.5,2.5) -- (14.5,2.5) -- (14.5,2.75) -- (8.5,2.75) -- (8.5,2.5);

\draw [fill=orange, opacity=1] (9.25,2.5) -- (9.25,2.5) -- (9.5,2.5) -- (9.5,3.5) -- (9.25,3.5);
\draw [fill=orange, opacity=1] (11.25,2.5) -- (11.25,2.5) -- (11.5,2.5) -- (11.5,3.5) -- (11.25,3.5);
\draw [fill=orange, opacity=1] (11.75,2.5) -- (11.75,2.5) -- (12,2.5) -- (12,3.5) -- (11.75,3.5);
\draw [fill=orange, opacity=1] (12.75,2.5) -- (12.75,2.5) -- (13,2.5) -- (13,3.5) -- (12.75,3.5);

\end{scope}

\begin{scope}[yshift=-45mm]
\begin{scope}[xshift=180mm]

\draw [fill=Brown, opacity=1] (-16,4) -- (-15.75,4) -- (-15.75,1.75) -- (-16,1.75) -- (-16,4);
\draw [fill=Brown, opacity=1] (-15.75,4) -- (-15.5,4) -- (-15.5,1.75) -- (-15.75,1.75) -- (-15.75,4);

\draw [fill=Brown, opacity=1] (-15.5,4) -- (-15.25,4) -- (-15.25,1.75) -- (-15.5,1.75) -- (-15.5,4);
\draw [fill=orange, opacity=1] (-15.25,4) -- (-15,4) -- (-15,1.75) -- (-15.25,1.75) -- (-15.25,4);
\draw [fill=Brown, opacity=1] (-15,4) -- (-14.75,4) -- (-14.75,1.75) -- (-15,1.75) -- (-15,4);
\draw [fill=Brown, opacity=1] (-14.75,4) -- (-14.5,4) -- (-14.5,1.75) -- (-14.75,1.75) -- (-14.75,4);
\draw [fill=Brown, opacity=1] (-14.5,4) -- (-14.25,4) -- (-14.25,1.75) -- (-14.5,1.75) -- (-14.5,4);
\draw [fill=Brown, opacity=1] (-14.25,4) -- (-14,4) -- (-14,1.75) -- (-14.25,1.75) -- (-14.25,4);
\draw [fill=Brown, opacity=1] (-14,4) -- (-13.75,4) -- (-13.75,1.75) -- (-14,1.75) -- (-14,4);
\draw [fill=Brown, opacity=1] (-13.75,4) -- (-13.5,4) -- (-13.5,1.75) -- (-13.75,1.75) -- (-13.75,4);
\draw [fill=Brown, opacity=1] (-13.5,4) -- (-13.25,4) -- (-13.25,1.75) -- (-13.5,1.75) -- (-13.5,4);
\draw [fill=orange, opacity=1] (-13.25,4) -- (-13,4) -- (-13,1.75) -- (-13.25,1.75) -- (-13.25,4);
\draw [fill=Brown, opacity=1] (-13,4) -- (-12.75,4) -- (-12.75,1.75) -- (-13,1.75) -- (-13,4);

\draw [fill=orange, opacity=1] (-12.75,4) -- (-12.5,4) -- (-12.5,1.75) -- (-12.75,1.75) -- (-12.75,4);
\draw [fill=Brown, opacity=1] (-12.5,4) -- (-12.25,4) -- (-12.25,1.75) -- (-12.5,1.75) -- (-12.5,4);
\draw [fill=Brown, opacity=1] (-12.25,4) -- (-12,4) -- (-12,1.75) -- (-12.25,1.75) -- (-12.25,4);
\draw [fill=Brown, opacity=1] (-12,4) -- (-11.75,4) -- (-11.75,1.75) -- (-12,1.75) -- (-12,4);

\draw [fill=orange, opacity=1] (-11.75,4) -- (-11.5,4) -- (-11.5,1.75) -- (-11.75,1.75) -- (-11.75,4);
\draw [fill=Brown, opacity=1] (-11.5,4) -- (-11.25,4) -- (-11.25,1.75) -- (-11.5,1.75) -- (-11.5,4);
\draw [fill=Brown, opacity=1] (-11.25,4) -- (-11,4) -- (-11,1.75) -- (-11.25,1.75) -- (-11.25,4);
\draw [fill=Brown, opacity=1] (-11,4) -- (-10.75,4) -- (-10.75,1.75) -- (-11,1.75) -- (-11,4);
\draw [fill=Brown, opacity=1] (-10.75,4) -- (-10.5,4) -- (-10.5,1.75) -- (-10.75,1.75) -- (-10.75,4);
\draw [fill=Brown, opacity=1] (-10.5,4) -- (-10.25,4) -- (-10.25,1.75) -- (-10.5,1.75) -- (-10.5,4);
\draw [fill=Brown, opacity=1] (-10.25,4) -- (-10,4) -- (-10,1.75) -- (-10.25,1.75) -- (-10.25,4);

\end{scope}
\end{scope}

\draw [->] (-9.5,3) --  (-8.5,3)  node [above] { Step 1 } --  (-7.5,3);

\draw [->] (-0.5,3) --  (0.5,3)  node [above] { Step 2 } --  (1.5,3);

\draw [->] (5,2) --  (5,1)  node [left] { Step 3 } --  (5,0);

\draw  (-17.5,2.5)  node [above ] {$\bm{\mathrm{V}}^{\Tran}=$} ;


\end{tikzpicture}}
    \caption{A graphical depiction of the sampling algorithms for volume sampling (VS) and the DPP with marginal kernel $\bm{V}^{}_{k}\bm{V}_{k}^{\Tran}$. (a) Both algorithms start with an SVD. (b) In Step 1, VS randomly selects $k$ rows of $\bm V^{\Tran}$, while the DPP always picks the first $k$ rows. Step 2 is the same for both algorithms: jointly sample $k$ columns of the subsampled $\bm V^{\Tran}$, proportionally to their squared volume. Step 3 is simply the extraction of the corresponding columns of $\bm X$.
    \label{f:sampling}
    }
\end{figure}


A fundamental example of $k$-DPP is volume sampling, as defined in Section \ref{subsec:volume_sampling}. Its kernel is the Gram matrix of the data $\bm{L} = \bm{X}^{\Tran}\bm{X}$. In general, $\bm{L}$ is not an orthogonal projection, so that volume sampling is not a DPP.

\subsection{Motivations for column subset selection using projection DPPs}
volume sampling has been successfully used for column subset selection, see Section~\ref{subsec:volume_sampling}. Our motivation to investigate projection DPPs instead of volume sampling is twofold.

Following \eqref{eq:volume_sampling_as_mixture_equation}, volume sampling can be seen as a mixture of projection DPPs indexed by $T\subset [d], \vert T\vert=k$, with marginal kernels $\bm{K}_{T} = \bm{V}^{}_{:,T}\bm{V}^{\Tran}_{:,T}$ and mixture weights $\mu_{T} \propto \prod_{i \in T} \sigma_{i}^{2}$. The component with the highest weight thus corresponds to the $k$ largest singular values, that is, the projection DPP with marginal kernel
$\bm K:=\bm{V}^{}_{k}\bm{V}_{k}^{\Tran}$. This paper is about column subset selection using precisely this DPP. Alternately, we could motivate the study of this DPP by remarking that its marginals $\Prb({i}\subset Y)$ are the $k$-leverage scores introduced in Section~\ref{subsec:k-lvs_sampling}. Since $\bm K$ is symmetric, this DPP can be seen as a repulsive generalization of leverage score sampling.

Finally, we recap the difference between volume sampling and the DPP with kernel $\bm K$ with a graphical depiction in Figure~\ref{f:sampling} of the two procedures to sample from them that we introduced in Section~\ref{subsec:sampling_from_a_dpp}. Figure~\ref{f:sampling} is another illustration of the decomposition of volume sampling as a mixture of projection DPPs.

\section{Main Results}
\label{sec:main_results}

In this section, we prove bounds for $\EX_{\DPP} \| \bm{X} - \Pi_{S}^{\nu}\bm{X} \|_{\nu}$ under the projection DPP of marginal kernel $\bm{K} = \bm{V}^{}_{k}\bm{V}^{\Tran}_{k}$ presented in Section~\ref{s:dppsection}. Throughout, we compare our bounds to the state-of-the-art bounds of volume sampling obtained by \cite{DRVW06}; see Theorem~\ref{thrm:volume_sampling_theorem} and Section~\ref{subsec:volume_sampling}. For clarity, we defer the proofs of our results from this section to Appendix~\ref{app:proofs}.

%

\subsection{Multiplicative bounds in spectral and Frobenius norm}
\label{sec:new_results_randomized}
Let $S$ be a random subset of $k$ columns of $\bm{X}$ chosen with probability:
\begin{equation}
	\Prb_{\DPP}(S) = \Det(\bm{V}_{S,[k]})^{2}.
\end{equation}
First, without any further assumption, we have the following result.
\begin{proposition}
    \label{projection_dpp_theorem}
    Under the projection DPP of marginal kernel $\bm{V}^{}_{k}\bm{V}^{\Tran}_{k}$, it holds
    \begin{equation}
    	\label{eq:projection_dpp_theorem}
    	\EX_{\DPP} \| \bm{X} - \Pi_{S}^{\nu}\bm{X} \|_{\nu}^{2} \leq k(d+1-k)\| \bm{X} - \Pi_{k}\bm{X} \|_{\nu}^{2}, \quad \nu\in\{2,\Fr\}.
    \end{equation}
\end{proposition}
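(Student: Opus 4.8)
The plan is to exploit the variational definition of $\Pi_S^{\nu}$: since $\Pi_{S}^{\nu}\bm{X}$ minimizes $\|\bm{X}-\bm{A}\|_{\nu}$ over all rank-$\leq k$ matrices $\bm{A}$ in the column space of $\bm{C}=\bm{X}_{:,S}$, it suffices to exhibit one convenient competitor and bound $\|\bm{X}-\bm{A}\|_{\nu}$. For every $S$ in the support of the DPP the block $\bm{V}_{S,[k]}$ is invertible (these are exactly the $S$ with $\Det(\bm{V}_{S,[k]})\neq 0$, and by definition $\Prb_{\DPP}(S)=\Det(\bm{V}_{S,[k]})^{2}$), so I would take the interpolation-type competitor $\bm{A}=\bm{X}_{:,S}(\bm{V}_{S,[k]}^{\Tran})^{-1}\bm{V}_k^{\Tran}$, which has rank at most $k$ and lies in the column space of $\bm{C}$. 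Substituting the block SVD $\bm{X}_{:,S}=\bm{U}_k\bm{\Sigma}_k\bm{V}_{S,[k]}^{\Tran}+\bm{U}_{r-k}\bm{\Sigma}_{r-k}\bm{V}_{S,[k+1:r]}^{\Tran}$, the first block collapses to $\bm{U}_k\bm{\Sigma}_k\bm{V}_k^{\Tran}=\Pi_k\bm{X}$, leaving the clean residual $\bm{X}-\bm{A}=\bm{U}_{r-k}\bm{\Sigma}_{r-k}\bm{E}$, where $\bm{E}=\bm{V}_{r-k}^{\Tran}-\bm{\Phi}\,\bm{V}_k^{\Tran}$ and $\bm{\Phi}=\bm{V}_{S,[k+1:r]}^{\Tran}(\bm{V}_{S,[k]}^{\Tran})^{-1}$.

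The crux is a single determinantal identity for the DPP. Writing the $(i-k)$-th row of $\bm{\Phi}$ as $(\bm{V}_{S,[k]}^{-1}\bm{V}_{S,i})^{\Tran}$ and applying Cramer's rule, each coordinate of $\bm{V}_{S,[k]}^{-1}\bm{V}_{S,i}$ equals, up to sign, a ratio $\Det(\bm{V}_{S,([k]\setminus\{j\})\cup\{i\}})/\Det(\bm{V}_{S,[k]})$. The DPP weight $\Det(\bm{V}_{S,[k]})^{2}$ therefore cancels the squared denominator, and summing over $S$ with $|S|=k$ the Cauchy--Binet formula collapses each term (with $T_j:=([k]\setminus\{j\})\cup\{i\}$) to $\Det(\bm{V}_{:,T_j}^{\Tran}\bm{V}_{:,T_j})=1$, because $T_j$ indexes $k$ orthonormal columns of $\bm{V}$. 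This yields the key expectation
\[
\EX_{\DPP}\,\|\bm{V}_{S,[k]}^{-1}\bm{V}_{S,i}\|_2^{2}=k,\qquad i=k+1,\dots,r.
\]

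From here the two norms follow by short parallel computations. For the Frobenius norm, orthonormality of the columns of $\bm{V}$ gives $\|\bm{U}_{r-k}\bm{\Sigma}_{r-k}\bm{E}\|_{\Fr}^{2}=\sum_{i>k}\sigma_i^{2}\big(1+\|\bm{V}_{S,[k]}^{-1}\bm{V}_{S,i}\|_2^{2}\big)$; taking expectations and using the key identity turns this into $(k+1)\sum_{i>k}\sigma_i^{2}=(k+1)\|\bm{X}-\Pi_k\bm{X}\|_{\Fr}^{2}$, which is at most $k(d+1-k)\|\bm{X}-\Pi_k\bm{X}\|_{\Fr}^{2}$ since $k+1\leq k(d+1-k)$ whenever $d>k$. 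For the spectral norm I would first factor out the leading singular value, $\|\bm{U}_{r-k}\bm{\Sigma}_{r-k}\bm{E}\|_2\leq\sigma_{k+1}\|\bm{E}\|_2$, and then use $\bm{E}=[-\bm{\Phi}\mid\bm{I}_{r-k}]\bm{V}^{\Tran}$ together with the orthonormal rows of $\bm{V}^{\Tran}$ to get $\|\bm{E}\|_2^{2}=1+\|\bm{\Phi}\|_2^{2}\leq 1+\|\bm{\Phi}\|_{\Fr}^{2}$; taking expectations gives $\EX_{\DPP}\|\bm{X}-\Pi_S^{2}\bm{X}\|_2^{2}\leq(1+k(r-k))\sigma_{k+1}^{2}\leq k(d+1-k)\sigma_{k+1}^{2}$, using $r\leq d$ and $\sigma_{k+1}^{2}=\|\bm{X}-\Pi_k\bm{X}\|_2^{2}$.

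I expect the main obstacle to be twofold: choosing the competitor $\bm{A}$ so that the residual isolates the tail directions $\bm{\Sigma}_{r-k}$ against a coefficient matrix $\bm{E}$ built only from $\bm{V}$, and then recognizing that the Cramer/Cauchy--Binet combination makes the DPP expectation collapse to the integer $k$. The remaining care is bookkeeping: verifying invertibility of $\bm{V}_{S,[k]}$ on the support, and handling the spectral norm by factoring $\sigma_{k+1}$ \emph{before} bounding $\|\bm{E}\|_2$ (bounding $\|\bm{E}\|_{\Fr}$ instead would only give the weaker $(r-k)(k+1)$, which exceeds $k(d+1-k)$ once $d>2k$), plus the elementary inequalities relating $k+1$ and $1+k(r-k)$ to $k(d+1-k)$.
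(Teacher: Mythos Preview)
Your argument is correct and takes a genuinely different route from the paper's proof. The paper invokes the ready-made inequality $\|\bm{X}-\Pi_S^{\nu}\bm{X}\|_{\nu}^2\le \sigma_k(\bm{V}_{S,[k]})^{-2}\|\bm{X}-\Pi_k\bm{X}\|_{\nu}^2$ from Lemma~\ref{refined_analysis_of_approximation_bound}, multiplies by $\Det(\bm{V}_{S,[k]})^{2}=\prod_{\ell\le k}\sigma_\ell^2(\bm{V}_{S,[k]})$, and then bounds the resulting sum by elementary-symmetric-polynomial identities (Lemmas~\ref{minors_symmetric_polynomials_lemma} and~\ref{sum_k_1_symmetric_poly_det_lemma}), which is how the factors $k$ and $(d-k+1)$ appear. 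You instead build an explicit competitor and reduce everything to the single expectation $\EX_{\DPP}\|\bm{V}_{S,[k]}^{-1}\bm{V}_{S,i}\|_2^2$, which you evaluate cleanly by Cramer plus Cauchy--Binet. Your competitor is exactly $\bm{X}\bm{P}_S^{\Tran}$ for the paper's $\bm{P}_S=\bm{S}(\bm{V}_k^{\Tran}\bm{S})^{-1}\bm{V}_k^{\Tran}$, so you are really refining the \emph{first} inequality of Lemma~\ref{refined_analysis_of_approximation_bound} rather than its corollary; this is closer in spirit to the paper's proof of Proposition~\ref{hypo_one_two_proposition}, but with a more direct algebraic identity in place of the principal-angle machinery.

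What your approach buys is a strictly sharper Frobenius constant: you actually prove $\EX_{\DPP}\|\bm{X}-\Pi_S^{\Fr}\bm{X}\|_{\Fr}^2\le (k+1)\|\bm{X}-\Pi_k\bm{X}\|_{\Fr}^2$, matching the volume-sampling constant, and only relax to $k(d+1-k)$ at the very end. The paper's route for Proposition~\ref{projection_dpp_theorem} cannot see this improvement because it throws away information at the step $\prod_{\ell\le k-1}\sigma_\ell^2(\bm{V}_{S,[k]})\le e_{k-1}(\Sigma(\bm{V}_{S,[k]})^2)$. For the spectral norm you obtain $1+k(r-k)$, again at least as good as the paper's constant since $r\le d$. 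One small caveat: your ``key expectation'' is in general only an inequality, $\EX_{\DPP}\|\bm{V}_{S,[k]}^{-1}\bm{V}_{S,i}\|_2^{2}\le k$, because the DPP restricts the sum over $S$ to those with $\Det(\bm{V}_{S,[k]})\neq 0$, while Cauchy--Binet gives $\sum_{|S|=k}\Det(\bm{V}_{S,T_j})^2=1$ over \emph{all} $S$; this changes nothing for the bounds but is worth stating as $\le k$ rather than $=k$.
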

For the spectral norm, the bound is practically the same as that of volume sampling \eqref{e:vsBound2}. However, our bound for the Frobenius norm is worse than \eqref{e:vsBoundFr} by a factor $(d-k)$. In the rest of this section, we sharpen our bounds by taking into account the sparsity level of the $k$-leverage scores and the decay of singular values.

In terms of sparsity, we first replace the dimension $d$ in \eqref{eq:projection_dpp_theorem} by the number $p\in[d]$ of nonzero $k$-leverage scores
\begin{equation}
  p = \left| \{i \in [d], \bm{V}_{i,[k]} \neq \bm{0}\}\right|.
  \label{e:defp}
\end{equation}
To quantify the decay of the singular values, we define the flatness parameter
\begin{equation}
  \beta = \bm{\sigma}_{k+1}^{2} \left(\frac{1}{d-k} \sum\limits_{j \geq k+1} \bm{\sigma}_{j}^{2}\right)^{-1}.
  \label{e:defbeta}
\end{equation}
In words, $\beta\in[1,d-k]$ measures the flatness of the spectrum of $\bm{X}$ above the cut-off at $k$. Indeed, \eqref{e:defbeta} is the ratio of the largest term in a sum to that sum. The closer $\beta$ is to $1$, the more similar the terms in the sum in the denominator of \eqref{e:defbeta}. At the extreme, $\beta=d-k$ when $\sigma^2_{k+1}>0$ while $\sigma_j^2=0,$ $\forall j\geq k+2$.

\begin{proposition}\label{hypo_one_two_proposition}
Under the projection DPP of marginal kernel $\bm{V}^{}_{k}\bm{V}^{\Tran}_{k}$, it holds
\begin{equation}\label{eq:spectral_bound_dpp}
    \EX_{\DPP} \| \bm{X} - \Pi_{S}^{2}\bm{X} \|_{2}^{2} \leq k(p-k) \| \bm{X}- \Pi_{k}\bm{X}\|_{2}^{2}
\end{equation}
and
\begin{equation}\label{eq:frobenius_bound_dpp}
    \EX_{\DPP} \| \bm{X} - \Pi_{S}^{\Fr}\bm{X} \|_{\Fr}^{2} \leq  \left(1 + \beta\frac{p-k}{d-k}k \right) \| \bm{X}- \Pi_{k}\bm{X}\|_{\Fr}^{2}.
\end{equation}

%
\end{proposition}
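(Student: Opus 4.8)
The plan is to bound the residual for each subset $S$ in the support of the DPP by means of an explicit rank-$k$ reconstruction, and then to evaluate the resulting expectation exactly through a Cauchy--Binet identity; the sparsity $p$ and flatness $\beta$ enter only at the very last, elementary step.

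\emph{Deterministic per-subset bound.} For any $S$ with $\Prb_{\DPP}(S)>0$ the matrix $\bm{V}_{S,[k]}$ is square and invertible, so I would exhibit the rank-$k$ matrix $\bm{A}_S=\bm{X}_{:,S}(\bm{V}_{S,[k]}^{\Tran})^{-1}\bm{V}_k^{\Tran}$, which lies in the column span of $\bm{C}=\bm{X}_{:,S}$. Since $\Pi_S^{\nu}\bm{X}$ is optimal over rank-$k$ matrices in that span, $\|\bm{X}-\Pi_S^{\nu}\bm{X}\|_{\nu}\le\|\bm{X}-\bm{A}_S\|_{\nu}$ for $\nu\in\{2,\Fr\}$. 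Substituting the block SVD \eqref{e:blockSVD} shows that the top-$k$ part of $\bm{A}_S$ reconstructs $\Pi_k\bm{X}$ exactly, leaving $\bm{X}-\bm{A}_S=\bm{U}_{r-k}\bm{\Sigma}_{r-k}(\bm{V}_{r-k}^{\Tran}-\bm{W}_S\bm{V}_k^{\Tran})$ with $\bm{W}_S:=\bm{V}_{S,[k+1:r]}^{\Tran}(\bm{V}_{S,[k]}^{\Tran})^{-1}$. As $\bm{U}_{r-k}$ and $[\,\bm{V}_k\mid\bm{V}_{r-k}\,]$ have orthonormal columns, any unitarily invariant norm collapses to $\|\bm{X}-\bm{A}_S\|_{\nu}=\big\|[\,\bm{\Sigma}_{r-k}\mid-\bm{\Sigma}_{r-k}\bm{W}_S\,]\big\|_{\nu}$. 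This yields the per-subset inequalities $\|\bm{X}-\Pi_S^{\Fr}\bm{X}\|_{\Fr}^2\le\|\bm{X}-\Pi_k\bm{X}\|_{\Fr}^2+\|\bm{\Sigma}_{r-k}\bm{W}_S\|_{\Fr}^2$ and $\|\bm{X}-\Pi_S^{2}\bm{X}\|_{2}^2\le\sigma_{k+1}^2+\|\bm{\Sigma}_{r-k}\bm{W}_S\|_{2}^2$, so everything reduces to controlling $\EX_{\DPP}\|\bm{\Sigma}_{r-k}\bm{W}_S\|^2$, using $\|\cdot\|_2\le\|\cdot\|_{\Fr}$ in the spectral case.

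\emph{Expectation against the DPP (the crux).} By Cramer's rule applied to $\bm{W}_S^{\Tran}=\bm{V}_{S,[k]}^{-1}\bm{V}_{S,[k+1:r]}$, the entries are $(\bm{W}_S^{\Tran})_{\ell,j}=\Det(\bm{V}_{S,J_{\ell,j}})/\Det(\bm{V}_{S,[k]})$, where $J_{\ell,j}=([k]\setminus\{\ell\})\cup\{j\}$, whence $\|\bm{\Sigma}_{r-k}\bm{W}_S\|_{\Fr}^2=\sum_{\ell\le k}\sum_{j>k}\sigma_j^2\,\Det(\bm{V}_{S,J_{\ell,j}})^2/\Det(\bm{V}_{S,[k]})^2$. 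The decisive point is that the DPP weight $\Prb_{\DPP}(S)=\Det(\bm{V}_{S,[k]})^2$ cancels the denominator; summing over the support and completing the sum to every $S$ contained in the index set $R$ of the $p$ rows with $\bm{V}_{i,[k]}\neq\bm{0}$ (which only adds nonnegative terms) gives $\EX_{\DPP}\|\bm{\Sigma}_{r-k}\bm{W}_S\|_{\Fr}^2\le\sum_{\ell\le k}\sum_{j>k}\sigma_j^2\sum_{S}\Det(\bm{V}_{S,J_{\ell,j}})^2$. By Cauchy--Binet the inner sum equals the Gram determinant $\Det(\bm{V}_{R,J_{\ell,j}}^{\Tran}\bm{V}_{R,J_{\ell,j}})$; using orthonormality of the columns of $\bm{V}$ and the vanishing of rows outside $R$ on $[k]$, this Gram matrix is block diagonal and collapses to $\|\bm{V}_{R,j}\|_2^2$, independently of $\ell$. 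I expect this bookkeeping---checking that the DPP normalisation exactly cancels the Cramer denominators and that restricting to $R$ kills the cross Gram entries---to be the main obstacle.

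\emph{Conclusion.} The previous step gives $\EX_{\DPP}\|\bm{\Sigma}_{r-k}\bm{W}_S\|_{\Fr}^2\le k\sum_{j>k}\sigma_j^2\,\|\bm{V}_{R,j}\|_2^2$. Bounding $\sigma_j^2\le\sigma_{k+1}^2$ and using $\sum_{j>k}\|\bm{V}_{R,j}\|_2^2=\sum_{i\in R}(\|\bm{V}_{i,:}\|_2^2-\ell_i^{k})\le p-\sum_i\ell_i^{k}=p-k$ (since $\|\bm{V}_{i,:}\|_2\le1$ and $\sum_i\ell_i^k=k$) controls the correction term by $k(p-k)\sigma_{k+1}^2$. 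Inserting this into the Frobenius inequality gives $\EX_{\DPP}\|\bm{X}-\Pi_S^{\Fr}\bm{X}\|_{\Fr}^2\le\|\bm{X}-\Pi_k\bm{X}\|_{\Fr}^2+k(p-k)\sigma_{k+1}^2$, which is exactly \eqref{eq:frobenius_bound_dpp} once one rewrites $k(p-k)\sigma_{k+1}^2=\beta\tfrac{p-k}{d-k}k\,\|\bm{X}-\Pi_k\bm{X}\|_{\Fr}^2$ through the definition \eqref{e:defbeta} of $\beta$; feeding the same control of the correction term into the spectral inequality yields \eqref{eq:spectral_bound_dpp}, the leading residual $\sigma_{k+1}^2=\|\bm{X}-\Pi_k\bm{X}\|_2^2$ being of the same order $k(p-k)\sigma_{k+1}^2$.
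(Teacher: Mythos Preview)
Your deterministic per-subset step is exactly the paper's: your $\bm{A}_S$ equals $\bm{X}\bm{P}_S$ with $\bm{P}_S=\bm{S}(\bm{V}_k^{\Tran}\bm{S})^{-1}\bm{V}_k^{\Tran}$ from Lemma~\ref{refined_analysis_of_approximation_bound}, and your $\bm{W}_S$ is the paper's $\bm{Z}_S$. The divergence is entirely in how the expectation of $\|\bm{\Sigma}_{r-k}\bm{W}_S\|_{\Fr}^2$ is handled. The paper rewrites $\Det(\bm{V}_{S,[k]})^2\Tr(\bm{Z}_S\bm{Z}_S^{\Tran})$ through the principal angles as $\prod_i\cos^2\theta_i(S)\sum_j\tan^2\theta_j(S)$, collapses this via trigonometric identities to $e_{k-1}(\Sigma(\bm{V}_{S,[k]})^2)-k\,e_k(\Sigma(\bm{V}_{S,[k]})^2)$, and then invokes the counting Lemma~\ref{sum_k_1_symmetric_poly_det_lemma} on elementary symmetric polynomials. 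Your route---Cramer's rule to cancel the DPP weight $\Det(\bm{V}_{S,[k]})^2$ against the denominators of $\bm{W}_S^{\Tran}$, then Cauchy--Binet and a direct Gram computation exploiting the vanishing of $\bm{V}_{i,[k]}$ outside $R$---is more elementary: it sidesteps both the principal-angle machinery of Appendix~\ref{app:principal_angles} and Lemma~\ref{sum_k_1_symmetric_poly_det_lemma}. It also leaves the sharper intermediate bound $k\sum_{j>k}\sigma_j^2\|\bm{V}_{R,j}\|_2^2$ visible before the crude step $\sigma_j^2\le\sigma_{k+1}^2$, which could be useful elsewhere. The Frobenius bound \eqref{eq:frobenius_bound_dpp} comes out exactly. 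For the spectral bound you land at $(1+k(p-k))\|\bm{X}-\Pi_k\bm{X}\|_2^2$ rather than the stated $k(p-k)$; this is not a flaw of your argument, since the paper's own spectral proof in Section~\ref{s:proofOfExactSparsitySetting} yields the constant $k(p-k+1)$, which your $1+k(p-k)$ actually improves upon, and the stated constant $k(p-k)$ cannot be correct in general (take $p=k$ with $\sigma_{k+1}>0$).
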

The bound in \eqref{eq:spectral_bound_dpp} compares favorably with volume sampling \eqref{e:vsBound2} since the dimension $d$ has been replaced by the sparsity level $p$. For $\beta$ close to $1$, the bound in \eqref{eq:frobenius_bound_dpp} is better than the bound \eqref{e:vsBoundFr} of volume sampling since $(p-k)/(d-k) \leq 1$. Again, the sparser the $k$-leverage scores, the smaller the bounds.
\rb{Décommenter ici les Markov}

Now, one could argue that, in practice, sparsity is never exact: it can well be that $p = d$ while there still are a lot of small $k$-leverage scores. We will demonstrate in Section~\ref{s:numexpesection} that the DPP still performs better than volume sampling in this setting, which Proposition~\ref{hypo_one_two_proposition} doesn't reflect. We introduce two ideas to further tighten the bounds of Proposition~\ref{hypo_one_two_proposition}. First, we define an effective sparsity level in the vein of \cite{PaKyBo14}, see Section~\ref{subsec:k-lvs_sampling}. Second, we condition the DPP on a favourable event with controlled probability.
\begin{theorem}\label{prop:p_eff_proposition}
    Let $\pi$ be a permutation of $[d]$ such that leverage scores are reordered
    \begin{equation}
        \ell_{\pi_{1}}^{k}\geq \ell_{\pi_{2}}^{k} \geq ... \geq \ell_{\pi_{d}}^{k}.
    \end{equation}
    For $\delta \in [d]$, let $T_{\delta} = [\pi_{\delta},\dots,\pi_{d}]$. Let $\theta > 1$ and
    \begin{equation}\label{eq:leverage_score_decreasing_hypo}
    p_{\eff}(\theta) = \min\left\{q\in[d] ~\bigg/~ \sum\limits_{i \leq q} \ell_{\pi_{i}}^{k} \geq k -1+\frac{1}{\theta}\right\}.
    \end{equation}
    Finally, let $\mathcal{A}_\theta$ be the event $\{S \cap T_{p_{\eff}(\theta)} = \emptyset\}$. Then, on the one hand,
    \begin{equation}\label{eq:rejection_probability}
        \Prb_{\DPP}\left( \mathcal{A}_\theta\right) \geq \frac{1}{\theta},
    \end{equation}
    and, on the other hand,
    \begin{equation}
    \EX_{\DPP} \left[ \| \bm{X} - \Pi_{S}^{2}\bm{X} \|_{2}^{2} \, \big| \, \mathcal{A}_\theta \right] \leq \left(p_{\eff}(\theta)-k+1)(k-1+\theta \right)\| \bm{X} - \Pi_{k}\bm{X} \|_{2}^{2}
    \end{equation}
    and
    \begin{equation}
    	\label{eq:boundFrobenius_assumpt2}
    	\EX_{\DPP} \left[ \| \bm{X} - \Pi_{S}^{\Fr}\bm{X} \|_{\Fr}^{2} \, \big| \, \mathcal{A}_\theta\right] \leq \left(1+				\beta\frac{(p_{\eff}(\theta)+1-k)}{d-k}(k-1+\theta) \right)\| \bm{X} - \Pi_{k}\bm{X} \|_{\Fr}^{2}.
    \end{equation}
\end{theorem}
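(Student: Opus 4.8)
The plan is to exploit that conditioning on $\mathcal{A}_\theta$ turns the projection DPP into a $k$-DPP supported on the retained head $H$ of largest $k$-leverage indices (the complement of the rejected tail $T_{p_{\eff}(\theta)}$): since $\Prb_{\DPP}(S)=\Det(\bm{V}_{S,[k]})^2$, conditioning simply renormalizes over $\{S\subseteq H,\ |S|=k\}$, so $\Prb(S\mid\mathcal{A}_\theta)\propto\Det(\bm{V}_{S,[k]})^2$ on $H$. I would first reduce both the spectral and the Frobenius bound to controlling the single quantity $\EX_{\DPP}[\sum_{i=1}^k\tan^2\theta_i(S)\mid\mathcal{A}_\theta]$, and then evaluate that conditional expectation in closed form by Cauchy--Binet.

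For the reduction I use the deterministic per-sample inequality underlying Propositions~\ref{projection_dpp_theorem}--\ref{hypo_one_two_proposition}: writing the selected columns through the SVD and using the suboptimal reconstruction $\bm{C}(\bm{V}_{S,[k]}^{\Tran})^{-1}\bm{V}_k^{\Tran}$, the residual is $\bm{U}_{r-k}\bm{\Sigma}_{r-k}(\bm{V}_{r-k}^{\Tran}-\bm{\Omega}\bm{V}_k^{\Tran})$ with interpolation matrix $\bm{\Omega}=\bm{V}_{S,[k+1:r]}^{\Tran}(\bm{V}_{S,[k]}^{\Tran})^{-1}$. Orthogonality of the rows of $\bm{V}^{\Tran}$ kills the cross term, giving $\|\bm{X}-\Pi_S^{\Fr}\bm{X}\|_{\Fr}^2\le\|\bm{X}-\Pi_k\bm{X}\|_{\Fr}^2+\sigma_{k+1}^2\|\bm{\Omega}\|_{\Fr}^2$ and $\|\bm{X}-\Pi_S^{2}\bm{X}\|_{2}^2\le\sigma_{k+1}^2(1+\|\bm{\Omega}\|_{\Fr}^2)$, where I bound $\sigma_{k+a}^2\le\sigma_{k+1}^2$ and use $\|\bm{\Omega}\|_{\Fr}^2=\sum_i\tan^2\theta_i(S)$. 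Re-expressing $\sigma_{k+1}^2=\beta\|\bm{X}-\Pi_k\bm{X}\|_{\Fr}^2/(d-k)$ is exactly what produces the prefactor $\beta/(d-k)$ in \eqref{eq:boundFrobenius_assumpt2}, so both target inequalities follow once $\EX_{\DPP}[\sum_i\tan^2\theta_i(S)\mid\mathcal{A}_\theta]$ is bounded by $(p_{\eff}(\theta)-k+1)(k-1+\theta)-k$.

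Next I evaluate this conditional expectation. Let $\gamma_1,\dots,\gamma_k\in(0,1]$ be the eigenvalues of the Gram matrix $\bm{G}=\bm{V}_{H,[k]}^{\Tran}\bm{V}_{H,[k]}$. By Cauchy--Binet, $\Prb(\mathcal{A}_\theta)=\sum_{S\subseteq H}\Det(\bm{V}_{S,[k]})^2=\Det(\bm{G})=\prod_i\gamma_i$, and expressing $\Det(\bm{V}_{S,[k]})^2\tan^2\theta_i$ through Cramer's rule as squared $k\times k$ minors, a double application of Cauchy--Binet yields the clean identity $\EX_{\DPP}[\sum_i\tan^2\theta_i(S)\mid\mathcal{A}_\theta]=(|H|-k+1)\,e_{k-1}(\gamma)/e_k(\gamma)-k=(|H|-k+1)\sum_i\gamma_i^{-1}-k$, the factor $|H|-k+1=p_{\eff}(\theta)-k+1$ being the number of ways to extend a $(k-1)$-subset of $H$ to a $k$-subset. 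The probability bound \eqref{eq:rejection_probability} is the easy part: by the defining property of $p_{\eff}(\theta)$ the head carries leverage mass $\sum_{i\in H}\ell_i^k\ge k-1+1/\theta$, so $\Tr(\bm{I}-\bm{G})=k-\sum_{i\in H}\ell_i^k\le 1-1/\theta$, and the Weierstrass product inequality gives $\prod_i\gamma_i\ge 1-\Tr(\bm{I}-\bm{G})\ge 1/\theta$ (a reverse Markov estimate on $|S\cap H|\le k$ gives the same).

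The crux, and the step I expect to be the main obstacle, is converting these pieces into the stated bound without losing a factor of $k$: bounding $e_{k-1}(\gamma)\le k$ and $e_k(\gamma)\ge1/\theta$ separately yields only $\sum_i\gamma_i^{-1}\le k\theta$, which is too weak. Instead I would use the coupling inequality $\sum_{i=1}^k\gamma_i^{-1}\le (k-1)+\prod_{i=1}^k\gamma_i^{-1}$, valid for all $\gamma_i\in(0,1]$, which follows by induction from $(P-1)(x-1)\ge0$ for $P,x\ge1$. Together with $\prod_i\gamma_i^{-1}=1/\Prb(\mathcal{A}_\theta)\le\theta$ this gives $\sum_i\gamma_i^{-1}\le k-1+\theta$, hence $\EX_{\DPP}[\sum_i\tan^2\theta_i(S)\mid\mathcal{A}_\theta]\le(p_{\eff}(\theta)-k+1)(k-1+\theta)-k$. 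Plugging into the two per-sample bounds, and using $1-k\le0$ to absorb the spectral ``$1+$'' term, delivers the spectral bound and \eqref{eq:boundFrobenius_assumpt2}. The one remaining point requiring care is bookkeeping the exact size of $H$ relative to $T_{p_{\eff}(\theta)}$, so that the Cauchy--Binet count $|H|-k+1$ and the leverage-mass inequality $\sum_{i\in H}\ell_i^k\ge k-1+1/\theta$ are simultaneously consistent with the factor $p_{\eff}(\theta)-k+1$.
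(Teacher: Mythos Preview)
Your proposal is correct and shares the paper's overall architecture: both reduce to controlling $\EX\bigl[\sum_i\tan^2\theta_i(S)\mid\mathcal{A}_\theta\bigr]$, both identify the conditional numerator and denominator with $e_{k-1}(\gamma)$ and $e_k(\gamma)$ for the eigenvalues $\gamma_i$ of $\bm{V}_{H,[k]}^{\Tran}\bm{V}_{H,[k]}$ via Cauchy--Binet (your ``identity'' $(|H|-k+1)\,e_{k-1}(\gamma)/e_k(\gamma)-k$ is exactly the paper's $Y/Z$ computation), and both then need the scalar inequalities $\prod_i\gamma_i\ge 1/\theta$ and $\sum_i\gamma_i^{-1}\le k-1+\theta$.

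Where you diverge is in proving these two scalar bounds. The paper invokes a Schur--majorization argument (its Lemma~\ref{lemma_k_minus_half}): from $\sum_i\gamma_i\ge k-1+1/\theta$ and $\gamma_i\in[0,1]$ one has $\gamma\prec_S(1,\dots,1,\sum_i\gamma_i-k+1)$, and Schur-convexity of $\lambda\mapsto\sum_i\lambda_i^{-1}$ and Schur-concavity of $\lambda\mapsto\prod_i\lambda_i$ give both bounds simultaneously. You instead use two elementary one-liners: the Weierstrass product inequality $\prod_i\gamma_i\ge 1-\sum_i(1-\gamma_i)$ for the probability, and the neat coupling inequality $\sum_{i=1}^k x_i\le(k-1)+\prod_{i=1}^k x_i$ for $x_i\ge1$ (with $x_i=\gamma_i^{-1}$) chained with the already-established $\prod_i\gamma_i^{-1}\le\theta$. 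Your route is strictly more elementary and avoids the majorization appendix entirely; the paper's route is slightly more informative in that it yields both estimates from a single comparison vector and makes the extremal configuration $(1,\dots,1,\Lambda-k+1)$ explicit. Your alternative ``reverse Markov'' argument for $\Prb(\mathcal{A}_\theta)\ge1/\theta$ via $\EX|S\cap H|=\sum_{i\in H}\ell_i^k$ is also valid and does not appear in the paper.

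Your closing caveat about the size of $H$ is well placed: the theorem as stated defines $T_\delta=[\pi_\delta,\dots,\pi_d]$, so the head avoiding $T_{p_{\eff}(\theta)}$ has only $p_{\eff}(\theta)-1$ indices, whereas the paper's own proof (and the leverage-mass inequality you need) uses the first $p_{\eff}(\theta)$ rows. This is an off-by-one inconsistency in the paper, not in your argument; resolving it with $|H|=p_{\eff}(\theta)$ makes both the factor $|H|-k+1$ and the trace bound line up with the stated conclusion.
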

%
In Theorem~\ref{prop:p_eff_proposition}, the effective sparsity level $p_{\eff}(\theta)$ replaces the sparsity level $p$ of Proposition~\ref{hypo_one_two_proposition}. The key is to condition on $S$ not containing any index of column with too small a $k$-leverage score, that is, the event $\mathcal{A}_\theta$. In practice, this is achieved by rejection sampling: we repeatedly and independently sample $S \sim \DPP(\bm{K})$ until $S \cap T_{p_{\eff}}(\theta) = \emptyset$.

\rb{Here should go Ayoub's new Markov result}

The caveat of any rejection sampling procedure is a potentially large number of samples required before acceptance. But in the present case, Equation~\eqref{eq:rejection_probability} guarantees that the expectation of that number of samples is less than $\theta$. The free parameter $\theta$ thus interestingly controls both the ``energy" threshold in \eqref{eq:leverage_score_decreasing_hypo}, and the complexity of the rejection sampling. The approximation bounds suggest picking $\theta$ close to $1$, which implies a compromise with the value of $p_{\eff}(\theta)$ that should not be too large either. We have empirically observed that the performance of the DPP is relatively insensitive to the choice of $\theta$.

\subsection{Bounds for the excess risk in sketched linear regression}
\label{sec:bounds_for_regression_under_dpp}
 In Section~\ref{s:related_work_sparse_regression}, we surveyed bounds on the excess risk of ordinary least squares estimators that relied on a subsample of the columns of $\bm{X}$.
Importantly, the generic bound \eqref{eq:prediction_bound_CSS} of \cite{LiHa18} has a bias term that depends on the maximum squared tangent of the principal angles between $\Span(\bm{S})$ and $\Span(\bm{V}_k)$. When $|S| = k$, this quantity is hard to control without making strong assumptions on the matrix $\bm{V}_{k}$.
But it turns out that, in expectation under the same DPP as in Section~\ref{sec:new_results_randomized}, this bias term drastically simplifies.

\begin{proposition}\label{prop:sparsity_prediction_under_dpp}
    We use the notation of Section~\ref{s:related_work_sparse_regression}. Under the projection DPP with marginal kernel $\bm{V}^{}_{k}\bm{V}^{\Tran}_{k}$, it holds
    \begin{equation}\label{eq:prediction_bound_dpp}
        \EX_{\DPP} \big[ \mathcal{E}(\bm{w}_{S}) \big] \leq \big(1+k(p-k)\big)\frac{\|\bm{w}^{*}\|^{2}\sigma_{k+1}^{2}}{N} + \frac{vk}{N}.
    \end{equation}
\end{proposition}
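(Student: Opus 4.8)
The plan is to feed the almost-sure bound of Proposition~\ref{prop:sparse_regression_bound} into the DPP expectation. Since that bound holds for every realization of $S$, taking $\EX_{\DPP}$ of both sides and reading $\mathcal{E}(\bm w_S)$ as $\mathcal{E}(\hat{\bm w}_S)$ with $\hat{\bm w}_S = \bm S\bm w_S$, the claim reduces to
$$\EX_{\DPP}\Big[\max_{i\in[k]}\tan^2\theta_i(S)\Big]\leq k(p-k),$$
after which \eqref{eq:prediction_bound_dpp} follows by linearity of expectation and by factoring out $\|\bm w^{*}\|^2\sigma_{k+1}^2/N$. To control the maximum I would simply bound it by the sum, $\max_i\tan^2\theta_i(S)\leq\sum_{i=1}^k\tan^2\theta_i(S)$, which is harmless because the target constant $k(p-k)$ is exactly what the sum will produce.

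Next I would turn the sum of squared tangents into a determinantal quantity. Writing $\bm\Omega_1 = \bm V_{S,[k]}^{\Tran}$ and $\bm\Omega_2 = \bm V_{S,[k+1:r]}^{\Tran}$, the cosines of the principal angles between $\Span\bm S$ and $\Span\bm V_k$ are the singular values of $\bm V_{S,[k]}$ (consistent with \eqref{e:cosines}), and a standard consequence of the CS decomposition (see Appendix~\ref{app:principal_angles}) gives, on the DPP support where $\bm\Omega_1$ is invertible, that the tangents are the singular values of $\bm\Omega_2\bm\Omega_1^{-1}$. Hence $\sum_i\tan^2\theta_i(S) = \|\bm\Omega_2\bm\Omega_1^{-1}\|_{\Fr}^2 = \sum_{\ell=k+1}^r\|\bm V_{S,[k]}^{-1}\bm V_{S,\ell}\|_2^2$. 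Applying Cramer's rule coordinatewise, the $j$-th entry of $\bm V_{S,[k]}^{-1}\bm V_{S,\ell}$ equals $\pm\Det(\bm V_{S,J_j^\ell})/\Det(\bm V_{S,[k]})$, where $J_j^\ell = ([k]\setminus\{j\})\cup\{\ell\}$. The crucial point is that the DPP weight $\Prb_{\DPP}(S) = \Det(\bm V_{S,[k]})^2$ cancels the squared denominator, so
$$\EX_{\DPP}\Big[\sum_i\tan^2\theta_i(S)\Big] = \sum_{\ell=k+1}^r\sum_{j=1}^k\;\sum_{S\,:\,\Det(\bm V_{S,[k]})\neq 0}\Det(\bm V_{S,J_j^\ell})^2.$$

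The remaining step is a Cauchy--Binet evaluation, and this is where the sparsity level $p$ (rather than the rank $r$) must be made to appear, which I expect to be the main obstacle. Let $P = \{s : \bm V_{s,[k]}\neq\bm 0\}$, so $|P| = p$ by \eqref{e:defp}; every $S$ in the DPP support satisfies $S\subset P$, since a zero row would kill $\Det(\bm V_{S,[k]})$. As all summands are nonnegative, I can upper bound the inner sum by $\sum_{S\subset P,\,|S|=k}\Det(\bm V_{S,J_j^\ell})^2 = \Det\big(\bm V_{P,J_j^\ell}^{\Tran}\bm V_{P,J_j^\ell}\big)$. Because the columns of $\bm V$ indexed by $[k]$ vanish outside $P$, the rows dropped in passing from $[d]$ to $P$ do not affect any inner product involving a column $\leq k$; the Gram matrix is therefore block diagonal, equal to $\Diag(\bm I_{k-1},\gamma_\ell)$ with $\gamma_\ell = \sum_{s\in P}V_{s,\ell}^2\leq 1$, so its determinant is $\gamma_\ell$. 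Restricting the Cauchy--Binet sum to $P$ rather than to all of $[d]$ is exactly what turns the naive bound $k(r-k)$ into the sharper $k(p-k)$.

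Summing the constant $\gamma_\ell$ over the $k$ choices of $j$ and then over $\ell$ gives $k\sum_{\ell>k}\gamma_\ell = k\sum_{s\in P}\|\bm V_{s,[k+1:r]}\|_2^2$. I would finish with two elementary facts about $\bm V$: its rows have norm at most $1$, so $\|\bm V_{s,[k+1:r]}\|_2^2\leq 1-\ell_s^k$, and its $k$-leverage scores sum to $k$ while vanishing off $P$, so $\sum_{s\in P}(1-\ell_s^k) = p-k$. This yields $\EX_{\DPP}[\sum_i\tan^2\theta_i(S)]\leq k(p-k)$ and hence \eqref{eq:prediction_bound_dpp}. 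The one point requiring care is the legitimacy of the Cramer cancellation precisely on the DPP support: the identity holds only where $\Det(\bm V_{S,[k]})\neq 0$, but these are exactly the configurations of positive DPP probability, so no mass is lost, and enlarging the index set to all $S\subset P$ only strengthens the inequality.
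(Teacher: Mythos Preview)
Your proof is correct, and it reaches the same intermediate target as the paper, namely
\[
\EX_{\DPP}\Big[\sum_{i\in[k]}\tan^2\theta_i(S)\Big]\le k(p-k),
\]
but it gets there by a genuinely different route. The paper, after bounding the max by the sum just as you do, rewrites
\[
\Det(\bm V_{S,[k]})^{2}\sum_{j\in[k]}\tan^{2}\theta_{j}(S)
= e_{k-1}\!\big(\Sigma(\bm V_{S,[k]})^{2}\big)-k\,e_{k}\!\big(\Sigma(\bm V_{S,[k]})^{2}\big)
\]
via the trigonometric identity \eqref{eq:sumcostan}, and then sums over $S$ using Lemmas~\ref{minors_symmetric_polynomials_lemma} and~\ref{sum_k_1_symmetric_poly_det_lemma}; the sparsity level $p$ enters through the counting argument in Lemma~\ref{sum_k_1_symmetric_poly_det_lemma}. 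You instead expand $\bm V_{S,[k]}^{-1}\bm V_{S,\ell}$ coordinatewise by Cramer's rule, let the DPP weight cancel the squared denominator, and apply Cauchy--Binet directly over the restricted index set $P$; the sparsity level then appears through the explicit Gram-matrix computation $\Det(\bm V_{P,J_j^\ell}^{\Tran}\bm V_{P,J_j^\ell})=\gamma_\ell$ and the identity $\sum_{s\in P}(1-\ell_s^k)=p-k$. Your argument is more self-contained (it bypasses the elementary-symmetric-polynomial lemmas entirely) and makes the mechanism by which $p$ replaces $d$ or $r$ very transparent; the paper's argument has the advantage of reusing exactly the machinery already developed for the Frobenius bound in Proposition~\ref{hypo_one_two_proposition}, so the excess-risk result comes essentially for free once that proof is in place.
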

The sparsity level $p$ appears again in the bound \eqref{eq:prediction_bound_dpp}:
The sparser the $k$-leverage scores distribution, the smaller the bias term. The bound \eqref{eq:prediction_bound_dpp} only features an additional $(1+k(p-k))$ factor in the bias term, compared to the bound obtained by \cite{LiHa18} for PCR, see Proposition~\ref{prop:pcr_bound}. Loosely speaking, this factor is to be seen as the price we accept to pay in order to get more interpretable features  than principal components in the linear regression problem. Finally, a natural question is to investigate the choice of $k$ to minimize the bound in \eqref{eq:prediction_bound_dpp}, but this is out of the scope of this paper.

As in Theorem~\ref{prop:p_eff_proposition}, for practical purposes, it can be desirable to bypass the need for the exact sparsity level $p$ in Proposition~\ref{prop:sparsity_prediction_under_dpp}. We give a bound that replaces $p$ with the effective sparsity level $p_{\eff}(\theta)$ introduced in \eqref{eq:leverage_score_decreasing_hypo}.

\begin{theorem}\label{prop:relaxed_sparsity_prediction_under_dpp}
    Using the notation of Section~\ref{s:related_work_sparse_regression} for linear regression, and of Theorem~\ref{prop:p_eff_proposition} for leverage scores and their indices, it holds
    \begin{equation}\label{eq:prediction_bound_dpp2}
        \EX_{\DPP} \big[ \mathcal{E}(\hat{\bm{w}}_{S}) \, \big| \, \mathcal{A}_\theta \big] \leq  \big[1+\big(k-1+\theta\big)\big(p_{\eff}(\theta)-k+ 1\big)\big]\frac{\|\bm{w}^{*}\|^{2}\sigma_{k+1}^{2}}{N} + \frac{vk}{N}.
    \end{equation}
\end{theorem}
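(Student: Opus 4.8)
The plan is to feed the deterministic, per-subset bound of \cite{LiHa18} (Proposition~\ref{prop:sparse_regression_bound}) into a single conditional determinantal expectation, which turns out to be the very quantity already controlled in proving Theorem~\ref{prop:p_eff_proposition}. Concretely, I would take $\EX_{\DPP}[\,\cdot\mid\mathcal{A}_\theta]$ on both sides of \eqref{eq:prediction_bound_CSS}. Since $\|\bm{w}^*\|$, $\sigma_{k+1}$, $v$, $k$ and $N$ are deterministic, linearity collapses the whole statement to the single inequality
\begin{equation}
\EX_{\DPP}\!\left[\max_{i\in[k]}\tan^2\theta_i(S)\;\Big|\;\mathcal{A}_\theta\right]\;\leq\;(k-1+\theta)\big(p_{\eff}(\theta)-k+1\big),
\end{equation}
the additive $1$ in the announced prefactor being exactly the $1$ already present in \eqref{eq:prediction_bound_CSS}, and the variance term $vk/N$ being carried over unchanged.

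To evaluate this expectation I would first re-express the tangents through the submatrix $\bm{V}_{S,[k]}$. By the principal-angle/CS-decomposition formalism of Appendix~\ref{app:principal_angles}, the $\cos\theta_i(S)$ are the singular values of $\bm{V}_{S,[k]}$, so that
\begin{equation}
\max_{i\in[k]}\tan^2\theta_i(S)\;\leq\;\sum_{i\in[k]}\tan^2\theta_i(S)\;=\;\Tr\!\big[(\bm{V}_{S,[k]}\bm{V}_{S,[k]}^{\Tran})^{-1}\big]-k,
\end{equation}
which is finite $\Prb_{\DPP}$-almost surely because $\Prb_{\DPP}(S)=\Det(\bm{V}_{S,[k]})^{2}$ charges only invertible $\bm{V}_{S,[k]}$. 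The crucial feature is that the sampling weight cancels the inverse: $\Det(\bm{V}_{S,[k]})^{2}\,\Tr[(\bm{V}_{S,[k]}\bm{V}_{S,[k]}^{\Tran})^{-1}]=e_{k-1}\big(\Lambda(\bm{V}_{S,[k]}\bm{V}_{S,[k]}^{\Tran})\big)$ is a \emph{polynomial} in the entries, not a ratio, so the conditional expectation is a genuine determinantal sum.

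Summing this identity over $k$-subsets and applying Cauchy--Binet term by term reduces the expectation to sums of squared $(k-1)\times(k-1)$ minors of $\bm{V}_{:,[k]}$, i.e.\ to Gram determinants of columns of $\bm{V}_{:,[k]}$ with one coordinate removed. Unconstrained, this computation returns the \emph{exact} value $\EX_{\DPP}[\sum_i\tan^2\theta_i(S)]=k(p-k)$, which is what underlies Proposition~\ref{prop:sparsity_prediction_under_dpp}. Conditioning on $\mathcal{A}_\theta=\{S\cap T_{p_{\eff}(\theta)}=\emptyset\}$ restricts every subset sum to the $p_{\eff}(\theta)$ largest-leverage indices, so the number of one-index extensions of a $(k-1)$-subset drops from $p-k+1$ to $p_{\eff}(\theta)-k+1$; the remaining constant must then be sharpened from the crude $\theta k$ (available from $\Prb_{\DPP}(\mathcal{A}_\theta)\geq 1/\theta$ in \eqref{eq:rejection_probability}) down to $k-1+\theta$ by invoking the energy threshold $\sum_{i\leq p_{\eff}(\theta)}\ell^k_{\pi_i}\geq k-1+1/\theta$ of \eqref{eq:leverage_score_decreasing_hypo}, in the spirit of \cite{PaKyBo14}. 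Substituting the resulting bound into \eqref{eq:prediction_bound_CSS} finishes the proof.

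The main obstacle is this last conditional determinantal evaluation. In the unconditioned case the Cauchy--Binet sum telescopes because the truncated Gram determinants equal $1$; under $\mathcal{A}_\theta$ they do not, and the delicate point is to trade these truncated Gram determinants against the normalizing factor $1/\Prb_{\DPP}(\mathcal{A}_\theta)$ so as to recover the sharp constant $k-1+\theta$ rather than the lossy $\theta k$. This is precisely the technical core already established for Theorem~\ref{prop:p_eff_proposition}, which the present argument reuses; everything else is bookkeeping around the Li--Ha inequality.
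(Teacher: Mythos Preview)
Your proposal is correct and follows essentially the same approach as the paper: take conditional expectation of the Li--Ha bound \eqref{eq:prediction_bound_CSS}, bound $\max_i\tan^2\theta_i(S)$ by $\sum_i\tan^2\theta_i(S)=\Tr(\bm{Z}_S\bm{Z}_S^{\Tran})$, and then invoke the conditional bound $Y_{k,p_{\eff}(\theta)}/Z_{k,p_{\eff}(\theta)}\leq(p_{\eff}(\theta)-k+1)(k-1+\theta)$ already established in \eqref{eq:bound_tan_in_expectation_eff} during the proof of Theorem~\ref{prop:p_eff_proposition}. Your algebraic identity $\Det(\bm{V}_{S,[k]})^{2}\Tr[(\bm{V}_{S,[k]}\bm{V}_{S,[k]}^{\Tran})^{-1}]=e_{k-1}$ is exactly the paper's trigonometric manipulation \eqref{eq:sumcostan} in different notation, and the ``sharpening from $\theta k$ to $k-1+\theta$'' that you flag as the main obstacle is precisely the Schur-convexity step (Lemma~\ref{lemma_k_minus_half}) used there.
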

In practice, the same rejection sampling routine as in Theorem~\ref{prop:p_eff_proposition} can be used to sample conditionally on $\mathcal{A}_\theta$. Finally, to the best of our knowledge, bounding the excess risk in linear regression has not been investigated under volume sampling.\\

In summary, we have obtained two sets of results. We have proven a set of multiplicative bounds in spectral and Frobenius norm for $\EX_{\DPP} \| \bm{X} - \Pi_{S}^{\nu}\bm{X} \|_{\nu}$, $\nu\in\{2,\Fr\}$, under the projection DPP of marginal kernel $\bm{K} = \bm{V}^{}_{k}\bm{V}^{\Tran}_{k}$, see Propositions \ref{projection_dpp_theorem} \& \ref{hypo_one_two_proposition} and Theorem~\ref{prop:p_eff_proposition}. As far as the linear regression problem is concerned, we have proven bounds for the excess risk in sketched linear regression, see Proposition~\ref{prop:sparsity_prediction_under_dpp} and Theorem~\ref{prop:relaxed_sparsity_prediction_under_dpp}.


\section{Numerical experiments}
\label{s:numexpesection}
In this section, we empirically compare our algorithm to the state of the art in column subset selection. In Section~\ref{s:toyDatasets}, the projection DPP with kernel $\bm{K}=\bm{V}_k\bm{V}_k^{\Tran}$ and volume sampling are compared on toy datasets. In Section~\ref{s:realDatasets}, several column subset selection algorithms are compared to the projection DPP on four real datasets from genomics and text processing. In particular, the numerical simulations demonstrate the favorable influence of the sparsity of the $k$-leverage scores on the performance of our algorithm both on toy datasets and real datasets. Finally, we packaged all CSS algorithms in this section in a Python toolbox\footnote{\url{http://github.com/AyoubBelhadji/CSSPy}}.

\subsection{Toy datasets}
\label{s:toyDatasets}
This section is devoted to comparing the expected approximation error $\mathbb{E} \|\bm{X}- \Pi_{S}^{\Fr} \bm{X}\|_{\Fr}^{2}$ for the projection DPP and volume sampling. We focus on the Frobenius norm to avoid effects to due different choices of the projection $\Pi_{S}^{\Fr}$, see \eqref{eq:frob_as_estimator_of_spe}.

In order to be able to evaluate the expected errors \emph{exactly}, we generate matrices of low dimension ($d = 20$) so that the subsets of $[d]$ can be exhaustively enumerated. Furthermore, to investigate the role of leverage scores and singular values on the performance of CSS algorithms, we need to generate datasets $\bm{X}$ with prescribed spectra and $k$-leverage scores.

\subsubsection{Generating toy datasets}
Recall that the SVD of $\bm{X}\in\mathbb{R}^{N\times d}$ reads $\bm{X} = \bm{U}\bm{\Sigma}\bm{V}^{\Tran}$, where $\bm{\Sigma}$ is a diagonal matrix and $\bm{U}$ and $\bm{V}$ are orthogonal matrices. To sample a matrix $\bm{X}$, $\bm{U}$ is first drawn from the Haar measure of $\mathcal{O}_{N}(\mathbb{R})$, then $\bm{\Sigma}$ is chosen among a few deterministic diagonal matrices that illustrate various spectral properties. Sampling the matrix $\bm{V}$ is trickier. The first $k$ columns of $\bm{V}$ are structured as follows: the number of non vanishing rows of $\bm{V}_{k}$ is equal to $p$ and the norms of the nonvanishing rows are prescribed by a vector $\bm{\ell}$. By considering the matrix $\bm{K} = \bm{V}_{k}^{}\bm{V}_{k}^{\Tran}$, generating $\bm{V}$ boils down to the simulation of an Hermitian matrix with prescribed diagonal and spectrum (in this particular case the spectrum is included in $\{0,1\}$). For this reason, we propose an algorithm that takes as input a leverage scores profile $\bm{\ell}$ and a spectrum $\bm{\sigma}^2$, and outputs a corresponding random orthogonal matrix $\bm{X}$; see Appendix~\ref{app:framebuilding}. This algorithm is a randomization\footnote{\url{http://github.com/AyoubBelhadji/FrameBuilder}} of the algorithm proposed by \cite*{FMPS11}.  Finally, the matrix $\bm{V}_{k} \in \mathbb{R}^{d \times k}$ is completed by applying the Gram-Schmidt procedure to $d-k$ additional i.i.d. unit Gaussian vectors, resulting in a matrix $\bm{V} \in \mathbb{R}^{d \times d}$. Figure~\ref{f:Matrix_Builder} summarizes the algorithm proposed to generate matrices $\bm{X}$ with a $k$-leverage scores profile $\bm{\ell}$ and a sparsity level $p$.

\begin{figure}
\centering
\begin{algorithm}{$\Algo{MatrixGenerator}\big(\bm{\ell},\bm{\Sigma})$}
    \Aitem Sample $\bm{U}$ from the Haar measure $\mathbb{O}_{N}(\mathbb{R})$.
    \Aitem Pick $\bm{\Sigma}$ a diagonal matrix.
    \Aitem Pick $p \in [k+1:d]$.
    \Aitem Pick a $k$-leverage-scores profile $\bm{\ell} \in \mathbb{R}_{+}^{d}$ with a sparsity level $p$.
    \Aitem Generate a matrix $\bm{V}_{k}$ with the $k$-leverage-scores profile $\bm{\ell}$.
    \Aitem Extend the matrix $\bm{V}_{k}$ to an orthogonal matrix $\bm{V}$.
    \Aitem \Return $\bm{X} \longleftarrow \bm{U}\bm{\Sigma}\bm{V}^{\Tran}$
\end{algorithm}
%

\caption{The pseudocode of the algorithm generating a matrix $\bm{X}$ with prescribed profile of $k$-leverage scores.}
\label{f:Matrix_Builder}
\end{figure}

\subsubsection{volume sampling vs projection DPP}
This section sums up the results of numerical simulations on toy datasets. The number of observations is fixed to $N =100 $, the dimension to $d = 20$, and the number of selected columns to $k \in \{3,5\}$. Singular values of are chosen from the following profiles: a spectrum with a cutoff called the projection spectrum,
$$
	\bm{\Sigma}_{k=3,\text{proj}} = 100 \sum\limits_{i = 1}^{3} \bm{e}_{i}\bm{e}_{i}^{\Tran} + 0.1 \sum\limits_{i = 4}^{20} \bm{e}_{i}\bm{e}_{i}^{\Tran}, $$
$$
	\bm{\Sigma}_{k=5,\text{proj}} = 100 \sum\limits_{i = 1}^{5} \bm{e}_{i}\bm{e}_{i}^{\Tran} + 0.1 \sum\limits_{i = 6}^{20} \bm{e}_{i}\bm{e}_{i}^{\Tran}. $$
and a smooth spectrum
$$
	\bm{\Sigma}_{k=3,\text{smooth}} = 100\bm{e}_{1}\bm{e}_{1}^{\Tran} + 10\bm{e}_{2}\bm{e}_{2}^{\Tran} + \bm{e}_{3}\bm{e}_{3}^{\Tran} + 0.1 \sum\limits_{i = 4}^{20} \bm{e}_{i}\bm{e}_{i}^{\Tran},$$
$$
	\bm{\Sigma}_{k=5,\text{smooth}} = 10000\bm{e}_{1}\bm{e}_{1}^{\Tran} + 1000\bm{e}_{2}\bm{e}_{2}^{\Tran} + 100\bm{e}_{3}\bm{e}_{3}^{\Tran} +  10\bm{e}_{4}\bm{e}_{4}^{\Tran} +  \bm{e}_{5}\bm{e}_{5}^{\Tran} + 0.1 \sum\limits_{i = 6}^{20} \bm{e}_{i}\bm{e}_{i}^{\Tran}.
	$$
Note that all profiles satisfy $\beta=1$; see \eqref{e:defbeta}. \rb{We should justify such a favorable setting} In each experiment, for each spectrum, we sample $200$ independent leverage scores profiles that satisfy the sparsity constraints from a Dirichlet distribution with concentration parameter $1$ and equal means. For each leverage scores profile, we sample a matrix $\bm{X}$ from the algorithm in Appendix~\ref{app:framebuilding}.

\begin{figure}
    \centering
\setcounter{subfigure}{0}%
\subfloat[$\Sigma_{3,\text{proj}}$, $k=3$]{\includegraphics[width= 0.5\textwidth]{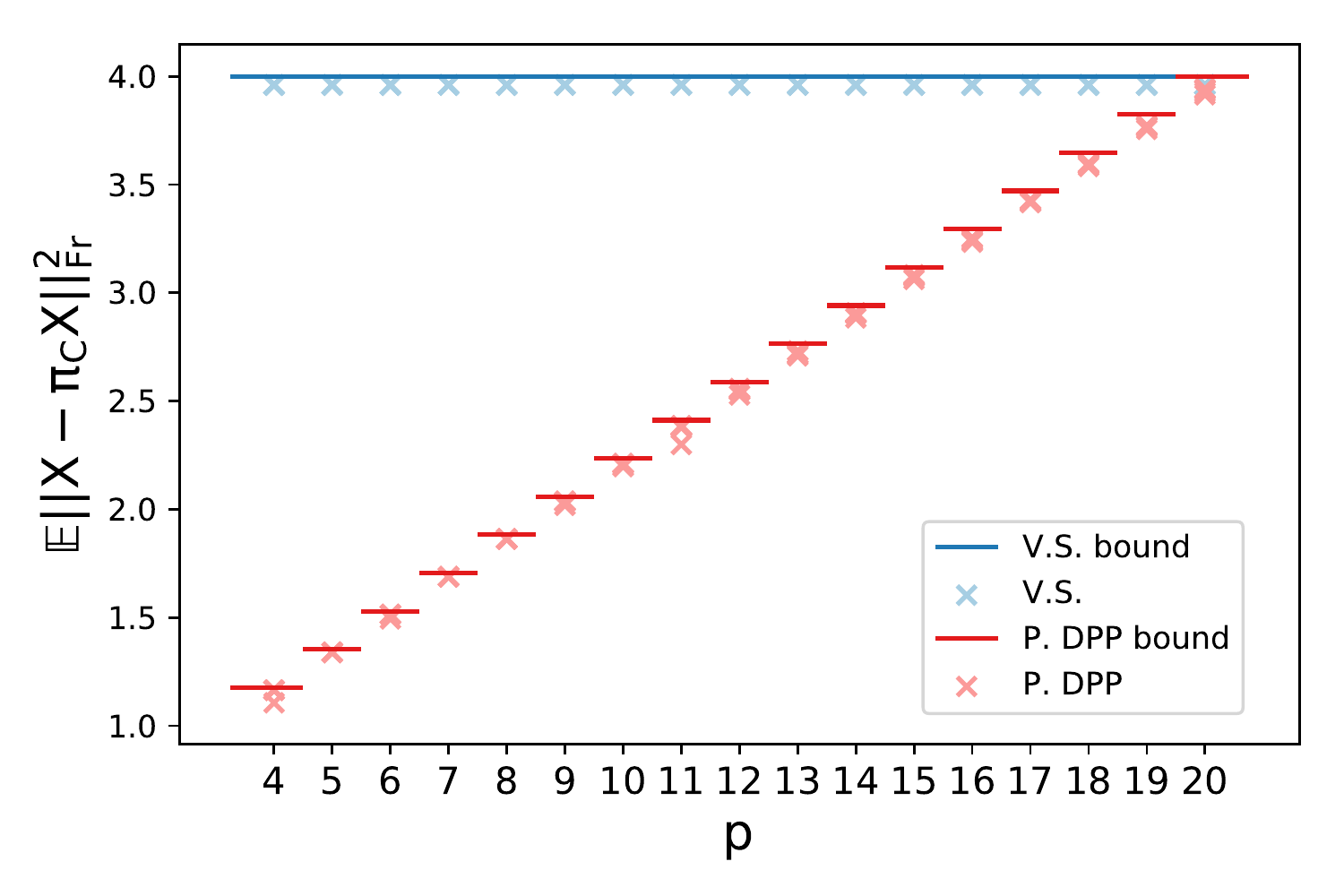}}
\setcounter{subfigure}{3}%
\subfloat[$\Sigma_{5,\text{proj}}$, $k=5$]{\includegraphics[width= 0.5\textwidth]{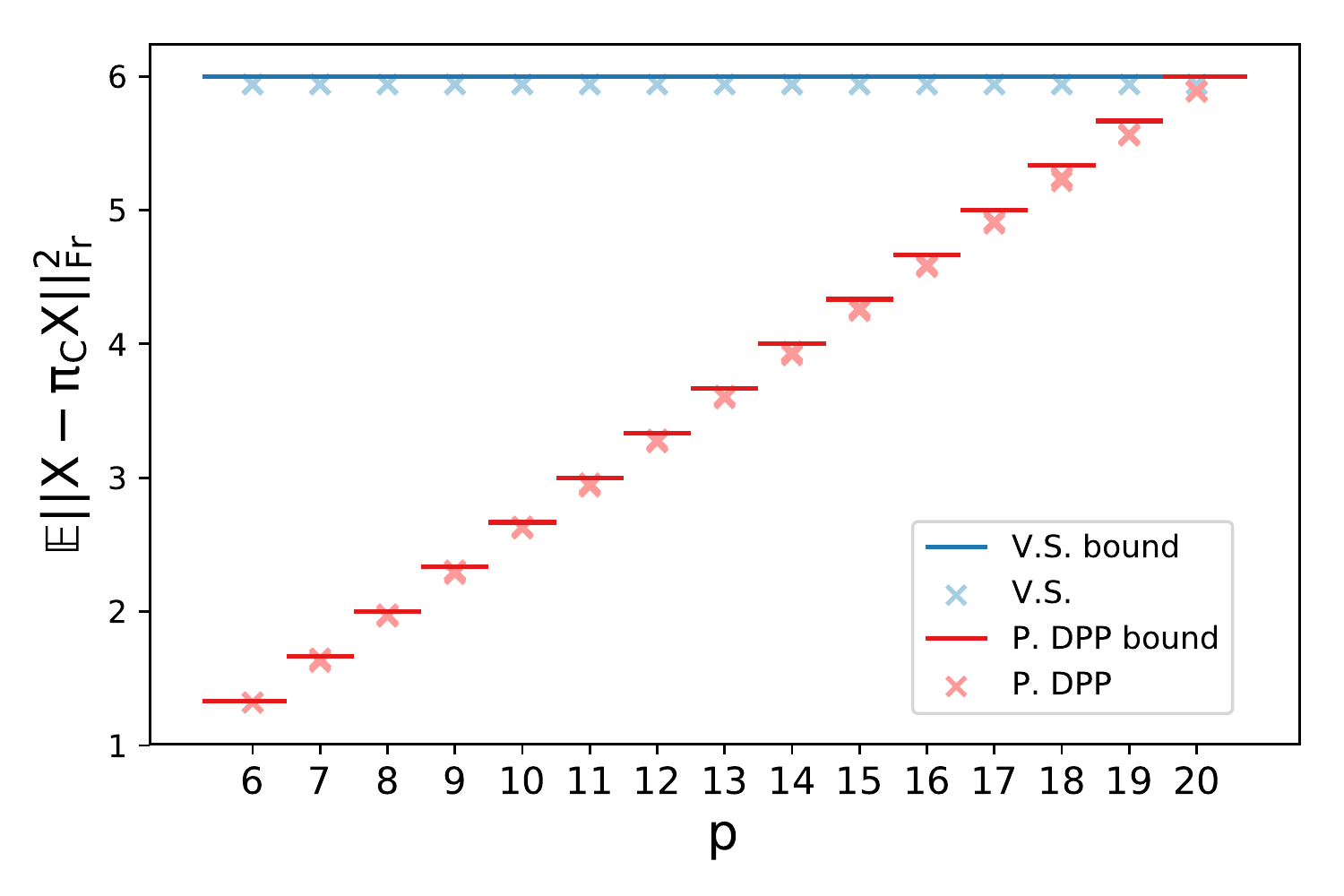}}\\
 \setcounter{subfigure}{1}%
\subfloat[$\Sigma_{3,\text{smooth}}$, $k=3$]{\includegraphics[width= 0.5\textwidth]{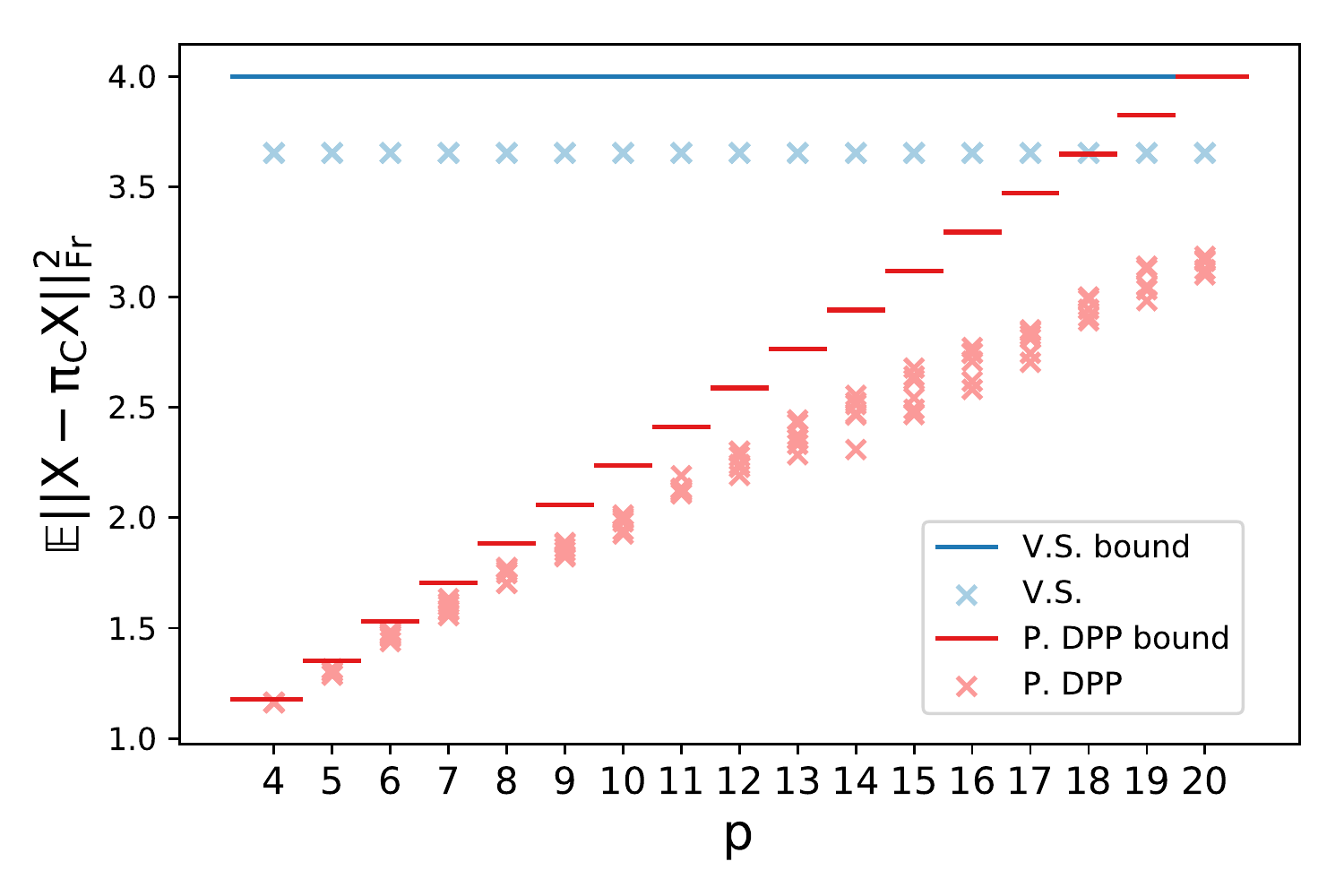}}
\setcounter{subfigure}{4}%
\subfloat[$\Sigma_{5,\text{smooth}}$, $k=5$]{\includegraphics[width= 0.5\textwidth]{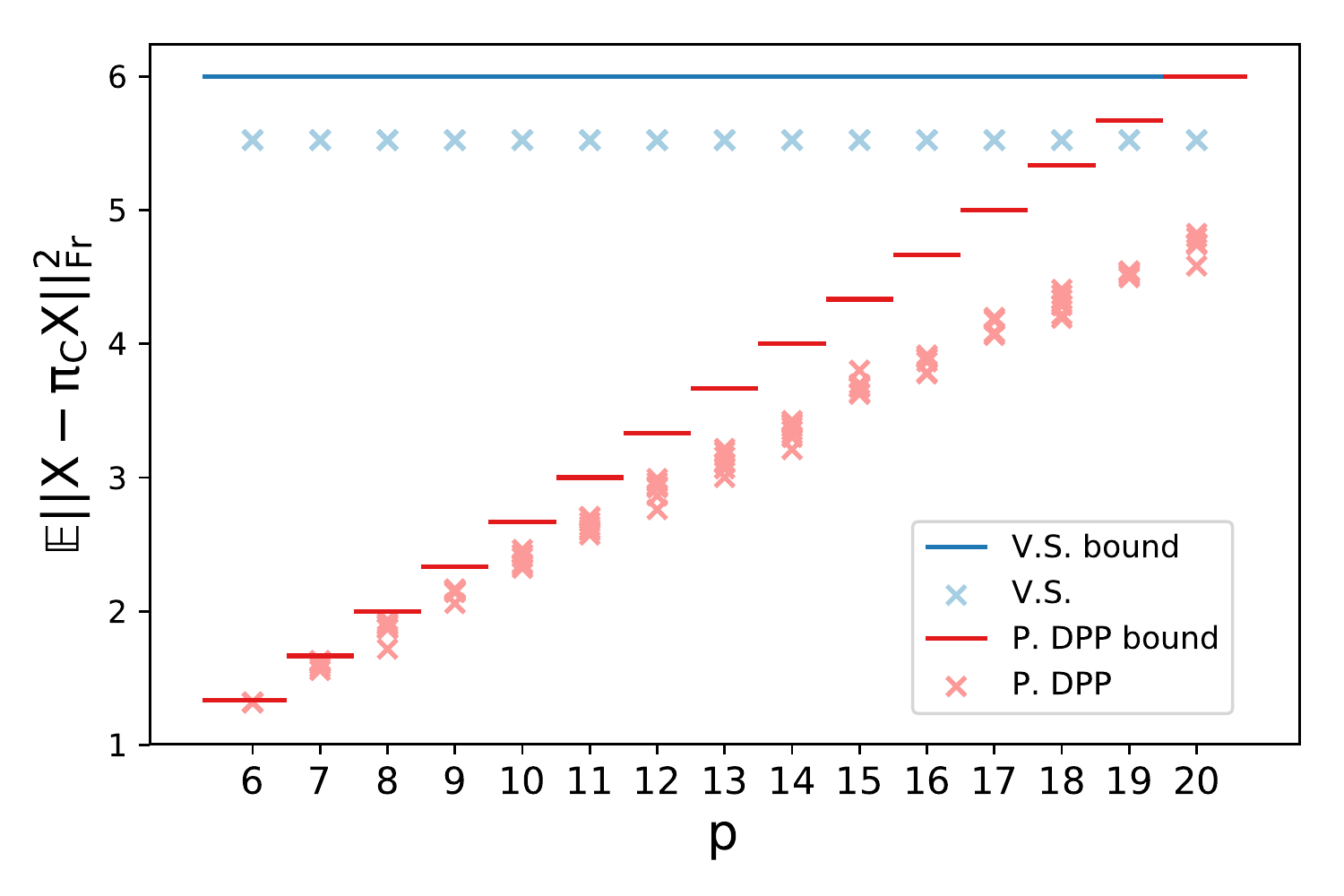}}\\

\setcounter{subfigure}{3}%
\subfloat[$I_{20}$, $k=3$]{\includegraphics[width= 0.5\textwidth]{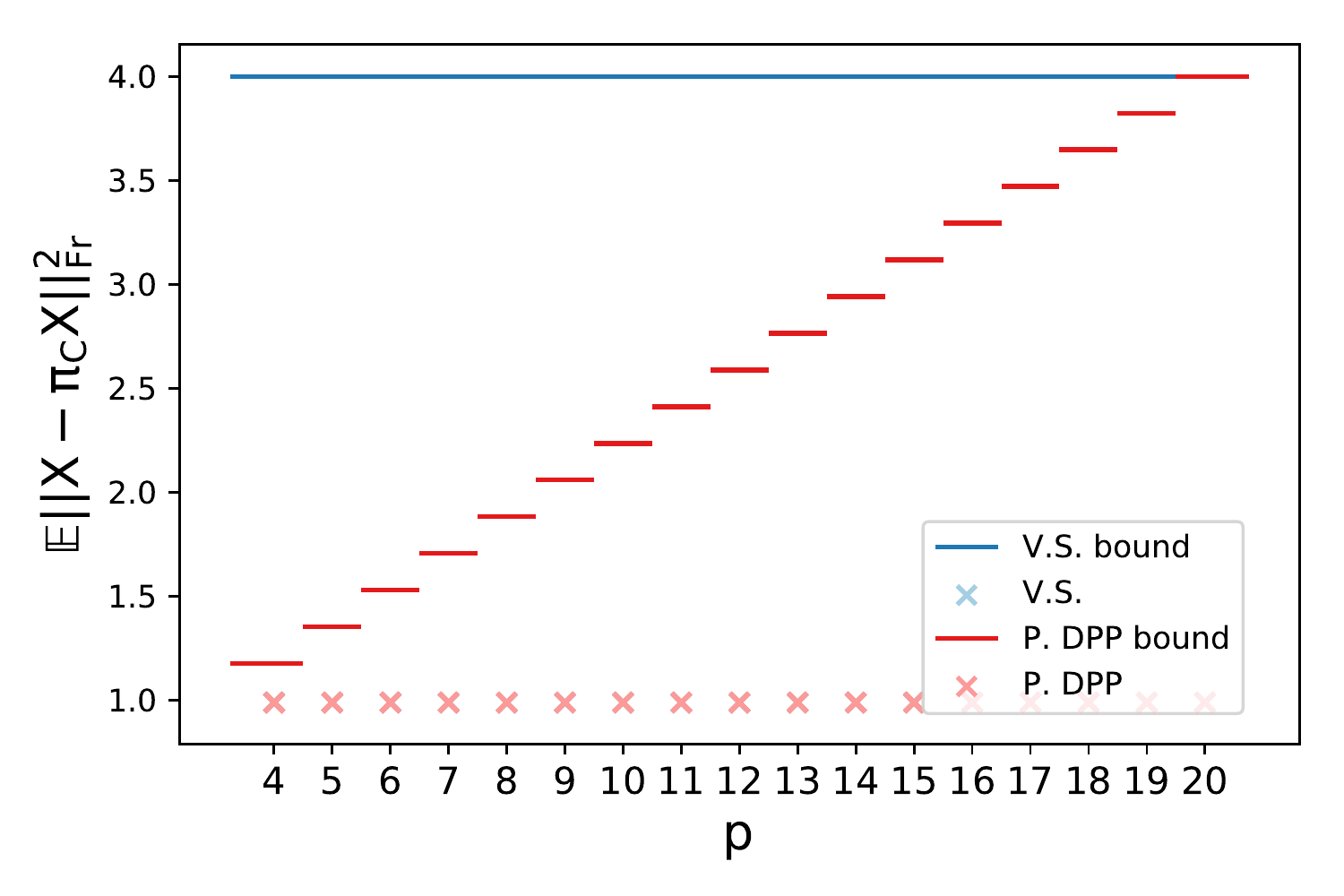}}
\setcounter{subfigure}{5}%
\subfloat[$I_{20}$, $k=5$]{\includegraphics[width= 0.5\textwidth]{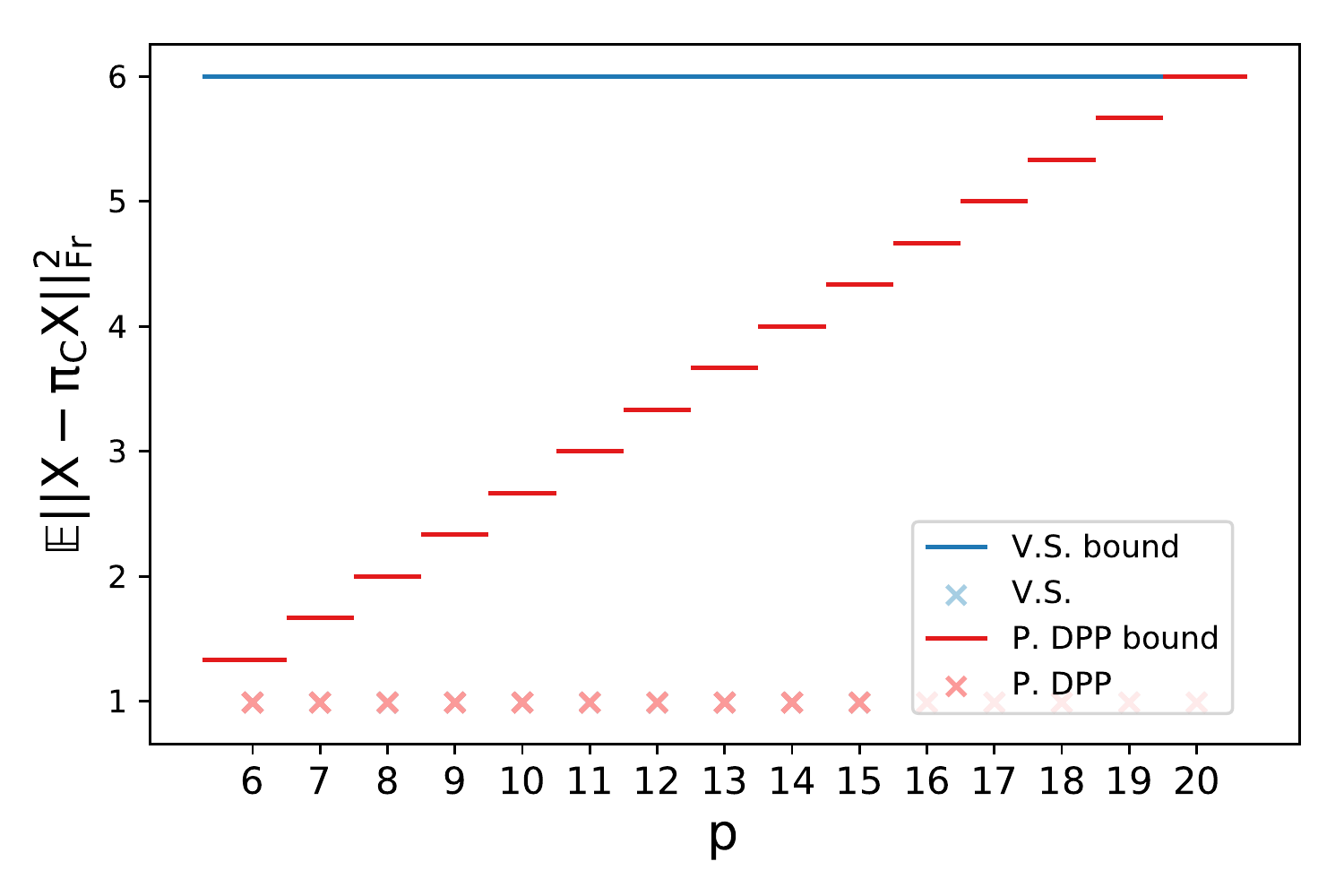}}

\caption{Realizations and bounds for $\mathbb{E} \|\bm{X}- \Pi_{S}^{\Fr} \bm{X}\|_{\Fr}^{2}$ as a function of the sparsity level $p$. \label{fig:toydatasets_vs_dpp_comparison_0}}
\end{figure}


Figure~\ref{fig:toydatasets_vs_dpp_comparison_0} compares, on the one hand, the theoretical bounds in Theorem~\ref{thrm:volume_sampling_theorem} for volume sampling and Proposition~\ref{hypo_one_two_proposition} for the projection DPP, to the numerical evaluation of the expected error for sampled toy datasets on the other hand. The x-axis indicates various sparsity levels $p$. The unit on the $y$-axis is the error of PCA. There are 400 crosses on each subplot: each of the 200 matrices appears once for both algorithms. The 200 matrices are spread evenly across the values of $p$.

The VS bounds in $(k+1)$ are independent of $p$. They appear to be tight for projection spectra, and looser for smooth spectra. For the projection DPP, the bound $(k+1)\frac{p-k}{d-k}$ is linear in $p$, and can be much lower than the bound of VS. The numerical evaluations of the error also suggest that this DPP bound is tight for a projection spectrum, and looser in the smooth case. In both cases, the bound is representative of the actual behavior of the algorithm.

The bottom row of Figure~\ref{fig:toydatasets_vs_dpp_comparison_0} displays the same results for identity spectra, again for $k=3$ and $k=5$. This setting is extremely nonsparse and represents an arbitrarily bad scenario where even PCA would not make much practical sense. Both VS and DPP sampling perform constantly badly, and all crosses superimpose at $y=1$, which indicates the PCA error. In this particular case, our linear bound in $p$ is not representative of the actual behavior of the error. This observation can be explained for volume sampling using Theorem~\ref{thm:schur_convex_volume_sampling}, which states that the expected squared error under VS is Schur-concave, and is thus minimized for flat spectra. We have no similar result for the projection DPP.

Figure~\ref{fig:toydatasets_vs_dpp_comparison_1} provides a similar comparison for the two smooth spectra $\bm{\Sigma}_{3,\text{smooth}}$ and $\bm{\Sigma}_{5,\text{smooth}}$, but this time using the effective sparsity level $p_{\eff}(\theta)$ introduced in Theorem~\ref{prop:p_eff_proposition}. We use $\theta=1/2$; qualitatively, we have observed the results to be robust to the choice of $\theta$. The $200$ sampled matrices are now unevenly spread across the $x$-axis, since we do not control $p_{\eff}(\theta)$. Note finally that the DPP here is conditioned on the event $\{S \cap T_{p_{\mathrm{eff}}(\theta)}=\emptyset\}$, and sampled using an additional rejection sampling routine as detailed below Theorem~\ref{prop:p_eff_proposition}.

\begin{figure}
    \centering
\subfloat[$\Sigma_{3,\text{smooth}}$]{\includegraphics[width= 0.5\textwidth]{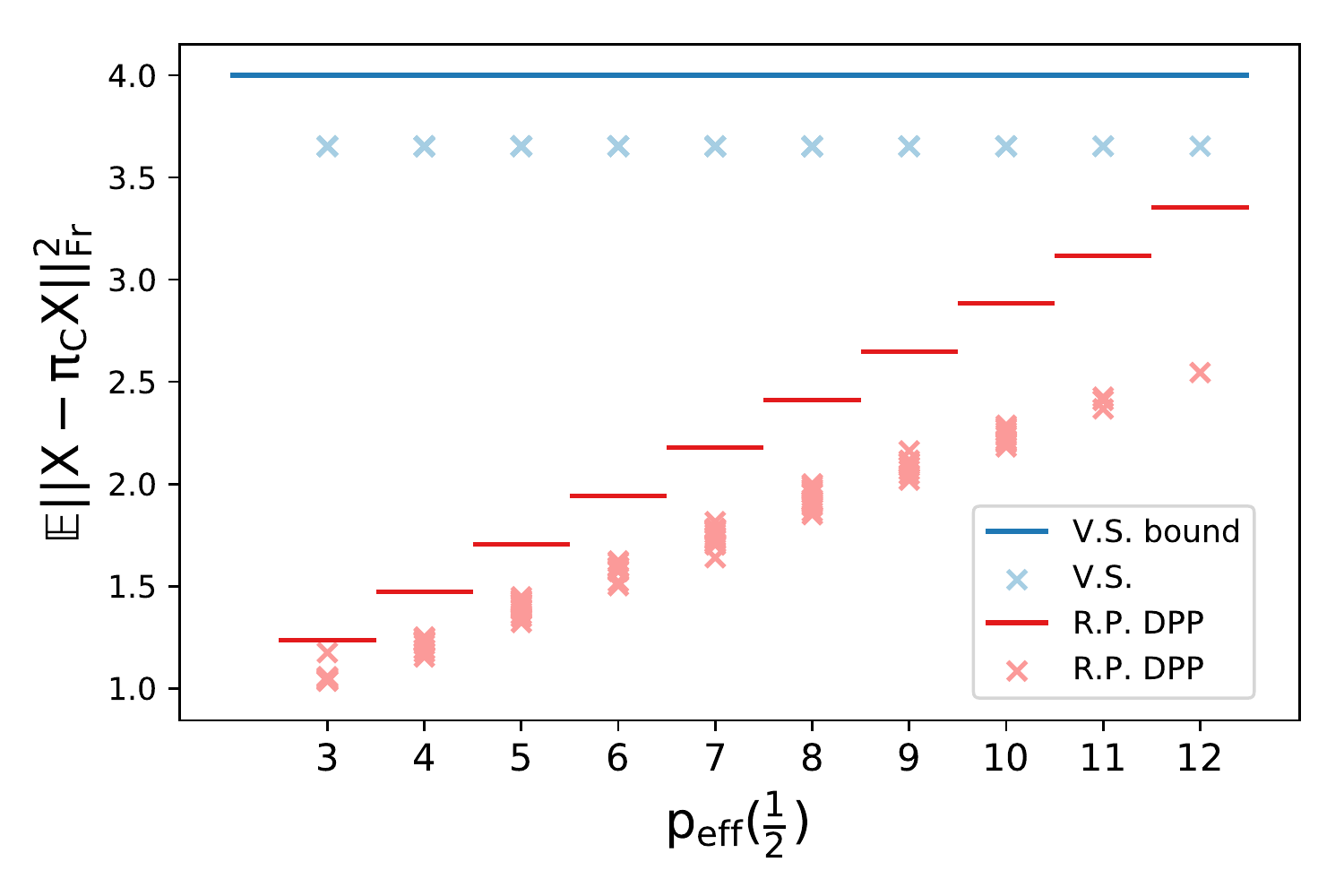}}
\subfloat[$\Sigma_{5,\text{smooth}}$]{\includegraphics[width= 0.5\textwidth]{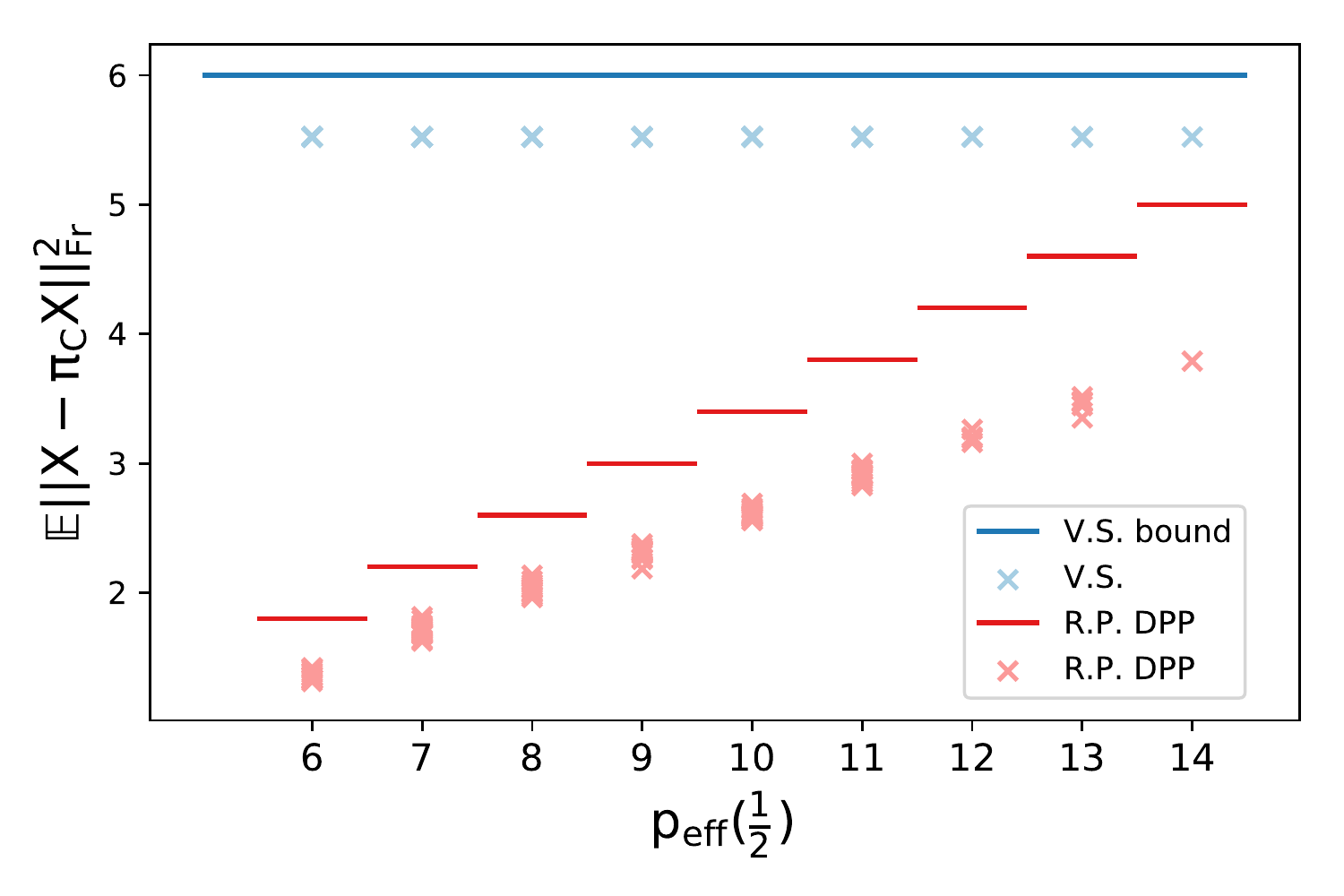}}\\

\caption{Realizations and bounds for $\mathbb{E} \|\bm{X}- \Pi_{S}^{\Fr} \bm{X}\|_{\Fr}^{2}$ as a function of the effective sparsity level $p_{\mathrm{eff}}(\frac{1}{2})$.
\label{fig:toydatasets_vs_dpp_comparison_1}}
\end{figure}

\begin{figure}
    \centering
\subfloat[$\Sigma_{3,\text{smooth}}$]{\includegraphics[width= 0.5\textwidth]{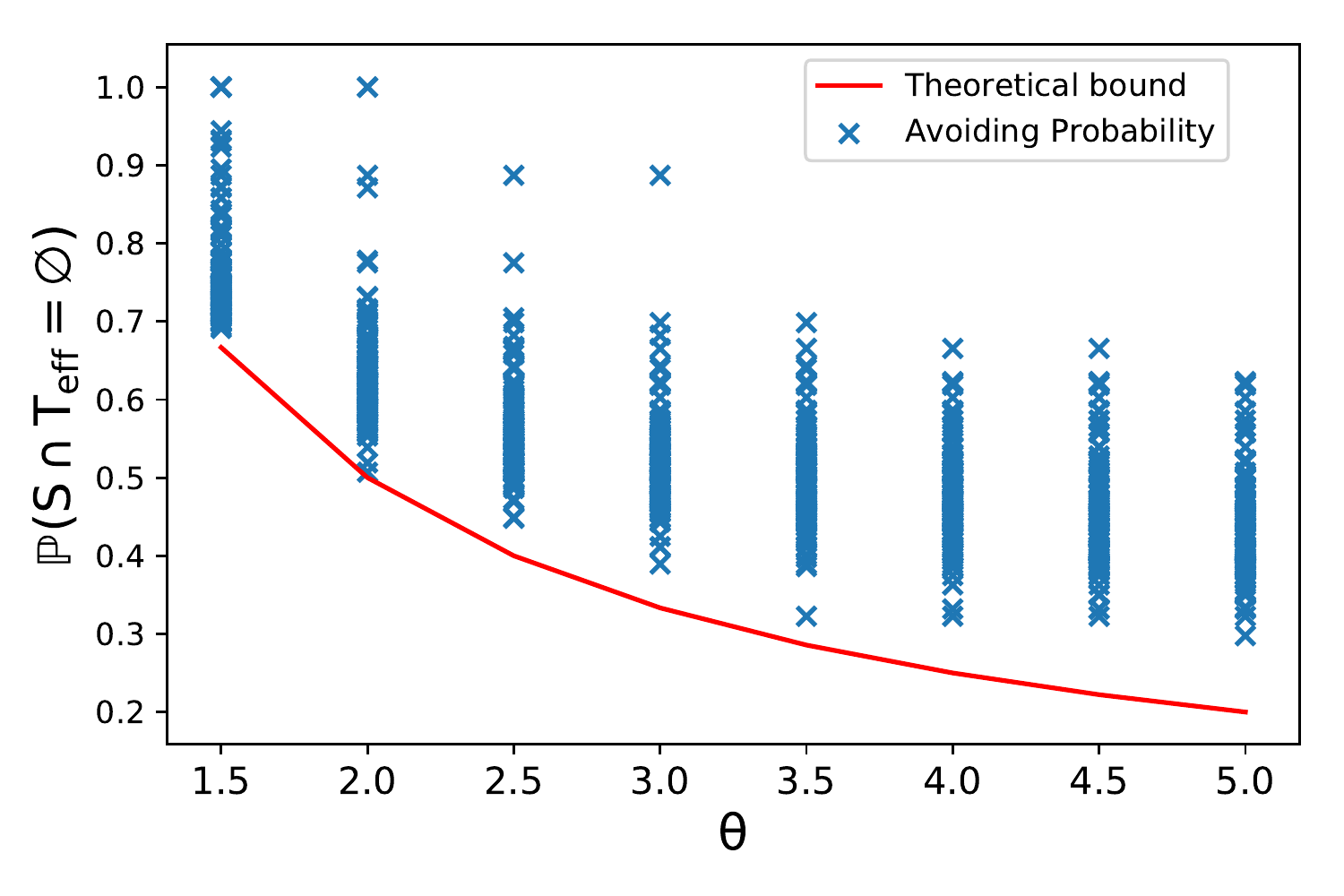}}
\subfloat[$\Sigma_{5,\text{smooth}}$]{\includegraphics[width= 0.5\textwidth]{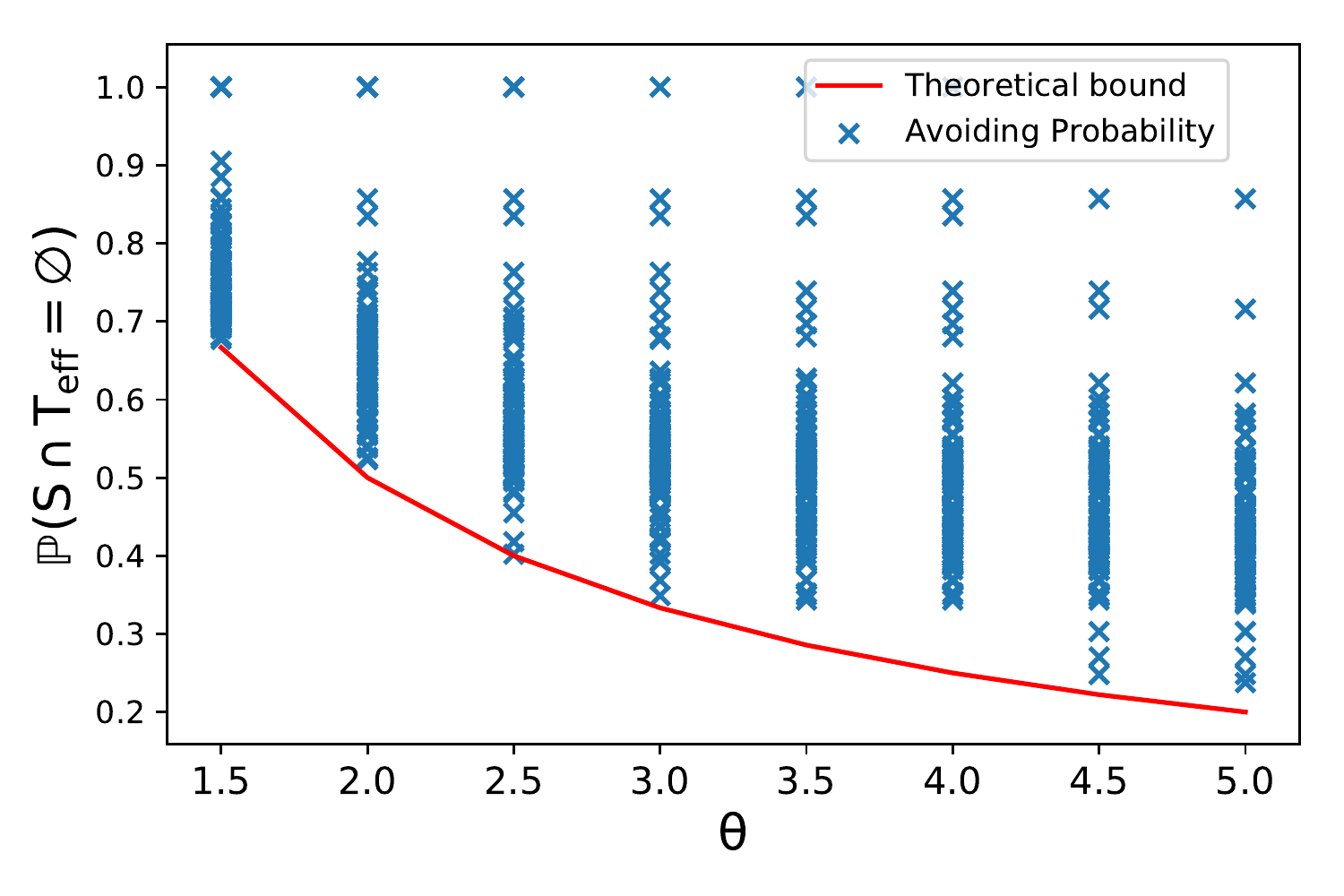}}\\

\caption{Realizations and bounds for the avoiding probability $\Prb(S \cap T_{p_{\mathrm{eff}}(\theta)} = \emptyset)$ in Theorem~\ref{prop:p_eff_proposition} as a function of $\theta$.
\label{fig:toydatasets_avoiding_proba}}
\end{figure}

For the DPP, the bound is again linear on the effective sparsity level $p_{\mathrm{eff}}(\frac{1}{2})$, and can again be much lower than the VS bound. The behavior of both VS and the projection DPP are similar to the exact sparsity setting of Figure~\ref{fig:toydatasets_vs_dpp_comparison_0}: the DPP has uniformly better bounds and actual errors, and the bound reflects the actual behavior, if relatively loosely when $p_{\eff}(1/2)$ is large.

Figure~\ref{fig:toydatasets_avoiding_proba} compares the theoretical bound in Theorem~\ref{prop:p_eff_proposition} for the avoiding probability $\Prb(S \cap T_{p_{\mathrm{eff}}(\theta)} = \emptyset)$ with 200 realizations, as a function of $\theta$. More precisely, we drew 200 matrices $\bm{X}$, and then for each $\bm{X}$, we computed exactly --~by enumeration~-- the value $\Prb(S \cap T_{p_{\mathrm{eff}}(\theta)} = \emptyset)$ for all values of $\theta$. The only randomness is thus in the sampling of $\bm{X}$, not the evaluation of the probability. The results suggest again that the bound is relatively tight.

\subsection{Real datasets}
\label{s:realDatasets}
This section compares the empirical performances of several column subset selection algorithms on the datasets in Table~\ref{table:real_datasets}.

\begin{table}[h]
\centering
 \begin{tabular}{| c| c | c| c|}
 \hline
  Dataset & Application domain & $N \times d$  &  References\\
 \hline
 Colon & genomics & $62 \times 2000$  & \citep{Al99}\\
 \hline
 Leukemia & genomics &$72 \times 7129$ & \citep{Go99}\\
 \hline
 Basehock & text processing &$1993 \times 4862$ & \citep{Li17} \\
 \hline
 Relathe & text processing & $1427 \times 4322$ & \citep{Li17}\\
 \hline
\end{tabular}
\caption{Datasets used in the experimental section.
\label{table:real_datasets}}
\end{table}

These datasets are illustrative of two extreme situations regarding the sparsity of the $k$-leverage scores. For instance, the dataset Basehock has a very sparse profile of $k$-leverage scores, while the dataset Colon has a quasi-uniform distribution of $k$-leverage scores, see Figures~\ref{fig:colon_vs_basehock_comparison} (a) \& (b).\rb{Use subfigure. Also, the legend is weird.}

We consider the following algorithms presented in Section~\ref{s:relatedwork}: 1) the projection DPP with marginal kernel $\bm{K}=\bm{V}_{k}^{}\bm{V}_{k}^{\Tran}$, 2) volume sampling, using the implementation proposed by \cite{conf/icml/KuleszaT11}, 3) deterministically picking the largest $k$-leverage scores, 4) pivoted QR as in \citep{Golu65}, although the only known bounds for this algorithm are for the spectral norm, and 5) double phase, with $c$ manually tuned to optimize the performance, usually around $c \approx 10k$. \rb{We should harmonize capitals or regular script for algorithms}

The rest of Figure~\ref{fig:colon_vs_basehock_comparison} sums up the empirical results of the previously described algorithms on the Colon and Basehock datasets.
Figures~\ref{fig:colon_vs_basehock_comparison} (c) \& (d) illustrate the results of the five algorithms in the following setting. An ensemble of 50 subsets are sampled from each algorithm. We give the corresponding boxplots for the Frobenius errors, on Colon and Basehock respectively.
We observe that the increase in performance using projection DPP compared to volume sampling is more important for the Basehock dataset than for the Colon dataset: this improvement can be explained by the sparsity of the $k$-leverage scores as predicted by our approximation bounds.
Deterministic methods (largest leverage scores and pivoted QR) perform well compared with other algorithms on the Basehock dataset; in contrast, they display very bad performances on the Colon dataset.
The double phase algorithm has the best results on both datasets. However its theoretical guarantees cannot predict such an improvement, as noted in Section~\ref{s:relatedwork}. The performance of the projection DPP is comparable to those Double Phase and makes it a close second, with a slightly larger gap on the Colon dataset. We emphasize that our approximation bounds are sharp compared to numerical observations. \rb{Make algorithm labels larger.}

Figures~\ref{fig:colon_vs_basehock_comparison} (e) \& (f) show results obtained using a classical boosting technique for randomized algorithms. We repeat 20 times: sample 50 subsets and take the best subset selection. Displayed boxplots are for these 20 best results. The same comments apply as without boosting.

\begin{figure}
    \centering
    \captionsetup[subfigure]{justification=centering}
\subfloat[$k$-leverage scores profile for the dataset Basehock (k=10).]{\includegraphics[width= 0.5\textwidth, height = 0.2\textheight]{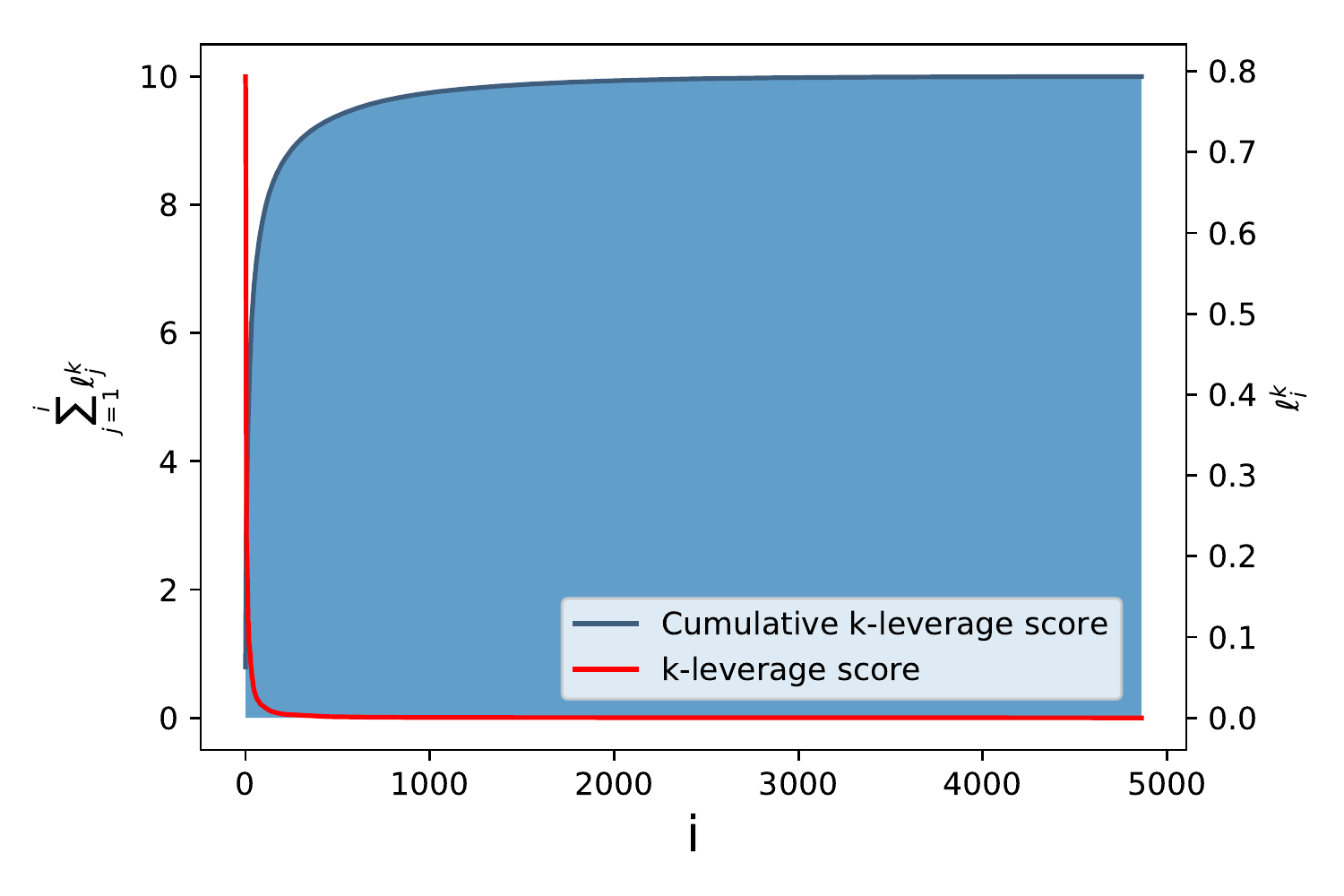}\label{subfig:basehock_cumul_klv}}~
\subfloat[$k$-leverage scores profile for the dataset Colon (k=10).]{\includegraphics[width= 0.5\textwidth, height = 0.2\textheight]{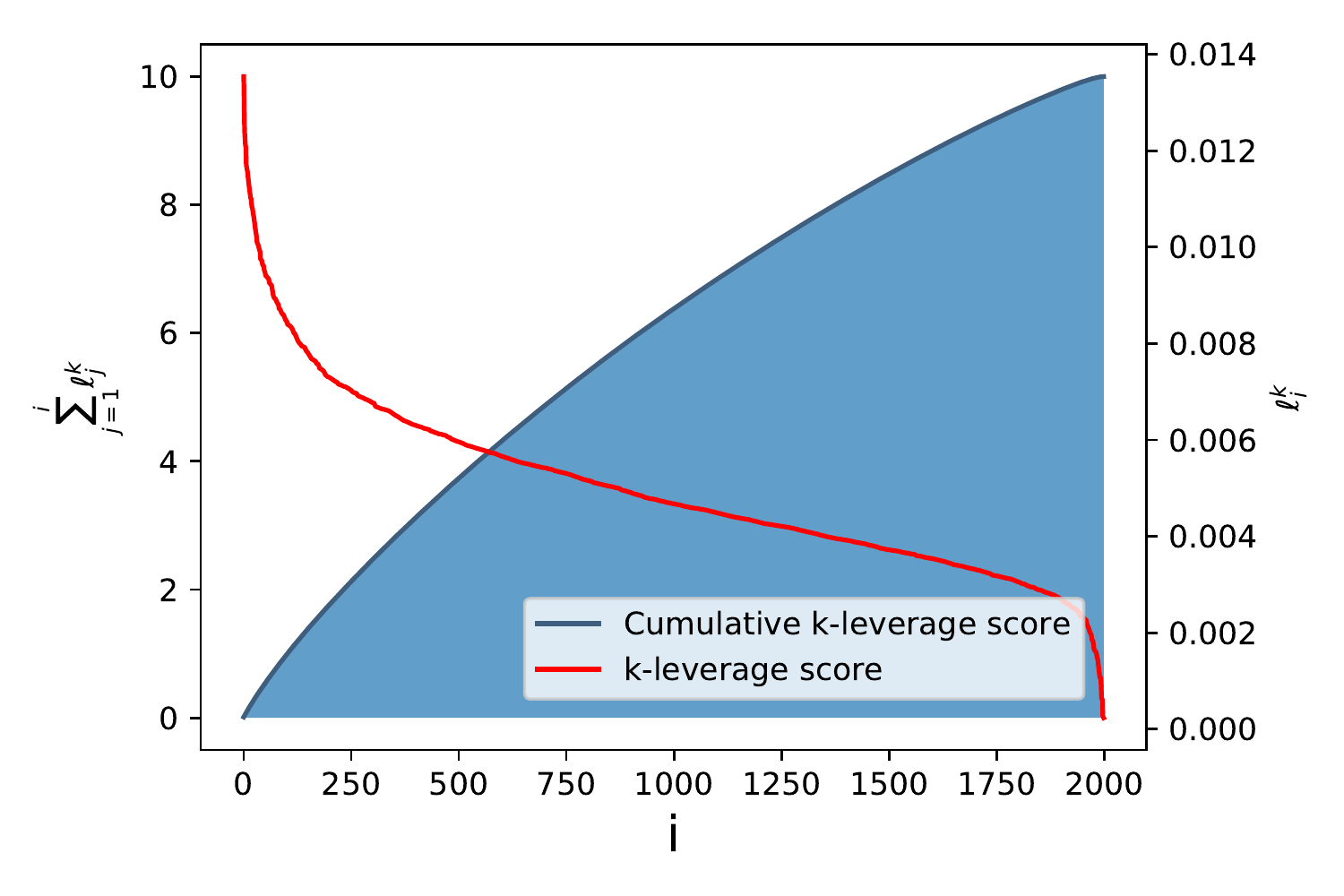}\label{subfig:colon_cumul_klv}}\\
\subfloat[Boxplots of $\|\bm{X}-\Pi_{S}^{\Fr}\bm{X}\|_{\Fr}$ on a batch of 50 samples for the five algorithms on the dataset Basehock (k=10).]{\includegraphics[width= 0.5\textwidth]{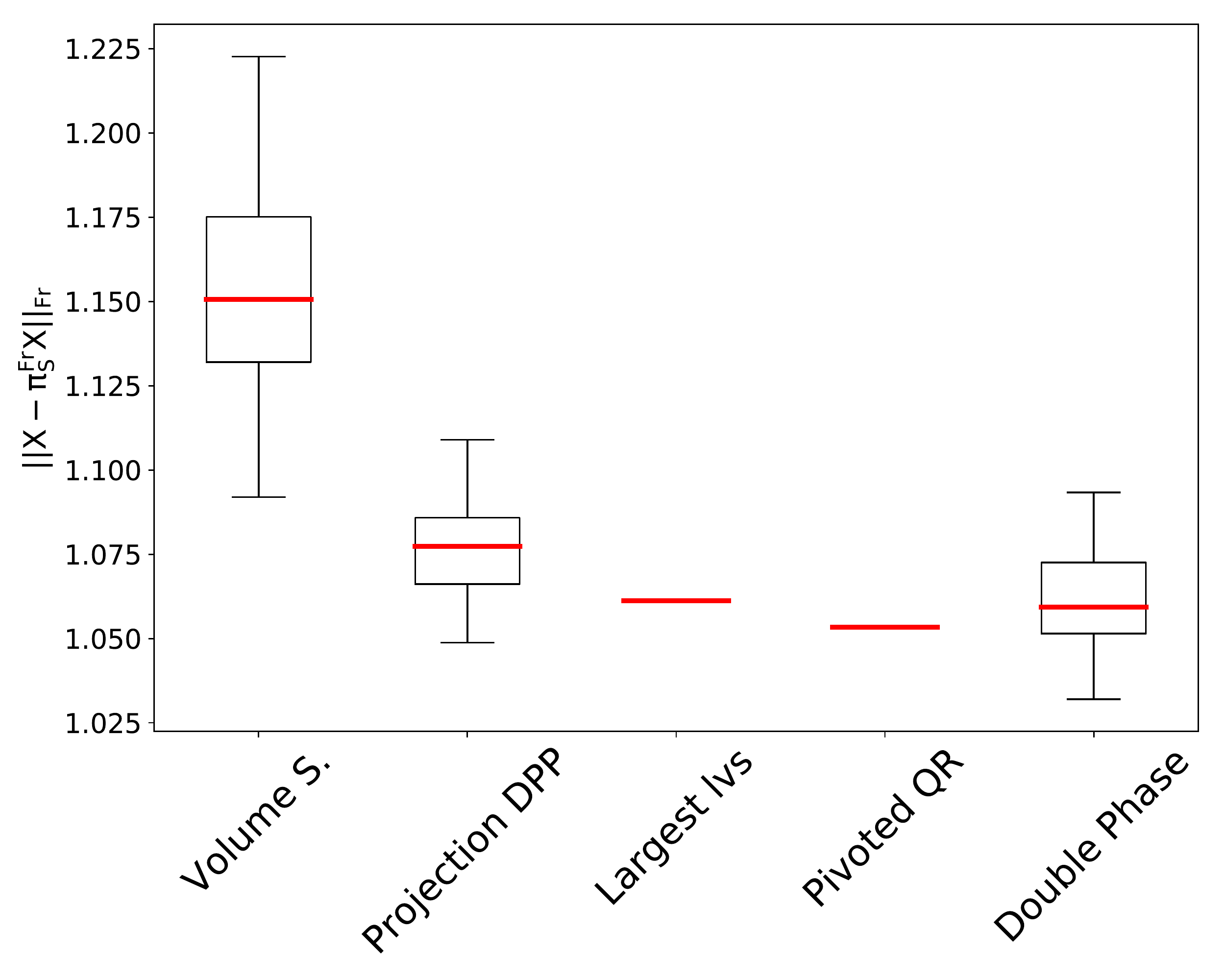}}~
\subfloat[Boxplots of $\|\bm{X}-\Pi_{S}^{\Fr}\bm{X}\|_{\Fr}$ on a batch of 50 samples for the five algorithms on the dataset Colon (k=10).]{\includegraphics[width= 0.5\textwidth]{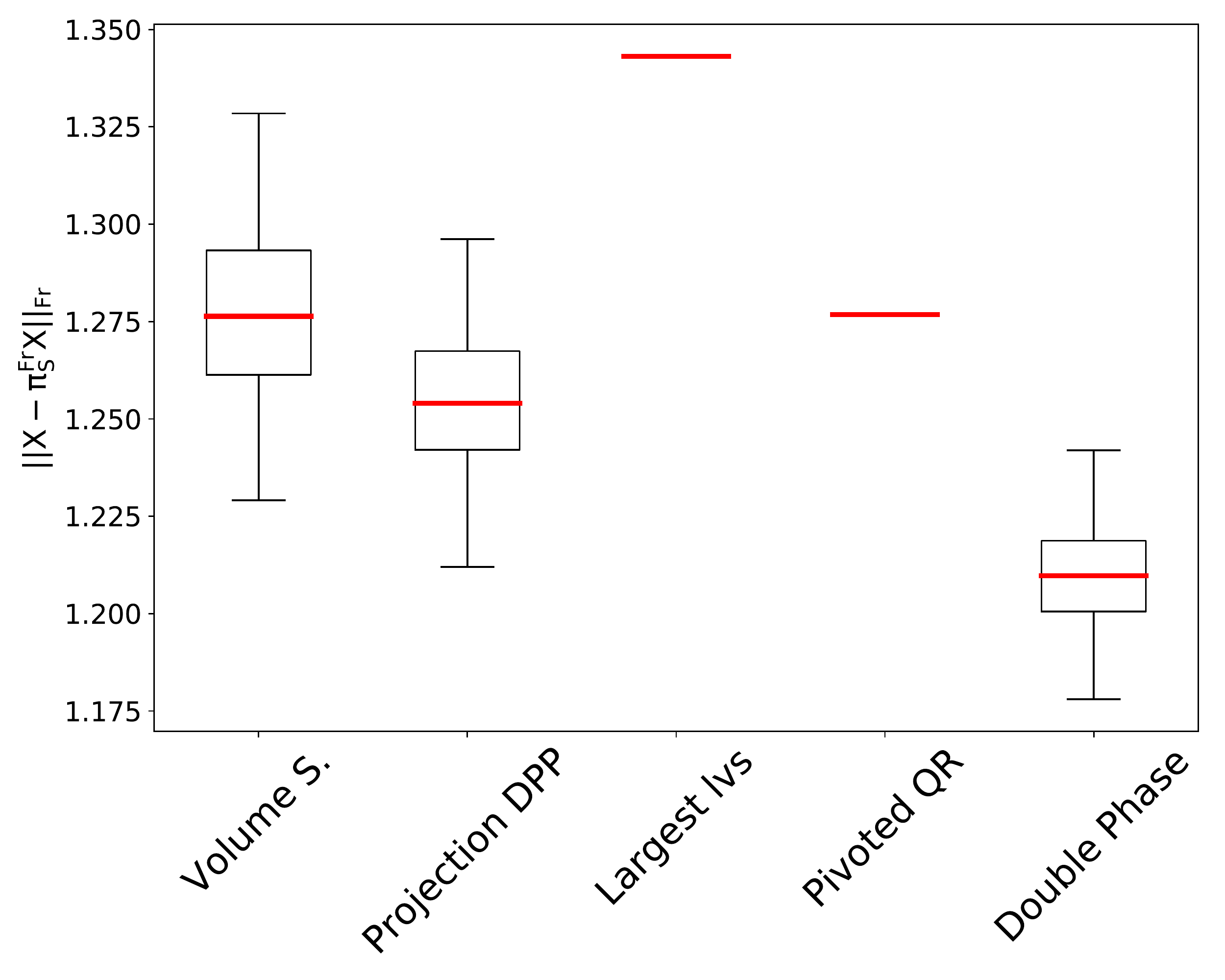}}\\
\subfloat[Boxplots of $\|\bm{X}-\Pi_{S}^{\Fr}\bm{X}\|_{\Fr}$ on a batch of 50 samples for the boosting of randomized algorithms on the dataset Basehock (k=10).]{\includegraphics[width= 0.5\textwidth]{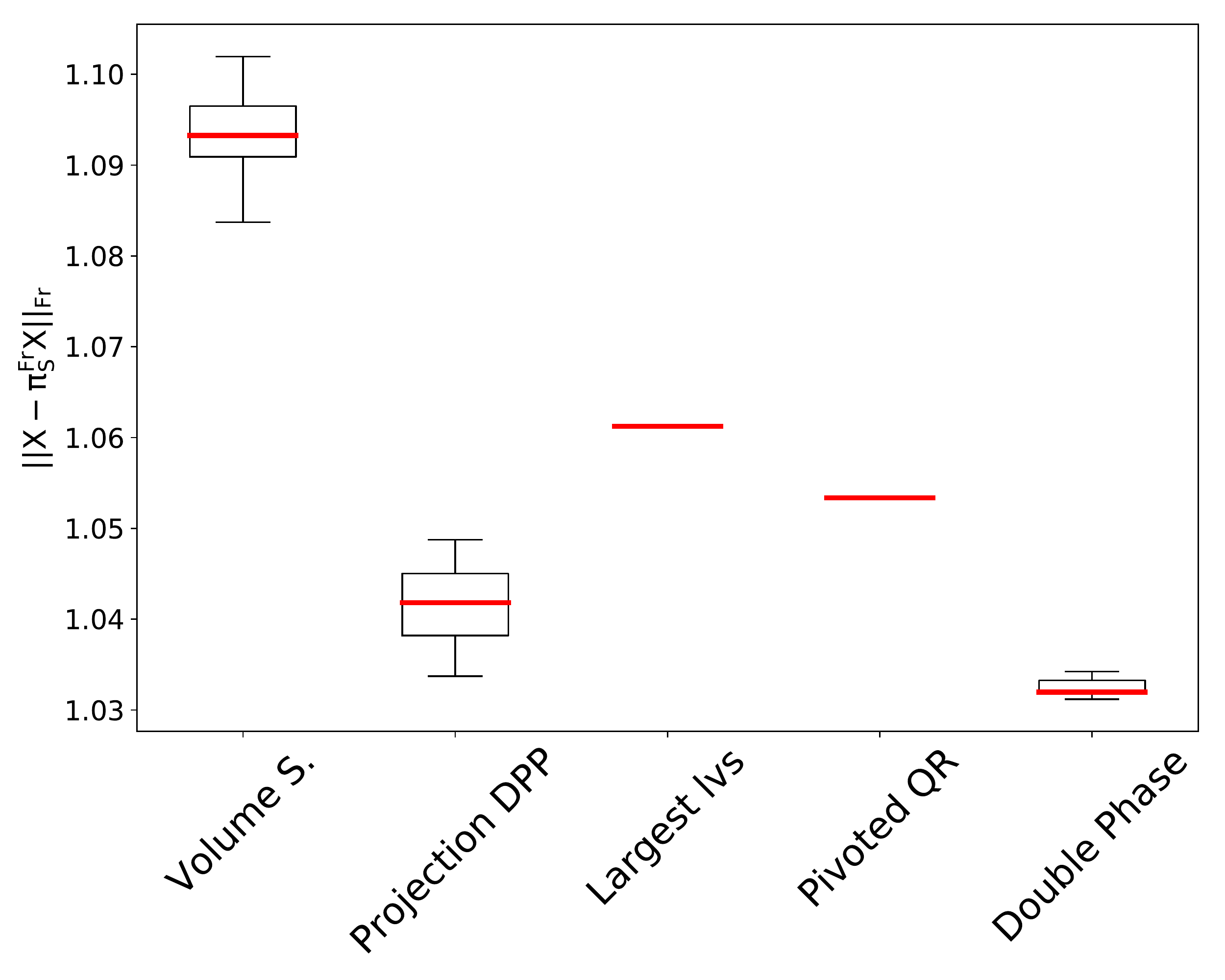}}~
\subfloat[Boxplots of $\|\bm{X}-\Pi_{S}^{\Fr}\bm{X}\|_{\Fr}$ on a batch of 50 samples for the boosting of randomized algorithms on the dataset Colon (k=10).]{\includegraphics[width= 0.5\textwidth]{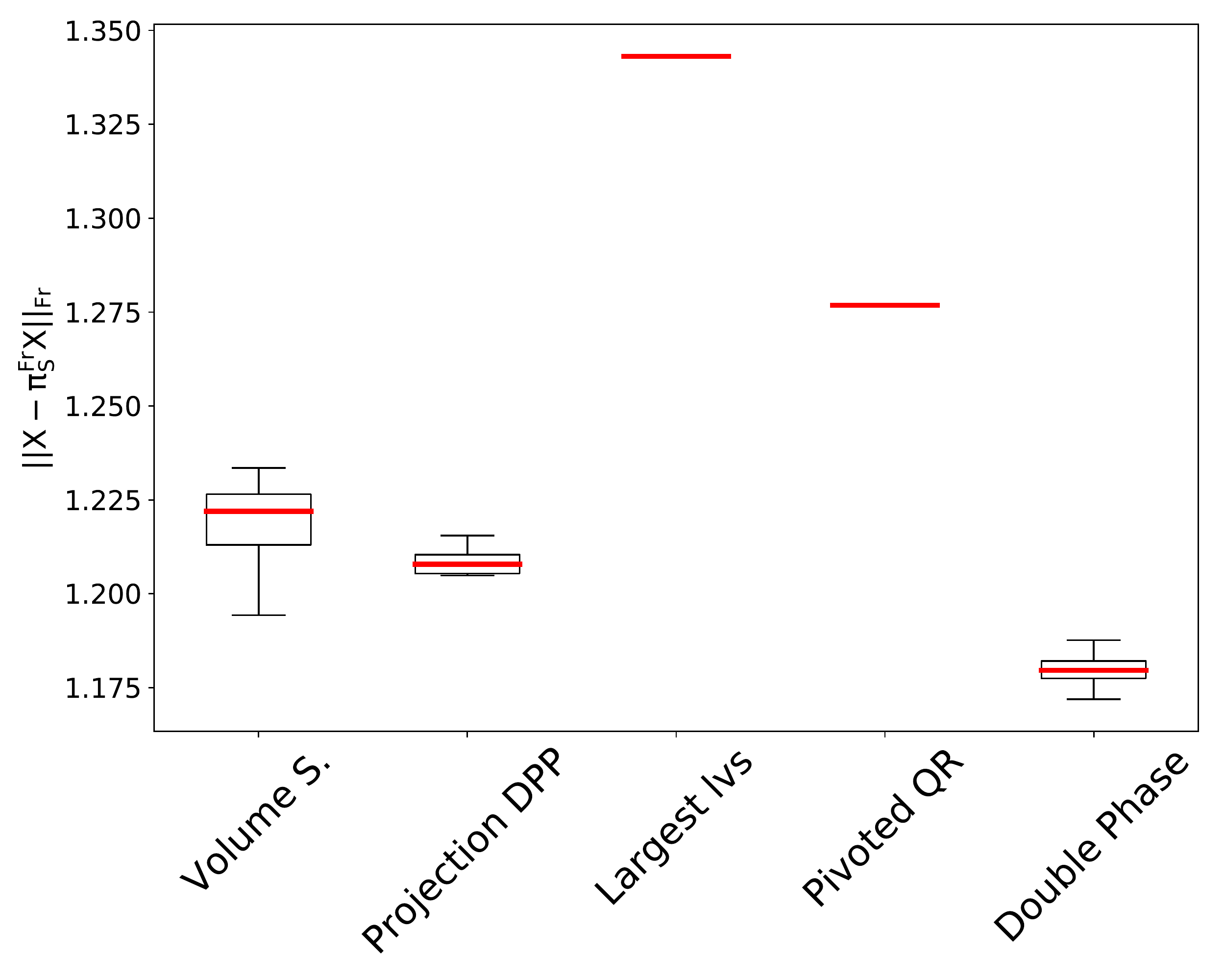}}\\
\caption{Comparison of several column subset selection algorithms for two datasets: Basehock and Colon.\label{fig:colon_vs_basehock_comparison}}
\end{figure}

Figure~\ref{fig:leukemia_vs_relathe_comparison} calls for similar comments, comparing this time the datasets Relathe (with concentrated profile of $k$-leverage scores) and Leukemia (with almost uniform profile of $k$-leverage scores).
\begin{figure}
    \centering
    \captionsetup[subfigure]{justification=centering}
\subfloat[$k$-leverage scores profile for the dataset Relathe (k=10).]{\includegraphics[width= 0.5\textwidth, height = 0.2\textheight]{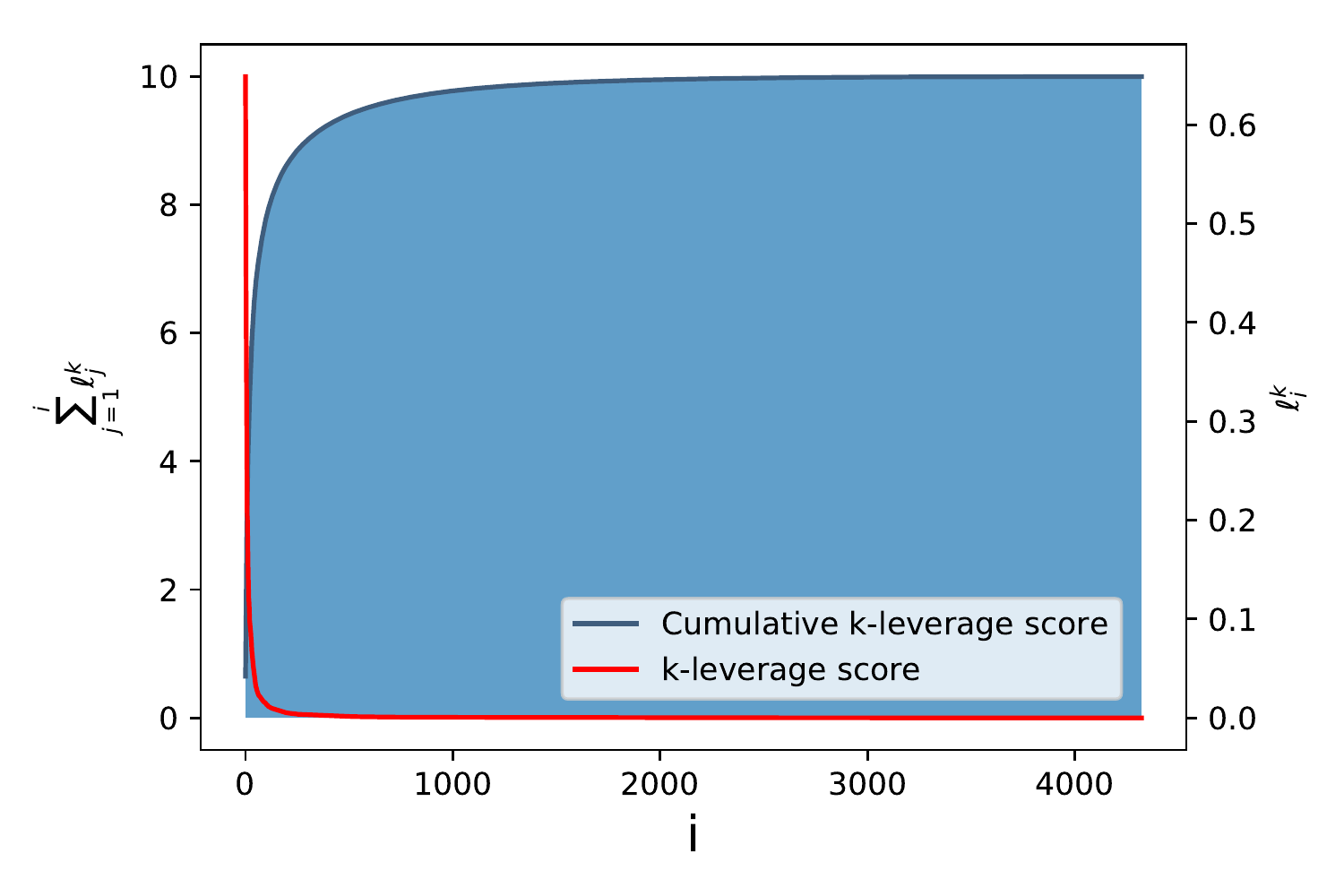}}~
\subfloat[$k$-leverage scores profile for the dataset Leukemia (k=10).]{\includegraphics[width= 0.5\textwidth, height = 0.2\textheight]{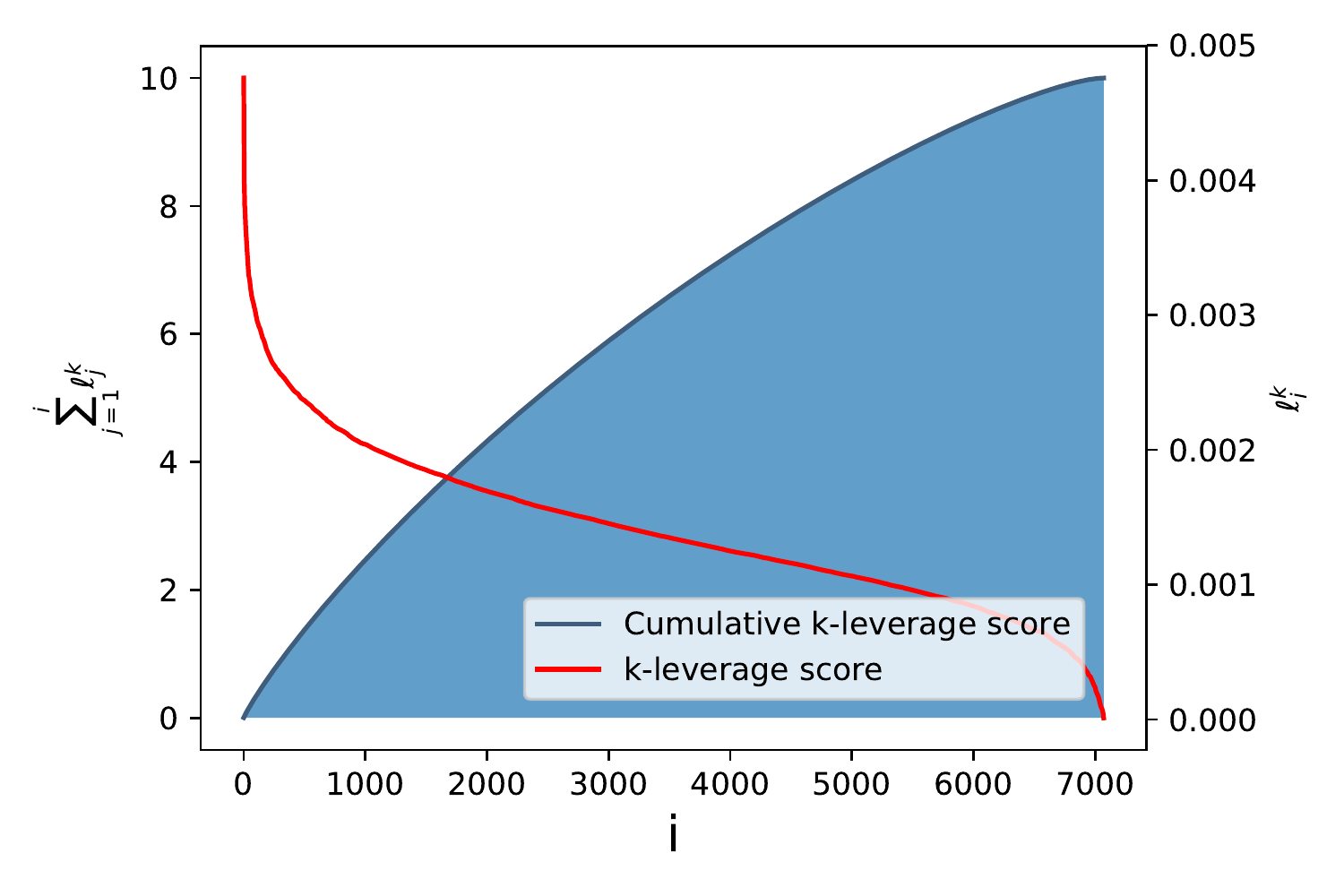}}\\
\subfloat[Boxplots of $\|\bm{X}-\Pi_{S}^{\Fr}\bm{X}\|_{\Fr}$ on a batch of 50 samples for the five algorithms on the dataset Relathe (k=10).]{\includegraphics[width= 0.5\textwidth]{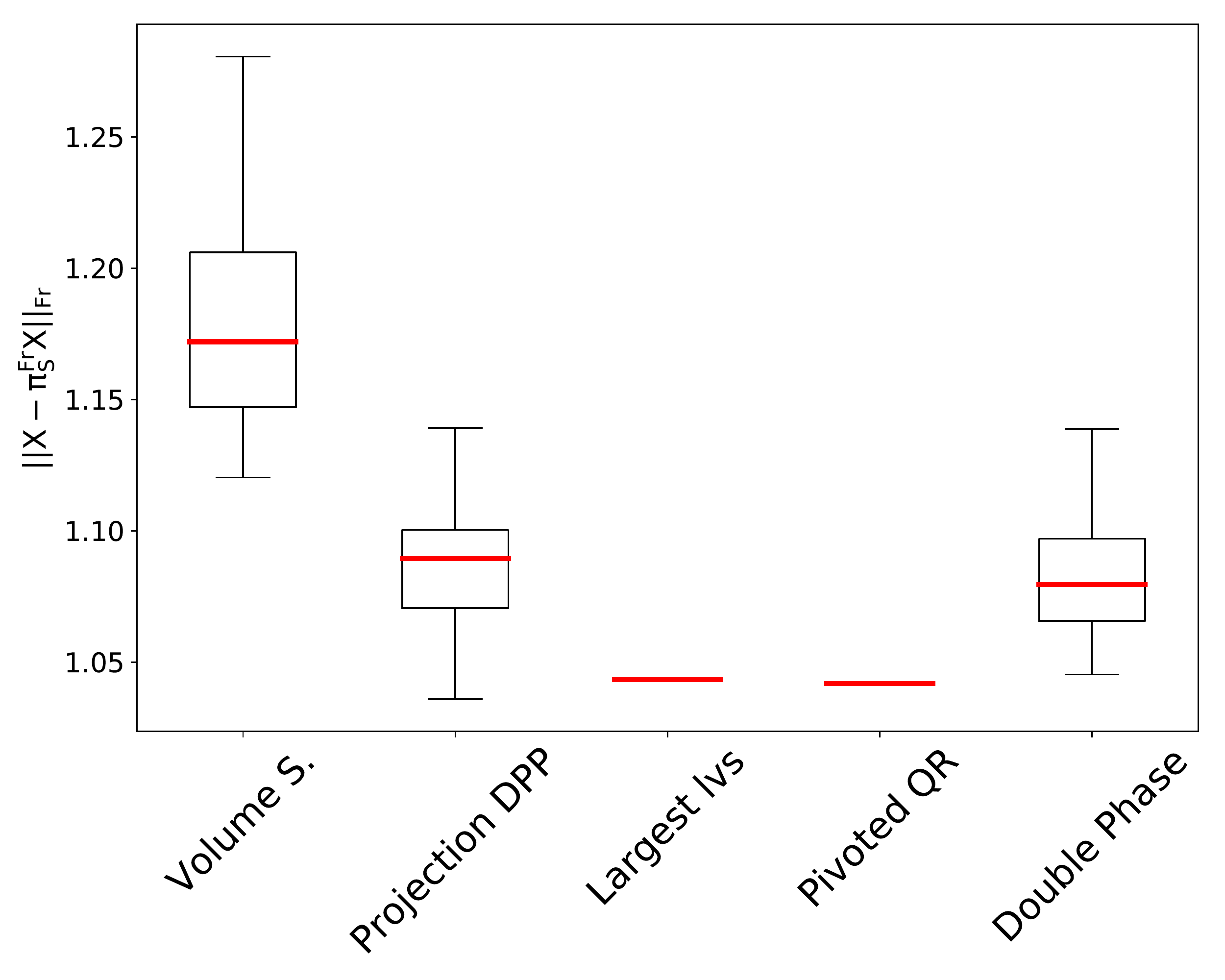}}~
\subfloat[Boxplots of $\|\bm{X}-\Pi_{S}^{\Fr}\bm{X}\|_{\Fr}$ on a batch of 50 samples for the five algorithms on the dataset Leukemia (k=10).]{\includegraphics[width= 0.5\textwidth, ]{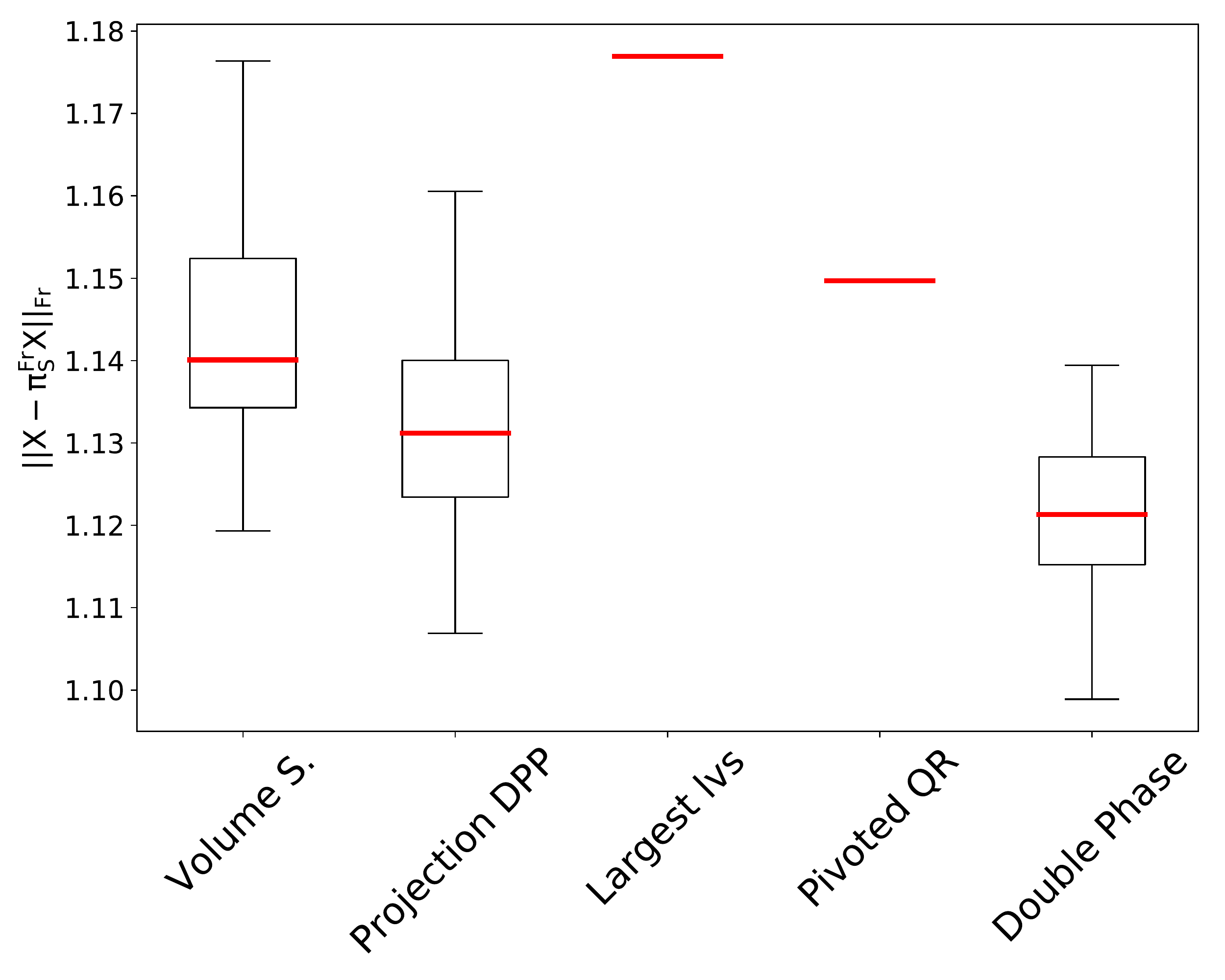}}\\
\subfloat[Boxplots of $\|\bm{X}-\Pi_{S}^{\Fr}\bm{X}\|_{\Fr}$ on a batch of 50 samples for the boosting of randomized algorithms on the dataset Relathe (k=10).]{\includegraphics[width= 0.5\textwidth]{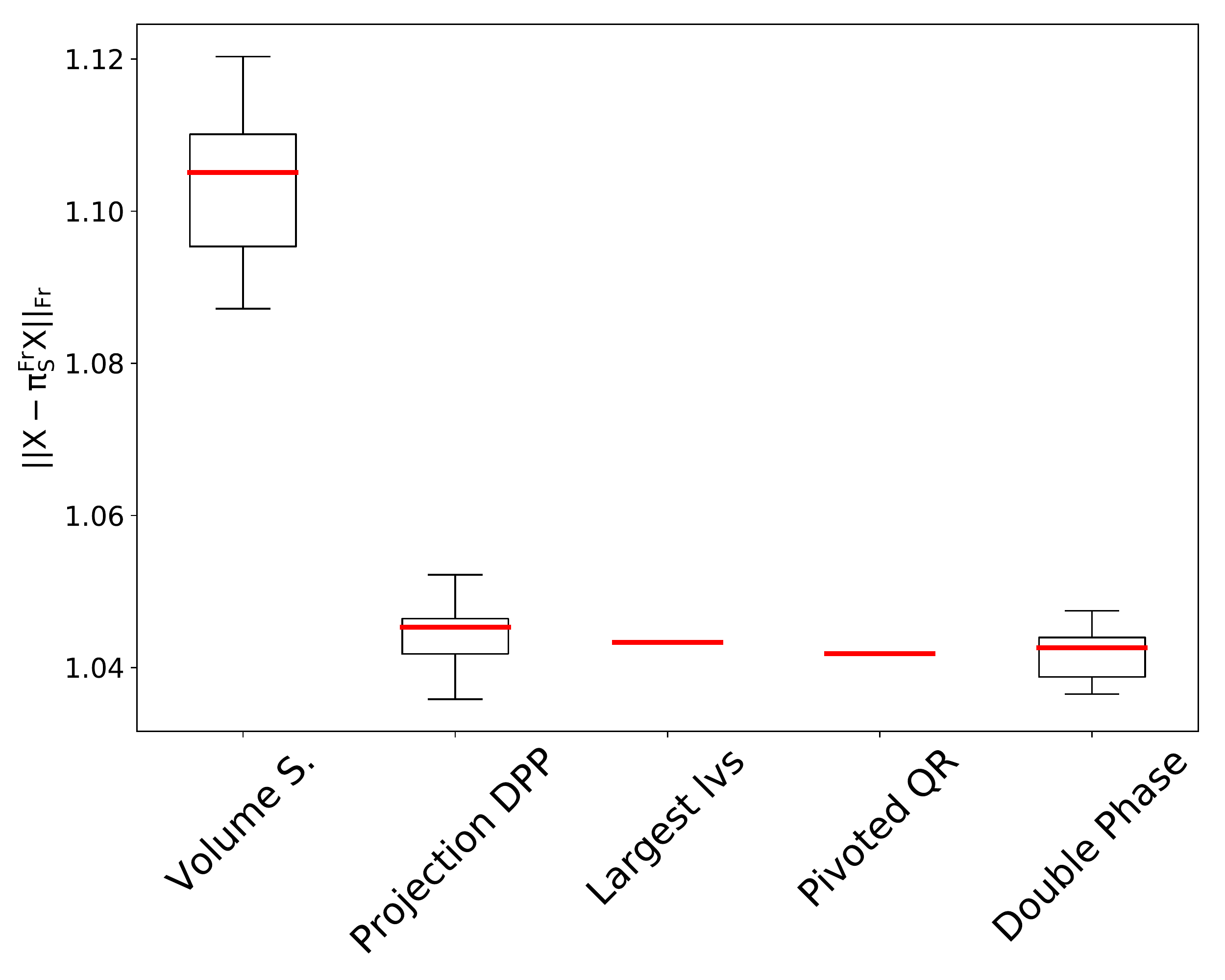}}~
\subfloat[Boxplots of $\|\bm{X}-\Pi_{S}^{\Fr}\bm{X}\|_{\Fr}$ on a batch of 50 samples for the boosting of randomized algorithms on the dataset Leukemia (k=10).]{\includegraphics[width= 0.5\textwidth]{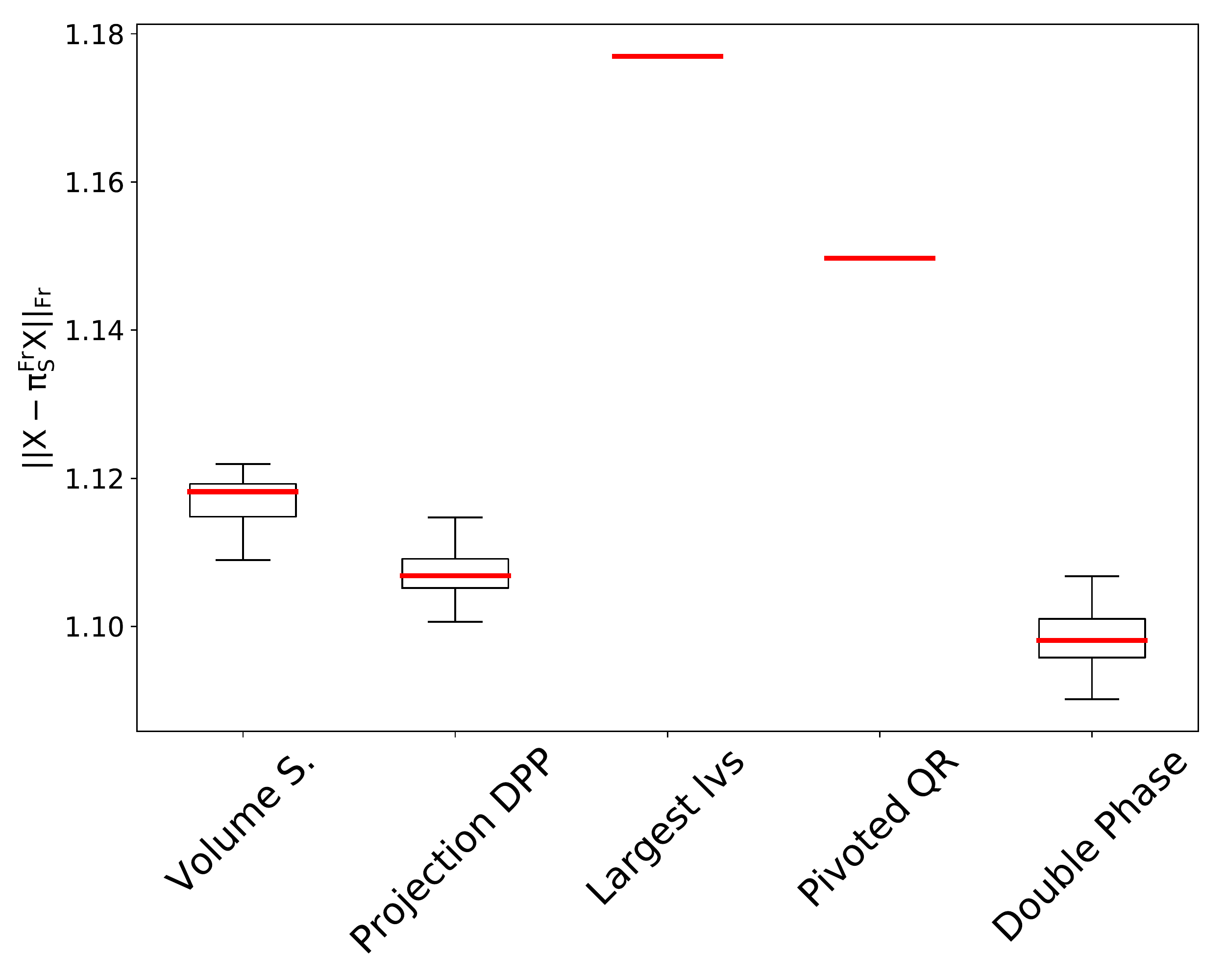}}\\
\caption{Comparison of several column subset selection algorithms for two datasets: Relathe and Leukemia. \label{fig:leukemia_vs_relathe_comparison}}
\end{figure}
\rb{We could also do an actual frequentist test. This'd be cleaner.}

\subsection{Discussion}
The performance of our algorithm has been compared to state-of-the-art column subset selection algorithms. We emphasize that the theoretical performances of the proposed approach take into account the sparsity of the $k$-leverage scores as in Proposition~\ref{hypo_one_two_proposition} or their fast decrease as in Proposition~\ref{prop:p_eff_proposition}, and that the bounds are in good agreement with the actual behavior of the algorithm. In contrast, state-of-the-art algorithms like volume sampling have looser bounds and worse performances, or like double phase display great performance but have overly pessimistic theoretical bounds.

\section{Conclusion}
\label{s:conclusion}

We have proposed, analyzed, and empirically investigated a new randomized column subset selection (CSS) algorithm. The crux of our algorithm is a discrete determinantal point process (DPP) that selects a diverse set of $k$ columns of a matrix $\bm{X}$. This DPP is tailored to CSS through its parametrization by the marginal kernel $\bm{K} = \bm{V}_{k}\bm{V}_{k}^{\Tran}$, where $\bm{V}_{k}$ are the first $k$ right singular vectors of the matrix $\bm{X}$. This specific kernel is related to volume sampling, the state-of-the-art for CSS guarantees in Frobenius and spectral norm.

We have identified generic conditions on the matrix $\bm{X}$ under which our algorithm has bounds that improve on volume sampling. In particular, our bounds highlight the importance of the sparsity and the decay of the $k$-leverage scores on the approximation performance of our algorithm. This resonates with the compressed sensing literature. We have further numerically illustrated this relation to the sparsity and decay of the $k$-leverage scores using toy and real datasets. In these experiments, our algorithm performs comparably to the so-called double phase algorithm, which is the empirical state-of-the-art for CSS despite more conservative theoretical guarantees than volume sampling. Thus, our DPP sampling inherits both favorable bounds and increased empirical performance under sparsity or fast decay of the $k$-leverage scores.

In terms of computational cost, our algorithms scale with the cost of finding the $k$ first right singular vectors, which is currently the main bottleneck. In the line of \citep{DMMW12} and \citep{BoDrMI11}, where the authors estimates the $k$-leverage scores using random projections, we plan to investigate the impact of random projections to estimate the full matrix $\bm{K}$ on the approximation guarantees of our algorithms.

Although often studied as an independent task, in practice CSS is often a prelude to a learning algorithm. We have considered linear regression and we have given a bound on the excess risk of a regression performed on the selected columns only. In particular, sparsity and decay of the $k$-leverage scores are again involved: the more localized the $k$-leverage scores, the smaller the excess risk bounds. Such an analysis of the excess risk in regression highlights the interest of the proposed approach since it would be difficult to conduct for both volume sampling or the double phase algorithms. Future work in this direction includes investigating the importance of the sparsity of the $k$-leverage scores on the performance of other learning algorithms such as spectral clustering or support vector machines.

Finally, in our experimental section, we used an adhoc randomized algorithm inspired by \citep{FMPS11} to sample toy datasets with a prescribed profile of $k$-leverage scores. An interesting question would be to characterize the distribution of the output of our algorithm. In particular, sampling from the uniform measure on the set of symmetric matrices with prescribed spectrum and leverage scores is still an open problem \citep{DhHeSuTr05}. 

\subsection*{Acknowledgments}
AB and RB acknowledge support from ANR grant BoB (ANR-16-CE23-0003), and all authors acknowledge support from ANR grant BNPSI (ANR-13-BS03-0006).

\vskip 0.2in
\bibliography{bibliography.bib}

\newpage
\appendix

\section{Another interpretation of the $k$-leverage scores}
\label{app:statisticalInterpretationOfLVSs}
For $i \in [d]$, the SVD of $\bm{X}$ yields
\begin{equation}
\bm{X}_{:,i} = \sum_{\ell = 1}^{r}V_{i,\ell}\bm{f}_{\ell},
\end{equation}
where $\bm{f}_{\ell} = \sigma_{\ell}\bm{U}_{:,\ell}$, $\ell\in[r]$, are orthogonal.
Thus
\begin{equation}
\bm{X}_{:,i}^{\Tran}\bm{f}_{j} = V_{i,j} \|\bm{f}_{j}\|^{2} = V_{i,j} \sigma_{j}^{2}.
\label{e:columnTool}
\end{equation}
Then
\begin{equation}
 \frac{V_{i,j}}{\|\bm{X}_{:,i}\|} = \frac{\bm{X}_{:,i}^{\Tran}\bm{f}_{j}}{\sigma_{j}\|\bm{X}_{:,i}\|\|\bm{f}_{j}\|} =:  \frac{\cos \eta_{i,j}}{\sigma_{j}},
\end{equation}
where $\eta_{i,j}\in[0,\pi/2]$ is the angle formed by $\bm{X}_{:,i}$ and $\bm{f}_j$. Finally, \eqref{e:columnTool} also yields
\begin{equation}
\ell^{k}_{i} = \|\bm{X}_{:,i}\|^{2} \sum_{j=1}^{k} \frac{\cos^2\eta_{i,j}}{\sigma_{j}^{2}}.
\end{equation}
Compared to the length-square distribution in Section~\ref{subsec:length_square_sampling}, $k$-leverage scores thus favour columns that are aligned with the principal features. The weight $1/\sigma_j^2$ corrects the fact that features associated with large singular values are typically aligned with more columns. One could also imagine more arbitrary weights $w_j/\sigma_j^2$ in lieu of $1/\sigma_j^2$, or, equivalently, modified $k$-leverage scores
$$\ell_i^k(\bm{w}) = \sum_{j=1}^k w_{j}V_{i,j}^2.$$
However, the projection DPP with marginal kernel $\bm{K} = \bm{V}^{}_{k}\bm{V}_{k}^{\Tran}$ that we study in this paper is invariant to such reweightings. Indeed, for any $S\subset [d]$ of cardinality $k$,
\begin{equation}
\Det \left[\bm{V}^{}_{S, [k]} \,\text{Diag}(\bm{w}_{[k]})\, \bm{V}_{[k],S}^{\Tran}\right] = \Det(\bm{V}_{S,[k]})^{2} \prod\limits_{j \in [k]} w_{j}^{2}\propto \Det(\bm{V}_{S,[k]})^{2}.
\end{equation}
Such a scaling is thus not a free parameter in $\bm{K}$.

\section{Majorization and Schur convexity}
\label{app:majorization}

This section recalls some definitions and results from the theory of majorization and the notions of Schur-convexity and Schur-concavity. We refer to \citep{MaOlAr11} for further details. In this section, a subset $\mathcal{D} \subset \mathbb{R}^{d}$ is a symmetric domain if $\mathcal{D}$ is stable under coordinate permutations. Furthermore, a function $f$ defined on a symmetric domain $\mathcal{D}$ is called symmetric if it is stable under coordinate permutations.

\begin{definition}\label{def:majorization}
Let $\bm{p},\bm{q} \in \mathbb{R}_{+}^{d}$. $\bm{p}$ is said to majorize $\bm{q}$ according to Schur order and we note $\bm{q} \prec_{S} \bm{p}$ if
\begin{equation}
\left\{
    \begin{array}{ll}
        q_{i_{1}} \leq p_{j_{1}} \\
        q_{i_{1}} + q_{i_{2}} \leq p_{j_{1}} + p_{j_{2}} \\
        ... \\
        \sum\limits_{k=1}^{d-1} q_{i_{k}} \leq \sum\limits_{k=1}^{d-1} p_{j_{k}}\\
        \sum\limits_{k=1}^{d} q_{i_{k}} = \sum\limits_{k=1}^{d} p_{j_{k}}
    \end{array}
\right.
\end{equation}
where $\bm{p},\bm{q}$ are reordered so that $p_{i_{d}} \leq ... \leq p_{i_{1}}$ and $q_{j_{d}} \leq ... \leq q_{j_{1}}$.
\end{definition}
The majorization order has an algebraic characterization using doubly stochastic matrices first proven by Hardy, Littlewood, and Polya in 1929.
\begin{proposition}[Theorem B.2. \citealp{MaOlAr11}]
The vector $\bm{p}$ majorizes the vector $\bm{q}$ if and only if there exists a $d \times d$ doubly stochastic matrix $\Pi$ such that $\bm{q} = \bm{p \Pi}$.
\end{proposition}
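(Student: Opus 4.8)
The plan is to prove the two implications separately, treating the ``if'' direction (doubly stochastic $\Rightarrow$ majorization) as the routine part and reserving most of the effort for the ``only if'' direction (majorization $\Rightarrow$ existence of $\Pi$). Throughout I write $p_{[1]}\geq\dots\geq p_{[d]}$ for the entries of $\bm p$ in decreasing order.

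For the easy direction, suppose $\bm q = \bm p \Pi$ with $\Pi$ doubly stochastic. First I would check that total mass is preserved: summing $q_i = \sum_j p_j \Pi_{ji}$ over $i$ and using that the rows of $\Pi$ sum to one gives $\sum_i q_i = \sum_j p_j$. For the partial-sum inequalities, fix any index set $I$ with $|I| = m$ and write $\sum_{i \in I} q_i = \sum_j c_j p_j$, where $c_j = \sum_{i \in I} \Pi_{ji}$. Because $\Pi$ is doubly stochastic, each $c_j$ lies in $[0,1]$ and $\sum_j c_j = m$; maximising the linear functional $\sum_j c_j p_j$ over this polytope puts unit weight on the $m$ largest entries of $\bm p$, so $\sum_{i \in I} q_i \le \sum_{k=1}^m p_{[k]}$. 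Choosing $I$ to index the $m$ largest entries of $\bm q$ yields the majorization inequality, which together with mass conservation gives $\bm q \prec_S \bm p$.

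The substance of the proof is the converse, for which I would use the method of $T$-transforms (elementary ``Robin Hood'' averaging operations). Since both majorization and the desired conclusion are invariant under permuting the coordinates of $\bm p$ and of $\bm q$, I may assume both vectors are sorted decreasingly. The key lemma I would establish is: if $\bm q \prec_S \bm p$ with $\bm p \ne \bm q$, then there is a $T$-transform $T = \lambda I + (1-\lambda) Q$ ($Q$ a transposition of two coordinates $i<j$ and $\lambda \in [0,1]$) such that $\bm q \prec_S \bm p T \prec_S \bm p$ and $\bm p T$ agrees with $\bm q$ in strictly more coordinates than $\bm p$ does. Concretely I would take $i$ to be the largest index with $p_i > q_i$ and $j>i$ the smallest index with $p_j < q_j$ (so that all coordinates strictly between $i$ and $j$ already coincide with those of $\bm q$), transfer $\delta = \min(p_i - q_i,\, q_j - p_j)$ from coordinate $i$ to coordinate $j$, and verify the intermediate majorization by comparing partial sums. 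Iterating the lemma terminates in at most $d-1$ steps, and the ordered product of the corresponding $T$-transforms is a doubly stochastic matrix $\Pi$ with $\bm q = \bm p \Pi$; re-inserting the sorting permutations (themselves doubly stochastic) handles the general unsorted case.

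The main obstacle is the verification inside the key lemma that the single averaging step does not break the lower majorization, i.e. that $\bm q \prec_S \bm p T$ still holds after the transfer. Because the coordinates strictly between $i$ and $j$ already agree with $\bm q$, the transfer changes only the partial sums $\sum_{l\leq m} p_l$ for $i \leq m < j$, and each of these drops by exactly $\delta$; the choice of $i$ together with $\delta \leq p_i - q_i$ guarantees these partial sums remain at least the corresponding partial sums of $\bm q$, while $\bm p T \prec_S \bm p$ holds because any $T$-transform lowers majorization. Carrying out this bookkeeping carefully at every threshold $m$, and confirming that each step strictly increases coordinate agreement so that the induction is well-founded, is where the real work lies; the alternative route through the Birkhoff--von Neumann theorem and the geometry of the permutohedron reduces to essentially the same combinatorial estimate.
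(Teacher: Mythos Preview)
Your argument is correct and is precisely the classical $T$-transform proof due to Hardy, Littlewood, and P\'olya: the ``if'' direction via the linear-programming bound on $\sum_j c_j p_j$, and the ``only if'' direction by repeatedly applying an elementary averaging $T=\lambda I+(1-\lambda)Q$ that fixes one more coordinate at each step. Your choice of indices (largest $i$ with $p_i>q_i$, then smallest $j>i$ with $p_j<q_j$) is valid under the majorization hypothesis, and your verification that $\bm q\prec_S \bm pT$ survives the transfer is the right bookkeeping.

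That said, the paper does not supply its own proof of this proposition: it is quoted verbatim as Theorem~B.2 of Marshall, Olkin, and Arnold and used as a black box (the very next line is an illustrative example, not an argument). So there is no in-paper proof to compare against; what you have written is essentially the proof one finds in that reference.
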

\begin{example}
Let $\bm{p} = (3,0,0)$ and $\bm{q} = (1,1,1)$.
We check easily that $\bm{p}$ majorizes $\bm{q}$. Note that we can 'redistribute' $\bm{p}$ over $\bm{q}$ as follows: $\bm{q} =\frac{1}{3} \bm{J} \bm{p}$, where $\bm{J}$ is a $3 \times 3$ matrix of ones. The matrix $\bm{\Pi}=\frac{1}{3}\bm{J}$ is a doubly stochastic matrix.
\end{example}
Schur order compares two vectors using multiple inequalities. To avoid such cumbersome calculations, a scalar metric of inequality in a vector is desired. This is possible using the notion of Schur-convex/concave function.

\begin{definition}
Let f be a function on a symmetric domain $\mathcal{D} \subset \mathbb{R}_{+}^{d}$.\\
f is said to be Schur convex if
\begin{equation}
    \forall \bm{p}, \bm{q} \in \mathbb{R}_{+}^{d}, \bm{q} \prec_{S} \bm{p} \implies f(\bm{q}) \leq f(\bm{p}).
\end{equation}
f is said to be Schur concave if
\begin{equation}
    \forall \bm{p}, \bm{q} \in \mathbb{R}_{+}^{d}, \bm{q} \prec_{S} \bm{p} \implies f(\bm{q}) \geq f(\bm{p}).
\end{equation}
\end{definition}

\begin{proposition}[Theorem A.3, \citealp{MaOlAr11}]\label{schur_order_partial_property}
Let f be a symmetric function defined on $\mathbb{R}_{+}^{d}$, let $\mathcal{D}$ be a permutation-symmetric domain in $\mathbb{R}_{+}^{d}$ and suppose that
\begin{equation}
\forall x_{i},x_{j} \in \mathbb{R}_{+},   (x_{i} - x_{j}) (\frac{\partial f}{\partial x_{i}} - \frac{\partial f}{\partial x_{j}}) >0 
\end{equation}
then
\begin{equation}
    \forall \bm{p}, \bm{q} \in \mathcal{D}, \bm{q} \prec_{S} \bm{p} \implies f(\bm{q}) \leq f(\bm{p}),
\end{equation}
and $f$ is Schur convex.
\end{proposition}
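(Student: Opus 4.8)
The plan is to establish the sufficiency of the Schur--Ostrowski condition by reducing a general majorization $\bm{q}\prec_S\bm{p}$ to a finite chain of elementary two-coordinate transfers, and then checking that $f$ does not increase along each such transfer by means of the hypothesis. First I would invoke the classical fact, which follows from the doubly stochastic characterization recalled above together with the Birkhoff--von Neumann and Hardy--Littlewood--P\'olya theorems, that $\bm{q}\prec_S\bm{p}$ holds if and only if $\bm{q}$ can be reached from $\bm{p}$ by a finite sequence of \emph{$T$-transforms}: there are $\bm{p}=\bm{z}^{(0)},\dots,\bm{z}^{(m)}=\bm{q}$ with $\bm{z}^{(l+1)}=\bm{z}^{(l)}T_l$, where $T_l=\lambda_l I+(1-\lambda_l)Q_l$, $\lambda_l\in[\tfrac12,1]$, and $Q_l$ transposes two coordinates. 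It then suffices to prove $f(\bm{z}^{(l+1)})\le f(\bm{z}^{(l)})$ for each $l$, since chaining these inequalities yields $f(\bm{q})\le f(\bm{p})$, which is exactly Schur convexity.

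For a single transfer, the symmetry of $f$ lets me assume $Q_l$ swaps the first two coordinates. Writing $\bm{z}^{(l)}=(a,b,w_3,\dots,w_d)$ with $a\ge b$, I would interpolate along the segment
\begin{equation}
x_1(s)=a-s\delta,\qquad x_2(s)=b+s\delta,\qquad s\in[0,1],
\end{equation}
with $\delta=(1-\lambda_l)(a-b)\ge 0$ and the remaining coordinates held fixed, so that $s=0$ gives $\bm{z}^{(l)}$ and $s=1$ gives $\bm{z}^{(l+1)}$. Because $\lambda_l\ge\tfrac12$, one checks $x_1(s)-x_2(s)=(a-b)\bigl(1-2s(1-\lambda_l)\bigr)\ge 0$ throughout, so the two coordinates move toward each other without crossing. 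Setting $\phi(s)=f(x_1(s),x_2(s),w_3,\dots,w_d)$ and differentiating gives
\begin{equation}
\phi'(s)=-\delta\left(\frac{\partial f}{\partial x_1}-\frac{\partial f}{\partial x_2}\right).
\end{equation}
Since $x_1(s)\ge x_2(s)$, the hypothesis $(x_i-x_j)(\partial f/\partial x_i-\partial f/\partial x_j)>0$ forces $\partial f/\partial x_1-\partial f/\partial x_2>0$ at every $s$ with $x_1(s)>x_2(s)$, hence $\phi'(s)<0$ there; as $x_1(s)=x_2(s)$ can occur at an endpoint only, the continuous function $\phi$ is nonincreasing and $\phi(1)\le\phi(0)$, i.e. $f(\bm{z}^{(l+1)})\le f(\bm{z}^{(l)})$.

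Two points require care. First, the differentiation step needs the whole segment to lie in a region where $f$ is differentiable; this holds because each $\bm{z}^{(l)}$ is majorized by $\bm{p}$ and so lies in the convex hull of the permutations of $\bm{p}$, and the segment joining $\bm{z}^{(l)}$ to $\bm{z}^{(l+1)}$ lies in that same convex hull, so it is enough that $\mathcal{D}$ be a symmetric convex set containing $\bm{p}$. Second, the sign condition is only assumed off the diagonal $x_i=x_j$, but since the diagonal is met at most at an endpoint of the segment, the monotonicity of $\phi$ is unaffected. The main obstacle is therefore not the per-transfer computation, which is a one-line derivative, but securing the $T$-transform decomposition of the majorization order and ensuring the interpolating path remains inside $\mathcal{D}$; once these are in place, the symmetry of $f$ and the telescoping of the elementary inequalities finish the proof.
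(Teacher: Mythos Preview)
The paper does not actually prove this proposition; it is quoted from \citet{MaOlAr11} as Theorem~A.3 and used as a black box (the only proofs in Appendix~\ref{app:majorization} that rely on it are those of Lemma~\ref{symmetric_schur_convex_lemma}). So there is no ``paper's own proof'' to compare against.

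Your argument is the classical proof of the Schur--Ostrowski criterion via $T$-transforms and is correct in outline and in its key computation. One small point deserves a sentence of justification: the Hardy--Littlewood--P\'olya decomposition yields $T$-transforms with $\lambda_l\in[0,1]$, not $\lambda_l\in[\tfrac12,1]$ as you assert. The reduction to $\lambda_l\ge\tfrac12$ is legitimate but uses the symmetry of $f$: since $T_lQ_l=(1-\lambda_l)I+\lambda_lQ_l$ is again a $T$-transform, and $f(\bm{z}^{(l)}T_l)=f(\bm{z}^{(l)}T_lQ_l)$ by symmetry, you may replace $\lambda_l$ by $1-\lambda_l$ whenever $\lambda_l<\tfrac12$. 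With that remark added, your proof is complete and is exactly the argument one finds in the cited reference.
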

We get a similar result for Schur concavity by switching the orders in the previous proposition.

\begin{theorem}[Theorem 3.1, \citealp{GuSi12}]\label{thm:schur_convex_volume_sampling}
Let $\bm{X} \in \mathbb{R}^{N \times d}$, and let $\bm{\sigma}\in \mathbb{R}^{d}$ the vector containing the squares of the singular values of $\bm{X}$. The function
\begin{equation}
\bm{\sigma} \mapsto \mathbb{E}_{\VS} \|\bm{X}- \Pi_{S} \bm{X}\|_{\Fr}^{2} = (k+1)\frac{e_{k}(\bm{\sigma})}{e_{k-1}(\bm{\sigma})}
\end{equation}
is Schur-concave.
\end{theorem}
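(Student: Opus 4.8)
The substantive content is the Schur-concavity; the closed form $\EX_{\VS}\|\bm{X}-\Pi_{S}\bm{X}\|_{\Fr}^{2} = (k+1)\,e_{k}(\bm{\sigma})/e_{k-1}(\bm{\sigma})$ is a known volume-sampling identity (Cauchy--Binet together with the mixture decomposition \eqref{eq:volume_sampling_as_mixture_equation}), which I would cite and take as the starting point. Since $(k+1)>0$ and multiplication by a positive constant preserves the Schur order, it suffices to prove that $g(\bm{\sigma}):=e_{k}(\bm{\sigma})/e_{k-1}(\bm{\sigma})$ is Schur-concave on the positive orthant. As $g$ is symmetric and smooth wherever $e_{k-1}>0$, the plan is to apply the differential criterion of Proposition~\ref{schur_order_partial_property} in its Schur-concave form (the remark following it): it is enough to check that $(x_{i}-x_{j})\bigl(\partial g/\partial x_{i}-\partial g/\partial x_{j}\bigr)\le 0$ for all $i\neq j$ on the interior, the conclusion on the boundary then following by continuity.

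First I would differentiate. Writing $e_{\ell}^{(i)}$ for the $\ell$-th elementary symmetric polynomial in the variables $\{x_{m}:m\neq i\}$ and $e_{\ell}^{(ij)}$ for the one omitting both $x_{i}$ and $x_{j}$, the identity $\partial e_{\ell}/\partial x_{i}=e_{\ell-1}^{(i)}$ and the quotient rule give $\partial g/\partial x_{i}=\bigl(e_{k-1}^{(i)}e_{k-1}-e_{k}e_{k-2}^{(i)}\bigr)/e_{k-1}^{2}$. Using the elementary relation $e_{\ell}^{(i)}-e_{\ell}^{(j)}=(x_{j}-x_{i})\,e_{\ell-1}^{(ij)}$ twice, the numerator of $\partial g/\partial x_{i}-\partial g/\partial x_{j}$ factors as $(x_{j}-x_{i})\bigl(e_{k-1}e_{k-2}^{(ij)}-e_{k}e_{k-3}^{(ij)}\bigr)$. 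Hence the sign of $(x_{i}-x_{j})\bigl(\partial g/\partial x_{i}-\partial g/\partial x_{j}\bigr)$ equals that of $-(x_{i}-x_{j})^{2}\bigl(e_{k-1}e_{k-2}^{(ij)}-e_{k}e_{k-3}^{(ij)}\bigr)$, so Schur-concavity reduces to the single inequality $e_{k-1}e_{k-2}^{(ij)}\ge e_{k}e_{k-3}^{(ij)}$.

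Next I would eliminate $x_{i},x_{j}$. Setting $a_{\ell}:=e_{\ell}^{(ij)}$, one has $e_{k}=a_{k}+(x_{i}+x_{j})a_{k-1}+x_{i}x_{j}a_{k-2}$ and $e_{k-1}=a_{k-1}+(x_{i}+x_{j})a_{k-2}+x_{i}x_{j}a_{k-3}$. Substituting, the terms carrying the factor $x_{i}x_{j}$ cancel, leaving
\[
e_{k-1}e_{k-2}^{(ij)}-e_{k}e_{k-3}^{(ij)} = \bigl(a_{k-1}a_{k-2}-a_{k}a_{k-3}\bigr) + (x_{i}+x_{j})\bigl(a_{k-2}^{2}-a_{k-1}a_{k-3}\bigr).
\]
Both parenthesised quantities are nonnegative by Newton's inequalities: the elementary symmetric polynomials of the nonnegative reals $\{x_{m}:m\neq i,j\}$ form a log-concave sequence, so the ratios $a_{\ell+1}/a_{\ell}$ are non-increasing, whence $a_{k-2}^{2}\ge a_{k-1}a_{k-3}$ and, chaining two such steps, $a_{k-1}a_{k-2}\ge a_{k}a_{k-3}$. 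Since $x_{i}+x_{j}\ge 0$, the right-hand side is $\ge 0$, which is exactly what was needed.

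The main obstacle I anticipate is the cross term $a_{k-1}a_{k-2}\ge a_{k}a_{k-3}$: unlike $a_{k-2}^{2}\ge a_{k-1}a_{k-3}$, which is a single Newton inequality, it compares products of indices of equal sum but greater spread, and therefore needs the full log-concavity of the sequence rather than one inequality. The remaining care is bookkeeping around vanishing denominators and zero coordinates, handled by working on the open positive orthant where $e_{k-1}>0$ and extending Schur-concavity to $\mathbb{R}_{+}^{d}$ by continuity.
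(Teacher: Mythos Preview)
The paper does not give its own proof of this statement: it is quoted verbatim as Theorem~3.1 of \cite{GuSi12} and used only as a side remark in the experimental discussion. So there is nothing to compare against directly.

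Your argument is correct. The derivative computation, the factorisation through $e_\ell^{(i)}-e_\ell^{(j)}=(x_j-x_i)e_{\ell-1}^{(ij)}$, and the reduction to $e_{k-1}e_{k-2}^{(ij)}\ge e_{k}e_{k-3}^{(ij)}$ are all right, and the final step via Newton's inequalities (log-concavity of the $a_\ell$) is a clean way to close. The only cosmetic point is that Newton's inequalities hold for real-rooted polynomials regardless of sign, but you need nonnegativity of the $a_\ell$ to chain two ratio inequalities into $a_{k-1}a_{k-2}\ge a_k a_{k-3}$; since the $\sigma_i$ are nonnegative this is fine, and you already flagged the boundary issues.

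It is worth noting that the paper does prove a closely related fact in Lemma~\ref{symmetric_schur_convex_lemma}, namely that $\phi=e_{k-1}/e_k$ is Schur-convex; since $t\mapsto 1/t$ is decreasing on $(0,\infty)$, that immediately yields Schur-concavity of $e_k/e_{k-1}$ and hence the theorem. However, the paper's proof of Lemma~\ref{symmetric_schur_convex_lemma} silently specialises to the case where the number of variables equals $k$ (it uses $\partial_i\phi=-1/\sigma_i^2$, i.e.\ $e_{k-1}/e_k=\sum_i 1/\sigma_i$), whereas the theorem as stated concerns $\bm{\sigma}\in\mathbb{R}^d$ with $d\ge k$. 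Your argument handles the general $d$-variable case and is therefore the one actually needed here.
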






\section{Principal angles and the Cosine Sine decomposition}
\label{app:principal_angles}

\FloatBarrier
\subsection{Principal angles}
\FloatBarrier
This section surveys the notion of principal angles between subspaces, see \cite[Section 6.4.3]{GoVa96} for details.
\begin{definition}
  \label{d:angles}
Let $\mathcal{P},\mathcal{Q}$ be two subspaces in $\mathbb{R}^{d}$. Let $p= \dim\mathcal{P}$ and $q = \dim\mathcal{Q}$ and assume that $q \leq p$. To define the vector of principal angles $\bm{\theta} \in [0,\pi/2]^{q}$ between $\mathcal{P}$ and $\mathcal{Q}$, let
\begin{equation}\label{eq:max_def_principal_angle_1}
 \cos(\theta_{1}) = \max \left\{ \frac{\bm{x}^{T}\bm{y}}{\|\bm{x}\|\|\bm{y}\|}; \quad \bm{x} \in \mathcal{P}, \bm{y} \in \mathcal{Q} \right\}
\end{equation}
be the cosine of the smallest angle between a vector of $\mathcal{P}$ and a vector of $\mathcal{Q}$, and let $(\bm{x}_{1},\bm{y}_{1}) \in \mathcal{P}\times \mathcal{Q}$ be a pair of vectors realizing the maximum. For $i \in [2,q]$, define successively
\begin{equation}\label{eq:max_def_principal_angles}
 \cos(\theta_{i}) = \max \left\{\frac{\bm{x}^{T}\bm{y}}{\|\bm{x}\|\|\bm{y}\|}; \quad \bm{x} \in \mathcal{P}, \bm{y} \in \mathcal{Q}; \bm{x} \perp \bm{x}_{j}, \bm{y} \perp \bm{y}_{j}\:, \forall j \in [1:i-1] \right\}
\end{equation}
and denote $(\bm{x}_{i},\bm{y}_{i}) \in \mathcal{P}\times\mathcal{Q}$ such that $\cos(\theta_{i}) = \bm{x}_{i}^{\Tran}\bm{y}_{i} \:$.
\end{definition}
Note that although the so-called principal vectors $(\bm{x}_{i},\bm{y}_{i})_{i \in [q]}$
are not uniquely defined by \eqref{eq:max_def_principal_angle_1} and \eqref{eq:max_def_principal_angles}, the principal angles $\bm{\theta}$ are uniquely defined, see \citep{BjGo73}. The following result confirms this, while also providing a way to compute $\bm{\theta}$.
\begin{proposition}[\citealp{BjGo73}, \citealp{Ben92}]
  \label{principal_angles_theorem_1}
Let $\mathcal{P}$ and $\mathcal{Q}$ and $\bm{\theta}$ be as in Definition~\ref{d:angles}. Let $\bm{P} \in \mathbb{R}^{d \times p}$, $\bm{Q} \in \mathbb{R}^{d \times q}$ be two orthogonal matrices, whose columns are orthonormal bases of $\mathcal{P}$ and $\mathcal{Q}$, respectively. Then
\begin{equation}
 \forall i \in [q], \quad \cos(\theta_{i}) =\sigma_i(\bm{Q}^{\Tran}\bm{P}).
\end{equation}
In particular
\begin{equation}\label{principal_angles_formula}
\Vol_{q}^{2}(\bm{Q}^{\Tran}\bm{P}) = \prod\limits_{i \in [q]} \cos^{2}(\theta_{i}).
\end{equation}
\end{proposition}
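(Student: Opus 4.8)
The plan is to reduce the variational definition of the principal angles to the standard Courant--Fischer variational characterization of the singular values of the $q\times p$ matrix $\bm{M} := \bm{Q}^{\Tran}\bm{P}$, and then to read off the ``in particular'' claim directly from the definition of $\Vol_q$.

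First I would exploit that $\bm{P}$ and $\bm{Q}$ have orthonormal columns to set up an isometry between the optimization over vectors of the subspaces and an optimization over coordinate vectors. Every $\bm{x}\in\mathcal{P}$ and $\bm{y}\in\mathcal{Q}$ can be written as $\bm{x}=\bm{P}\bm{a}$ and $\bm{y}=\bm{Q}\bm{b}$ with $\bm{a}\in\mathbb{R}^p$, $\bm{b}\in\mathbb{R}^q$; orthonormality gives $\|\bm{x}\|=\|\bm{a}\|$, $\|\bm{y}\|=\|\bm{b}\|$, and crucially
\[
\bm{x}^{\Tran}\bm{y}=\bm{a}^{\Tran}\bm{P}^{\Tran}\bm{Q}\bm{b}=\bm{b}^{\Tran}\bm{M}\bm{a}.
\]
Hence the first maximization \eqref{eq:max_def_principal_angle_1} becomes $\cos\theta_1=\max_{\bm{a},\bm{b}}\bm{b}^{\Tran}\bm{M}\bm{a}/(\|\bm{a}\|\,\|\bm{b}\|)$, which is exactly the variational expression for the largest singular value $\sigma_1(\bm{M})$.

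Then I would handle the recursion. The orthogonality constraints $\bm{x}\perp\bm{x}_j$ and $\bm{y}\perp\bm{y}_j$ transfer, again by orthonormality of the columns, into the constraints $\bm{a}\perp\bm{a}_j$ and $\bm{b}\perp\bm{b}_j$ on the coordinate vectors, where $\bm{x}_j=\bm{P}\bm{a}_j$ and $\bm{y}_j=\bm{Q}\bm{b}_j$. I would make this precise by taking the SVD $\bm{M}=\bm{A}\bm{\Sigma}_M\bm{B}^{\Tran}$, with $\bm{A}\in\mathbb{R}^{q\times q}$ and $\bm{B}\in\mathbb{R}^{p\times p}$ orthogonal, and proposing the principal vectors $\bm{y}_i=\bm{Q}\bm{A}_{:,i}$ and $\bm{x}_i=\bm{P}\bm{B}_{:,i}$. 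One checks that these are orthonormal families in $\mathcal{Q}$ and $\mathcal{P}$, that they achieve the constrained maxima in \eqref{eq:max_def_principal_angles}, and that the optimal value at step $i$ equals $\sigma_i(\bm{M})$. This is precisely the deflation form of the Courant--Fischer characterization of singular values, so an induction on $i$ yields $\cos\theta_i=\sigma_i(\bm{M})$ for every $i\in[q]$.

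The main obstacle is the bookkeeping in the recursive step: one must verify that maximizing the bilinear form under the orthogonality constraints genuinely selects the next-largest singular value, rather than merely bounding it. This amounts to showing that the constrained problem at step $i$ is equivalent, after deflating the first $i-1$ singular directions, to an unconstrained problem on the orthogonal complement whose optimum is $\sigma_i(\bm{M})$; the orthonormality of $\bm{A}$ and $\bm{B}$ makes this deflation exact. Finally, for the ``in particular'' identity, since $\bm{M}$ has at most $q$ nonzero singular values and $e_q$ of $q$ variables equals their product, the definition of $\Vol_q$ gives $\Vol_q^2(\bm{M})=\prod_{i\in[q]}\sigma_i^2(\bm{M})=\prod_{i\in[q]}\cos^2(\theta_i)$, which is exactly \eqref{principal_angles_formula}.
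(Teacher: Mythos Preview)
Your argument is correct and is essentially the classical Bj\"orck--Golub proof: transfer the constrained Rayleigh-type quotients to coordinate space via the orthonormal bases, identify the resulting bilinear optimization with the variational characterization of the singular values of $\bm{M}=\bm{Q}^{\Tran}\bm{P}$, and deflate via the SVD of $\bm{M}$. The only minor point to state explicitly is the degenerate case $\rk(\bm{M})<q$: then $\sigma_i(\bm{M})=0$ for $i>\rk(\bm{M})$, so both sides of \eqref{principal_angles_formula} vanish, and the identity still holds with the paper's convention for $\Vol_q$.

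Note, however, that the paper does not supply its own proof of this proposition; it is quoted from \citet{BjGo73} and \citet{Ben92} and used as a black box. So there is no ``paper's proof'' to compare against---your write-up simply fills in the standard argument that those references contain.
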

An important case for our work arises when $q=k$, $\bm{Q}=\bm{V} \in \mathbb{R}^{d \times k}$, and $\bm{P}=\bm{S}\in \mathbb{R}^{d \times k}$ is a sampling matrix. The left-hand side of \eqref{principal_angles_formula} then equals $\Det(\bm{V}_{:,S})^2$.



\subsection{The Cosine Sine decomposition}
The Cosine Sine (CS) decomposition is useful for the study of the relative position of two subspaces. It generalizes the notion of cosine, sine and tangent to subspaces.
\begin{proposition}[\citealp{GoVa96}]
Let $q \leq d/2$ and $\bm{Q} =
\left[
\begin{array}{c}
\bm{Q}_{1}  \\
\hline
\bm{Q}_{2}
\end{array}
\right]
$ be a $d\times q$ orthogonal matrix, where $\bm{Q}_{1} \in \mathbb{R}^{q \times q}$ and $\bm{Q}_{2} \in \mathbb{R}^{(d-q)\times q}$. Assume that $\bm{Q}_{1}$ is non singular, then there exist orthogonal matrices $\bm{Y}\in\mathbb{R}^{d \times q}$ and
\begin{equation}
 \bm{W} =
\left[
\begin{array}{c|c}
\bm{W}_{1} & \bm{0} \\
\hline
\bm{0} & \bm{W}_{2}
\end{array}
\right]~\in~\mathbb{R}^{d \times d},
\end{equation}
and a matrix
\begin{equation}
\bm{\Sigma}~=~\left[
\begin{array}{c}
\Cosmatrix \\
\hline
\Sinmatrix \\
\hline
\bm{0}
\end{array}
\right] \in \mathbb{R}^{d \times q},
\end{equation}
such that
\begin{equation}
    \bm{Q} = \bm{W}\bm{\Sigma}\bm{Y}^{T},
\end{equation}
where $\bm{W}_{1} \in \mathbb{R}^{q \times q}$ and $\bm{W}_{2} \in \mathbb{R}^{d-q \times d-q}$, and $\Cosmatrix, \Sinmatrix \in \mathbb{R}^{q\times q}$ are diagonal matrices satisfying the identity $\Cosmatrix^{2} + \Sinmatrix^{2} = \mathbb{I}_{q}$.
In particular, each block $\bm{Q}_{i}$ factorizes as
\begin{equation}
\begin{split}
    \bm{Q}_{1} = & \bm{W}_{1}\Cosmatrix\bm{Y}^{T} \\
    \bm{Q}_{2} = & \bm{W}_{2}\left[
\begin{array}{c}
\Sinmatrix \\
\hline
\bm{0}
\end{array}
\right]\bm{Y}^{T} .\\
\end{split}
\end{equation}

\label{CSD_proposition}
\end{proposition}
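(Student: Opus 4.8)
The plan is to reduce the whole decomposition to a singular value decomposition of the square block $\bm{Q}_1$, and then let the orthogonality of $\bm{Q}$ determine the remaining factors. First I would write an SVD
\begin{equation}
\bm{Q}_1 = \bm{W}_1 \Cosmatrix \bm{Y}^{\Tran},
\end{equation}
with $\bm{W}_1,\bm{Y} \in \mathbb{R}^{q\times q}$ orthogonal and $\Cosmatrix$ diagonal with nonnegative entries. Since $\bm{Q}$ has orthonormal columns, $\bm{Q}^{\Tran}\bm{Q} = \mathbb{I}_q$ gives $\bm{Q}_1^{\Tran}\bm{Q}_1 + \bm{Q}_2^{\Tran}\bm{Q}_2 = \mathbb{I}_q$, so $\bm{Q}_1^{\Tran}\bm{Q}_1 = \mathbb{I}_q - \bm{Q}_2^{\Tran}\bm{Q}_2$ has eigenvalues in $[0,1]$, and hence the diagonal entries of $\Cosmatrix$ all lie in $[0,1]$ (in $(0,1]$ because $\bm{Q}_1$ is nonsingular). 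This lets me define the diagonal matrix $\Sinmatrix = (\mathbb{I}_q - \Cosmatrix^{2})^{1/2}$ with nonnegative entries, so that the required identity $\Cosmatrix^{2} + \Sinmatrix^{2} = \mathbb{I}_q$ holds by construction.

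Next I would recover $\bm{W}_2$ from the rectangular block $\bm{Q}_2$. Writing $\bm{G} = \bm{Q}_2\bm{Y}\in\mathbb{R}^{(d-q)\times q}$ and using the identity above in the basis $\bm{Y}$,
\begin{equation}
\bm{G}^{\Tran}\bm{G} = \bm{Y}^{\Tran}\bigl(\mathbb{I}_q - \bm{Q}_1^{\Tran}\bm{Q}_1\bigr)\bm{Y} = \mathbb{I}_q - \Cosmatrix^{2} = \Sinmatrix^{2},
\end{equation}
so the columns of $\bm{G}$ are mutually orthogonal with squared norms $\Sinmatrix_{ii}^{2}$. For each index with $\Sinmatrix_{ii} > 0$ I normalize the corresponding column to the unit vector $\bm{u}_i = \bm{G}_{:,i}/\Sinmatrix_{ii}$; these $\bm{u}_i$ form an orthonormal family of cardinality at most $q$. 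The structural point is that the hypothesis $q\le d/2$ guarantees $q \le d-q$, so this family can be completed to an orthonormal basis of $\mathbb{R}^{d-q}$. Collecting the $\bm{u}_i$ (and arbitrary unit completions at the positions where $\Sinmatrix_{ii}=0$) as the first $q$ columns, and the remaining $d-2q$ basis vectors as the last columns, yields an orthogonal $\bm{W}_2\in\mathbb{R}^{(d-q)\times(d-q)}$ such that $\bm{Q}_2\bm{Y} = \bm{W}_2\,[\Sinmatrix;\bm{0}]$, where $[\Sinmatrix;\bm{0}]$ denotes $\Sinmatrix$ stacked over a zero block; equivalently $\bm{Q}_2 = \bm{W}_2\,[\Sinmatrix;\bm{0}]\,\bm{Y}^{\Tran}$.

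Finally I would assemble the blocks. With $\bm{W} = \Diag(\bm{W}_1,\bm{W}_2)$ and $\bm{\Sigma} = [\Cosmatrix;\Sinmatrix;\bm{0}]$, a direct block computation gives
\begin{equation}
\bm{W}\bm{\Sigma}\bm{Y}^{\Tran} = \begin{bmatrix}\bm{W}_1\Cosmatrix\bm{Y}^{\Tran}\\ \bm{W}_2\,[\Sinmatrix;\bm{0}]\,\bm{Y}^{\Tran}\end{bmatrix} = \begin{bmatrix}\bm{Q}_1\\ \bm{Q}_2\end{bmatrix} = \bm{Q},
\end{equation}
and the two per-block formulas of the statement follow at once. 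I expect the only delicate step to be the construction of $\bm{W}_2$: one must accommodate vanishing sines (the zero columns of $\bm{G}$, which occur exactly at principal angles equal to $0$) without destroying orthonormality, and it is precisely here that $q\le d/2$ is indispensable, since it supplies the extra $d-2q$ dimensions needed to complete the basis. Nonsingularity of $\bm{Q}_1$ is not needed for existence but guarantees that $\Cosmatrix$ is invertible, matching the intended geometric setting in which no principal angle equals $\pi/2$.
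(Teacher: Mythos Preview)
The paper does not actually prove Proposition~\ref{CSD_proposition}: it is stated without proof and attributed to \citet{GoVa96} as a classical result (the Cosine--Sine decomposition). So there is no ``paper's own proof'' to compare against.

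That said, your argument is correct and is essentially the standard derivation one finds in Golub--Van Loan: start from the SVD of the top square block $\bm{Q}_1$, use $\bm{Q}_1^{\Tran}\bm{Q}_1+\bm{Q}_2^{\Tran}\bm{Q}_2=\mathbb{I}_q$ to force the singular values into $[0,1]$ and to identify $\bm{G}^{\Tran}\bm{G}=\Sinmatrix^2$, then normalize the columns of $\bm{G}=\bm{Q}_2\bm{Y}$ and complete to an orthonormal basis of $\mathbb{R}^{d-q}$. Your handling of the edge cases is clean: zero sines are filled in by arbitrary orthonormal completions, and you correctly isolate $q\le d/2$ as the condition that makes the completion possible. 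Your closing remark is also right---nonsingularity of $\bm{Q}_1$ is not needed for the decomposition itself, only to ensure $\Cosmatrix$ is invertible, which the paper uses downstream in the Corollary to form $\bm{Z}_S=\bm{Q}_2\bm{Q}_1^{-1}$. One cosmetic point: the statement in the paper writes $\bm{Y}\in\mathbb{R}^{d\times q}$, which is a typo; the block formulas force $\bm{Y}\in\mathbb{R}^{q\times q}$, as you implicitly assumed.
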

The CS decomposition is defined for every orthogonal matrix. \rb{Il y a quand même une hypothèse sur $Q_1$, non ? Also: we need to relate the matrices C and S to the principal angles in the previous subsection, and define $\theta_i(S)$ before the next Corollary} An important case is when $\bm{Q}$ is the product of an orthogonal matrix $\bm{V} \in \mathbb{R}^{d \times d}$ and a sampling matrix $\bm{S} \in \mathbb{R}^{d \times k}$, that is $\bm{Q} = \bm{V}^{\Tran}\bm{S}$.

\begin{corollary}
Let $\bm{V} \in \mathbb{R}^{d \times d}$ be an orthogonal matrix and $\bm{S} \in \mathbb{R}^{d \times k}$ be a sampling matrix. Let
\begin{equation}
\bm{Q} = \bm{V}^{\Tran}\bm{S} = \left[
\begin{array}{c}
\bm{V}_{k}^{\Tran} \bm{S}\\
\hline
\bm{V}_{d-k}^{\Tran} \bm{S}\\
\end{array}
\right]
\end{equation}
be a $d \times k$ orthogonal matrix, with $\Det(\bm{V}_{k}^{\Tran} \bm{S})^{2} > 0$. Let further $\bm{Z}_{S} = \bm{V}_{d-k}^{\Tran} \bm{S}(\bm{V}_{k}^{\Tran}\bm{S})^{-1}$. Then
\begin{equation}\label{eq:trace_tan_relationship}
\Tr(\bm{Z}_{S}^{}\bm{Z}_{S}^{\Tran}) \leq \sum\limits_{i \in [k]} \tan^{2}(\theta_{i}(S)).
\end{equation}
\end{corollary}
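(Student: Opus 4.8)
The plan is to combine the Cosine--Sine decomposition of $\bm{Q}=\bm{V}^{\Tran}\bm{S}$ (Proposition~\ref{CSD_proposition}) with the identification of its cosine block with the principal angles of Proposition~\ref{principal_angles_theorem_1}. First I would fix notation: let $(\theta_i(S))_{i\in[k]}$ be the principal angles between $\Span(\bm{V}_k)$ and $\Span(\bm{S})$. Since $\bm{Q}_1=\bm{V}_k^{\Tran}\bm{S}$ is a square $k\times k$ matrix, Proposition~\ref{principal_angles_theorem_1} gives $\sigma_i(\bm{Q}_1)=\cos\theta_i(S)$. The hypothesis $\Det(\bm{Q}_1)^2>0$ forces every $\cos\theta_i(S)>0$, hence all $\theta_i(S)<\pi/2$ and the tangents are finite; it also makes $\bm{Q}_1$ invertible, so that $\bm{Z}_S=\bm{V}_{d-k}^{\Tran}\bm{S}\,(\bm{V}_k^{\Tran}\bm{S})^{-1}$ is well defined.

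Next I would apply the CS decomposition to $\bm{Q}$, writing $\bm{Q}_1=\bm{W}_1\Cosmatrix\bm{Y}^{\Tran}$ and $\bm{Q}_2=\bm{W}_2\bigl[\begin{smallmatrix}\Sinmatrix\\ \bm{0}\end{smallmatrix}\bigr]\bm{Y}^{\Tran}$ with $\Cosmatrix,\Sinmatrix$ diagonal and $\Cosmatrix^2+\Sinmatrix^2=\mathbb{I}_k$. Comparing singular values identifies $\Cosmatrix=\Diag(\cos\theta_i(S))$, and hence $\Sinmatrix=\Diag(\sin\theta_i(S))$, up to a common reordering of the diagonals absorbed into $\bm{Y}$. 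Substituting into $\bm{Z}_S=\bm{Q}_2\bm{Q}_1^{-1}$ and using the orthogonality of $\bm{W}_1$ and $\bm{Y}$, the $\bm{Y}$ factors cancel and I obtain $\bm{Z}_S=\bm{W}_2\bigl[\begin{smallmatrix}\Tanmatrix\\ \bm{0}\end{smallmatrix}\bigr]\bm{W}_1^{\Tran}$ with $\Tanmatrix=\Sinmatrix\Cosmatrix^{-1}=\Diag(\tan\theta_i(S))$. Then $\Tr(\bm{Z}_S\bm{Z}_S^{\Tran})=\Tr(\Tanmatrix^2)=\sum_{i\in[k]}\tan^2\theta_i(S)$, which yields the claim, in fact with equality.

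A cleaner, dimension-agnostic route, which I would also include to cover the case $k>d/2$ not addressed by the stated form of the CS decomposition, is to work directly with Gram matrices. Since $\bm{S}$ is a sampling matrix and $\bm{V}$ is orthogonal, $\bm{Q}$ has orthonormal columns, so $\bm{Q}_1^{\Tran}\bm{Q}_1+\bm{Q}_2^{\Tran}\bm{Q}_2=\mathbb{I}_k$. By cyclicity of the trace, $\Tr(\bm{Z}_S\bm{Z}_S^{\Tran})=\Tr\bigl((\bm{Q}_1^{\Tran}\bm{Q}_1)^{-1}\bm{Q}_2^{\Tran}\bm{Q}_2\bigr)=\Tr\bigl((\bm{Q}_1^{\Tran}\bm{Q}_1)^{-1}\bigr)-k$. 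The eigenvalues of $\bm{Q}_1^{\Tran}\bm{Q}_1$ are $\cos^2\theta_i(S)$, so this equals $\sum_{i\in[k]}\bigl(\cos^{-2}\theta_i(S)-1\bigr)=\sum_{i\in[k]}\tan^2\theta_i(S)$.

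The main obstacle is not the algebra but the bookkeeping around the principal angles: one must justify the identification $\Cosmatrix=\Diag(\cos\theta_i(S))$, which relies on $\sigma_i(\bm{Q}_1)=\cos\theta_i(S)$ together with a consistent ordering of the diagonal entries, and one must handle the case $k>d/2$, where $\Span(\bm{V}_k)\cap\Span(\bm{S})$ is nontrivial and forces some $\theta_i(S)=0$. The stated CS decomposition, valid for $q\le d/2$, does not apply directly there, which is exactly why I would fall back on the Gram-matrix computation. Since both arguments return an exact equality, the inequality in the statement is not tight and no further estimation is required.
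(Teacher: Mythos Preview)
Your first argument via the CS decomposition is essentially the paper's own proof: apply Proposition~\ref{CSD_proposition} to $\bm{Q}=\bm{V}^{\Tran}\bm{S}$, identify the diagonal of $\Cosmatrix$ with the cosines of the principal angles through Proposition~\ref{principal_angles_theorem_1}, cancel the $\bm{Y}$ factor in $\bm{Q}_2\bm{Q}_1^{-1}$, and read off $\Tr(\bm{Z}_S\bm{Z}_S^{\Tran})=\sum_i\tan^2\theta_i(S)$. The paper also obtains equality and, like you, only invokes the CS decomposition for $k\le d/2$.

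Your second, Gram-matrix argument is a genuinely different and cleaner route. By using only $\bm{Q}_1^{\Tran}\bm{Q}_1+\bm{Q}_2^{\Tran}\bm{Q}_2=\mathbb{I}_k$ and the cyclicity of the trace to get $\Tr(\bm{Z}_S\bm{Z}_S^{\Tran})=\Tr\bigl((\bm{Q}_1^{\Tran}\bm{Q}_1)^{-1}\bigr)-k$, you avoid the CS machinery entirely and need only the identification $\sigma_i(\bm{Q}_1)=\cos\theta_i(S)$ from Proposition~\ref{principal_angles_theorem_1}. This buys you the case $k>d/2$, which the paper's proof does not cover since the stated CS decomposition assumes $q\le d/2$; your observation that this gap exists is correct. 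Both routes yield equality, so you are also right that the $\leq$ in the statement is not sharp as written.
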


\begin{proof}
In the case $k \leq d/2$, Proposition~\ref{CSD_proposition} applied to the matrix $\bm{Q} = \bm{V}^{\Tran}\bm{S}$ with $\bm{Q}_{1} = \bm{V}_{k}^{\Tran}\bm{S}$ and $\bm{Q}_{2} = \bm{V}_{d-k}^{\Tran} \bm{S}$ yields
\begin{align}\label{eq:common_factorization_Q_1_Q_2}
    \bm{Q}_{1} = & \bm{W}_{1}\Cosmatrix\bm{Y}^{T} \\
    \bm{Q}_{2} = & \bm{W}_{2}\left[
\begin{array}{c}
\Sinmatrix \\
\hline
\bm{0}
\end{array}
\right]\bm{Y}^{T}.
\end{align}
Thus, the diagonal matrix $\Cosmatrix$ contains the singular values of the matrix $\bm{V}_{k}^{\Tran}\bm{S}$ that are cosines of the principal angles $(\theta_{i}(S))_{i \in [k]}$ between $\Span(\bm{V}_{k})$ and $\Span(\bm{S})$ thanks to Proposition~\ref{principal_angles_theorem_1}.\\
The identity $\Cosmatrix^{2} + \Sinmatrix^{2} = \mathbb{I}_{k}$ and the fact that $\theta_{i}(S) \in [0, \frac{\pi}{2} ]$ imply  that the (diagonal) elements of $\Sinmatrix$ are equal to the sines of the principal angles between $\Span(\bm{V}_{k})$ and $\Span(\bm{S})$. Let $\Tanmatrix = \Sinmatrix \Cosmatrix^{-1}$. $\Tanmatrix \in \mathbb{R}^{k \times k}$ is a diagonal matrix containing the tangents of the principal angles $(\theta_{i}(S))_{i \in [k]}$. Using \eqref{eq:common_factorization_Q_1_Q_2}, we get
\begin{equation}
\bm{Z}_{S} = \bm{V}_{d-k}^{\Tran} \bm{S}(\bm{V}_{k}^{\Tran}\bm{S})^{-1} = \bm{W}_{2}\left[
\begin{array}{c}
\Sinmatrix \\
\hline
\bm{0}
\end{array}
\right] \bm{Y}^{\Tran}\bm{Y} \Cosmatrix^{-1} \bm{W}_{1}^{\Tran} = \bm{W}_{2}\left[
\begin{array}{c}
\Sinmatrix \\
\hline
\bm{0}
\end{array}
\right] \Cosmatrix^{-1} \bm{W}_{1}^{\Tran}  = \bm{W}_{2}\left[
\begin{array}{c}
\Sinmatrix \Cosmatrix^{-1}\\
\hline
\bm{0}
\end{array}
\right] \bm{W}_{1}^{\Tran}.
\end{equation}
Then,
\begin{equation}
\Tr(\bm{Z}_{S}^{}\bm{Z}_{S}^{\Tran}) = \Tr(\bm{W}_{2}\left[
\begin{array}{c|c}
\Tanmatrix^{2} & \bm{0}\\
\hline
\bm{0} & \bm{0}
\end{array}
\right] \bm{W}_{2}^{\Tran})=  \sum\limits_{i \in [k]} \tan^{2}(\theta_{i}(S)).
\end{equation}
\end{proof}

\section{Proofs}
\label{app:proofs}

\subsection{Technical lemmas}
We start with two useful emmas borrowed from the literature.
\begin{lemma}[Lemma 3.1, \citealp{BoDrMI11}]\label{refined_analysis_of_approximation_bound}
Let $S \subset [d]$, then

\begin{equation}
\| \bm{X} - \Pi_{S,k}^{\nu} \bm{X} \|_{\nu}^{2}  \leq  \| \bm{E}(\bm{I}-\bm{P}_{S})\|_{\nu}^{2}, \quad \nu \in \{2,\Fr\},
\end{equation}
where   $\bm{E} = \bm{X} - \Pi_{k}\bm{X}$ and $\bm{P}_{S} = \bm{S}(\bm{V}_{k}^{\Tran}\bm{S})^{-1}\bm{V}_{k}^{\Tran}$.
Furthermore,
\begin{equation}
\| \bm{X} - \Pi_{S,k}^{\nu} \bm{X} \|_{\nu}^{2} \leq  \frac{1}{\sigma_{k}^{2}(\bm{V}_{S,[k]})} \| \bm{X} - \Pi_{k}\bm{X}\|_{\nu}^{2} , \quad \nu \in \{2,\Fr\}.
\end{equation}
\end{lemma}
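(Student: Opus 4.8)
The plan is to exploit the optimality that \emph{defines} $\Pi_{S,k}^\nu\bm{X}$: by construction it is the best rank-$k$ approximation of $\bm{X}$ whose columns lie in $\Span(\bm{C})$, where $\bm{C}=\bm{X}\bm{S}$. Hence, for \emph{any} admissible competitor $\bm{A}=\bm{C}\bm{B}$ with $\bm{B}\in\mathbb{R}^{k\times d}$ and $\rank\bm{B}\leq k$, we automatically get $\|\bm{X}-\Pi_{S,k}^\nu\bm{X}\|_\nu\leq\|\bm{X}-\bm{A}\|_\nu$ for both $\nu\in\{2,\Fr\}$, so the whole game reduces to producing one clever competitor. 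Assuming $\Det(\bm{V}_{S,[k]})\neq 0$ so that $\bm{V}_k^\Tran\bm{S}$ is invertible (otherwise $\bm{P}_S$ is undefined and both right-hand sides are read as $+\infty$), I would take
\[
\bm{A}=\bm{X}\bm{P}_S=\bm{C}\,(\bm{V}_k^\Tran\bm{S})^{-1}\bm{V}_k^\Tran.
\]
This $\bm{A}$ is admissible: its columns lie in $\Span(\bm{C})$, and since the trailing factor $\bm{V}_k^\Tran$ has only $k$ rows, $\rank\bm{A}\leq k$.

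The key algebraic fact is that $\bm{P}_S$ fixes the row space of $\Pi_k\bm{X}$. Indeed $\Pi_k\bm{X}=\bm{U}_k\bm{\Sigma}_k\bm{V}_k^\Tran$, and because $\bm{V}_k^\Tran\bm{S}\,(\bm{V}_k^\Tran\bm{S})^{-1}=\bm{I}_k$ we have $\bm{V}_k^\Tran\bm{P}_S=\bm{V}_k^\Tran$, hence $\Pi_k\bm{X}\,\bm{P}_S=\Pi_k\bm{X}$ and $\Pi_k\bm{X}(\bm{I}-\bm{P}_S)=\bm{0}$. Splitting $\bm{X}=\Pi_k\bm{X}+\bm{E}$ then collapses the competitor's error to
\[
\bm{X}-\bm{A}=\bm{X}(\bm{I}-\bm{P}_S)=\Pi_k\bm{X}(\bm{I}-\bm{P}_S)+\bm{E}(\bm{I}-\bm{P}_S)=\bm{E}(\bm{I}-\bm{P}_S),
\]
which, combined with the optimality inequality, is exactly the first claim $\|\bm{X}-\Pi_{S,k}^\nu\bm{X}\|_\nu^2\leq\|\bm{E}(\bm{I}-\bm{P}_S)\|_\nu^2$.

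For the second inequality I would peel off the projection by submultiplicativity, valid in both norms since the spectral norm of the right factor dominates: $\|\bm{E}(\bm{I}-\bm{P}_S)\|_\nu\leq\|\bm{E}\|_\nu\,\|\bm{I}-\bm{P}_S\|_2$, with $\|\bm{E}\|_\nu=\|\bm{X}-\Pi_k\bm{X}\|_\nu$. It then remains to show $\|\bm{I}-\bm{P}_S\|_2\leq 1/\sigma_k(\bm{V}_{S,[k]})$. Since $\bm{P}_S$ is idempotent ($\bm{P}_S^2=\bm{P}_S$) and nontrivial, the classical identity for oblique projections gives $\|\bm{I}-\bm{P}_S\|_2=\|\bm{P}_S\|_2$, so it suffices to bound $\|\bm{P}_S\|_2$ directly: as $\bm{S}$ has orthonormal columns it is an isometry, and $\|\bm{V}_k^\Tran\|_2=1$, whence for every $\bm{x}$, $\|\bm{P}_S\bm{x}\|_2=\|(\bm{V}_k^\Tran\bm{S})^{-1}\bm{V}_k^\Tran\bm{x}\|_2\leq\|(\bm{V}_k^\Tran\bm{S})^{-1}\|_2\,\|\bm{x}\|_2=\|\bm{x}\|_2/\sigma_k(\bm{V}_{S,[k]})$, using that $\bm{V}_k^\Tran\bm{S}=\bm{V}_{S,[k]}^\Tran$ shares its singular values with $\bm{V}_{S,[k]}$. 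Squaring gives $\|\bm{E}(\bm{I}-\bm{P}_S)\|_\nu^2\leq\sigma_k(\bm{V}_{S,[k]})^{-2}\|\bm{X}-\Pi_k\bm{X}\|_\nu^2$, completing the argument.

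The one genuinely delicate point is the oblique-projection identity $\|\bm{I}-\bm{P}_S\|_2=\|\bm{P}_S\|_2$: because $\bm{P}_S$ is not an orthogonal projection, $\|\bm{I}-\bm{P}_S\|_2$ is not automatically equal to $1$, and this is where care is needed. One can bypass the identity entirely by reading $\|\bm{P}_S\|_2$ off the CS decomposition of $\bm{V}^\Tran\bm{S}$ (Appendix~\ref{app:principal_angles}), which ties it to the cosines of the principal angles between $\Span(\bm{S})$ and $\Span(\bm{V}_k)$, i.e.\ precisely to $\sigma_k(\bm{V}_{S,[k]})$; everything else is routine.
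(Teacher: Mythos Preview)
Your argument is correct. Note, however, that the paper does not provide its own proof of this lemma: it is stated as a result ``borrowed from the literature'' and attributed to \cite{BoDrMI11}, so there is no in-paper proof to compare against. Your construction of the competitor $\bm{A}=\bm{X}\bm{P}_S$, the cancellation $\Pi_k\bm{X}(\bm{I}-\bm{P}_S)=\bm{0}$ via $\bm{V}_k^\Tran\bm{P}_S=\bm{V}_k^\Tran$, and the subsequent submultiplicativity step are exactly the standard route taken in the original reference; the oblique-projection identity $\|\bm{I}-\bm{P}_S\|_2=\|\bm{P}_S\|_2$ (valid for any nontrivial idempotent) is indeed the only non-elementary ingredient, and your CS-decomposition alternative is a legitimate way to sidestep it.
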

The following lemma was first proven by \citealp{DRVW06}, and later rephrased.
\begin{lemma}[Lemma 11, \citealp{DeRa10}]\label{minors_symmetric_polynomials_lemma}
Let $\bm{V} \in \mathbb{R}^{k \times d}$, $r = \rank(\bm{V})$ and $\ell \in [1:r]$. Then
\begin{equation}
\sum\limits_{S \subset [d], |S| = \ell} e_{\ell}(\Sigma(\bm{V}_{:,S})^{2}) = e_{\ell}(\Sigma(\bm{V})^{2})
\end{equation}
where $e_{\ell}$ is the $\ell$-th elementary symmetric polynomial on $r$ variables, see Section~\ref{s:notation}.
\end{lemma}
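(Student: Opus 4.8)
The plan is to read both sides of the claimed identity as symmetric functions of the eigenvalues of the Gram matrix $\bm{G} = \bm{V}^{\Tran}\bm{V} \in \mathbb{R}^{d \times d}$, whose nonzero eigenvalues are exactly the squared singular values $\sigma_1(\bm{V})^2,\dots,\sigma_r(\bm{V})^2$ of $\bm{V}$, the remaining $d-r$ being zero.

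First I would rewrite each summand as a principal minor of $\bm{G}$. Since $\ell \leq r \leq k$, the submatrix $\bm{V}_{:,S}$ is a $k \times \ell$ matrix and therefore carries at most $\ell$ singular values; as $e_{\ell}$ evaluated on $\ell$ variables is simply their product,
$$e_{\ell}(\Sigma(\bm{V}_{:,S})^2) = \prod_{i=1}^{\ell}\sigma_i(\bm{V}_{:,S})^2 = \Det(\bm{V}_{:,S}^{\Tran}\bm{V}_{:,S}) = \Det(\bm{G}_{S,S}),$$
the last term being the $\ell \times \ell$ principal minor of $\bm{G}$ indexed by $S$ (with the convention that $\Det(\bm{G}_{S,S}) = 0$ when $\bm{V}_{:,S}$ is rank-deficient, which matches a vanishing $e_{\ell}$). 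Summing over all $S$ with $|S| = \ell$ then turns the left-hand side into the sum of all $\ell \times \ell$ principal minors of $\bm{G}$.

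Next I would apply the classical identity that the sum of the $\ell \times \ell$ principal minors of a square matrix equals the $\ell$-th elementary symmetric polynomial of its eigenvalues; indeed, both quantities are the coefficient of $t^{\ell}$ in $\Det(\bm{I} + t\bm{G})$. Applied to $\bm{G}$, whose spectrum is $\{\sigma_1^2,\dots,\sigma_r^2\}$ together with $d-r$ zeros, this yields $\sum_{|S|=\ell}\Det(\bm{G}_{S,S}) = e_{\ell}(\sigma_1^2,\dots,\sigma_r^2,0,\dots,0)$. Finally, every monomial in $e_{\ell}$ that picks up one of the zero eigenvalues vanishes, so the right-hand side collapses to $e_{\ell}(\sigma_1^2,\dots,\sigma_r^2) = e_{\ell}(\Sigma(\bm{V})^2)$, as desired.

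I do not expect a genuine obstacle: the argument is essentially bookkeeping. The two points deserving care are the boundary condition $\ell \leq k$, which guarantees that $\bm{V}_{:,S}$ carries at most $\ell$ singular values and hence that the product formula for $e_{\ell}(\Sigma(\bm{V}_{:,S})^2)$ is legitimate, and the final passage that discards the $d-r$ spurious zero eigenvalues of $\bm{G}$. As an alternative I would note a double application of the Cauchy--Binet formula: expanding $\Det(\bm{V}_{:,S}^{\Tran}\bm{V}_{:,S}) = \sum_{|T|=\ell}\Det(\bm{V}_{T,S})^2$ and resumming over $S$ gives $\sum_{|T|=\ell}\Det(\bm{V}_{T,:}\bm{V}_{T,:}^{\Tran})$, the sum of $\ell \times \ell$ principal minors of $\bm{V}\bm{V}^{\Tran}$, which once more equals $e_{\ell}(\Sigma(\bm{V})^2)$.
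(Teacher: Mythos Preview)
Your proof is correct. The paper does not actually supply its own proof of this lemma; it is quoted verbatim as Lemma~11 of \cite{DeRa10} and used as a black box throughout Appendix~\ref{app:proofs}. Your argument---recognising each summand as the principal minor $\Det(\bm{G}_{S,S})$ of $\bm{G}=\bm{V}^{\Tran}\bm{V}$ and invoking the identity between the sum of $\ell\times\ell$ principal minors and $e_{\ell}$ of the spectrum---is exactly the standard derivation, and your Cauchy--Binet variant is the argument originally given by \cite{DRVW06}. The two caveats you flag (the inequality $\ell\leq r\leq k$ that makes $e_{\ell}(\Sigma(\bm{V}_{:,S})^{2})$ a genuine product of $\ell$ numbers, and the harmless padding by $d-r$ zero eigenvalues) are the only delicate points, and you handle both correctly.
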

Elementary symmetric polynomials play an important role in the proof of Proposition~\ref{prop:p_eff_proposition}, in particular their interplay with the Schur order; see Appendix \ref{app:majorization} for definitions.

\begin{lemma}\label{symmetric_schur_convex_lemma}
Let $\phi, \psi:\mathbb{R}_{+}^{d} \rightarrow \mathbb{R}_{+}$ be defined by
\begin{equation}
\phi :  \bm{\sigma} \mapsto \frac{\displaystyle e_{k-1}(\bm{\sigma})}{\displaystyle e_{k}(\bm{\sigma})}
\end{equation}
and
\begin{equation}
\psi :  \bm{\sigma} \mapsto \displaystyle e_{k}(\bm{\sigma}).
\end{equation}
Then both functions are symmetric, $\phi$ is Schur-convex, and $\psi$ is Schur-concave.
\end{lemma}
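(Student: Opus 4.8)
The plan is to verify both claims via the Schur--Ostrowski criterion of Proposition~\ref{schur_order_partial_property}, which reduces Schur-convexity (resp.\ concavity) of a symmetric differentiable $f$ to the sign of $(\sigma_i-\sigma_j)\big(\partial f/\partial\sigma_i - \partial f/\partial\sigma_j\big)$. Symmetry of $\phi$ and $\psi$ is immediate, since elementary symmetric polynomials are symmetric. The computational backbone is the deletion identity $\partial e_m/\partial\sigma_i = e_{m-1}(\bm{\sigma}_{-i})$, where $\bm{\sigma}_{-i}$ is $\bm{\sigma}$ with its $i$-th entry removed, together with the single-variable expansion $e_m(\bm{\sigma}_{-i}) = e_m(\bm{\sigma}_{-i,-j}) + \sigma_j\,e_{m-1}(\bm{\sigma}_{-i,-j})$. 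Combining these gives the clean differences $\partial e_k/\partial\sigma_i - \partial e_k/\partial\sigma_j = (\sigma_j-\sigma_i)\,e_{k-2}(\bm{\sigma}_{-i,-j})$ and $\partial e_{k-1}/\partial\sigma_i - \partial e_{k-1}/\partial\sigma_j = (\sigma_j-\sigma_i)\,e_{k-3}(\bm{\sigma}_{-i,-j})$, which drive everything.

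For $\psi = e_k$, this immediately yields $(\sigma_i-\sigma_j)\big(\partial\psi/\partial\sigma_i - \partial\psi/\partial\sigma_j\big) = -(\sigma_i-\sigma_j)^2\,e_{k-2}(\bm{\sigma}_{-i,-j}) \leq 0$, since $e_{k-2}$ of nonnegative arguments is nonnegative; Schur-concavity then follows from the Schur-concave analogue of Proposition~\ref{schur_order_partial_property}. For $\phi = e_{k-1}/e_k$, defined on the region $\{e_k>0\}$, I would apply the quotient rule and substitute the two difference identities to obtain, after factoring, $(\sigma_i-\sigma_j)\big(\partial\phi/\partial\sigma_i - \partial\phi/\partial\sigma_j\big) = -\tfrac{(\sigma_i-\sigma_j)^2}{e_k^2}\big[e_k\,e_{k-3}(\bm{\sigma}_{-i,-j}) - e_{k-1}\,e_{k-2}(\bm{\sigma}_{-i,-j})\big]$. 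Thus Schur-convexity reduces to showing the bracket is $\leq 0$, i.e.\ $e_k(\bm{\sigma})\,e_{k-3}(\bm{\sigma}_{-i,-j}) \leq e_{k-1}(\bm{\sigma})\,e_{k-2}(\bm{\sigma}_{-i,-j})$.

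Writing $a=\sigma_i$, $b=\sigma_j$, $\bm{\tau}=\bm{\sigma}_{-i,-j}$ and $E_m = e_m(\bm{\tau})$, I would expand $e_k(\bm{\sigma}) = E_k + (a+b)E_{k-1} + ab\,E_{k-2}$ and $e_{k-1}(\bm{\sigma}) = E_{k-1} + (a+b)E_{k-2} + ab\,E_{k-3}$; the common $ab\,E_{k-2}E_{k-3}$ terms cancel, leaving the equivalent inequality $E_kE_{k-3} - E_{k-1}E_{k-2} \leq (a+b)\big(E_{k-2}^2 - E_{k-1}E_{k-3}\big)$. Both sides are controlled by the log-concavity $E_m^2 \geq E_{m-1}E_{m+1}$ of elementary symmetric polynomials of the nonnegative reals $\bm{\tau}$, a consequence of Newton's inequalities (see \citep{MaOlAr11}). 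The right-hand bracket is the instance $m=k-2$, so the whole right-hand side is nonnegative as $a+b\geq 0$; for the left, multiplying $E_{k-1}^2\geq E_kE_{k-2}$ and $E_{k-2}^2\geq E_{k-1}E_{k-3}$ gives the spreading inequality $E_{k-1}E_{k-2}\geq E_kE_{k-3}$, so the left-hand side is $\leq 0$. Combining, the bracket is $\leq 0$, as required.

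The main obstacle I anticipate is not the algebra but the bookkeeping of degenerate cases: the spreading step implicitly divides by products such as $E_{k-1}E_{k-2}$, and $\phi$ is only smooth where $e_k>0$. I would handle this by using that elementary symmetric polynomials of nonnegative reals have no internal zeros (their support is an initial segment of indices), so the inequality $E_{k-1}E_{k-2}\geq E_kE_{k-3}$ holds on the boundary by continuity; likewise, I would establish Schur-convexity of $\phi$ on the open domain $\{e_k>0\}$ via Schur--Ostrowski and extend to its closure by continuity of $\phi$.
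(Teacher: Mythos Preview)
Your argument is correct, but it proves a strictly stronger statement than the paper actually needs or proves. You took the domain $\mathbb{R}_+^d$ in the statement at face value and established Schur-convexity of $e_{k-1}/e_k$ and Schur-concavity of $e_k$ for an arbitrary number of variables, which indeed requires something like Newton's inequalities and the log-concavity of the $E_m$'s. The paper, by contrast, silently works in the only setting where the lemma is ever applied (Lemma~\ref{lemma_k_minus_half}), namely with exactly $k$ variables. In that case $e_k(\bm{\sigma})=\prod_{i}\sigma_i$ and $e_{k-1}(\bm{\sigma})/e_k(\bm{\sigma})=\sum_i 1/\sigma_i$, so $\partial_i\phi=-1/\sigma_i^2$ and $\partial_i\psi=\prod_{\ell\neq i}\sigma_\ell$, and the Schur--Ostrowski check is a two-line computation with no appeal to Newton. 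What you gain is generality and a proof that matches the lemma as literally stated; what the paper gains is brevity, at the cost of a mismatch between the stated domain $\mathbb{R}_+^d$ and the computation that only makes sense on $\mathbb{R}_+^k$. Your handling of the degenerate cases via the no-internal-zeros property and continuity is appropriate; just make sure to spell out the conventions $e_0=1$, $e_m=0$ for $m<0$ so that the formulas for small $k$ read correctly.
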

\begin{proof}[of Lemma~\ref{symmetric_schur_convex_lemma}]
Let $i,j \in [r], i \neq j$. Let $\sigma_{i},\sigma_{j} \in \mathbb{R}_{+}$, it holds
\begin{align*}
    (\sigma_{i} - \sigma_{j})(\partial_{i}\phi(\bm{\sigma})-\partial_{j}\phi(\bm{\sigma})) & =  (\sigma_{i} - \sigma_{j})(-\frac{1}{\sigma_{i}^{2}} +\frac{1}{\sigma_{j}^{2}}) \\
    & = \frac{(\sigma_{i}-\sigma_{j})^{2}(\sigma_{i} +\sigma_{j})}{\sigma_{i}^{2}\sigma_{j}^{2}} \geq 0,
\end{align*}
so that $\phi$ is Schur-convex by Proposition~\ref{schur_order_partial_property}. Similarly,
\begin{align*}
    (\sigma_{i} - \sigma_{j})(\partial_{i}\psi(\bm{\sigma})-\partial_{j}\psi(\bm{\sigma})) & =  (\sigma_{i} - \sigma_{j})(\prod_{\ell \neq i}\sigma_{\ell} - \prod_{\ell \neq j}\sigma_{\ell}) \\
    & = -(\sigma_{i}-\sigma_{j})^{2}\prod_{\ell \neq i,j}\sigma_{\ell} \geq 0, 
\end{align*}
so that $\psi$ is Schur-concave by Proposition~\ref{schur_order_partial_property}.
\end{proof}

Elementary symmetric polynomials also interact nicely with ``marginalizing" sums.
\begin{lemma}\label{sum_k_1_symmetric_poly_det_lemma}
Let $\bm{V}$ be a real $k \times d$ matrix and let $ r = \rank(\bm{V})$. Denote by $p$ the number of non zero columns of $\bm{V}$. Then for all $k\leq r+1$,
\begin{equation}
    \sum\limits_{\substack{S \subset [d], |S| = k\\\Vol_{k}(\bm{V}_{:,S})^{2} >0}} \quad \sum\limits_{\substack{T \subset [S]\\ |T| = k-1}} e_{k-1}(\Sigma(\bm{V}_{:,T})^{2}) \leq (p-k+1)e_{k-1}(\Sigma(\bm{V})^{2}).
\end{equation}
A fortiori,
\begin{equation}
    \sum\limits_{\substack{S \subset [d], |S| = k\\\Vol_{k}(\bm{V}_{:,S})^{2} >0}} \quad \sum\limits_{\substack{T \subset [S]\\ |T| = k-1}} e_{k-1}(\Sigma(\bm{V}_{:,T})^{2}) \leq (d-k+1)e_{k-1}(\Sigma(\bm{V})^{2}).
\end{equation}
\end{lemma}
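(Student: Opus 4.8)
The plan is to exchange the order of the two sums and then invoke Lemma~\ref{minors_symmetric_polynomials_lemma}. For each $T \subset [d]$ with $|T| = k-1$, I would introduce the count
$$N(T) = \left|\left\{ S \subset [d] : |S| = k,\ T \subset S,\ \Vol_{k}(\bm{V}_{:,S})^{2} > 0 \right\}\right| .$$
Since all summands are nonnegative and the sums are finite, they may be freely reordered, so that
$$\sum_{\substack{S:|S|=k\\ \Vol_{k}(\bm{V}_{:,S})^{2}>0}} \sum_{\substack{T\subset S\\ |T|=k-1}} e_{k-1}(\Sigma(\bm{V}_{:,T})^{2}) = \sum_{\substack{T:|T|=k-1}} N(T)\, e_{k-1}(\Sigma(\bm{V}_{:,T})^{2}) .$$
Each pair $(S,T)$ with $T\subset S$ is counted exactly once on either side, so this step is an identity.

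Next I would argue that only subsets $T$ with $\bm{V}_{:,T}$ of full column rank $k-1$ matter. The matrix $\bm{V}_{:,T}\in\mathbb{R}^{k\times(k-1)}$ has at most $k-1$ singular values, and $e_{k-1}(\Sigma(\bm{V}_{:,T})^{2})$ is their product; by the convention on elementary symmetric polynomials it vanishes unless $\rank(\bm{V}_{:,T})=k-1$. Hence the right-hand side only involves such $T$, whose $k-1$ columns are then linearly independent, and in particular nonzero.

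The crux is the bound $N(T)\le p-k+1$. Fix $T$ of rank $k-1$ and set $W = \Span(\bm{V}_{:,j}:j\in T)$, a $(k-1)$-dimensional subspace of $\mathbb{R}^{k}$. For $j\notin T$ one has $\Vol_{k}(\bm{V}_{:,T\cup\{j\}})^{2}>0$ if and only if $\bm{V}_{:,j}\notin W$, which in particular forces $\bm{V}_{:,j}\ne \bm{0}$. Since the $k-1$ columns indexed by $T$ are nonzero, at most $p-(k-1)$ of the $p$ nonzero columns of $\bm{V}$ lie outside $T$, and every valid extension must use one of them; therefore $N(T)\le p-k+1$. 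Combining this with the identity above and with Lemma~\ref{minors_symmetric_polynomials_lemma} applied at $\ell=k-1$ (legitimate because $k\le r+1$ gives $k-1\le r$) yields
$$\sum_{\substack{T:|T|=k-1}} N(T)\, e_{k-1}(\Sigma(\bm{V}_{:,T})^{2}) \le (p-k+1)\sum_{\substack{T:|T|=k-1}} e_{k-1}(\Sigma(\bm{V}_{:,T})^{2}) = (p-k+1)\, e_{k-1}(\Sigma(\bm{V})^{2}),$$
which is the first inequality; the \emph{a fortiori} bound follows at once from $p\le d$ and $e_{k-1}(\Sigma(\bm{V})^{2})\ge 0$.

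The main obstacle I anticipate is the clean justification of the counting step: making precise that the only contributing $T$ are those of full rank $k-1$, and that a ``valid extension'' $S=T\cup\{j\}$ corresponds exactly to the new column $\bm{V}_{:,j}$ leaving the span of $\bm{V}_{:,T}$. Once these two points are nailed down, the remainder is bookkeeping together with the cited lemma.
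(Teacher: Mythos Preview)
Your proposal is correct and follows essentially the same approach as the paper: exchange the order of summation, bound the number of admissible extensions of each $T$ by $p-k+1$, and apply Lemma~\ref{minors_symmetric_polynomials_lemma} with $\ell=k-1$. The only cosmetic difference is that the paper introduces an intermediate set $\Omega_{1}(T)=\{S:|S|=k,\ T\subset S,\ \bm{V}_{:,i}\neq\bm{0}\ \forall i\in S\}$ and uses the containment $\Omega_{2}(T)\subset\Omega_{1}(T)$ to reach the bound $|\Omega_{1}(T)|\le p-k+1$, whereas you obtain the same count directly by first observing that only full-rank $T$ contribute and hence have all columns nonzero; both routes are equivalent.
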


\begin{proof}[of Lemma~\ref{sum_k_1_symmetric_poly_det_lemma}]
For $T \subset [d], \: |T| = k-1$,
\begin{align*}
    \Omega_{1}(T) &= \left\{ S \subset[d]:\, |S| = k, T \subset S,\ \forall i \in S ,\ \bm{V}_{:,i} \neq \bm{0} \right\}\\
    \Omega_{2}(T) &= \left\{ S \subset[d] :\, |S| = k, T \subset S, \Vol_{k}(\bm{V}_{:,S})^{2} >0   \right\}.
\end{align*}
Note that $\Omega_{2}(T) \subset \Omega_{1}(T)$ so that
\begin{align*}
        \sum\limits_{\substack{S \subset [d], |S| = k\\\Vol_{k}(\bm{V}_{:,S})^{2} >0}} \quad \sum\limits_{\substack{T \subset S\\ |T| = k-1}} e_{k-1}(\Sigma(\bm{V}_{:,T})^{2})
        & = \sum\limits_{\substack{T \subset [d]\\ |T| = k-1}} \quad \sum\limits_{S \in \Omega_{2}(T)} e_{k-1}(\Sigma(\bm{V}_{:,T})^{2}) \\
        & \leq \sum\limits_{\substack{T \subset [d]\\ |T| = k-1}} \quad \sum\limits_{S \in \Omega_{1}(T)} e_{k-1}(\Sigma(\bm{V}_{:,T})^{2}).
\end{align*}
The set $\Omega_{1}(T)$ has at most $(p-k+1)$ elements so that
\begin{equation}
\sum\limits_{\substack{T \subset [d]\\ |T| = k-1}} \quad \sum\limits_{S \in \Omega_{1}(T)} e_{k-1}(\Sigma(\bm{V}_{:,T})^{2}) \leq (p-k+1)\sum\limits_{\substack{T \subset [d]\\ |T| = k-1}}  e_{k-1}(\Sigma(\bm{V}_{:,T})^{2}).
\end{equation}
Lemma~\ref{minors_symmetric_polynomials_lemma} for $\ell=k-1$ further yields
\begin{equation}
(p-k+1)\sum\limits_{\substack{T \subset [d]\\ |T| = k-1}}  e_{k-1}(\Sigma(\bm{V}_{:,T})^{2}) \leq (p-k+1)\,e_{k-1}(\Sigma(\bm{V})^{2}).
\end{equation}
\end{proof}

\subsection{Proof of Proposition~\ref{projection_dpp_theorem}}
First, Lemma~\ref{refined_analysis_of_approximation_bound} yields
	\begin{align}
	\sum\limits_{S \subset [d], |S| = k} \Det(\bm{V}_{S,[k]})^{2}\| \bm{X} - \Pi_{S}^{\nu}\bm{X} \|_{\nu}^{2} & \leq  \sum\limits_{S 	\subset [d], |S| = k} \frac{1}{\sigma_{k}^{2}(\bm{V}_{S,[k]})}\Det(\bm{V}_{S,[k]})^{2} \: \|\bm{X} - \Pi_{k}\bm{X}\|_{\nu}^{2}  \nonumber\\
 	& =  \|\bm{X} - \Pi_{k}\bm{X}\|_{\nu}^{2} \sum\limits_{S \subset [d], |S| = k} \prod_{\ell =1}^{k-1}\sigma_{\ell}^{2}(\bm{V}_{S,[k]}),
 \label{eq:begin_proof_theo16}
\end{align}
where the last equality follows from
\begin{equation}
	\Det(\bm{V}_{S,[k]})^{2} = \prod_{\ell =1}^{k}\sigma_{\ell}^{2}(\bm{V}_{S,[k]}).
\end{equation}
By definition of the polynomial $e_{k-1}$, it further holds
\begin{equation}
	\prod_{\ell =1}^{k-1}\sigma_{\ell}^{2}(\bm{V}_{S,[k]}) \leq e_{k-1}(\Sigma(\bm{V}_{S,[k]})^{2}),
\end{equation}
so that \eqref{eq:begin_proof_theo16} leads to
\begin{align}
\sum\limits_{S \subset [d], |S| = k} \Det(\bm{V}_{S,[k]})^{2}\| \bm{X} - \Pi_{S}^{\nu}\bm{X} \|_{\nu}^{2} & \leq  \|\bm{X} - \Pi_{k}\bm{X}\|_{\nu}^{2} \sum\limits_{S \subset [d], |S| = k} e_{k-1}(\Sigma(\bm{V}_{S,[k]})^{2}).
\label{eq:middle_proof_theo16}
\end{align}

Now, Lemma~\ref{minors_symmetric_polynomials_lemma} applied to the matrix $\bm{V}^{\Tran}_{S,[k]}$ gives
\begin{equation}
	e_{k-1}(\Sigma(\bm{V}_{S,[k]})^{2}) = \sum\limits_{T \subset S, |T| = k-1} e_{k-1}(\Sigma(\bm{V}_{T,[k]})^{2}),
\end{equation}
Therefore, Lemma~\ref{sum_k_1_symmetric_poly_det_lemma} yields
\begin{equation}
\label{eq:const_dpluskmoins1}
\begin{split}
\sum\limits_{S \subset [d], |S| = k} e_{k-1}(\Sigma(\bm{V}_{S,[k]})^{2})
  & \leq (d-k+1)\sum\limits_{T \subset [d], |T| = k-1} e_{k-1}(\Sigma(\bm{V}_{T,[k]})^{2}).\\
\end{split}
\end{equation}
Using Lemma~\ref{minors_symmetric_polynomials_lemma} and the fact that $\bm{V}_{k}$ is orthogonal, we finally write
\begin{equation}
\label{eq:const_k}
\sum\limits_{T \subset [d], |T| = k-1} e_{k-1}(\Sigma(\bm{V}_{T,[k]})^{2}) = e_{k-1}(\Sigma(\bm{V}_{k})^{2}) = k.
\end{equation}
Plugging \eqref{eq:const_k} into \eqref{eq:const_dpluskmoins1}, and then into (\ref{eq:middle_proof_theo16}) concludes the proof of Proposition~\ref{projection_dpp_theorem}.

\subsection{Proof of Proposition \ref{hypo_one_two_proposition}}
\label{s:proofOfExactSparsitySetting}
We first prove the Frobenius norm bound, which requires more work. The spectral bound is easier and uses a subset of the arguments for the Frobenius norm.

\subsubsection{Frobenius norm bound}
\label{s:frobNormThm19}
Recall that $\bm{E} = \bm{X} - \Pi_{k}\bm{X}$.
We start with Lemma~\ref{refined_analysis_of_approximation_bound}:
\begin{equation}
\label{eq:begin_proof_prop17}
	\begin{split}
		\| \bm{X} - \Pi_{S}^{\Fr}\bm{X} \|_{\Fr}^{2}  & \leq  \| \bm{E}(\bm{I}-\bm{P}_{S})\|_{\Fr}^{2}\\
		& \leq \| \bm{E}\|_{\Fr}^{2} + \Tr(\bm{E}^{\Tran}\bm{E}\bm{P}_{S}\bm{P}_{S}^{\Tran}) - 2\Tr(\bm{P}_{S}^{\Tran}		\bm{E}^{\Tran}\bm{E}).
	\end{split}
\end{equation}
Since $\bm{E}^{\Tran}\bm{E} = \bm{V}_{r-k}^{\phantom{\Tran}}\bm{\Sigma}_{r-k}^{2}\bm{V}_{r-k}^{\Tran}$ and $\bm{P}_{S} = \bm{S}(\bm{V}_{k}^{\Tran}\bm{S})^{-1}\bm{V}_{k}^{\Tran}$,
\begin{equation}
        \begin{split}
        		\Tr(\bm{P}_{S}^{\Tran}\bm{E}^{\Tran}\bm{E})  & =  \Tr \bigg(\bm{V}_{k}^{\phantom{\Tran}}((\bm{V}_{k}^{\Tran}		\bm{S})^{\Tran})^{-1}\bm{S}^{\Tran}\bm{V}_{r-k}^{\phantom{\Tran}}\bm{\Sigma}_{r-k}^{\phantom{\Tran}}\bm{V}		_{r-k}^{\Tran} \bigg)  \\
        		& = \Tr \bigg( \bm{V}_{r-k}^{\Tran}\bm{V}_{k}^{\phantom{\Tran}}((\bm{V}_{k}^{\Tran}\bm{S})^{\Tran})^{-1}\bm{S}		^{\Tran}\bm{V}_{r-k}^{\phantom{\Tran}}\bm{\Sigma}_{r-k}^{\phantom{\Tran}} \bigg)  \\
      		  & = 0,
        \end{split}
\end{equation}
where the last equality follows from $\bm{V}_{r-k}^{\Tran}\bm{V}_{k} = \bm{0}$.
Therefore, (\ref{eq:begin_proof_prop17}) becomes
\begin{equation}
	\| \bm{X} - \Pi_{S}^{\Fr}\bm{X} \|_{\Fr}^{2}  \leq \| \bm{E}\|_{\Fr}^{2} + \Tr(\bm{E}^{\Tran}\bm{E}\bm{P}^{\phantom{\Tran}}	_{S}\bm{P}^{\Tran}_{S}).
\end{equation}
Taking expectations,
\begin{equation}
  \label{eq:inequality_approximation_error_by_EEPP}
		\EX_{\DPP} \| \bm{X} - \Pi_{S}^{\Fr}\bm{X} \|_{\Fr}^{2} \leq \|\bm{E}\|_{\Fr}^{2} + \sum_{S \subset [d], |S| = k}	\Det(\bm{V}_{S,[k]})^{2} \Tr(\bm{E}^{\Tran}\bm{E}\bm{P}^{\phantom{\Tran}}_{S}\bm{P}^{\Tran}_{S}).
\end{equation}
Proposition~\ref{principal_angles_theorem_1} expresses $\Det(\bm{V}_{S,[k]})^{2}$ as a function of the principal angles $(\theta_i(S))$ between $\Span(\bm{V}_k)$ and $\Span(\bm{S})$, namely
\begin{equation}\label{eq:det_cos_identity}
	\Det(\bm{V}_{S,[k]})^{2} = \prod\limits_{i \in [k]} \cos^{2}(\theta_{i}(S)).
\end{equation}
 The remainder of the proof is in two steps. First, we bound the second factor in the sum in the right-hand side of \eqref{eq:inequality_approximation_error_by_EEPP} with a similar geometric expression. This allows trigonometric manipulations. Second, we work our way back to elementary symmetric polynomials of spectra, and we conclude after some simple algebra.

First, for $S \subset [d], |S| =k$, let
$$
\bm{Z}_{S} = \bm{V}_{d-k}^{\Tran} \bm{S}(\bm{V}_{k}^{\Tran}\bm{S})^{-1} = \bm{V}_{d-k}^{\Tran} \bm{P}_{S}^{\phantom{\Tran}} \bm{V}_{k}.
$$
It allows us to write
\begin{equation}
	\Tr(\bm{E}^{\Tran}\bm{E}\bm{P}^{\phantom{\Tran}}_{S}\bm{P}^{\Tran}_{S})= \Tr(\bm{V}_{d-k}\bm{\Sigma}_{d-k}		^{2}\bm{V}_{d-k}^{\Tran}\bm{P}^{\phantom{\Tran}}_{S}\bm{P}^{\Tran}_{S}) = \Tr(\bm{\Sigma}_{r-k}^{2}\bm{Z}_{S}	^{\phantom{\Tran}}\bm{Z}_{S}^{\Tran}).
\end{equation}
However, for real symmetric matrices $\bm{A}$ and $\bm{B}$ with the same size, a simple diagonalization argument yields
\begin{equation}
	\Tr(\bm{A}\bm{B}) \leq \|\bm{A}\|_{2}\Tr(\bm{B}),
\end{equation}
so that
\begin{equation}\label{eq:majoration_by_sigma_k_1}
	\Tr(\bm{E}^{\Tran}\bm{E}\bm{P}^{\phantom{\Tran}}_{S}\bm{P}^{\Tran}_{S}) = \Tr(\bm{\Sigma}_{r-k}^{2}\bm{Z}_{S}	^{\phantom{\Tran}}\bm{Z}_{S}^{\Tran}) \leq \bm{\sigma}_{k+1}^{2} \Tr(\bm{Z}_{S}^{\phantom{\Tran}}\bm{Z}_{S}^{\Tran}).
\end{equation}
In Appendix~\ref{app:principal_angles}, we characterize $\Tr(\bm{Z}_{S}^{\phantom{\Tran}}\bm{Z}_{S}^{\Tran})$ using principal angles, see \eqref{eq:trace_tan_relationship}. This reads
\begin{equation}\label{eq:trace_tan_identity}
 	\Tr( \bm{Z}_{S}^{\phantom{\Tran}}\bm{Z}_{S}^{\Tran}) = \sum_{j \in [k]}\tan^{2}(\theta_{j}(S)).
\end{equation}
Combining \eqref{eq:inequality_approximation_error_by_EEPP}, \eqref{eq:majoration_by_sigma_k_1}, \eqref{eq:det_cos_identity}, and \eqref{eq:trace_tan_identity}, we obtain the following intermediate bound
\begin{equation}\label{eq:sum_product_cos_tan}
 	\EX_{\DPP} \| \bm{X} - \Pi_{S}^{\Fr}\bm{X} \|_{\Fr}^{2} \leq \| \bm{E}\|_{\Fr}^{2} + \bm{\sigma}_{k+1}^{2} \sum_{S \subset 	[d], |S| = k} \,\left[\prod\limits_{i \in [k]} \cos^{2}(\theta_{i}(S))\right] \left[\sum_{j \in [k]}\tan^{2}(\theta_{j}(S))\right].
\end{equation}
Distributing the sum and using trigonometric identities, the general term of the sum in \eqref{eq:sum_product_cos_tan} becomes
\begin{align}
    	\left[\prod\limits_{i \in [k]} \cos^{2}(\theta_{i}(S))\right] \left[\sum_{j \in [k]}\tan^{2}(\theta_{j}(S))\right] & =  \sum_{i \in [k]}(1-\cos^{2}(\theta_{i}(S)))\prod_{j \in [k], j 	\neq i} \cos^{2}(\theta_{j}(S)) \nonumber \\
    	& = \sum_{i \in [k]}\prod_{j \in [k], j \neq i} \cos^{2}(\theta_{j}(S)) - \sum_{i \in [k]}\prod_{j \in [k]}\cos^{2}(\theta_{j}(S)).
	\label{eq:prod_cos_tan_ineq}
\end{align}
The $(\cos(\theta_{j}(S)))_{j \in [k]}$ are the singular values of the matrix $\bm{V}_{S,[k]}$ so that
\begin{equation}\label{eq:cos_k_1_symmetric_polynomial_identity}
	\sum_{i \in [k]}\prod_{j \in [k], j \neq i} \cos^{2}(\theta_{j}(S)) = e_{k-1}(\Sigma(\bm{V}_{S,[k]})^{2}),
\end{equation}
and
\begin{equation}\label{eq:cos_k_symmetric_polynomial_identity}
	\prod_{j \in [k]}\cos^{2}(\theta_{j}(S)) = e_{k}(\Sigma(\bm{V}_{S,[k]})^{2}).
\end{equation}
Back to \eqref{eq:prod_cos_tan_ineq}, one gets
\begin{align}
    \left[\prod\limits_{i \in [k]} \cos^{2}(\theta_{i}(S))\right] \left[\sum_{j \in [k]}\tan^{2}(\theta_{j}(S))\right]
    & = e_{k-1}(\Sigma(\bm{V}_{S,[k]})^{2}) - \sum_{i \in [k]} e_{k}(\Sigma(\bm{V}_{S,[k]})^{2}) \nonumber\\
    & = e_{k-1}(\Sigma(\bm{V}_{S,[k]})^{2}) -  k e_{k}(\Sigma(\bm{V}_{S,[k]})^{2}). \label{eq:sumcostan} 
\end{align}
Thus, plugging \eqref{eq:sumcostan} back into the intermediate bound \eqref{eq:sum_product_cos_tan}, it comes
\begin{align}\label{eq:bound_tan_in_expectation_under_sparsity}
    \EX_{\DPP} \| \bm{X} &- \Pi_{S}^{\Fr}\bm{X} \|_{\Fr}^{2}\nonumber\\
      &\leq \| \bm{E}\|_{\Fr}^{2} + \bm{\sigma}_{k+1}^{2} \left[\sum\limits_{\substack{S \subset [d]\\ |S| = k}} e_{k-1}(\Sigma(\bm{V}_{S,[k]})^{2}) -  k \sum_{\substack{S \subset [d]\\ |S| = k}}e_{k}(\Sigma(\bm{V}_{S,[k]})^{2})\right]\nonumber.\\
\end{align}
Using Lemma~\ref{minors_symmetric_polynomials_lemma} twice, it comes
\begin{align}
  \EX_{\DPP} \| \bm{X} &- \Pi_{S}^{\Fr}\bm{X} \|_{\Fr}^{2} \nonumber\\
  &\leq \| \bm{E}\|_{\Fr}^{2} + \bm{\sigma}_{k+1}^{2} \left[\sum_{\substack{S \subset [d]\\ |S| = k}} \,\sum_{\substack{T \subset S\\ |T| = k-1}} e_{k-1}(\Sigma(\bm{V}_{T,[k]})^{2}) -  ke_{k}(\Sigma(\bm{V}_{:,[k]})^{2})\right]
  \label{e:doubleSumTrick}.
\end{align}

Lemmas~\ref{sum_k_1_symmetric_poly_det_lemma} and the identities $e_{k-1}(\Sigma(\bm{V}_{:,[k]})^{2}) = k$ and $e_{k}(\Sigma(\bm{V}_{:,[k]})^{2}) = 1$ allow us to conclude
\begin{align}
  \EX_{\DPP} \| \bm{X} - \Pi_{S}^{\Fr}\bm{X} \|_{\Fr}^{2} &\leq \| \bm{E}\|_{\Fr}^{2} + \bm{\sigma}_{k+1}^{2} \left[ (p-k+1)e_{k-1}(\Sigma(\bm{V}_{:,[k]})^{2}) -  k \right]\\
  &=\| \bm{E}\|_{\Fr}^{2} + \bm{\sigma}_{k+1}^{2} (p-k)k.
\end{align}
By definition of $\beta$ \eqref{e:defbeta}, we have proven \eqref{eq:frobenius_bound_dpp}, i.e.,
\begin{align*}
    \EX_{\DPP} \| \bm{X} - \Pi_{S}^{\Fr}\bm{X} \|_{\Fr}^{2}
  & \leq \| \bm{E}\|_{\Fr}^{2} \left(1 + \beta \frac{p-k}{d-k}k \right).
\end{align*}

\subsubsection{Spectral norm bound}
The bound in spectral norm is easier to derive. We start from Lemma~\ref{refined_analysis_of_approximation_bound}:
\begin{align}
    \EX_{\DPP} \| \bm{X} - \Pi_{S}^{2}\bm{X} \|_{2}^{2} &= \sum_{S \subset [d], |S| = k}\Det(\bm{V}_{S,[k]})^{2}\| \bm{X} - \Pi_{S}\bm{X} \|_{2}^{2}\\
     & \leq \| \bm{E}\|_{2}^{2}\sum_{\substack{S \subset [d], |S| = k\\   \Det(\bm{V}_{S,[k]})^{2}>0}} \prod\limits_{\ell =1}^{k-1} \sigma_{\ell}^{2}(\bm{V}_{S,[k]})
\end{align}
By definition of $e_{k-1}$, it comes
\begin{align*}
    \EX_{\DPP} \| \bm{X} - \Pi_{S}^{2}\bm{X} \|_{2}^{2} & \leq  \| \bm{E}\|_{2}^{2}\sum_{\substack{S \subset [d], |S| = k\\ \Det(\bm{V}_{S,[k]})^{2}>0}} e_{k-1}(\Sigma(\bm{V}_{S,[k]})^{2}) \\
    & \leq (p-k+1)\,e_{k-1}(\Sigma(\bm{V}_{:,[k]})^{2})\,\| \bm{E}\|_{2}^{2}\\
    &= (p-k+1)\,k\,\| \bm{E}\|_{2}^{2},
\end{align*}
where we again used the double sum trick of \eqref{e:doubleSumTrick} and Lemma~\ref{sum_k_1_symmetric_poly_det_lemma}.

\subsection{Proof of Theorem~\ref{prop:p_eff_proposition}}
\label{s:proofOfEffectiveSparsitySetting}
We start with a lemma on evaluations of elementary symmetric polynomials on specific sequences. \rb{Add a sentence to describe the main steps of the proof}
\begin{lemma}\label{lemma_k_minus_half}
Let $\bm{\lambda}\in [0,1]^{k}$ such that
\begin{equation}
	\left\{
	\begin{array}{l}
	    \lambda_{1} \geq \dots \geq \lambda_{k},\\
		\label{eq:lambda_schur_hypo}
   		\Lambda = \sum\limits_{i=1}^{k} \lambda_{i} \geq k-1 + \frac{1}{\theta}.
	\end{array}
	\right.
\end{equation}
Then, with the functions $\phi,\psi$ introduced in Lemma~\ref{symmetric_schur_convex_lemma}, 
\begin{equation}
    \left\{
    \begin{array}{ll}
    	\psi(\bm{\lambda})  & \displaystyle{\geq \frac{1}{\theta},}\\
	\phi(\bm{\lambda})  & \leq k-1 +\theta.
    \end{array}
    \right.
\end{equation}


\end{lemma}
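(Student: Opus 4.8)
The plan is to reduce both inequalities to the evaluation of $\phi$ and $\psi$ at a single, explicitly computable vector, exploiting the Schur monotonicity established in Lemma~\ref{symmetric_schur_convex_lemma}. The key observation is that, among all vectors of $[0,1]^{k}$ with a prescribed sum $\Lambda\in[k-1,k]$, the one that majorizes all others is the \emph{saturated} vector obtained by pushing as much mass as possible to the top coordinates. Concretely, I would introduce
\begin{equation}
\bm{\mu} = (\underbrace{1,\dots,1}_{k-1},\, a), \qquad a := \Lambda - (k-1),
\end{equation}
which shares the sum $\Lambda$ with $\bm{\lambda}$. Since each $\lambda_{i}\le 1$ forces $\Lambda\le k$, we have $a\le 1$, while the hypothesis $\Lambda\ge k-1+\tfrac{1}{\theta}$ gives $a\ge\tfrac{1}{\theta}>0$.

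First I would check that $\phi$ and $\psi$ are well defined at $\bm{\lambda}$: because every coordinate is at most $1$ and $\sum_{i}\lambda_{i}\ge k-1+\tfrac{1}{\theta}$, the smallest coordinate obeys $\lambda_{k}\ge \Lambda-(k-1)\ge\tfrac{1}{\theta}>0$, so $\bm{\lambda}\in\mathbb{R}_{++}^{k}$ and $e_{k}(\bm{\lambda})>0$. Next I would establish the majorization $\bm{\lambda}\prec_{S}\bm{\mu}$. Both vectors are already sorted in decreasing order ($a\le 1$ puts the fractional entry last in $\bm{\mu}$), and they have equal total sum $\Lambda$; for every $m\le k-1$ one has $\sum_{i=1}^{m}\lambda_{i}\le m = \sum_{i=1}^{m}\mu_{i}$ because each $\lambda_{i}\le 1$. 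This is exactly the chain of partial-sum inequalities of Definition~\ref{def:majorization}, with equality at $m=k$.

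It then remains to apply Schur monotonicity and compute the two quantities at $\bm{\mu}$. Since $\psi$ is Schur-concave, $\psi(\bm{\lambda})\ge\psi(\bm{\mu}) = \prod_{i}\mu_{i} = a \ge \tfrac{1}{\theta}$, which is the first claim. Since $\phi$ is Schur-convex, $\phi(\bm{\lambda})\le\phi(\bm{\mu})$, and a direct count of the $(k-1)$-subsets of $\bm{\mu}$ (omit $a$, giving $1$; or omit one of the $k-1$ ones, each giving $a$) yields $e_{k-1}(\bm{\mu}) = 1+(k-1)a$ and $e_{k}(\bm{\mu}) = a$. Hence $\phi(\bm{\mu}) = (k-1)+\tfrac{1}{a}\le (k-1)+\theta$, using $a\ge\tfrac{1}{\theta}$, which is the second claim.

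The only genuinely delicate point is justifying the use of Schur monotonicity on the right domain: Proposition~\ref{schur_order_partial_property} is stated on a symmetric domain, so I would invoke it on $\mathbb{R}_{++}^{k}$, which is symmetric and convex (hence stable under the averaging transforms connecting $\bm{\lambda}$ to $\bm{\mu}$) and on which both $\phi$ and $\psi$ are smooth. Everything else is the elementary verification of the majorization and the short polynomial evaluation at $\bm{\mu}$; note in particular that the hypothesis $\theta>1$ is only needed to keep $a\in[\tfrac{1}{\theta},1]$ consistent, and the two bounds meet the boundary case $a=1$ (i.e.\ $\Lambda=k$, $\bm{\mu}=(1,\dots,1)$) exactly.
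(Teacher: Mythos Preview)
Your argument is correct and follows essentially the same route as the paper: you construct the same saturated vector $\bm{\mu}=(1,\dots,1,\Lambda-k+1)$ (the paper calls it $\hat{\bm{\lambda}}$), verify $\bm{\lambda}\prec_{S}\bm{\mu}$, and conclude via the Schur monotonicity of $\phi$ and $\psi$. Your treatment is in fact slightly more careful than the paper's, since you explicitly check $\lambda_{k}\ge\Lambda-(k-1)>0$ so that $e_{k}(\bm{\lambda})\neq 0$ and the Schur arguments apply on the positive orthant.
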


\begin{proof}
Let $\hat{\bm{\lambda}} = (1,...,1,\Lambda -k +1) \in \mathbb{R}^{k}$. Then
\begin{equation}
\left\{
    \begin{array}{ll}
        \lambda_{1} \leq \hat{\lambda}_{1} \\
        \lambda_{1} + \lambda_{2} \leq \hat{\lambda}_{1} + \hat{\lambda}_{2} \\
        ... \\
        \sum\limits_{i=1}^{k-1} \lambda_{i} \leq \sum\limits_{i=1}^{k-1} \hat{\lambda}_{i}\\
        \sum\limits_{i=1}^{k} \lambda_{i} = \sum\limits_{i=1}^{k} \hat{\lambda}_{i}
    \end{array}
\right.
\end{equation}
so that, according to Definition~\ref{def:majorization},
\begin{equation}
    \bm{\lambda} \prec_{S} \hat{\bm{\lambda}}.
\end{equation}
Lemma~\ref{symmetric_schur_convex_lemma} ensures the Schur-convexity of $\phi$ and the Schur-concavity of $\psi$, so that
$$\phi(\bm{\lambda})  \leq \phi(\hat{\bm{\lambda}}) = k-1 + \frac{1}{\Lambda - k +1} \leq k-1+\theta,$$
and
$$\psi(\bm{\lambda}) \geq \psi(\hat{\bm{\lambda}}) = \Lambda - k +1\geq \frac{1}{\theta}.$$
\end{proof}


\subsubsection{Frobenius norm bound}
Let $\bm{K} = \bm{V}_{k}^{}\bm{V}_{k}^{\Tran}$, and $\pi$ be a permutation of $[d]$ that reorders the leverage scores decreasingly,
\begin{equation}
   \ell_{\pi_{1}}^{k}\geq \ell_{\pi_{2}}^{k} \geq ... \geq \ell_{\pi_{d}}^{k}.
\end{equation}
By construction, $T_{p_{\eff}}=[\pi_{p_{\eff}},...,\pi_d]$ thus collects the indices of the smallest leverage scores. Finally, denoting by $\bm{\Pi} = (\delta_{i,\pi_{j}})_{(i,j) \in [d] \times [d]}$ the matricial representation of permutation $\pi$, we let
$$ \bm{K}^{\pi} = \bm{\Pi}\bm{K}\bm{\Pi}^{\Tran} = ((\bm{K}_{\pi_i,\pi_j}))_{1\leq i,j\leq d}.$$
The goal of the proof is to bound
\begin{equation}
	\EX_{\DPP} \bigg[ \| \bm{X} - \Pi_{S}^{\Fr}\bm{X} \|_{\Fr}^{2}| S \cap T_{p_{\eff}} = \emptyset \bigg]
	= \frac{{\sum} \Det(\bm{V}	_{S,[k]})^{2}\| \bm{X} - \Pi_{S}^{\Fr}\bm{X} \|_{\Fr}^{2}}{{\sum}\Det(\bm{V}_{S,[k]})^{2}},
\end{equation}
where both sums run over subsets $S\subset[d]$ such that $|S| = k$ and $S \cap T_{p_{\eff}(\theta)}=\emptyset$. For simplicity, let us write
\begin{eqnarray}
    	Z_{k,p_{\eff}(\theta)} & = & \mathlarger{\sum}\limits_{\substack{S \subset [d],|S| = k\\  S \cap T_{p_{\eff}(\theta)} = 		\emptyset}} \Det(\bm{V}_{S,[k]})^{2},\\
	Y_{k,p_{\eff}(\theta)} & = & \mathlarger{\sum}\limits_{\substack{S \subset 	[d],|S| = k\\  S \cap T_{p_{\eff}(\theta)} = \emptyset}} \Det(\bm{V}_{S,[k]})^{2} \Tr(\bm{Z}_{S}^{}\bm{Z}_{S}^{\Tran}).
\end{eqnarray}
Following steps \eqref{eq:inequality_approximation_error_by_EEPP} to \eqref{eq:majoration_by_sigma_k_1} of the previous proof, one obtains
\begin{align}
	\EX_{\DPP} \bigg[ \| \bm{X} - \Pi_{S}^{\Fr}\bm{X} \|_{\Fr}^{2} \: | \: S \cap T_{p_{\eff}} = \emptyset \bigg]
 	  & \leq \| \bm{X}-\Pi_k\bm{X}\|_{\Fr}^{2} + \bm{\sigma}_{k+1}^{2} \frac{Y_{k,p_{\eff}(\theta)}}{Z_{k,p_{\eff}(\theta)}} \label{eq:firstineq_EDDP_proj}.
\end{align}
By definition \eqref{e:defbeta} of the flatness parameter $\beta$,
\begin{equation}
	\label{eq:upperbound_sigmakplus1}
	\bm{\sigma}_{k+1}^{2}
	= \beta \frac{1}{d-k} \sum\limits_{j \geq k+1} \bm{\sigma}_{j}^{2}
	= \beta \frac{1}{d-k}\| \bm{X}-\Pi_k\bm{X}\|_{\Fr}^{2}.
\end{equation}
Then, it remains to upper bound the ratio $Y_{k,p_{\eff}(\theta)}/Z_{k,p_{\eff}(\theta)}$ in \eqref{eq:firstineq_EDDP_proj}, which is the important part of the proof. We first evaluate $Z_{k,p_{\eff}(\theta)}$ and then bound $Y_{k,p_{\eff}(\theta)}$.
%

The matrix $\bm{\Pi}\bm{V}_{k}\in\mathbb{R}^{d\times k}$ has its rows ordered by decreasing leverage scores. Let $\tilde{\bm{V}}^\pi_{p_{\eff}(\theta)} \in \mathbb{R}^{p_{\eff}(\theta)\times k} $ be the submatrix corresponding to the first $p_{\eff}(\theta)$ rows of $\bm{\Pi}\bm{V}_{k}$. Let also
$$\hat{\bm{V}}_{p_{\eff}(\theta)}^\pi = \begin{pmatrix}\tilde{\bm{V}}_{\pi,p_{\eff}(\theta)}\\ \bm{0}_{d-p_{\eff}(\theta),k}\end{pmatrix}$$
be padded with zeros. Then
\begin{equation}
	\bm{K}^{\pi}_{p_{\eff}(\theta)} = \left[
	\begin{array}{c|c}
		\tilde{\bm{V}}_{\pi,p_{\eff}(\theta)} \tilde{\bm{V}}_{\pi,p_{\eff}(\theta)}^{\Tran}& \bm{0} \\
		\hline
		\bm{0} & \bm{0}
	\end{array}
	\right] = \hat{\bm{V}}_{p_{\eff}(\theta)}^\pi (\hat{\bm{V}}^\pi_{p_{\eff}(\theta)})^{\Tran} \in\mathbb{R}^{d\times d}.
\end{equation}
%
The nonzero block of $\bm{K}^{\pi}_{p_{\eff}(\theta)}$ is a submatrix of $\bm{K}^\pi$, and $\rank \bm{K}^\pi = \rank \bm{K} = k$. Hence $\bm{K}^{\pi}_{p_{\eff}(\theta)}$ has at most $k$ nonzero eigenvalues
\begin{equation}
\lambda_{1} \geq \lambda_{2} \geq \dots \geq \lambda_{k}\geq 0 = \lambda_{k+1} = \dots =\lambda_d.
\end{equation}
Therefore,
\begin{equation}
	e_{k}(\Lambda(\bm{K}^{\pi}_{p_{\eff}(\theta)})) = \sum_{\substack{T \subset [d]\\ |T| = k}} ~\prod\limits_{j \in T} \lambda_{j} = \prod\limits_{i \in [k]} \lambda_{i}.
\end{equation}
Note moreover that
\begin{equation}
	\forall \ell \in [k], \:\: e_{\ell}(\Sigma(\hat{\bm{V}}_{\pi,p_{\eff}(\theta)})^{2}) = e_{\ell}(\Lambda(\bm{K}^{\pi}_{p_{\eff}(\theta)})).
\end{equation}
By construction,
\begin{align}
	\label{eq:equal_Zkpeff}
	Z_{k,p_{\eff}(\theta)} &= \mathlarger{\sum}\limits_{\substack{S \subset [d],|S| = k\\  S \cap T_{p_{\eff}(\theta)}	= \emptyset}} \Det(\bm{V}_{S,[k]})^{2}
   = \mathlarger{\sum}\limits_{S \subset [d],|S| = k} \Det\left[\left(\hat{\bm{V}}^\pi_{p_{\eff}(\theta)}\right)_{S,:}\right]^{2}
\end{align}
Then, Lemma~\ref{minors_symmetric_polynomials_lemma} yields
\begin{align}
  Z_{k,p_{\eff}(\theta)} &= e_{k}(\Sigma(\hat{\bm{V}}_{\pi,p_{\eff}(\theta)})^{2}) = e_{k}(\Lambda(\bm{K}	^{\pi}_{p_{\eff}(\theta)})) = \prod_{i \in [k]}\lambda_{i}.
\end{align}
Now we bound $Y_{k,p_{\eff}(\theta)}$. We use again principal angles and trigonometric identities. Using \eqref{eq:trace_tan_identity} and \eqref{eq:sumcostan} above, it holds 
\begin{align}
    	Y_{k,p_{\eff}(\theta)} & = \mathlarger{\sum}\limits_{\substack{S \subset [d],|S| = k\\  S \cap T_{p_{\eff}(\theta)} = 		\emptyset}} \Det(\bm{V}_{S,[k]})^{2} \Tr(\bm{Z}_{S}^{}\bm{Z}_{S}^{\Tran}) \nonumber\\
  	  & = \sum_{\substack{S \subset [d],|S| = k\\  S \cap T_{p_{\eff}(\theta)} = \emptyset}} \prod\limits_{i \in [k]} \cos^{2}	(\theta_{i}(S)) \sum_{j \in [k]}\tan^{2}(\theta_{j}(S))\nonumber\\
  	  & =\sum_{\substack{S \subset [d],|S| = k\\  S \cap T_{p_{\eff}(\theta)} = \emptyset}}  e_{k-1}\left(\Sigma(\bm{V}_{S,[k]})^{2}\right) -  k\, e_{k}\left(\Sigma(\bm{V}_{S,[k]}\right)^{2}\label{e:myTool}\\
      & = \sum_{S \subset [d],|S| = k} e_{k-1}\left(\Sigma\left(\left[\hat{\bm{V}}^\pi_{p_{\eff}(\theta)}\right]_{S,:}\right)^{2}\right) -  k\, e_{k}\left(\Sigma\left(\left[\hat{\bm{V}}^\pi_{p_{\eff}(\theta)}\right]_{S,:}\right)^{2}\right)
\end{align}
By Lemma~\ref{sum_k_1_symmetric_poly_det_lemma} applied to the matrix $\hat{\bm{V}}_{\pi,p_{\eff}(\theta)}$ combined to \eqref{eq:equal_Zkpeff}, we get
\begin{align}
	Y_{k,p_{\eff}(\theta)} & \leq (p_{\eff}(\theta)-k+1)e_{k-1}(\Sigma(\hat{\bm{V}}^\pi_{p_{\eff}(\theta)})^{2}) -  	k\, e_{k}(\Sigma(\hat{\bm{V}}^\pi_{p_{\eff}(\theta)})^{2}) \nonumber\\
    	& \leq (p_{\eff}(\theta)-k+1)e_{k-1}(\Lambda(\bm{K}^{\pi}_{p_{\eff}(\theta)})) -  k\, e_{k}(\Lambda(\bm{K}^{\pi}_{p_{\eff}	(\theta)})) \nonumber\\
    	& \leq \bigg( (p_{\eff}(\theta)-k+1)\phi(\tilde{\bm{\lambda}}) -  k \bigg) Z_{k,p_{\eff}(\theta)}. \label{eq:majore_Y}
\end{align}
where $\tilde{\bm{\lambda}} = (1,\dots,1,\Tr(\bm{K}^\pi_{p_{\eff}(\theta)})-k+1)\in\mathbb{R}^{k}$, see Lemma~\ref{lemma_k_minus_half}. Now, as in the proof of Lemma~\ref{lemma_k_minus_half},
$$ \phi(\tilde{\bm{\lambda}}) = k-1+\frac{1}{\Tr(\bm{K}^\pi_{p_{\eff}(\theta)})-k+1} \leq k-1+\theta$$
by \eqref{eq:leverage_score_decreasing_hypo}. Thus \eqref{eq:majore_Y} yields
\begin{equation}\label{eq:bound_tan_in_expectation_eff}
 	\frac{Y_{k,p_{\eff}(\theta)}}{Z_{k,p_{\eff}(\theta)}} \leq (p_{\eff}(\theta)-k+1) (k-1+\theta) -  k \leq (p_{\eff}(\theta)-k+1) (k-1+\theta).
\end{equation}
Finally, plugging \eqref{eq:bound_tan_in_expectation_eff} and  \eqref{eq:upperbound_sigmakplus1} in \eqref{eq:firstineq_EDDP_proj} concludes the proof of \eqref{eq:boundFrobenius_assumpt2}.

\subsubsection{Spectral norm bound}
We proceed as for the Frobenius norm, using the notation of Section~\ref{s:frobNormThm19}. Lemma~\ref{refined_analysis_of_approximation_bound}, Equations~\eqref{e:myTool} and \eqref{eq:bound_tan_in_expectation_eff} yield
\begin{align*}
	\EX_{\DPP} \bigg[ \| \bm{X} - \Pi_{S}^{2}\bm{X} \|_{2}^{2}\: &| \: S \cap T_{p_{\eff}} = \emptyset \bigg] \\
   &= Z_{k,p_{\eff}(\theta)}^{-1} \mathlarger{\sum}\limits_{\substack{S \subset [d],|S| = k\\  S \cap T_{p_{\eff}(\theta)} = \emptyset}}\Det(\bm{V}	_{S,[k]})^{2}\| \bm{X} - \Pi_{S}^{2}\bm{X} \|_{2}^{2},\\
     &  \leq Z_{k,p_{\eff}(\theta)}^{-1}  \| \bm{X}-\Pi_k\bm{X}\|_{2}^{2} \mathlarger{\sum}_{\substack{S \subset [d], |S| = k\\ S \cap T_{p_{\eff}(\theta)} = \emptyset,\\ \Det(\bm{V}_{S,[k]})^{2}>0}} 	\prod\limits_{\ell = 1}^{k-1} \sigma_{\ell}^{2}(\bm{V}_{S,[k]})\\
     	& \leq Z_{k,p_{\eff}(\theta)}^{-1} \| \bm{X}-\Pi_k\bm{X}\|_{2}^{2}\mathlarger{\sum}_{\substack{S \subset [d], |S| = k\\ S \cap T_{p_{\eff}(\theta)} = \emptyset\\ \Det(\bm{V}_{S,[k]})^{2}>0}} 	e_{k-1}(\Sigma(\bm{V}_{S,[k]})^{2})\\
     & \leq \frac{Y_{k,p_{\eff}(\theta)}}{Z_{k,p_{\eff}(\theta)}} \| \bm{X}-\Pi_k\bm{X}\|_{2}^{2} \\
   	 & \leq (p_{\eff}(\theta)-k+1)(k-1+\theta)\| \bm{X}-\Pi_k\bm{X}\|_{2}^{2},
\end{align*}
which is the claimed spectral bound.

\subsubsection{Bounding the probability of rejection}
Still with the notation of Section~\ref{s:frobNormThm19}, \eqref{eq:equal_Zkpeff} yields
\begin{align}
    \mathbb{P}(S \cap T_{p_{\eff}(\theta)} = \emptyset) &= \sum\limits_{\substack{S \subset [d], |S| = k\\ S \cap T_{p_{\eff}(\theta)} = \emptyset}} \Det(\bm{V}_{S,[k]})^{2} \nonumber\\
    &= e_k(\bm{K}^\pi_{p_{\eff}(\theta)})\\
    &= \prod_{i \in [k]}\lambda_{i} \nonumber\\
    & = \psi(\hat{\bm{\lambda}}).
\end{align}
Lemma~\ref{lemma_k_minus_half} concludes the proof since
\begin{equation}
	\psi(\hat{\bm{\lambda}}) \geq \frac{1}{\theta}.
\end{equation}

\subsection{Proof of Proposition~\ref{prop:relaxed_sparsity_prediction_under_dpp}}
First, Proposition~\ref{prop:sparse_regression_bound} gives
\begin{equation}
    	\mathcal{E}(\bm{w}_{S})  \leq \frac{(1+\max\limits_{i \in [k]} \tan^{2}\theta_{i}(S))	\|\bm{w}^{*}\|^{2}\sigma_{k+1}^{2}}{N} + \frac{k}{N}\nu.
\end{equation}
Now \eqref{eq:trace_tan_relationship} further gives
\begin{equation}
	\max\limits_{i \in [k]} \tan^{2}\theta_{i}(S) \leq \sum\limits_{i \in [k]} \tan^{2}\theta_{i}(S) = \Tr(\bm{Z}_{S}^{}\bm{Z}_{S}	^{\Tran}).
\end{equation}
The proof now follows the same lines as for the approximation bounds. First, following the lines of Section~\ref{s:proofOfExactSparsitySetting}, \rb{This is quite elusive, maybe needs one or two more sentences}, we straightforwardly bound
\begin{equation}
	\EX_{\DPP} \sum\limits_{i \in [k]} \tan^{2}(\theta_{i}(S)) = \sum\limits_{S \subset [d], |S| = k} \quad \prod\limits_{i \in 	[k]} 	\cos^{2}(\theta_{i}(S)) \sum_{j \in [k]}\tan^{2}(\theta_{j}(S))
\end{equation}
and obtain \eqref{eq:prediction_bound_dpp}. In a similar vein, the same lines as in  Section~\ref{s:proofOfEffectiveSparsitySetting} allow bounding
\begin{equation}
	\EX_{\DPP} \bigg[ \sum\limits_{i \in [k]}  \tan^{2}(\theta_{i}(S)) \: | \: S \cap T_{p_{\eff}} = \emptyset \bigg] = 			\sum\limits_{\substack{S \subset [d],|S| = k\\  S \cap T_{p_{\eff}(\theta)} = \emptyset}} \quad \prod\limits_{i \in [k]} 	\cos^{2}	(\theta_{i}(S)) \sum_{j \in [k]}\tan^{2}(\theta_{j}(S).
\end{equation}
and yield \eqref{eq:prediction_bound_dpp2}.

\section{Generating orthogonal matrices with prescribed leverage scores}
\label{app:framebuilding}

In this section, we describe an algorithm that samples a random orthonormal matrix with a prescribed profile of $k$-leverage scores. This algorithm was used to generate the matrices $\bm{F} = \bm{V}_{k}^{\Tran} \in \mathbb{R}^{k \times d}$ for the toy datasets of Section~\ref{s:numexpesection}. The orthogonality constraint can be expressed as a condition on the spectrum of the matrix $\bm{K} = \bm{V}_{k}^{}\bm{V}_{k}^{\Tran}$, namely $\Sp(\bm{K}) \subset \{0,1\}$. On the other hand, the constraint on the $k$-leverage scores can be expressed as a condition on the diagonal of $\bm{K}$. Thus, the problem of generating an orthogonal matrix with a given profile of $k$-leverage scores boils down to enforcing conditions on the spectrum and the diagonal of a symmetric matrix $\bm{K}$.

\subsection{Definitions and statement of the problem}
 We denote by $(\bm{f}_{i})_{i \in [d]}$ the columns of the matrix $\bm{F}$. For $n \in \mathbb{N}$, we write $\mathbb{1}_{n}$ the vector containing ones living in $\mathbb{R}^{n}$, and $\mathbb{0}_{n}$ the vector containing zeros living in $\mathbb{R}^{n}$. We say that the vector $\bm{u} \in \mathbb{R}^{n}$ interlaces on $\bm{v} \in \mathbb{R}^{n}$ and we denote $$\bm{u} \sqsubseteq \bm{v}$$
if $u_{n} \leq v_{n}$ and $\forall i \in [1:n-1], \: v_{i+1} \leq u_{i} \leq v_{i}$.
\begin{definition}

\begin{figure}[!ht]
    \centering
    \begin{tikzpicture}[scale = 0.3]
\filldraw
(-12,0) circle (0.1pt) node[align=center, below] {$\dots$} --
(-9.5,0) circle (6pt) node[align=left,   below] {$v_{i+2}$} --
(-5.8,0) circle (6pt) node[align=center, below] {$u_{i+1}$} --
(-3,0) circle (6pt) node[align=left,   below] {$v_{i+1}$} --
(0,0) circle (6pt) node[align=left,   below] {$u_{i}$} --
(1.5,0) circle (6pt) node[align=left,   below] {$v_{i}$} --
(5.4,0) circle (6pt) node[align=center, below] {$u_{i-1}$} --
(9.9,0) circle (6pt) node[align=left,   below] {$v_{i-1}$} --
(12,0) circle (0.1pt) node[align=right,  below] {$\dots$};
\end{tikzpicture}
    \caption{Illustration of the interlacing of $\bm{u}$ on $\bm{v}$.}
    \label{f:Interlacing_eigenvalues}
\end{figure}
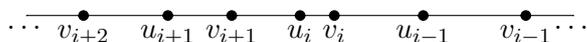

Let $k, d \in \mathbb{N}$, with $k \leq d$. Let $\bm{F} \in \mathbb{R}^{k \times d}$ be a full rank matrix\footnote{A \emph{frame}, using the definitions of \citep{FiMiPo11} and \citep{FMPS11}.}.
Within this section, we denote $\bm{\sigma}^2 = (\sigma_{1}^2, \sigma_{2}^2, \dots ,\sigma_{k}^2)$ the squares of the nonvanishing singular values of the matrix $\bm{F}$, and $\bm{\ell} = (\ell_{1}=\|\bm{f}_{1}\|^{2}, \ell_{2}=\|\bm{f}_{2}\|^{2}, \dots, \ell_{d}=\|\bm{f}_{d}\|^{2})$ are the squared norms of the columns of $\bm{F}$, which we assume to be ordered decreasingly:
$$\ell_{1} \geq \ell_{2} \geq \dots \geq \ell_{d}.$$
 When $\bm{F}$ is orthogonal, we can think of $\bm{\ell}$ as a vector of leverage scores.
\end{definition}

We are interested in the problem of constructing an orthogonal matrix given its leverage scores.
\begin{problem}\label{prob:orthogonal_frame_existence}
Let $k,d \in \mathbb{N}$, with $k \leq d$, and let $\bm{\ell} \in \mathbb{R}_{+}^{d}$ such that $\sum\limits_{i =1}^{d} \ell_{i} = k$. Build a matrix $\bm{F} \in \mathbb{R}^{k\times d}$ such that
\begin{equation}
\Sp(\bm{F}^{\Tran}\bm{F}) = [\mathbb{1}_{k},\mathbb{0}_{d-k}],
\end{equation}
and
\begin{equation}
\Diag(\bm{F}^{\Tran}\bm{F}) = \bm{\ell}.
\end{equation}
\end{problem}
We actually consider here the generalization of Problem~\ref{prob:general_frame_existence} to an arbitrary spectrum.

\begin{problem}\label{prob:general_frame_existence}
Let $k,d \in \mathbb{N}$, with $k \leq d$, and let $\bm{\ell} \in \mathbb{R}_{+}^{d}$ such that $\sum\limits_{i =1}^{d} \ell_{i} = \sum\limits_{i = 1}^{k} \sigma_{i}^2$. Build a matrix $\bm{F} \in \mathbb{R}^{k\times d}$ such that
\begin{equation}
\Sp(\bm{F}^{\Tran}\bm{F}) = [\bm{\sigma}^2,\mathbb{0}_{d-k}] =:\bm{\hat\sigma}^2
\end{equation}
and
\begin{equation}
\Diag(\bm{F}^{\Tran}\bm{F}) = \bm{\ell}.
\end{equation}
\end{problem}

Denote by
\begin{equation}
\mathcal{M}_{(\bm{\ell},\bm{\sigma})} =  \{ \bm{M} \in \mathbb{R}^{d\times d}\text{ symmetric } \big/~  \Diag(\bm{M}) = \bm{\ell}, ~\Sp(\bm{M}) = \bm{\hat{\sigma}}^2\}.
\end{equation}
The non-emptiness of $\mathcal{M}_{(\bm{\ell},\bm{\sigma})}$ is determined by a majorization condition between $\bm{\ell}$ and $\hat{\bm{\sigma}}$, see Appendix~\ref{app:majorization} for definitions. More precisely, we have the following theorem.
\begin{theorem}[Schur-Horn]\label{thm:majorization_equivalence}
Let $k,d \in \mathbb{N}$, with $k \leq d$, and let $\bm{\ell} \in \mathbb{R}_{+}^{d}$. We have
\begin{equation}\label{eq:nonemptiness_majorization_equivalence}
\mathcal{M}_{(\bm{\ell},\bm{\sigma})} \neq \emptyset \Leftrightarrow \bm{\ell} \prec_{S} \hat{\bm{\sigma}}.
\end{equation}
\end{theorem}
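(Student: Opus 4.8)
The statement is the classical Schur--Horn theorem, and the plan is to prove the two implications separately, the forward one being short and the reverse one carrying all the work.

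\medskip
\noindent\textbf{Necessity ($\Rightarrow$).} Suppose $\bm{M} \in \mathcal{M}_{(\bm{\ell},\bm{\sigma})}$. I would diagonalize $\bm{M} = \bm{Q}\,\Diag(\hat{\bm{\sigma}}^2)\,\bm{Q}^{\Tran}$ with $\bm{Q}$ orthogonal and read off the diagonal entries: $\ell_i = M_{ii} = \sum_{j=1}^{d} Q_{ij}^2\,(\hat{\sigma}^2)_j$. Setting $P_{ij} = Q_{ij}^2$ produces a matrix with nonnegative entries whose rows and columns each sum to one, because the rows and columns of $\bm{Q}$ are orthonormal; that is, $\bm{P}$ is doubly stochastic, and $\bm{\ell} = \bm{P}\,\hat{\bm{\sigma}}^2$. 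The doubly stochastic characterization of majorization recalled in Appendix~\ref{app:majorization} (the Hardy--Littlewood--P\'olya theorem) then immediately gives $\bm{\ell} \prec_S \hat{\bm{\sigma}}^2$.

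\medskip
\noindent\textbf{Sufficiency ($\Leftarrow$).} This is the main obstacle, and I would handle it by an explicit construction rather than an abstract convexity argument. Assume $\bm{\ell} \prec_S \hat{\bm{\sigma}}^2$. The key combinatorial input, which I would cite from \citet{MaOlAr11} rather than reprove, is that $\bm{\ell} \prec_S \hat{\bm{\sigma}}^2$ is equivalent to $\bm{\ell}$ being reachable from $\hat{\bm{\sigma}}^2$ through finitely many \emph{$T$-transforms}, each replacing two coordinates $(x_i,x_j)$ by the convex combinations $\big(\lambda x_i + (1-\lambda)x_j,\ (1-\lambda)x_i + \lambda x_j\big)$ for some $\lambda\in[0,1]$. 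I would then realize each $T$-transform by conjugation with a plane (Givens) rotation, starting from $\bm{M}_0 = \Diag(\hat{\bm{\sigma}}^2)$. Given a symmetric $\bm{M}$ with diagonal $\bm{d}$, a rotation $\bm{G}$ in the $(i,j)$-plane by angle $\theta$ preserves the spectrum and the block trace $M_{ii}+M_{jj}$, while the entry $(\bm{G}^{\Tran}\bm{M}\bm{G})_{ii} = \cos^2\theta\, M_{ii} + \sin^2\theta\, M_{jj} + \sin(2\theta)\, M_{ij}$ varies continuously from $M_{ii}$ at $\theta=0$ to $M_{jj}$ at $\theta=\pi/2$. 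Since the target value $\lambda M_{ii} + (1-\lambda)M_{jj}$ lies between $M_{ii}$ and $M_{jj}$, the intermediate value theorem furnishes an angle $\theta$ realizing exactly the prescribed $T$-transform on the two diagonal entries, the companion entry being forced by trace conservation and all other diagonal entries left untouched. Iterating over the finite $T$-transform decomposition turns $\Diag(\hat{\bm{\sigma}}^2)$ into a symmetric matrix with spectrum $\hat{\bm{\sigma}}^2$ and diagonal $\bm{\ell}$, i.e.\ an element of $\mathcal{M}_{(\bm{\ell},\bm{\sigma})}$.

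\medskip
The delicate point I would stress is that this chaining remains valid even after earlier rotations have created nonzero off-diagonal entries: only the continuity of $\theta \mapsto (\bm{G}^{\Tran}\bm{M}\bm{G})_{ii}$ and the invariance of the $2\times 2$ trace are invoked, and neither requires the relevant off-diagonal block to vanish. Consequently the rotations composing the successive $T$-transforms can be applied one after another without destroying the diagonal already obtained. The rest is routine bookkeeping, namely invoking the $T$-transform decomposition of $\bm{\ell} \prec_S \hat{\bm{\sigma}}^2$ in the correct order, for which I would defer to \citet{MaOlAr11}.
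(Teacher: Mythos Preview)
Your proof is correct. Both directions are standard: the forward implication is Schur's orthostochastic argument, and for the reverse you implement the $T$-transform decomposition of majorization by successive Givens conjugations, invoking only continuity in $\theta$ and invariance of the $2\times 2$ trace. The point you flag about off-diagonal entries is exactly the right one, and your intermediate-value argument handles it cleanly.

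The paper, however, does not prove this theorem at all: it is quoted as the classical Schur--Horn theorem, with a citation to \cite{Hor54} and the remark that Horn's original argument for the reverse implication is non-constructive. So there is no ``paper's own proof'' to compare against. That said, your constructive sufficiency proof is essentially the mathematical content of the \cite{DhHeSuTr05} Givens-rotation algorithm that the paper surveys in the very next subsection (Figure~\ref{f:Algo_Givens}): their greedy scheme fixes one leverage score at a time by a plane rotation, which is precisely one $T$-transform per step. In effect, your argument supplies the correctness proof for that algorithm and thereby a constructive proof of Schur--Horn, which is more than the paper itself provides here.
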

The proof by \cite{Hor54} of the reciprocal in Theorem~\ref{thm:majorization_equivalence} is non constructive. In the next section, we survey algorithms that output an element of $\mathcal{M}_{(\bm{\ell},\bm{\sigma})}$.

\subsection{Related work}

Several articles (\citealp{RaMa14}, \citealp{MaMaYu14}) in the randomized linear algebra community propose the use of non Gaussian random matrices to generate matrices with a fast decreasing profile of leverage scores (so-called \emph{heavy hitters}) without controlling the exact profile of the leverage scores.

\cite{DhHeSuTr05} showed how to generate matrices from $\mathcal{M}_{(\bm{\ell},\bm{\sigma})}$ using Givens rotations; see the algorithm in Figure~\ref{f:Algo_Givens}. The idea of the algorithm is to start with a frame with the exact spectrum and repeatedly apply orthogonal matrices (Lines 4 and 6 of Figure~\ref{f:Algo_Givens}) that preserve the spectrum while changing the leverage scores of only two columns, setting one of their leverage scores to the desired value. The orthogonal matrices are the so-called \emph{Givens rotations}.
\begin{definition}
Let $\theta \in [0, 2\pi[$ and $i,j \in [d]$. The Givens rotation $\bm{G}_{i,j}(\theta) \in \mathbb{R}^{d \times d}$ is defined by
\begin{equation}
\bm{G}_{i,j}(\theta)  = \begin{bmatrix}
    1 & & & & & & & & & & \\
     & \ddots  & & & & & & & & & \\
     &  & 1 & & & & & & & &  \\
     &  &  &  \cos(\theta)&  &  & & -\sin(\theta)& & & \\
     &  &  &  & 1 &   & & & & & \\
     &  &  &  &  & \ddots  & & & & & \\
     &  &  &  &  &   & 1 & & & & \\
     &  &  &  \sin(\theta)&  &  & & \cos(\theta)& & & \\
     &  & & & & & & &1 & & \\
     &   & & & & & & & &\ddots & \\
     &  &  & & & & & & & & 1 \\
    \end{bmatrix}.
\end{equation}

\end{definition}

\begin{figure}[!ht]
\centerline{
\begin{algorithm}{$\Algo{GivensAlgorithm}\big(\bm{\ell},\bm{\sigma})$}\label{algo_givens}
\Aitem $\bm{F} \longleftarrow \left[
\begin{array}{c|c}
\Diag(\bm{\sigma}) & \bm{0}
\end{array}
\right] \in \mathbb{R}^{k \times d}$
\Aitem \While $\exists i,j,k \in [d]$, $i < k <j: \|\bm{f}_{i}\|^{2} < \ell_{i}, \|\bm{f}_{k}\|^{2} = \ell_{k} , \|\bm{f}_{j}\|^{2} > \ell_{j}$
\Aitem \mt \If $\ell_{i} - \|\bm{f}_{i}\|^{2} \leq \|\bm{f}_{j}\|^{2} - \ell_{j}$
\Aitem \mtt $\bm{F}\setto \bm{G}_{i,j}(\theta)\bm{F}$, where  $\|(\bm{G}_{i,j}(\theta)\bm{F})_{i}\|^{2} = \ell_{i}$.
\Aitem \mt \Else
\Aitem \mtt $\bm{F}\setto \bm{G}_{i,j}(\theta)\bm{F}$, where $\|(\bm{G}_{i,j}(\theta)\bm{F})_{j}\|^{2} = \ell_{j}$,
\Aitem \Return $\bm{F} \in \mathbb{R}^{k \times d}$.
\end{algorithm}
}
\caption{The pseudocode of the algorithm proposed by \cite{DhHeSuTr05} for generating a matrix given its leverage scores and spectrum by successively applying Givens rotations.}
\label{f:Algo_Givens}
\end{figure}

Figure~\ref{f:highly_structured_matrix} shows the output of the algorithm in Figure~\ref{f:Algo_Givens}, for the input $(\bm{\ell},\bm{\sigma}) = (\bm{\ell},\mathbb{1})$ for three different values of $\bm{\ell}$. The main drawbacks of this algorithm are first that it is deterministic, so that it outputs a unique matrix $\bm{F}$ for a given input $(\bm{\ell},\bm{\sigma})$, and second that the output is a highly structured matrix, as observed on Figure~\ref{f:highly_structured_matrix}.

 We propose an algorithm that outputs random, more ``generic" matrices belonging to $\mathcal{M}_{(\bm{\ell},\bm{\sigma})}$. This algorithm is based on a parametrization of $\mathcal{M}_{(\bm{\ell},\bm{\sigma})}$ using the collection of spectra of all minors of $\bm{F} \in \mathcal{M}_{(\bm{\ell},\bm{\sigma})}$. This parametrization was introduced by \cite{FMPS11}, and we recall it in Section~\ref{s:gt}. For now, let us simply look at  Figure~\ref{f:gt_generator_outputs}, which displays a few outputs of our algorithm for the same input as in Figure~\ref{f:highly_structured_matrix:a}. We now obtain different matrices for the same input $(\bm{\ell},\bm{\sigma})$, and these matrices are less structured than the output of Algorithm~\ref{f:Algo_Givens}, as required.

\subsection{The restricted Gelfand-Tsetlin polytope}
\label{s:gt}
\begin{definition}
Recall that $(\bm{f}_{i})_{i \in [d]}$ are the columns of the matrix $\bm{F}\in\mathbb{R}^{k\times d}$. For $r \in [d]$, we further define
\begin{equation}
\bm{F}_{r} = \bm{F}_{:,[r]} \in \mathbb{R}^{k \times r},
\end{equation}
\begin{equation}
\bm{C}_{r} = \sum\limits_{i \in [r]} \bm{f}_{i}\bm{f}_{i}^{\Tran} \in \mathbb{R}^{k \times k},
\end{equation}
\begin{equation}
\bm{G}_{r} = \bm{F}_{r}^{\Tran}\bm{F}_{r} \in \mathbb{R}^{r \times r}.
\end{equation}
Furthermore, we note for $r \in [d]$,
\begin{equation}
    (\lambda_{r,i})_{i \in [k]} = \Lambda(\bm{C}_{r}),
\end{equation}
\begin{equation}
    (\tilde{\lambda}_{r,i})_{i \in [r]} = \Lambda(\bm{G}_{r}).
\end{equation}
The $(\lambda_{r,i})_{i \in [k]}$, $r\in [d]$, are called the outer eigensteps of $\bm{F}$, and we group them in the matrix $$\Lambda^{\text{out}}(\bm{F}) = (\lambda_{r,i})_{i \in [k],r \in [d]} \in \mathbb{R}^{k \times d}.$$ Similarly, the $(\tilde{\lambda}_{r,i})_{i \in [r]}$ are called inner eigensteps of $\bm{F}$.
\end{definition}

\begin{figure}[!ht]
    \centering
\subfloat[]{\includegraphics[width= 0.25\textwidth]{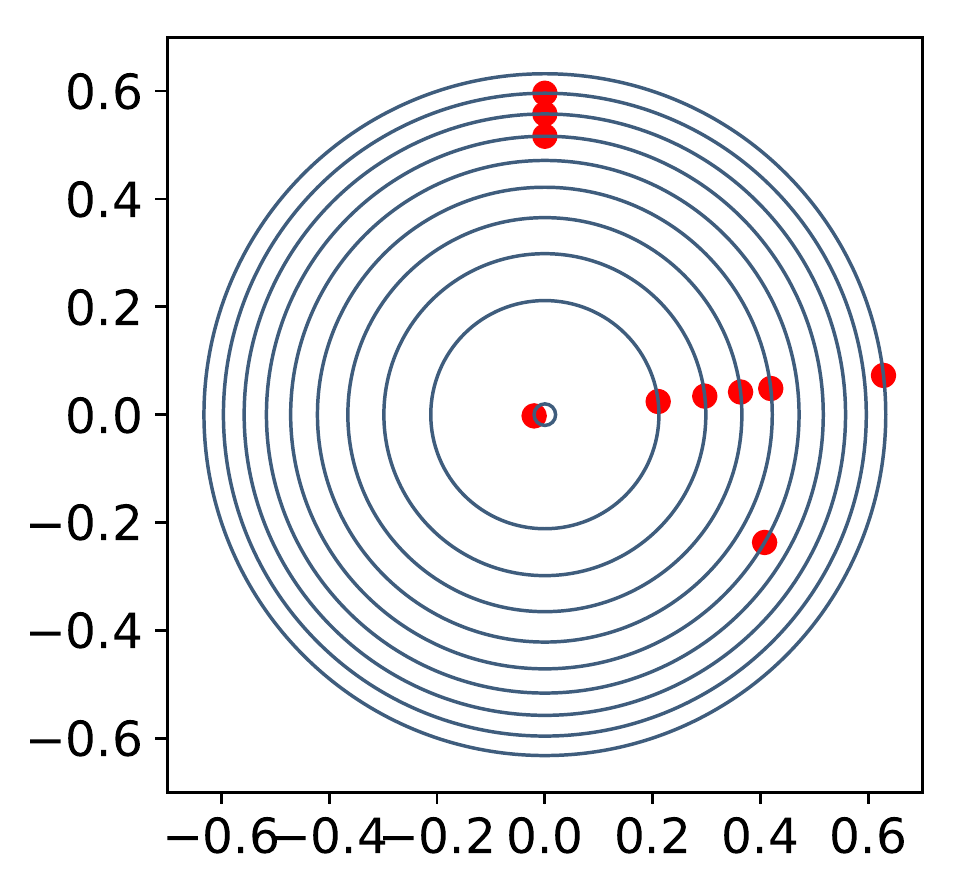}
\label{f:highly_structured_matrix:a}}
\subfloat[]{\includegraphics[width= 0.25\textwidth]{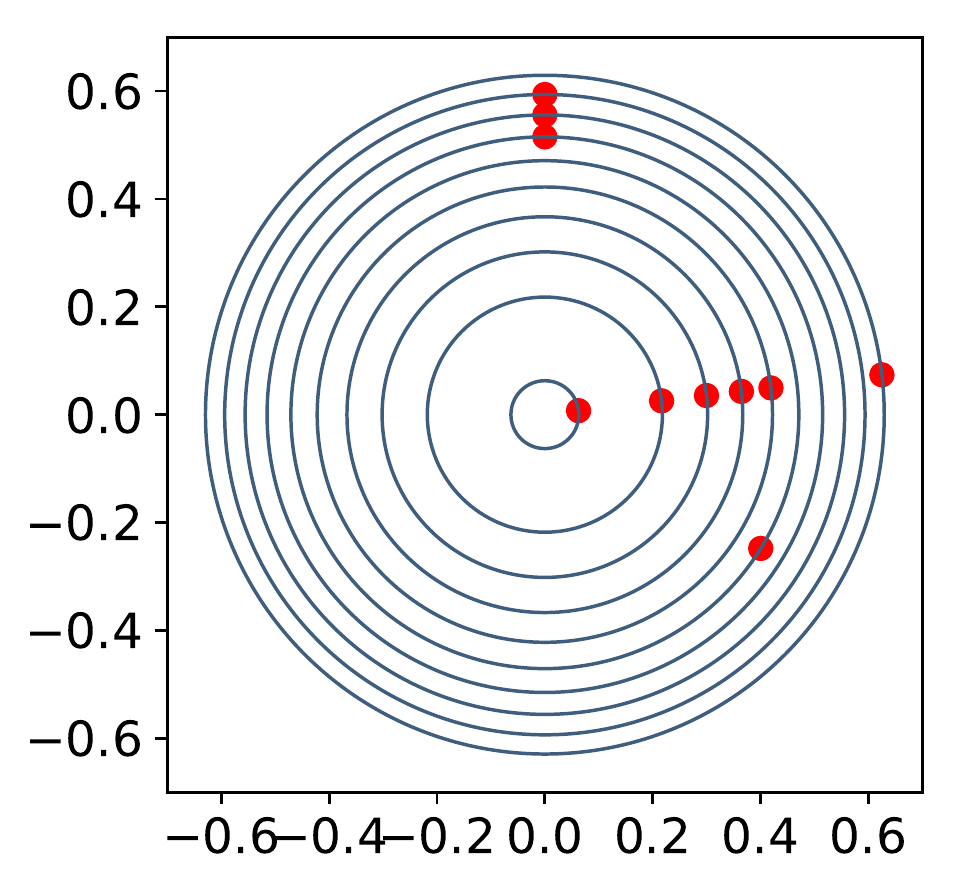}}
\subfloat[]{\includegraphics[width= 0.25\textwidth]{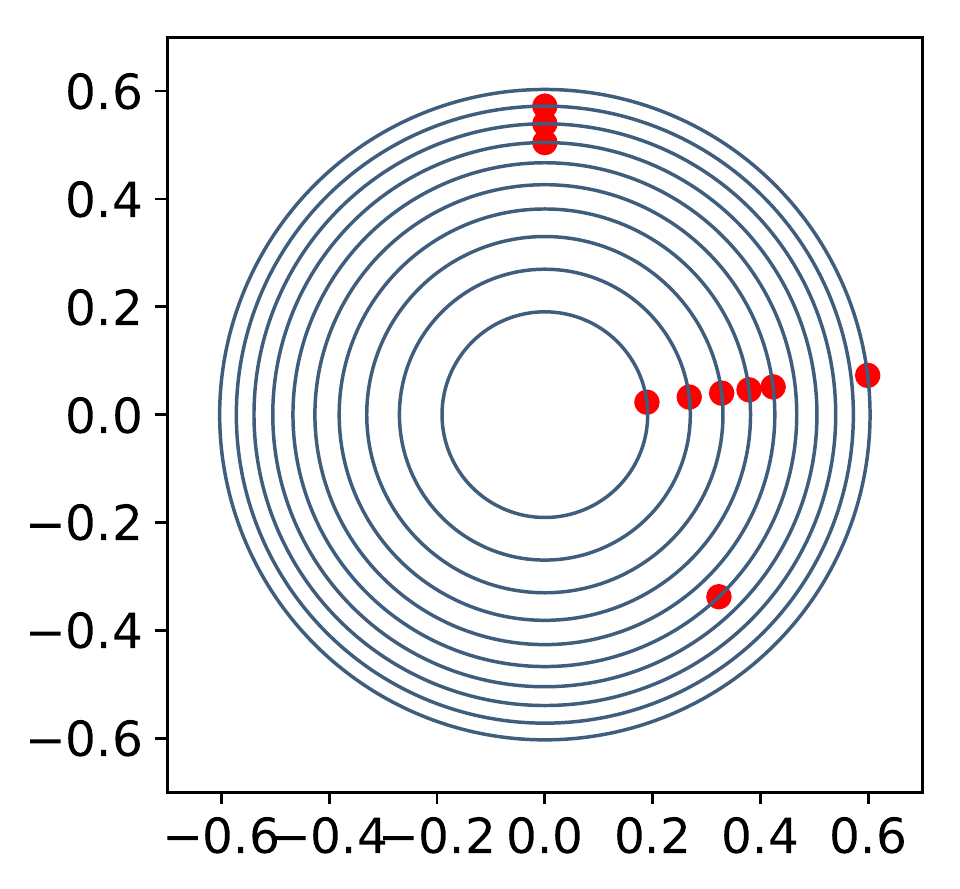}}
\caption{The output of the algorithm in Figure~\ref{f:Algo_Givens} for $k=2, \: d = 10$, $\bm{\sigma} = (1,1)$, and three different values of $\bm{\ell}$ that each add to $k$. Each red dot has coordinates a column of $\bm{F}$. The blue circles have for radii the prescribed $(\sqrt{\ell_i})$.
\label{f:highly_structured_matrix}}
\end{figure}

\begin{figure}[!ht]
    \centering
\subfloat[]{\includegraphics[width= 0.25\textwidth]{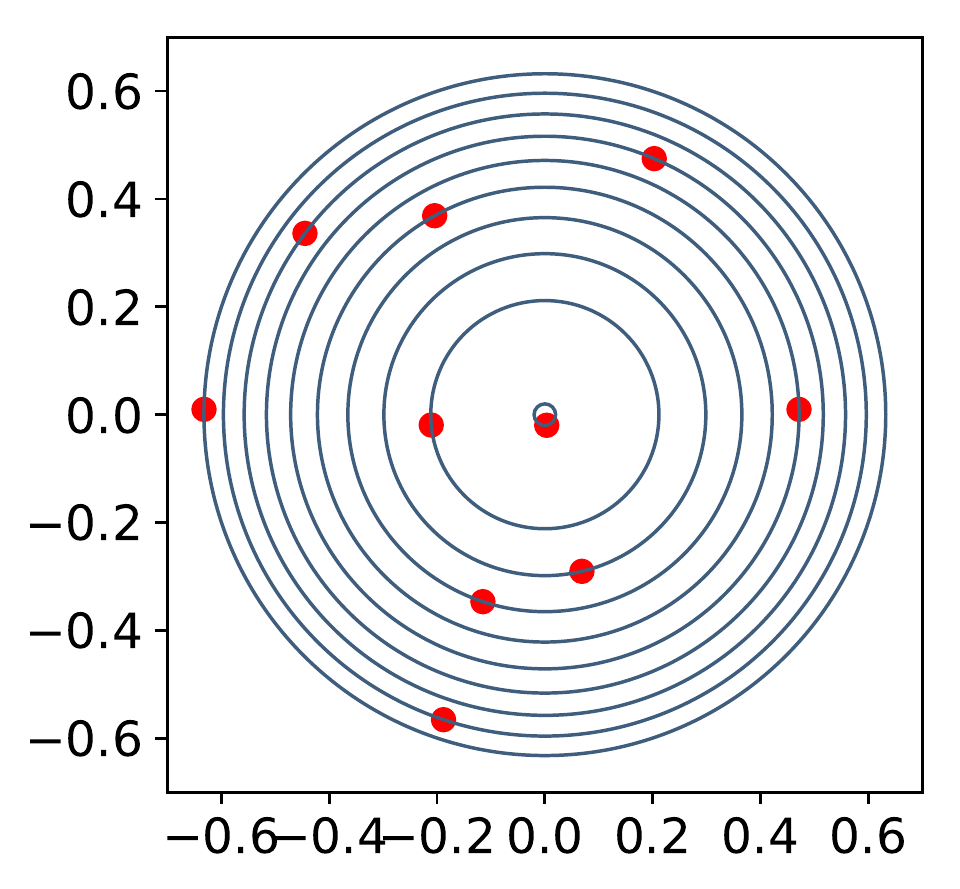}}
\subfloat[]{\includegraphics[width= 0.25\textwidth]{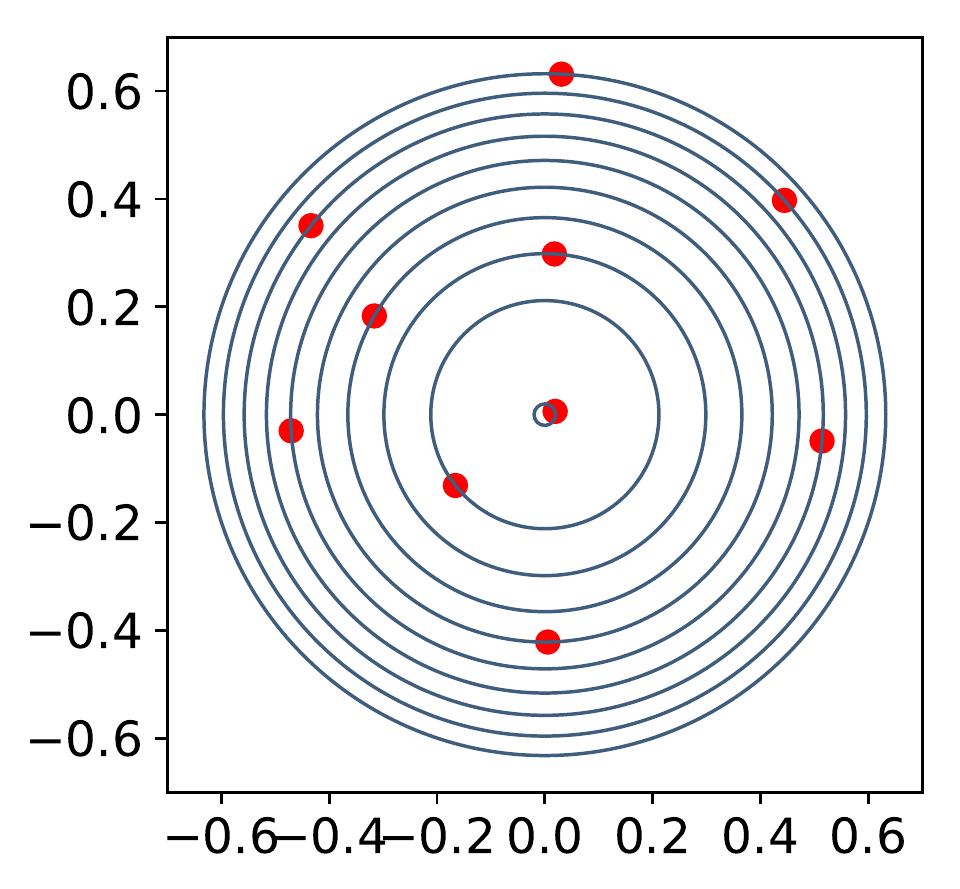}}
\subfloat[]{\includegraphics[width= 0.25\textwidth]{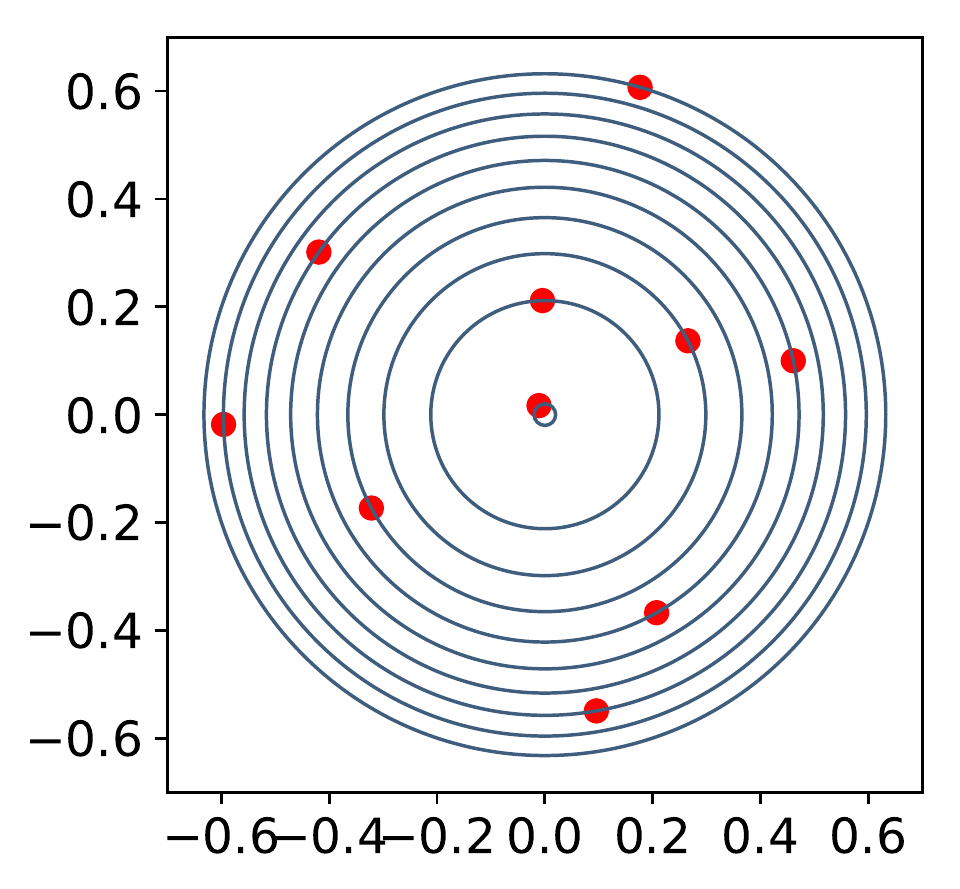}}
\subfloat[]{\includegraphics[width= 0.25\textwidth]{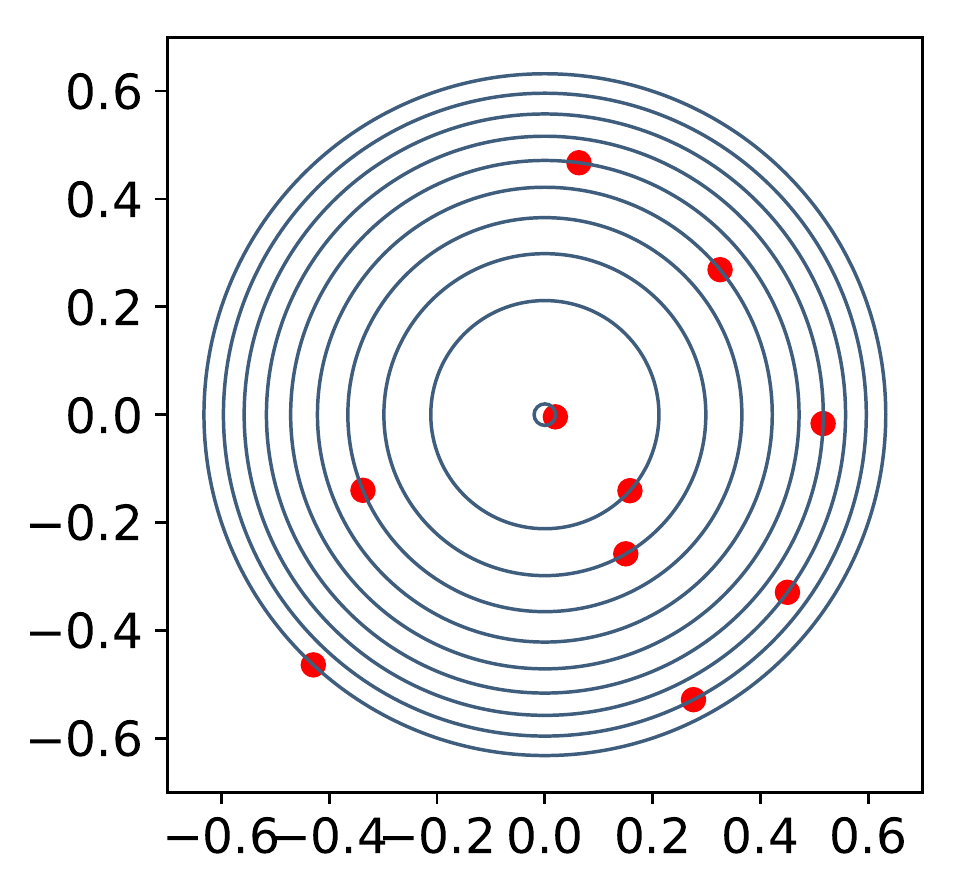}}\\
\subfloat[]{\includegraphics[width= 0.25\textwidth]{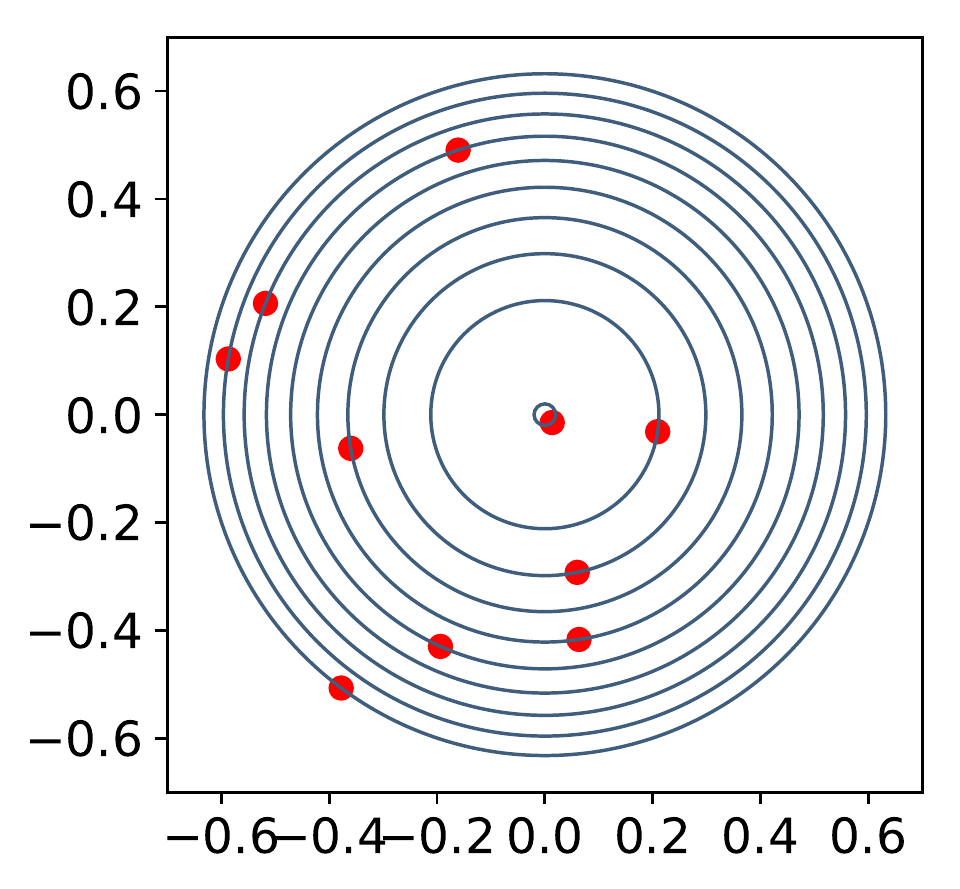}}
\subfloat[]{\includegraphics[width= 0.25\textwidth]{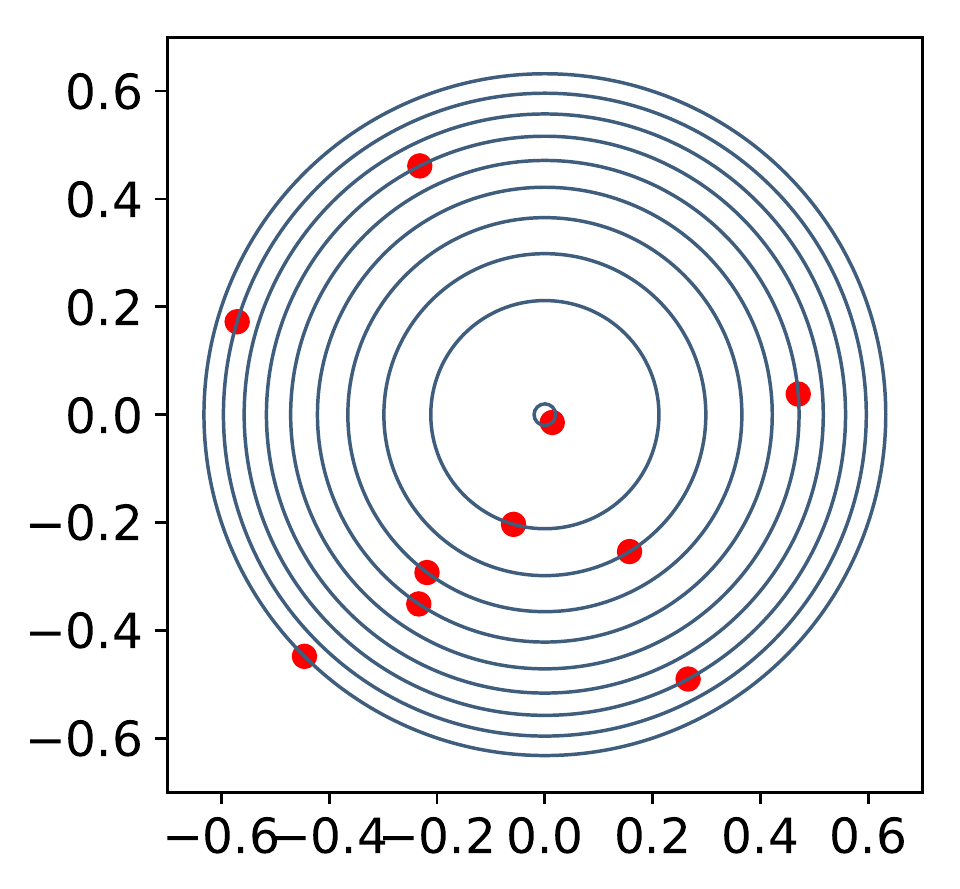}}
\subfloat[]{\includegraphics[width= 0.25\textwidth]{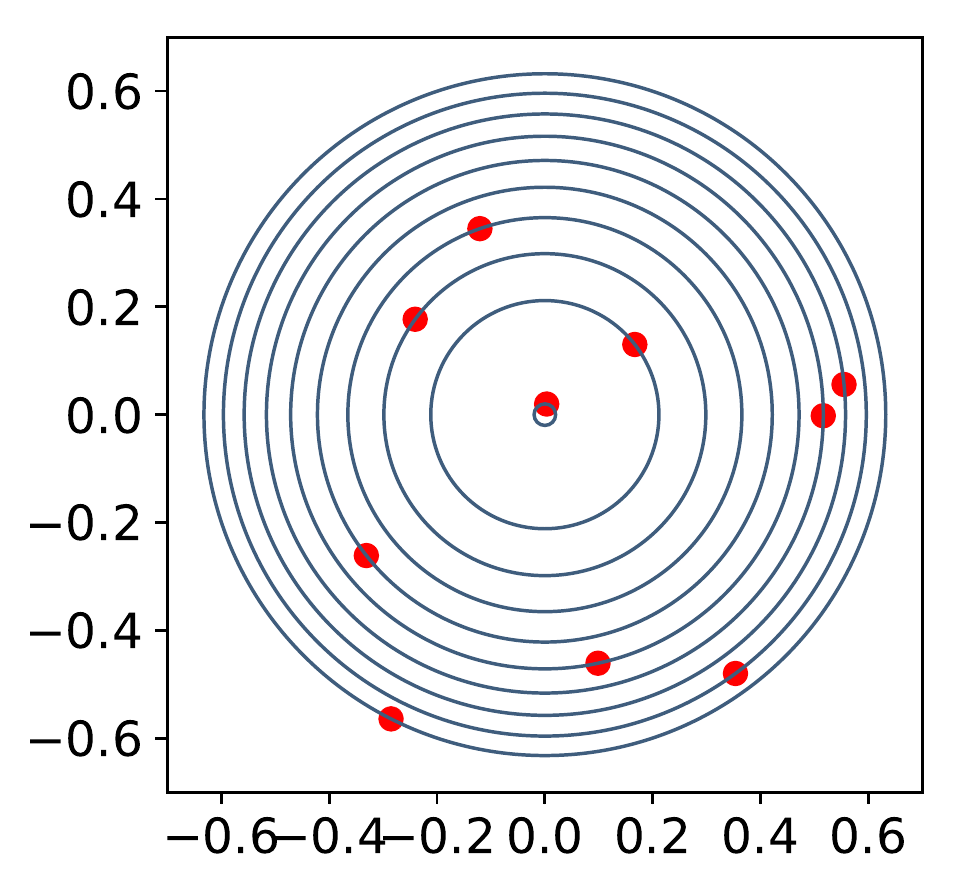}}
\subfloat[]{\includegraphics[width= 0.25\textwidth]{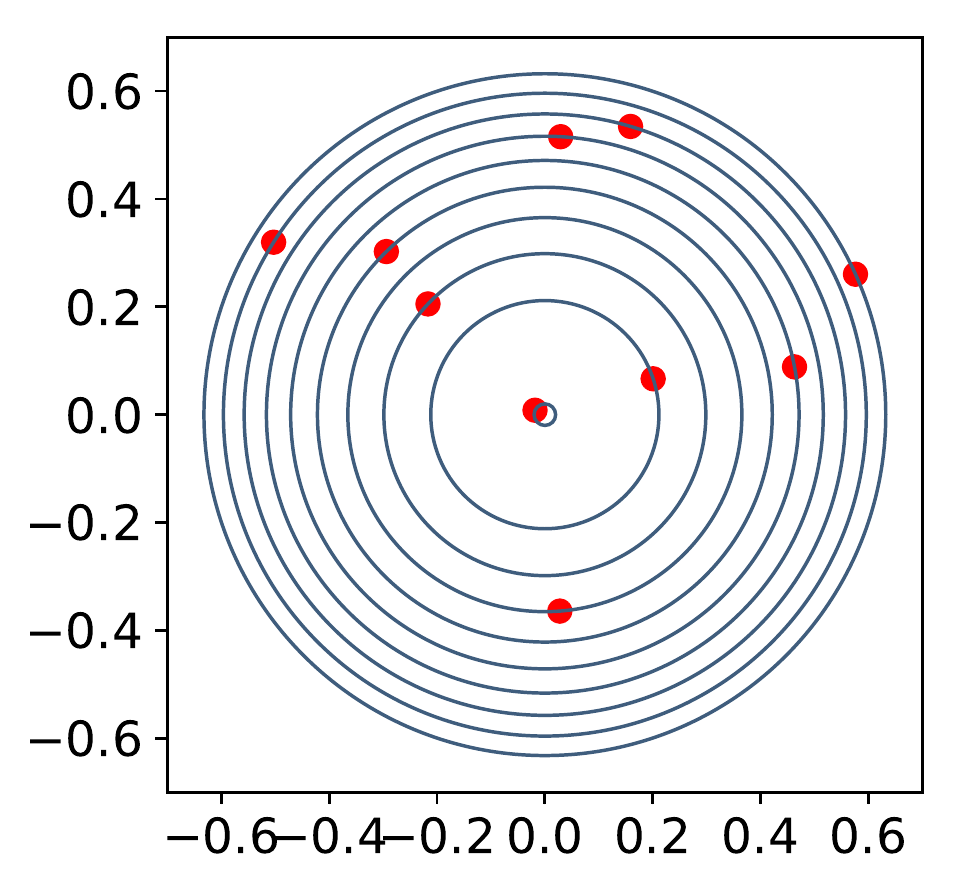}}
\caption{The output of our algorithm for $k=2, \: d = 10$, an input $\bm{\sigma} = (1,1)$, and $\ell$ as in Figure~\ref{f:highly_structured_matrix:a}. Each red dot has coordinates a column of $\bm{F}$. The blue circles have for radii the prescribed $(\sqrt{\ell_i})$. \label{f:gt_generator_outputs}}
\end{figure}



\begin{example}
For $k=2$, $d = 4$, consider the full-rank matrix
\begin{equation}
    \bm{F} = \begin{bmatrix}
1 & 0 & -1 & 0 \\
0 & 1 & 0 & -1
\end{bmatrix},
\end{equation}
Then
\begin{equation}
    \Lambda^{\text{out}}(\bm{F}) = \begin{bmatrix}
1 & 1 & 2 & 2 \\
0 & 1 & 1 & 2
\end{bmatrix}.
\end{equation}

\end{example}

\begin{proposition}
The outer eigensteps satisfy the following constraints:\\
\begin{equation}\label{eq:GT_polytope_equations}
\begin{cases}
     \forall i \in [k], \:\: \lambda_{0,i} = 0 \\
     \forall i \in [k], \:\: \lambda_{d,i} = \sigma_{i}^2 \\
     \forall r \in [d], \:\: (\lambda_{r,:}) \sqsubseteq  (\lambda_{r+1,:}) \\
     \forall r \in [d], \:\: \sum\limits_{i \in [d]} \lambda_{r,i} = \sum\limits_{i \in [r]} \ell_{i}
\end{cases}.
\end{equation}
\end{proposition}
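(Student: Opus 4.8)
The plan is to exploit the fact that the matrices $\bm{C}_r$ are built up one column at a time by adding a single rank-one positive semidefinite term, so that all four families of constraints in \eqref{eq:GT_polytope_equations} reduce to elementary facts about the spectra of such incremental sums. Concretely, I would first record the recursion
$$
\bm{C}_{r} = \bm{C}_{r-1} + \bm{f}_{r}\bm{f}_{r}^{\Tran},
$$
together with the two endpoints $\bm{C}_0 = \bm{0}$ (the empty sum) and $\bm{C}_d = \sum_{i\in[d]} \bm{f}_i\bm{f}_i^{\Tran} = \bm{F}\bm{F}^{\Tran}$. This recursion is the backbone of the whole argument.

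The boundary conditions are then immediate. Since $\bm{C}_0$ is the zero matrix, its eigenvalues vanish, giving $\lambda_{0,i} = 0$ for all $i \in [k]$. At the other end, $\bm{C}_d = \bm{F}\bm{F}^{\Tran}$ shares its spectrum with the squared singular values of $\bm{F}$, whose $k$ nonvanishing values are $\sigma_1^2,\dots,\sigma_k^2$ by definition; hence $\lambda_{d,i} = \sigma_i^2$. The trace constraint is equally direct: the trace equals the sum of the eigenvalues, and $\Tr(\bm{f}_i\bm{f}_i^{\Tran}) = \|\bm{f}_i\|^2 = \ell_i$, so summing the rank-one contributions yields $\sum_{i\in[k]}\lambda_{r,i} = \Tr(\bm{C}_r) = \sum_{i\in[r]}\ell_i$.

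The only step calling for an external ingredient is the interlacing $(\lambda_{r,:}) \sqsubseteq (\lambda_{r+1,:})$, and this is where I expect the (modest) main obstacle to lie. It follows from the classical interlacing theorem for rank-one positive semidefinite perturbations: if $\bm{A}$ is symmetric with eigenvalues $a_1 \geq \dots \geq a_k$ and $\bm{v}$ is any vector, then the eigenvalues $b_1 \geq \dots \geq b_k$ of $\bm{A} + \bm{v}\bm{v}^{\Tran}$ satisfy $b_1 \geq a_1 \geq b_2 \geq a_2 \geq \dots \geq b_k \geq a_k$. Applying this with $\bm{A} = \bm{C}_r$ and $\bm{v} = \bm{f}_{r+1}$, and reading off $a_i = \lambda_{r,i}$, $b_i = \lambda_{r+1,i}$, the resulting inequalities $\lambda_{r+1,i+1} \leq \lambda_{r,i} \leq \lambda_{r+1,i}$ for $i \in [1:k-1]$ together with $\lambda_{r,k} \leq \lambda_{r+1,k}$ are precisely the relation $\sqsubseteq$ of the preceding definition. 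I would either invoke this theorem directly through the Cauchy/Weyl interlacing inequalities, or, to keep the argument self-contained, derive it from the secular equation $1 + \bm{v}^{\Tran}(\bm{A} - \lambda \bm{I})^{-1}\bm{v} = 0$ that characterizes the eigenvalues of the perturbed matrix. Assembling the four verifications then completes the proof.
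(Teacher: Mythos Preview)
Your proof is correct and complete. The paper itself does not supply a proof of this proposition; it is stated as a fact, with the surrounding discussion deferring to the cited frame-theory references \citep{FiMiPo11,FMPS11}. Your argument is exactly the standard one found in that literature: the recursion $\bm{C}_{r+1}=\bm{C}_r+\bm{f}_{r+1}\bm{f}_{r+1}^{\Tran}$ immediately yields the boundary and trace identities, and the interlacing constraint is the classical rank-one perturbation interlacing (Weyl/Cauchy) that you invoke. There is nothing to add or correct.

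Two cosmetic points you may wish to flag, since they are typos in the paper rather than issues with your proof: the trace constraint should read $\sum_{i\in[k]}\lambda_{r,i}$ (the matrix $\bm{C}_r$ is $k\times k$, not $d\times d$), and the interlacing condition should range over $r\in\{0,\dots,d-1\}$ rather than $r\in[d]$, since $\lambda_{d+1,:}$ is undefined. Your write-up already uses the correct index ranges implicitly.
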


\begin{figure}[!ht]
    \centering
    \begin{tikzpicture}[scale = 0.3]

\draw  (-40,7)  node  {$\ell_{1}=\lambda_{1,1}$} ;
\draw  (-36,7)  node  {$\newprec$} ;
\draw  (-32,7)  node  {$\lambda_{2,1}$} ;
\draw  (-28,7)  node  {$\newprec$} ;
\draw  (-24,7)  node  {$\lambda_{3,1}$} ;
\draw  (-20,7)  node  {$\dots$} ;
\draw  (-16,7)  node  {$\lambda_{d-1,1}$} ;
\draw  (-12,7)  node  {$\newprec$} ;
\draw  (-8,7)  node  {$\lambda_{d,1} = \sigma_{1}$} ;

\draw  (-38.5,5.5)  node  {$\scriptscriptstyle +$} ;
\draw  (-36,5.5)  node  {$\newtiltedprec$} ;
\draw  (-32,5.5)  node  {$\scriptscriptstyle +$} ;
\draw  (-28,5.5)  node  {$\newtiltedprec$} ;
\draw  (-24,5.5)  node  {$\scriptscriptstyle +$} ;
\draw  (-20,5.5)  node  {$\dots$} ;
\draw  (-16,5.5)  node  {$\scriptscriptstyle +$} ;
\draw  (-12,5.5)  node  {$\newtiltedprec$} ;
\draw  (-8.7,5.5)  node  {$\scriptscriptstyle +$} ;

\draw  (-40,4)  node  {$0=\lambda_{1,2}$} ;
\draw  (-36,4)  node  {$\newprec$} ;
\draw  (-32,4)  node  {$\lambda_{2,2}$} ;
\draw  (-28,4)  node  {$\newprec$} ;
\draw  (-24,4)  node  {$\lambda_{3,2}$} ;
\draw  (-20,4)  node  {$\dots$} ;
\draw  (-16,4)  node  {$\lambda_{d-1,2}$} ;
\draw  (-12,4)  node  {$\newprec$} ;
\draw  (-8,4)  node  {$\lambda_{d,2} = \sigma_{2}$} ;

\draw  (-38.5,2.5)  node  {$\scriptscriptstyle +$} ;
\draw  (-36,2.5)  node  {$\newtiltedprec$} ;
\draw  (-32,2.5)  node  {$\scriptscriptstyle +$} ;
\draw  (-28,2.5)  node  {$\newtiltedprec$} ;
\draw  (-24,2.5)  node  {$\scriptscriptstyle +$} ;
\draw  (-20,2.5)  node  {$\dots$} ;
\draw  (-16,2.5)  node  {$\scriptscriptstyle +$} ;
\draw  (-12,2.5)  node  {$\newtiltedprec$} ;
\draw  (-8.7,2.5)  node  {$\scriptscriptstyle +$} ;

\draw  (-40,1)  node  {$0=\lambda_{1,3}$} ;
\draw  (-36,1)  node  {$\newprec$} ;
\draw  (-32,1)  node  {$\lambda_{2,3}$} ;
\draw  (-28,1)  node  {$\newprec$} ;
\draw  (-24,1)  node  {$\lambda_{3,3}$} ;
\draw  (-20,1)  node  {$\dots$} ;
\draw  (-16,1)  node  {$\lambda_{d-1,3}$} ;
\draw  (-12,1)  node  {$\newprec$} ;
\draw  (-8,1)  node  {$\lambda_{d,3} = \sigma_{3}$} ;

\draw  (-40,0)  node  {$\vdots$} ;
\draw  (-32,0)  node  {$\vdots$} ;
\draw  (-24,0)  node  {$\vdots$} ;
\draw  (-20,0)  node  {$\vdots$} ;
\draw  (-16,0)  node  {$\vdots$} ;
\draw  (-8,0)  node  {$\vdots$} ;

\draw  (-40,-1.5)  node  {$0=\lambda_{1,k}$} ;
\draw  (-36,-1.5)  node  {$\newprec$} ;
\draw  (-32,-1.5)  node  {$\lambda_{2,k}$} ;
\draw  (-28,-1.5)  node  {$\newprec$} ;
\draw  (-24,-1.5)  node  {$\lambda_{3,k}$} ;
\draw  (-20,-1.5)  node  {$\dots$} ;
\draw  (-16,-1.5)  node  {$\lambda_{d-1,k}$} ;
\draw  (-12,-1.5)  node  {$\newprec$} ;
\draw  (-8,-1.5)  node  {$\lambda_{d,k} = \sigma_{k}$} ;

\draw[line width=0.15 mm] (-42,-2.5) -- (-3,-2.5);

\draw  (-38,-3.5)  node  {$\ell_{1}$} ;
\draw  (-36,-3.5)  node  {} ;
\draw  (-32,-4)  node  {$\sum\limits_{i \leq 2}\ell_{i}$} ;
\draw  (-28,-3.5)  node  {} ;
\draw  (-24,-4)  node  {$\sum\limits_{i \leq 3}\ell_{i}$} ;
\draw  (-20,-3.5)  node  {} ;
\draw  (-16,-4)  node  {$\sum\limits_{i \leq d-1}\ell_{i}$} ;
\draw  (-12,-3.5)  node  {} ;
\draw  (-8.7,-4)  node  {$\sum\limits_{i \leq d}\ell_{i}$} ;
\end{tikzpicture}
    \caption{The interlacing relationships \eqref{eq:GT_polytope_equations} satisfied by the outer eigensteps of a frame. Thick triangles are used in place of $\leq$ for improved readability.}
    \label{f:Interlacing_eigenspaces}
\end{figure}

In other words, the outer eigensteps are constrained to live in a polytope.
We define the restricted Gelfand-Tsetlin polytope $\bm{GT}_{(k,d)}(\bm{\sigma},\bm{\ell})$ to be the subset of $\mathbb{R}^{k \times d}$ defined by the equations \eqref{eq:GT_polytope_equations}. A more graphical summary of the interlacing and sum constraints is given in Figure~\ref{f:Interlacing_eigenspaces}. The restricted GT polytope\footnote{Note the difference with the Gelfand-Tsetlin polytope in the random matrix literature \citep{Bar01}, where only the spectrum is constrained, not the diagonal.} allows a parametrization of $\mathcal{M}_{(\bm{\ell},\bm{\sigma})}$ by the following reconstruction result.



\begin{theorem}[Theorem 3, \citealp{FiMiPo11}]\label{thm:GT_parameterization}
Every matrix $\bm{F}\in \mathcal{M}_{(\bm{\ell},\bm{\sigma})}$ can be constructed as follows:
\begin{itemize}
    \item pick a valid sequence of outer eigensteps noted $\Lambda^{\text{out}} \in \bm{GT}_{(k,d)}(\bm{\sigma},\bm{\ell})$,
    \item pick $\bm{f}_{1} \in \mathbb{R}^{k}$ such that
    \begin{equation}\label{eq:f_1_equation}
    \| \bm{f}_{1}\|^{2} = \ell_{1}
    ,\end{equation}
    \item for $r \in [d]$, consider the polynomial $p_{r}(x) = \prod\limits_{i \in [d]}(x - \lambda_{r,i})$, and for each $r \in [d-1]$, choose $\bm{f}_{r+1} \in \mathbb{R}^{k} $ such that
    \begin{equation}\label{eq:projections_of_f_on_eigenspaces}
        \forall \lambda \in \{\lambda_{r,i}\}_{i \in [d]} , \: \| \bm{P}_{r,\lambda}\bm{f}_{r+1}\|^{2} = -\lim\limits_{x \to \lambda}(x-\lambda)\frac{p_{r+1}(\lambda)}{p_{r}(\lambda)},
    \end{equation}
    where $\bm{P}_{r, \lambda}$ denotes the orthogonal projection onto the eigenspace $\Kerspace(\lambda \mathbb{I}_{k} - \bm{F}_{r}\bm{F}_{r}^{T})$.
\end{itemize}
Conversely, any matrix $\bm{F}$ constructed by this process is in $\mathcal{M}_{(\bm{\ell},\bm{\sigma})}$.

\end{theorem}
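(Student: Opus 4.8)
The plan is to prove both directions by an induction that builds the frame column by column, tracking the \emph{partial frame operators} $\bm{C}_{r}=\sum_{i\in[r]}\bm{f}_{i}\bm{f}_{i}^{\Tran}$, whose spectra are exactly the outer eigensteps $\lambda_{r,\cdot}$. The central structural fact is that passing from $\bm{C}_{r}$ to $\bm{C}_{r+1}=\bm{C}_{r}+\bm{f}_{r+1}\bm{f}_{r+1}^{\Tran}$ is a rank-one positive semidefinite update, for which both the eigenvalues and the characteristic polynomial are completely understood. I would first dispatch the easy necessity direction, then the constructive sufficiency direction, and finally observe that the construction exhausts $\mathcal{M}_{(\bm{\ell},\bm{\sigma})}$.

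For necessity, suppose $\bm{F}\in\mathcal{M}_{(\bm{\ell},\bm{\sigma})}$ is given and set $\bm{C}_{0}=\bm{0}$. Since $\bm{C}_{r+1}-\bm{C}_{r}=\bm{f}_{r+1}\bm{f}_{r+1}^{\Tran}\succeq 0$ has rank at most one, the interlacing theorem for rank-one updates yields $\Lambda(\bm{C}_{r})\sqsubseteq\Lambda(\bm{C}_{r+1})$, which is exactly the interlacing requirement in \eqref{eq:GT_polytope_equations}. The boundary conditions follow from $\bm{C}_{0}=\bm{0}$ and $\bm{C}_{d}=\bm{F}\bm{F}^{\Tran}$ having spectrum $\bm{\sigma}^{2}$, while taking traces gives $\sum_{i}\lambda_{r,i}=\Tr\bm{C}_{r}=\sum_{i\le r}\|\bm{f}_{i}\|^{2}=\sum_{i\le r}\ell_{i}$. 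Hence the genuine eigensteps of any such $\bm{F}$ lie in $\bm{GT}_{(k,d)}(\bm{\sigma},\bm{\ell})$.

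The heart of the argument is the inductive step for sufficiency, and the key tool is the secular equation obtained from the matrix determinant lemma: writing $p_{r}(x)=\Det(x\mathbb{I}_{k}-\bm{C}_{r})=\prod_{i\in[k]}(x-\lambda_{r,i})$ and decomposing $\bm{f}_{r+1}$ along the eigenspaces of $\bm{C}_{r}$, one has
\begin{equation}
p_{r+1}(x) = p_{r}(x)\left(1-\sum_{\lambda}\frac{\|\bm{P}_{r,\lambda}\bm{f}_{r+1}\|^{2}}{x-\lambda}\right),
\end{equation}
the sum ranging over the distinct eigenvalues $\lambda$ of $\bm{C}_{r}$. Extracting the residue at each $\lambda$ recovers exactly \eqref{eq:projections_of_f_on_eigenspaces}. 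Conversely, given a valid eigenstep sequence, I would \emph{define} the projection norms by the right-hand side of \eqref{eq:projections_of_f_on_eigenspaces}; the interlacing constraints guarantee that these residues are nonnegative, so a real vector $\bm{f}_{r+1}$ with the prescribed projection norms exists, with freedom to rotate within each eigenspace. Substituting back into the secular equation shows $\bm{C}_{r+1}=\bm{C}_{r}+\bm{f}_{r+1}\bm{f}_{r+1}^{\Tran}$ has characteristic polynomial $p_{r+1}$, i.e.\ spectrum $\lambda_{r+1,\cdot}$, while summing the projection norms together with the trace constraint forces $\|\bm{f}_{r+1}\|^{2}=\Tr\bm{C}_{r+1}-\Tr\bm{C}_{r}=\ell_{r+1}$. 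Running the induction from $r=1$ to $d-1$ produces $\bm{F}\in\mathcal{M}_{(\bm{\ell},\bm{\sigma})}$; applying the same residue identity to the genuine $\bm{C}_{r}$ of an arbitrary $\bm{F}\in\mathcal{M}_{(\bm{\ell},\bm{\sigma})}$ shows its columns satisfy \eqref{eq:f_1_equation}--\eqref{eq:projections_of_f_on_eigenspaces}, so every element of $\mathcal{M}_{(\bm{\ell},\bm{\sigma})}$ is reachable.

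The main obstacle is the careful treatment of degeneracies. The clean secular equation assumes simple eigenvalues, whereas eigensteps routinely repeat: an eigenvalue of $\bm{C}_{r}$ may persist into $\bm{C}_{r+1}$ (forcing the corresponding projection norm to vanish), or several eigensteps may coincide within a single column, so that $p_{r}$ and $p_{r+1}$ share factors and the limit in \eqref{eq:projections_of_f_on_eigenspaces} must be read as a residue of the rational function $p_{r+1}/p_{r}$ after cancellation. I would handle this by grouping eigenvalues into distinct values with multiplicities, checking that shared roots cancel consistently and that the resulting projection norms are both nonnegative and compatible with the dimensions of the eigenspaces; this is precisely where the full strength of the interlacing relations in Figure~\ref{f:Interlacing_eigenspaces} is used. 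The remaining verifications---that the constructed $\bm{f}_{r+1}$ is real and that the rotational freedom within eigenspaces accounts for all frames sharing a given eigenstep sequence---are then routine.
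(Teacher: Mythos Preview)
The paper does not prove this theorem: it is quoted verbatim from \cite{FiMiPo11} and used as a black box to justify the frame-generation algorithm in Appendix~\ref{app:framebuilding}. There is therefore no ``paper's own proof'' to compare against.

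That said, your sketch is essentially the standard argument and matches the approach of the original reference. The mechanism you identify---tracking the partial frame operators $\bm{C}_r$, using the matrix determinant lemma to obtain the secular equation for the rank-one update $\bm{C}_{r+1}=\bm{C}_r+\bm{f}_{r+1}\bm{f}_{r+1}^{\Tran}$, reading off the projection norms as residues of $p_{r+1}/p_r$, and invoking interlacing to guarantee nonnegativity---is exactly right. Your treatment of degeneracies (repeated eigenvalues forcing vanishing projections, shared factors of $p_r$ and $p_{r+1}$ requiring cancellation before taking the residue) is also the correct diagnosis of where the technical work lies. One small note: you silently corrected what appears to be a typo in the statement, writing $p_r(x)=\prod_{i\in[k]}(x-\lambda_{r,i})$ rather than $\prod_{i\in[d]}$; since $\bm{C}_r\in\mathbb{R}^{k\times k}$, your version is the right one.
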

\cite{FiMiPo11} propose an algorithm to construct a vector $\bm{f}_{r}$ satisfying Equation \eqref{eq:projections_of_f_on_eigenspaces}. Finally, an algorithm for the construction of a valid sequence of eigensteps $\Lambda^{\text{out}} \in \bm{GT}_{(k,d)}(\bm{\sigma},\bm{\ell})$ was proposed in \citep{FMPS11}. This yields the following constructive result.

\begin{theorem}[Theorem 4.1, \citealp{FMPS11}]\label{thm:parametrization_of_polytope}
Every matrix $\bm{F}\in\mathcal{M}({\bm{\sigma},\bm{\ell}})$ can be constructed as follows:
\begin{itemize}
    \item Set $\forall i \in [k], \: \tilde\lambda_{d,i} = \sigma_{i}^2$,
    \item For $r \in \{d-1, \dots, 1 \}$, construct $\{\tilde\lambda_{r,:}\}$ as follows. For each $i \in \{k, \dots , 1\}$, pick $$\tilde\lambda_{r-1,i} \in [B_{i,r}(\bm{\ell},\bm{\sigma}),A_{i,r}(\bm{\ell},\bm{\sigma})],$$ where
    \begin{equation}
    \begin{split}
      A_{i,r}(\bm{\ell},\bm{\sigma}) = \max \left\{\tilde\lambda_{r+1,i+1}, \sum\limits_{t = i}^{k}\tilde\lambda_{r+1,t} - \sum\limits_{t = i+1}^{k}\tilde\lambda_{r,t} - \ell_{r+1} \right\}\\
      B_{i,r}(\bm{\ell},\bm{\sigma}) = \min \left\{\tilde\lambda_{r+1,i}, \min\limits_{z = 1,\dots,i} \left\{\sum\limits_{t=z}^{r}\ell_{t} - \sum\limits_{t=z+1}^{i}\tilde\lambda_{r+1,t} - \sum\limits_{t=i+1}^{k}\tilde\lambda_{r,t}\right\} \right\}.
    \end{split}
    \end{equation}
  \end{itemize}
Furthermore, any sequence constructed by this algorithm is a valid sequence of inner eigensteps.
\end{theorem}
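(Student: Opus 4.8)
The plan is to reduce the statement to a purely geometric fact about eigensteps, and then to read off the intervals by a Fourier--Motzkin elimination carried out in the prescribed sweep order. The bridge is already in place: the proposition recording the constraints \eqref{eq:GT_polytope_equations} shows that the eigensteps of any $\bm{F}\in\mathcal{M}_{(\bm{\ell},\bm{\sigma})}$ satisfy the level-to-level interlacing $(\tilde\lambda_{r,:})\sqsubseteq(\tilde\lambda_{r+1,:})$ together with the trace identities $\sum_{i}\tilde\lambda_{r,i}=\sum_{t\le r}\ell_{t}$, hence lie in $\bm{GT}_{(k,d)}(\bm{\sigma},\bm{\ell})$, while Theorem~\ref{thm:GT_parameterization} shows conversely that every point of that polytope is realised by some $\bm{F}\in\mathcal{M}_{(\bm{\ell},\bm{\sigma})}$ through \eqref{eq:projections_of_f_on_eigenspaces}. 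Constructing all of $\mathcal{M}_{(\bm{\ell},\bm{\sigma})}$ is therefore equivalent to enumerating all valid eigenstep sequences, and it suffices to prove that the backward sweep over $r\in\{d-1,\dots,1\}$, processing within each level the indices $i$ from $k$ down to $1$ and drawing $\tilde\lambda_{r,i}$ from the interval with endpoints $A_{i,r}$ and $B_{i,r}$, traces out exactly $\bm{GT}_{(k,d)}(\bm{\sigma},\bm{\ell})$.

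First I would record the forward inclusion from standard spectral facts about one-column updates. Realising the inner eigensteps as the spectra of the bordered Gram matrices $\bm{G}_{r}=\bm{F}_{r}^{\Tran}\bm{F}_{r}$, the passage $\bm{G}_{r}\mapsto\bm{G}_{r+1}$ borders the matrix by one row and column, so Cauchy's interlacing theorem gives $\tilde\lambda_{r+1,i+1}\le\tilde\lambda_{r,i}\le\tilde\lambda_{r+1,i}$ (tracking the top $k$ eigenvalues and padding with zeros at low levels), and $\Tr\bm{G}_{r}=\sum_{t\le r}\|\bm{f}_{t}\|^{2}=\sum_{t\le r}\ell_{t}$ gives the trace constraints. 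These are precisely the defining (in)equalities of the polytope.

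The heart of the argument is to compute, coordinate by coordinate, the exact projection of the still-feasible region onto the coordinate currently being chosen. I would run an induction along the sweep with the invariant that the already-fixed entries extend to a full valid eigenstep sequence. When $\tilde\lambda_{r,i}$ is to be selected, the whole of level $r+1$ and the entries $\tilde\lambda_{r,t}$ for $t>i$ are fixed; the interlacing pair $\tilde\lambda_{r+1,i+1}\le\tilde\lambda_{r,i}\le\tilde\lambda_{r+1,i}$ supplies the first terms of $A_{i,r}$ (lower endpoint) and $B_{i,r}$ (upper endpoint). The remaining undetermined entries $\tilde\lambda_{r,t}$ with $t<i$ are boxed by interlacing into $[\tilde\lambda_{r+1,t+1},\tilde\lambda_{r+1,t}]$ and must absorb the residual trace $\sum_{t\le r}\ell_{t}-\tilde\lambda_{r,i}-\sum_{t>i}\tilde\lambda_{r,t}$; rewriting $\sum_{t\le r}\ell_{t}$ through the level-$(r+1)$ trace $\sum_{t}\tilde\lambda_{r+1,t}-\ell_{r+1}$ turns the requirement that this residual be at most the box maximum into exactly the second term of $A_{i,r}$. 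Imposing in addition the extendability all the way down to the boundary level $\lambda_{0,:}=\bm{0}$ produces a family of cumulative-sum (majorization) inequalities indexed by a breakpoint $z\in\{1,\dots,i\}$, and their tightest instance is the $\min_{z}$ term of $B_{i,r}$; intersecting every constraint returns precisely the stated interval.

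The step I expect to be the main obstacle is proving that this interval is never empty, i.e. that its lower endpoint $A_{i,r}$ never exceeds its upper endpoint $B_{i,r}$ given the earlier valid choices. Non-emptiness is exactly what guarantees that the greedy sweep never stalls and that every polytope point is attainable, and it is here that the global feasibility hypothesis must enter: by the Schur--Horn theorem (Theorem~\ref{thm:majorization_equivalence}), $\mathcal{M}_{(\bm{\ell},\bm{\sigma})}$ is non-empty precisely when $\bm{\ell}\prec_{S}\hat{\bm{\sigma}}$, and I would derive $A_{i,r}\le B_{i,r}$ from this majorization combined with the interlacing already imposed at the higher levels, which reduces to verifying a chain of inequalities between the committed partial sums and the prescribed cumulative norms $\sum_{t\le r}\ell_{t}$. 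Once non-emptiness closes the induction, the converse direction is immediate---any sequence produced by the sweep satisfies the interlacing and trace constraints, hence lies in $\bm{GT}_{(k,d)}(\bm{\sigma},\bm{\ell})$---and Theorem~\ref{thm:GT_parameterization} upgrades such an eigenstep sequence to an actual frame $\bm{F}\in\mathcal{M}_{(\bm{\ell},\bm{\sigma})}$, completing the proof.
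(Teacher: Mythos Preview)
The paper does not contain its own proof of this theorem: it is stated as Theorem~4.1 of \citet{FMPS11} and merely cited as an external result to justify the eigenstep-sampling routine in Figure~\ref{f:Algo_Random_Eigensteps}. There is therefore nothing in the paper to compare your proposal against.

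That said, your sketch follows the natural line one would expect from the original source: reduce via Theorem~\ref{thm:GT_parameterization} to a purely polytopal statement about $\bm{GT}_{(k,d)}(\bm{\sigma},\bm{\ell})$, then run a Fourier--Motzkin elimination in the backward sweep order to read off the coordinate-wise feasibility intervals, with non-emptiness of each interval supplied by the Schur--Horn majorization $\bm{\ell}\prec_S\hat{\bm{\sigma}}^2$. The identification of the two endpoint terms in $A_{i,r}$ and $B_{i,r}$ (one from interlacing, one from trace/majorization constraints propagated downward) is correct in spirit, and your remark that the non-emptiness step is the delicate one is accurate. If you want to flesh this into a full proof you should consult \citet{FMPS11} directly, since the careful bookkeeping of the $\min_z$ term and the induction invariant is somewhat involved and the present paper gives no details.
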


Based on these results we propose an algorithm for the generation of orthogonal random matrices with a given profile of leverage scores.

\subsection{Our algorithm}
We consider a randomization of the algorithm given in Theorem~\ref{thm:parametrization_of_polytope}.
First, we generate a random sequence of valid inner eigensteps $\Lambda^{\text{in}}$ using Algorithm~\ref{f:Algo_Random_Eigensteps}. Then we proceed to the reconstruction a frame that admits $\Lambda^{\text{in}}$ as a sequence of eigensteps using the Algorithm proposed in \citep{FiMiPo11}.

Note that Equations \eqref{eq:f_1_equation} and \eqref{eq:projections_of_f_on_eigenspaces} admit several solutions. For example, for $r \in [d]$, and if $\bm{f}_{r+1}$ satisfies \eqref{eq:projections_of_f_on_eigenspaces}, $-\bm{f}_{r+1}$ satisfies this equation too. \cite{FiMiPo11} actually prove that the set of solutions of these equations is invariant under a specific action of the orthogonal group $\mathbb{O}(\rho(r,k))$ where $\rho(r,k) \in \mathbb{N}$ nontrivially depends on the eigensteps. In the reconstruction step of our algorithm, we apply a random Haar-distributed orthogonal matrix as soon as such an invariance is provable. Namely, we a random orthogonal matrix sampled from the Haar measure on $\mathbb{O}(d)$ to the vector $\bm{f}_{1}$ and, then, we apply an independent random orthogonal matrix sampled from the Haar measure on $\mathbb{O}(\rho(r,k))$ to each reconstructed vector $\bm{f}_{r+1}$.

Figure~\ref{f:gt_generator_outputs} displays a few samples from our algorithm, which display diversity and no apparent structure, as required for a generator of toy datasets. The question of fully characterizing the distribution of the output of our algorithm is an open question.

\begin{figure}
\centerline{
\scalebox{1}{
\begin{algorithm}{$\Algo{RandomEigensteps}\big(\bm{\ell},\bm{\sigma})$}\label{algo_givens}
\Aitem $\Lambda^{\text{out}} \longleftarrow \mathbb{O} \in \mathbb{R}^{k \times d}$
\Aitem $\forall i \in [k], \: \tilde\lambda_{d,i} \longleftarrow \sigma_{i} $
\Aitem \For $r \in \{d-1,\dots,1 \}$
\Aitem \mt \For $i \in \{k,\dots,1 \}$
\Aitem \mtt Pick $\tilde\lambda_{r-1,i} \sim \mathcal{U}([B_{i,r}(\bm{\ell},\bm{\sigma}),A_{i,r}(\bm{\ell},\bm{\sigma})])$\\
\Return $\Lambda^{\text{out}}$
\end{algorithm}
}
}
\caption{The pseudocode of the generator of random valid eigensteps taking as input $(\bm{\ell},\bm{\sigma})$.}
\label{f:Algo_Random_Eigensteps}
\end{figure}

\end{document}